\documentclass[12pt,english,twoside]{report}
\usepackage{mathptmx}

\usepackage[T1]{fontenc}
\usepackage[latin9]{inputenc}
\usepackage[a4paper]{geometry}
\setcounter{secnumdepth}{2} 
\setcounter{tocdepth}{2} 
\setlength{\parskip}{\medskipamount}
\setlength{\parindent}{0pt}
\usepackage{verbatim}
\usepackage{pdfpages}
\usepackage{graphicx}

\usepackage{subcaption} 

\usepackage{setspace}
\usepackage{arabtex}
\usepackage[numbers]{natbib}
\usepackage{nomencl}
\usepackage{paralist}
\usepackage{color}
\usepackage{amsthm}
\usepackage[cmex10]{amsmath}
\usepackage{bbold}
\usepackage{multirow}
\usepackage{dsfont}
\usepackage{booktabs}
\usepackage{eufrak}
\usepackage{amssymb}

\usepackage{float}
\usepackage{apxproof}
\usepackage{bbm}
\usepackage{algorithm}
\usepackage{algpseudocode}
\usepackage{hyperref}
\usepackage{algpseudocode}
\usepackage{array, makecell}
\usepackage{mathtools}
\usepackage{amsmath}

\usepackage{amsmath,amsfonts,bm}

\usepackage{soul}


\def\thetagenie{\hat{\theta}(\mathcal{D}_N;x,y)}
\def\probthetagenie{p_{\thetagenie}(y|x)}
\def\probthetagenietag{p_{\thetagenietag}(y'|x)}
\def\thetagenie{\hat{\theta}(\mathcal{D}_N;x,y)}
\def\thetagenietag{\hat{\theta}(\mathcal{D}_N;x,y')}
\def\thetageniein#1{\hat{\theta}(\mathcal{D}_N;x,y=#1)}
\def\probthetagenie{p_{\thetagenie}(y|x)}
  
\newcommand{\norm}[1]{\left\lVert#1\right\rVert}
\def\Theoref#1{Theorem~\ref{#1}}
\def\appref#1{appendix~\ref{#1}}

\def\lemmaref#1{lemma~\ref{#1}}
\def\Lemmaref#1{Lemma~\ref{#1}}
\def\tableref#1{table~\ref{#1}}
\def\Tableref#1{Table~\ref{#1}}

\def\DN{\mathcal{D}_N} 
\def\Plamb{P_\lambda}
\def\Klamb{K_\lambda}

\def\Klpnml{K_\textit{LpNML}}
\def\Klpnml2{K_\textit{LpNML2}}
\def\mulpnml{\hat{\mu}_\textit{\tiny{LpNML}}}
\def\sigmalpnml{\hat{\sigma}^2_\textit{\tiny{LpNML}}}

\def\thetay{\hat{\theta}_{y}}
\def\thetaytag{\hat{\theta}_{y'}}
\def\probthetay{p_{\thetay}(y|x)}
\def\probthetaytag{p_{\thetaytag}(y'|x)}
\def\thetahat{\hat{\theta}}
\def\thetalamb{\hat{\theta}_\lambda}

\newcommand{\bnorm}[1]{\big\lVert#1\big\rVert}
\definecolor{green}{rgb}{0.05, 0.5, 0.06}
\definecolor{pink}{rgb}{0.91, 0.33, 0.5}


\def\Dc{\mathcal{D}_c}
\def\Du{\mathcal{D}_u}

\algnewcommand\algorithmicforeach{\textbf{for each}}
\algdef{S}[FOR]{ForEach}[1]{\algorithmicforeach\ #1\ \algorithmicdo}



\def\figref#1{figure~\ref{#1}}
\def\Figref#1{Figure~\ref{#1}}

\def\secref#1{section~\ref{#1}}



\def\eqref#1{(\ref{#1})}









\def\1{\bm{1}}

\def\eps{{\epsilon}}










\DeclareMathAlphabet{\mathsfit}{\encodingdefault}{\sfdefault}{m}{sl}
\SetMathAlphabet{\mathsfit}{bold}{\encodingdefault}{\sfdefault}{bx}{n}













\newcommand{\normlone}{L^1}
\newcommand{\normltwo}{L^2}


\DeclareMathOperator*{\argmax}{arg\,max}
\DeclareMathOperator*{\argmin}{arg\,min}

\newcommand\tabref{table~\ref}
\newcommand\Tabref{Table~\ref}

\newtheorem{theorem}{Theorem}[section]
\newtheorem{lemma}[theorem]{Lemma}

\theoremstyle{definition}

\theoremstyle{remark}

\usepackage{etoolbox}
\newtoggle{edit-mode}
\togglefalse{edit-mode}  

\providecommand{\printnomenclature}{\printglossary}
\providecommand{\makenomenclature}{\makeglossary}
\makenomenclature
\doublespacing

\makeatletter

\usepackage{tauthesis}
\iftoggle{edit-mode}{
\geometry{verbose,tmargin=2cm,bmargin=2cm,lmargin=2cm,rmargin=6cm,headheight=1cm,headsep=1cm,footskip=1cm, marginparwidth=5cm}
}

\usepackage[font={small,bf}, labelfont={small,bf}, margin=1cm]{caption}
\usepackage{titlesec}
\newcommand{\hsp}{\hspace{20pt}}
\titleformat{\chapter}[hang]{\Huge\bfseries}{\thechapter\hsp}{0pt}{\Huge\bfseries}

\University{Tel Aviv University}
\Faculty{The Iby and Aladar Fleischman Faculty of Engineering}
\School{The Zandman-Slaner School of Graduate Studies}
\Department {Department of Electrical Engineering - Systems}
\Title{\textbf{ \uppercase {Quantifying the Prediction Uncertainty of Machine Learning Models for Individual Data}}}
\Degree{Doctor of Philosophy}
\Author{Koby Bibas}
\Year{\today}
\Supervisor{Prof. Meir Feder}
\SecondSupervisor{<Dr. second Supervisor>}

\makeatother

\usepackage{babel}

\begin{document}

\pagenumbering{gobble}

\titlepage

\secondtitlepage

\acknowledgments{
\thispagestyle{empty}

Over the course of the past five years, I have had the privilege of embarking on an incredible journey that has left an indelible mark on my life. Through this period, I have been blessed to have had the guidance and support of Meir Feder, who has been my supervisor and mentor throughout my journey.
Meir's open-mindedness, innovative and visionary approach has enabled me to take this work to new heights and achieve milestones that I never thought were possible. I cannot express enough gratitude for Meir's belief in me, which has allowed me to grow and develop as a researcher. He has been a source of inspiration and a constant reminder that anything is possible with hard work and dedication.

I would also like to extend my gratitude to Tal Hassner, who has been instrumental in guiding me on how to apply my research to the computer vision field. Our discussions have been incredibly insightful and have provided me with new perspectives that have helped me to approach my research from different angles.

Furthermore, I owe a debt of gratitude to my colleagues and friends in the 102 lab: Yaniv Fogel, Shahar Shayovitz, Uriya Pess, Ido Lublinsky, and Assaf Daouber. They have been an integral part of my journey and not only provided me with a sounding board for my ideas, but have also made this period fun and exciting.

At the conclusion of this journey, it is with a heart full of gratitude that I express my deepest appreciation to those who have been my pillars of strength. My parents and my wife, Batzi, have been my unwavering support system, providing me with endless love and encouragement that have been the bedrock of my success. 

}

\cleardoublepage
\pagenumbering{roman}
\setcounter{page}{1} 

\chapter*{Abstract}
\vspace{-40pt}
Machine learning models have exhibited exceptional results in various domains. The most prevalent approach for learning is the empirical risk minimizer (ERM), which adapts the model's weights to reduce the loss on a training set and subsequently leverages these weights to predict the label for new test data. Nonetheless, ERM makes the assumption that the test distribution is similar to the training distribution, which may not always hold in real-world situations.
In contrast, the predictive normalized maximum likelihood (pNML) was proposed as a min-max solution for the individual setting where no assumptions are made on the distribution of the tested input. 
This study investigates pNML's learnability for linear regression and neural networks, and demonstrates that pNML can improve the performance and robustness of these models on various tasks. 
Moreover, the pNML provides an accurate confidence measure for its output, showcasing state-of-the-art results for out-of-distribution detection, resistance to adversarial attacks, and active learning.

\setcounter{tocdepth}{1}
\tableofcontents{}

\cleardoublepage

\addcontentsline{toc}{chapter}{Nomenclature}

\printnomenclature

\cleardoublepage

\listoffigures

\cleardoublepage

\listoftables

\textpages

\nomenclature{MSE}{Mean Square Error}
\nomenclature{RLS}{Recursive Least Square}
\nomenclature{OOD}{Out-of-Distribution}
\nomenclature{IND}{In-Distribution}
\nomenclature{CCR}{Correct Classification Rate}
\nomenclature{OSCR}{Open-Set Classification Rate}
\nomenclature{NN}{Neural Network}
\nomenclature{ML}{Machine Learning}
\nomenclature{pNML}{Predictive Normalized Maximum Likelihood}
\nomenclature{PAC}{Probably Approximately Correct}
\nomenclature{VC}{Vapnik-Chervonenkis}
\nomenclature{DNN}{Deep Neural Network}
\nomenclature{MN}{Minimum Norm}
\nomenclature{PAC}{Probably Approximately Correct}
\nomenclature{Lpnml}{Predictive Normalized Maximum Likelihood with Luckiness}
\nomenclature{ERM}{Empirical Risk Minimizer}

\chapter{Introduction}

Machine learning (ML) models possess the remarkable capacity to learn from experience without the requirement of explicit programming. Among the prevalent techniques of ML, supervised batch learning represents a quintessential approach where the experience is represented by a training set. The training set encompasses $N$ examples, which are represented as $\mathcal{D}_N=\{(x_n,y_n)\}_{n=1}^N$. Here, $x_n \in \mathcal{X}$ denotes the $n$-th feature vector and $y_n \in \mathcal{Y}$ signifies its corresponding label. 
The objective of the learning task is to come up with algorithm that, faced with a new test feature $x$, predicts its corresponding label $y$. The prediction is typically represented as a probability assignment $q(\cdot|x)$.

The prediction performance is evaluated using a loss function $\ell(q;x,y)$ which is sometimes termed the generalization error. 
In our information theoretic analysis, we consider the log-loss function~\cite{Fogel2017,Fogel2018,merhav1998universal} 
\begin{equation}
 \ell(q;x,y) = - \log q(y|x).
\end{equation}
Minimizing this loss is an ill-posed problem unless additional assumptions are made regarding the class of possible models (hypotheses) that are used to find the relation between $x$ and $y$.
Denote $\Theta$ as a general index set, the hypothesis class is a set of conditional probability distributions $P_\Theta = \{ p_\theta(y|x),\;\;\theta\in\Theta\}$.

The solution to the learning problem also depends on the assumptions made regarding how the data is generated.
In the stochastic setting, it is assumed that there is a true probabilistic relation between $x$ and $y$ given by an (unknown) model from the class $P_\Theta$. 
A more general setting is the Probably Approximately Correct (PAC)~\cite{valiant1984theory}. 
In PAC setting, $x$ and $y$ are assumed to be generated by some source $P(x,y)=P(x)P(y|x)$, but unlike the stochastic setting, $P(y|x)$ is not necessarily a member of the hypothesis class.
In the PAC setting there are several suggested measures of the class learnability such as the Vapnik-Chervonenkis (VC) dimension \cite{VC}. These measures successfully explain the learnability of rather simple hypotheses classes, yet they fail to explain the learnability of modern hypothesis classes such as deep neural networks (DNN)~\cite{DBLP:conf/iclr/ZhangBHRV17}. 


In this study, we focus on \textit{the individual setting}, which is deemed the most general setting. This setting entails no assumptions regarding the way the examples are generated: both the training set and the test sample are specific individual values, which can potentially be determined by an adversary. In this setting, the typical approach is to employ a learner capable of competing reasonably with a reference learner, known as a genie. This genie is characterized by three key properties: (i) it has knowledge of the test label value, (ii) it is constrained to employ a model from the given hypotheses set $P_\Theta$, and (iii) it has no knowledge of which of the samples is the test. The genie selects a model that achieves the minimum loss over both the training set and the test sample
\begin{equation} \label{eq:genie_pnml} 
\hat{\theta}(\mathcal{D}_N;x,y)  = \arg\min_{\theta \in \Theta} \left[\ell\left(p_\theta;x,y\right) +\sum_{n=1}^N \ell\left(p_\theta;x_n,y_n\right) \right].
\end{equation}
The generalization error, which is referred to as the regret, can be described as the difference in log-loss between the learner $q$ and the genie
\begin{equation} \label{eq:regret}
R(q;\mathcal{D}_N;x,y) = - \log q(y|x) - \left[ -\log p_{\hat{\theta}(\mathcal{D}_N;x,y)}(y|x) \right].
\end{equation} 

The \textit{predictive normalized maximum likelihood} (pNML)~\cite{Fogel2018} has been proposed as the optimal solution for min-max regret, where the minimum is over the learner choice and the maximum is for any possible outcome.
\begin{equation} \label{eq:minmax_regret_objective}
\Gamma = R^*(\mathcal{D}_N,x) = \min_q \max_{y \in \mathcal{Y}} R(q;\mathcal{D}_N;x,y).
\end{equation}
The pNML probability assignment and regret are
\begin{equation} \label{eq:pnml} 
q_{\mbox{\tiny{pNML}}}(y|x)= \frac{\probthetagenie}{\sum_{y' \in \mathcal{Y}} \probthetagenietag},
\qquad
\Gamma = \log \sum_{y' \in \mathcal{Y}} \probthetagenietag.
\end{equation}
The solution can be established by recognizing that the regret remains constant across all choices of $y$. Should a different probability assignment be considered, it must assign a smaller probability to at least one of the outcomes. Since the true label is with individual value (and can be determined by adversary), if the true label is set to one of those outcomes it leads to a higher regret.
The pNML min-max regret can be used as an information-theoretic learnability measure that depends on the specific training set $\mathcal{D}_N$ and the specific test features $x$ and may let the learner {\em know when it does not know.}.

In this thesis, the pNML and its associated learnability measure are examined for linear regression and neural networks hypothesis classes along with various applications to the pNML learnability measure.
Chapter~\ref{chap:ordinary_least_squares_regression} shows an explicit analytical expression of the pNML for the linear regression hypothesis class. The analysis conducted in this chapter highlights that the pNML regret is minimal when the test sample resides in the subspace that corresponds to the large eigenvalues of the empirical correlation matrix of the training data. This finding suggests that even in situations where the model is over-parameterized, i.e., the number of parameters exceeds the training sample size, successful generalization can occur if the test data originates from a ``learnable space''.

In Chapter~\ref{chap:overparameterized_linear_regression}, an upper bound for the pNML regret is derived for over-parameterized linear regression: We construct a hypothesis set that comprises hypotheses whose norm is not greater than the minimum norm (MN) solution norm. Utilizing this hypothesis set, the pNML prediction coincides with the MN solution. As a result, the derived regret upper bound can serve as a measure of the generalization error of the MN solution. 

In Chapter~\ref{chap:luckiness}, we review the luckiness concepts~\cite{shtar1987universal}: A luckiness function $w(\theta)$ is designed such that on sequences with small $w(\theta)$, we are prepared to incur large regret~\cite{grunwald2007minimum}. 
We design a luckiness function to be proportional to the $\ell_2$ linear regression model norm and derive the corresponding pNML. We show its prediction differs from the ridge regression: When the test sample lies within the subspace associated with the small eigenvalues of the empirical correlation matrix of the training data, the prediction is shifted toward 0.

Next, we focus on the neural network hypothesis class. In Chapter~\ref{chap:neural_networks}, we present an analytical solution of the pNML learner and derive its min-max regret for a single layer neural network. We analyze the obtained regret and demonstrate that it achieves low values under two conditions: when the test input either (i) resides in a subspace related to the larger eigenvalues of the empirical correlation matrix of the training data or (ii) is located far from the decision boundary. The results are applicable to the last layer of DNNs without altering the network architecture or the training process. 

Chapter~\ref{chap:adversarial_defense} investigates the use of the pNML learner as a defense mechanism against adversarial attacks. The pNML approach is suitable for this scenario since it does not assume any data generation process, making it resilient to adversarial manipulations of the input. We introduce the Adversarial pNML scheme as an adversarial defense method which restricts the genie learner to perform minor adjustments to the adversarial examples: Each member of the class assumes a different label for the adversarial test sample and performs a targeted adversarial attack accordingly. We then compare the resulting hypothesis probabilities to predict the true label.

Finally, in Chapter~\ref{chap:active_learning}, we present a variant of pNML for active learning, where training samples are selectively chosen to minimize the regret of the test set. In addition, we provide an approximate version of this criterion that enables faster inference for DNNs. We demonstrate that for the same accuracy level our criterion needs 25\% less labeled samples compared to recent leading methods.


\chapter{Ordinary Least Squares Regression}  \label{chap:ordinary_least_squares_regression}
\section{Introduction} \label{sec:linear_regression:introduction}
Linear regression is a statistical technique that is widely used for modeling the relationship between a dependent variable and one or more independent variables~\cite{lawson1995solving}. It is a simple and powerful method for predicting numerical outcomes and understanding the nature of the relationship between variables. 
The empirical risk minimizer (ERM) is a standard learner used in linear regression~\cite{vapnik1992principles}.
In ERM, given a training set and hypothesis class $\{p_\theta(y|x),\ \theta \in \Theta\}$, a learner that minimizes the loss over the training set is chosen:
\begin{equation}
q_{\mbox{\tiny{ERM}}}(y|x) = \underset{p_\theta}{\textit{argmin }} \frac{1}{N}\sum_{i=1}^{N}  \mathcal{L}(p_\theta; x_i, y_i).
\end{equation}

One of the fundamental assumptions of linear regression is that the number of training samples must be greater than the number of features in order to achieve accurate generalization~\cite{james2013introduction}.
However, recent advancements in deep neural networks (DNNs) have challenged this long-held assumption. DNNs are powerful machine learning models that can handle complex patterns in large datasets, and they have achieved remarkable success in various applications. Unlike traditional linear regression models, DNNs can have several orders of magnitude more learnable parameters than the size of the feature space, and yet they are still capable of achieving excellent generalization performance.

By utilizing the predictive normalized maximum likelihood (pNML) learner, this research proposes a novel approach to analyzing when linear regression is capable of generalization. Specifically, we derive explicate the pNML learner and its corresponding regret for the linear regression hypothesis class. This includes also the regularized case where the norm of the coefficients vector is constrained. Based on the analysis of the learnability measure, it is shown that if the test data comes from a ``learnable space'' successful generalization occurs.
This phenomenon may explain why other over-parameterized models such as DNNs are successful for ``learnable'' data.

\section{Formal Problem Definition} \label{sec:linear_regression:formal_problem_def}
Given N pairs of data and labels $\mathcal{D}_N=\{x_i, y_i\}_{i=1}^{N}$ where $x_i \in R^M, y_i \in R$, the model takes the form of
\begin{equation}
y_i =x_i^\top \theta + e_i \qquad i \leq n \leq N
\end{equation}
where $\theta \in R^M$ are the learnable parameters and the $e_i \in R$ are zero mean, Gaussian, independent with variance of $\sigma^2$. 
The goal is to predict $y$ based on a new data sample $x$. 
Under the assumptions $y$, conditioned on $x$, has a normal distribution that depends on the learnable parameters $\theta$ 
\begin{equation}
p_{\theta}(y) 
=\frac{1}{\sqrt[]{2\pi\sigma^2}}\exp\left\{-\frac{1}{2\sigma^2}\big(y- x^\top \theta \big)^2\right\}.
\end{equation}
The unknown parameter vector $\theta$ belongs to a set $\Theta$, which in the general case is the entire $R^M$. In the regularized version (leading to Ridge regression \cite{ridgeregression}), $\Theta$ is the sphere $|\theta|\leq A$. 
In ordinary least squares, the mean square error (MSE) is used.
The ERM solution in this case is
\begin{equation}
    \theta_N = \left(X_N^\top X_N \right)^{-1} X_N^\top Y_N.
\end{equation}

In order to obtain the pNML the following procedure is executed: assuming the label of the test data is known, find the best model that fits it with the training samples, and predict the assumed label by this model. 
Repeat the process for all possible labels. 
Then, normalize to get a valid probability distribution which is the pNML learner.
Recall that the pNML learner is given by:
\begin{equation} \label{eq:pNML_def}
q_{\mbox{\tiny{pNML}}}(y|x;\mathcal{D}_N) = \frac{1}{K} p_{\hat{\theta}(\mathcal{D}_N;x,y)}(y|x).
\end{equation}
where $K$ is the the normalization factor:
\begin{equation} \label{eq:linear_regression:gamma}
K = \int_R  p_{\hat{\theta}(\mathcal{D}_N;x,y)}(y|x)dy,   
\end{equation}
The goal is to find an analytic expression for (\ref{eq:pNML_def}) and for the learnability measure $\Gamma = \log K$, the minmax regret value.

\section{The pNML solution} \label{sec:linear_regression:under_parameterized:linear_pNML_eval}
In considering linear regression, $y\in {\cal R}$ is the scalar label, $x\in {\cal R}^M$ is the feature vector (sometimes the first component of $x$ is set to $1$ to formulate affine linear relation), and the model class is the following hypothesis set
\begin{equation} \label{regression_model}
P_\Theta = 
\left\{ p_{\theta}(y|x) 
=\frac{1}{\sqrt[]{2\pi\sigma^2}}\exp\left\{-\frac{1}{2\sigma^2}\big(y- x^\top\theta \big)^2\right\}, \;\; \theta \in {\cal R}^M \right\}. 
\end{equation}
That is, the label $y$ is a linear combination of the components of $x$, within a Gaussian noise. As shown below, in this case the pNML and its min-max regret can be evaluated explicitly.

The genie learner, that knows the true test label, minimizes the following MSE objective 
\begin{align}
\thetagenie= \arg\min_{\theta\in\Theta} \left[ \sum_{I=1}^N \left(y_I - x_I^\top \theta \right)^2 + \left(y-x^\top \theta\right)^2 \right].
\end{align}

Denote $X \in R^{N \times M}$ as the matrix that contains all the training data and $Y_N \in R^{N}$ as the training label vector 
\begin{equation}
X_N = \begin{bmatrix} x_1 & x_2 & \dots & x_N \end{bmatrix}^\top , \qquad
Y_N = \begin{bmatrix} y_1 & y_2 & \dots & y_N \end{bmatrix}^\top,
\end{equation}
assuming that the test label $y$ is given, the optimal solution with the Recursive Least Squares (RLS) formulation~\cite{hayes19969} is
\begin{equation} \label{eq:linear_regression:under_parameterized:linear_rls_update}
\thetagenie = \theta_{N} + P_N x (y - \hat{y})
\end{equation}
where $\hat{y} = x^\top  \theta_{N}$ is the ERM prediction based on the training set $\mathcal{D}_N$ and
\begin{equation}
P_N = \frac{\left(X_N^\top X_N \right)^{-1}}{1 + x^\top \left(X_N^\top X_N \right)^{-1} x}
\end{equation}
Thus the genie probability assignment is
\begin{equation}
\begin{split}
\probthetagenie 
&=
\frac{1}{\sqrt[]{2\pi\sigma^2}}\exp\left\{-\frac{1}{2\sigma^2}\left(y- x^\top \thetagenie \right)^2\right\} 
\\ &=
\frac{1}{\sqrt[]{2\pi\sigma^2}}\exp\left\{-\frac{1}{2\sigma^2}\left(y - x^\top  \left(\theta_{N} + P_N x (y -x^\top \theta_N) \right) \right)^2\right\} 
\\ &=
\frac{1}{\sqrt[]{2\pi\sigma^2}}\exp\left\{-\frac{1}{2\sigma^2}\left(y - x^\top \theta_{N} - x^\top P_N x (y -x^\top \theta_N) \right)^2\right\} 
\\ &= 
\frac{1}{\sqrt[]{2\pi\sigma^2}}
\exp\left\{-\frac{(1 - x^\top  P_N x )^2 }{2\sigma^2}\left(y- x^\top \theta_N \right)^2\right\}.  
\\
\end{split}
\end{equation}
To get the pNML normalization factor, we integrate over all possible labels
\begin{equation}
K = 
\int_{-\infty}^{\infty} \frac{1}{\sqrt[]{2\pi\sigma^2}}
\exp\left\{-\frac{(1 - x^\top  P_N x )^2 }{2\sigma^2}
\left(y' - x^\top \theta_N \right)^2\right\} dy'
=\frac{1}{1 - x^\top  P_N x }
\end{equation}
Thus, the pNML distribution of $y$ given the test input $x$ is
\begin{equation}
q_{\mbox{\tiny{pNML}}}(y |x) = \frac{1}{K} \probthetagenie
= 
\frac{1 - x^\top  P_N x }{\sqrt[]{2\pi\sigma^2}}
\exp\left\{-\frac{(1 - x^\top P_N x )^2 }{2\sigma^2}\left(y- x^\top \theta_N \right)^2\right\} 
\end{equation}
and its associate learnability measure or regret:
\begin{equation} \label{eq:linear_regression:under_parameterized:linear_regret}
\Gamma = \log K = \log\left(\frac{1}{1 - x^\top P_N x } \right).
\end{equation}

\subsection{pNML with regularization} \label{sec:linear_regression:under_parameterized:linear_pNMLwithReg}
Next, we shall assume that the model class $\Theta$ is constrained to the sphere  $||\theta||\leq A$, for some $A$. 
Using a Lagrange multiplier $\lambda$ we get the Tikhonov regularization (or Ridge regression), where the expression to minimize is now: 
\begin{equation}
\sum_{n=1}^{N}(y_n-x_n^\top  \theta)^2 + \lambda ||\theta||^2
\end{equation}
With the test data, the regularized least square solution is:
\begin{equation}
\probthetagenie = \theta_{N\lambda} + P_{N\lambda}x (y - x^\top \theta_{N\lambda})
\end{equation}
However, now 
\begin{equation}
P_{N\lambda}= \frac{\left(X_N^\top X_N + \lambda I \right)^{-1}}{1 + x^\top \left(X_N^\top X_N  + \lambda I \right)^{-1} x}
\end{equation}
and the ERM learner is
\begin{equation}
\theta_{N\lambda} = \left(X_N^\top X_N + \lambda I \right)^{-1} X_N^\top Y_N.
\end{equation}
The rest of the evaluation is similar to \secref{sec:linear_regression:under_parameterized:linear_pNML_eval}, yielding the following pNML learner:
\begin{equation} \label{eq:linear_regression:under_parameterized:linear_pNML_least_sqaures}
q_{\mbox{\tiny{pNML}}}(y|x)
=
\frac{1 + x^\top  P_{N\lambda} x }{\sqrt[]{2\pi\sigma^2}} 
\exp\left\{-\frac{(1 - x^\top P_{N\lambda} x )^2 }{2\sigma^2}\left(y- x^\top \theta_{N\lambda} \right)^2\right\} 
\end{equation}
and the associated regret or the log-normalization factor:
\begin{equation}
\Gamma = \log K = \log \left( \frac{1}{1 - x^\top  P_{N\lambda} x } \right)
\end{equation}
Note that regularization can help in the case where $X_N^\top X_N $, the unnormalized correlation matrix of the data is
ill conditioned. 
In the next section we find the learnable space for the linear regression problem and observe situations where this regularization is needed.

\section{The learnable space} \label{sec:linear_regression:under_parameterized:linear_regression_learnable_space}
In order to understand for which test sample the trained model generalizes well we need to look at the regret expression (\eqref{eq:linear_regression:under_parameterized:linear_regret}). 
High regret means that the pNML learner is far from the genie and therefore we may not trust its predictions. 
Low regret, on the other hand, means the model is as good as a genie who knows the true test label, and so it is trusted.

By simplying the derived min-max regret, we get:
\begin{equation}
\Gamma  = \log \left(\frac{1}{ 1 - x^\top P_N x} \right) = \log \left( \frac{1}{ 1 - \frac{x^\top \left(X_N^\top X_N\right)^{-1} x}{1 + x^\top \left(X_N^\top X_N\right)^{-1} x}} \right) = \log \left(1 + x^\top \left(X_N^\top X_N\right)^{-1} x \right)
\end{equation}

Consider the matrix $X_N$, composed of the training data, and apply the singular value decomposition (SVD) on it, i.e., $X_N^\top = U \Sigma V^\top $ with $U\in R^{M \times M}$, 
$\Sigma$ is a rectangular diagonal matrix of the singular values $\eta_i$ and $V \in R^{N \times N}$.
Denote $R_N$ as the empirical correlation matrix of the training: 
\begin{equation}
R_N=\frac{1}{N} U \Sigma \Sigma^\top  U^\top ,
\end{equation}
the pNML regret can be written as 
\begin{equation}
\Gamma = \log K = \log \left(1 + \frac{1}{N} \sum_{i=1}^{M} \frac{\left(x^\top u_i\right)^2 }{\eta^2_i}\right).
\end{equation}
If the test sample $x$ lies mostly in the subspace spanned by the eigenvectors with large eigenvalues, then the model can generalize well.

\section{Simulation} \label{sec:linear_regression:under_parameterized:lin_simulation}
In this section, we present some simulations that demonstrate the results above. 
We chose the problem of fitting a polynomial to data, which is a special case of linear regression.
The simulations show prediction and generalization capabilities in a variety of regularization factors and polynomial degrees.

In the first experiment we generated 3 random points, $t_0,t_1,t_2$, uniformly in the interval $[-1, 1]$. These points are the training set and are shown in \figref{fig:linear_regression:under_parameterized:linear_least_squares_with_reg} (top) as red dots. 
The relation between $y$ and $t$ is given by a polynomial of degree two.
Thus, the X matrix of \secref{sec:linear_regression:under_parameterized:linear_pNML_eval} is given by:
\begin{equation} \label{eq:linear_regression:under_parameterized:linear_two_degree_pol}
X = 
\begin{bmatrix}
1 & 1 & 1 \\
t_0 & t_1 & t_2 \\
t_0^2 & t_1^2 & t_2^2 
\end{bmatrix}.
\end{equation}
Based on the training we predict a probability for all t values in the interval [-1,1] using (\ref{eq:linear_regression:under_parameterized:linear_pNML_least_sqaures}) with a regularization factor $\lambda$ of $0$, $0.1$ and $1.0$. 
It is shown in \figref{fig:linear_regression:under_parameterized:linear_least_squares_with_reg} (top) that without regularization ($\lambda=0$), the blue curve fits the data exactly. As $\lambda$ increases the fitted curve becomes less steep but tends to fit less to the training data.

\figref{fig:linear_regression:under_parameterized:linear_least_squares_with_reg} (bottom) shows the regret, given by (\ref{eq:linear_regression:under_parameterized:linear_regret}), for the polynomial model from \eqref{eq:linear_regression:under_parameterized:linear_two_degree_pol} for all $t\in[-1,1]$ where the training $t_i$'s are marked in red on the x axis. 
We can see that around the training data the regret is very low in comparison to areas where there are no training data. 
In addition, models with larger regularization term have lower regret for every point in the interval $[-1,1]$.
For all regularization terms, the regret increases as moving away from the training data.

\begin{figure}[tbh] 
    \centering
    \begin{subfigure}[t]{0.49\linewidth}
    \includegraphics[width=\linewidth]{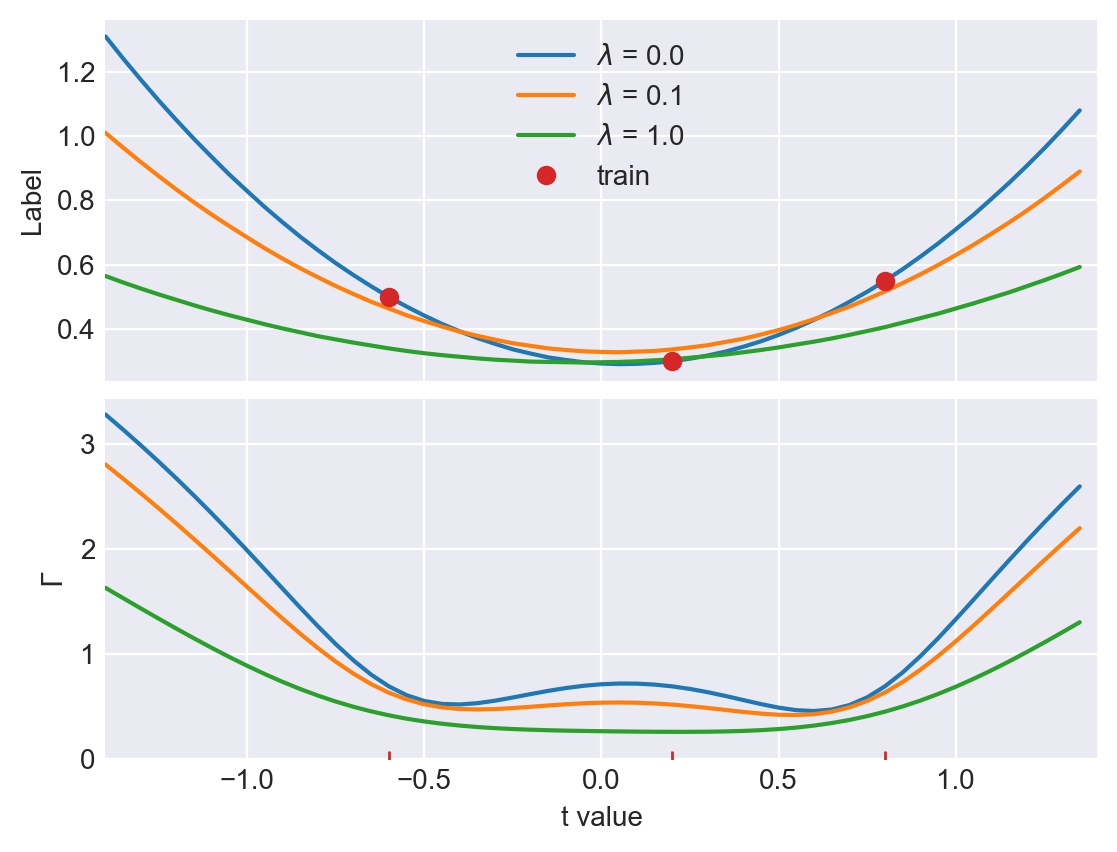} 
    \caption{Varied regularization terms
    \label{fig:linear_regression:under_parameterized:linear_least_squares_with_reg}
    }
    \end{subfigure}
    \begin{subfigure}[t]{0.49\linewidth}
    \includegraphics[width=\linewidth]{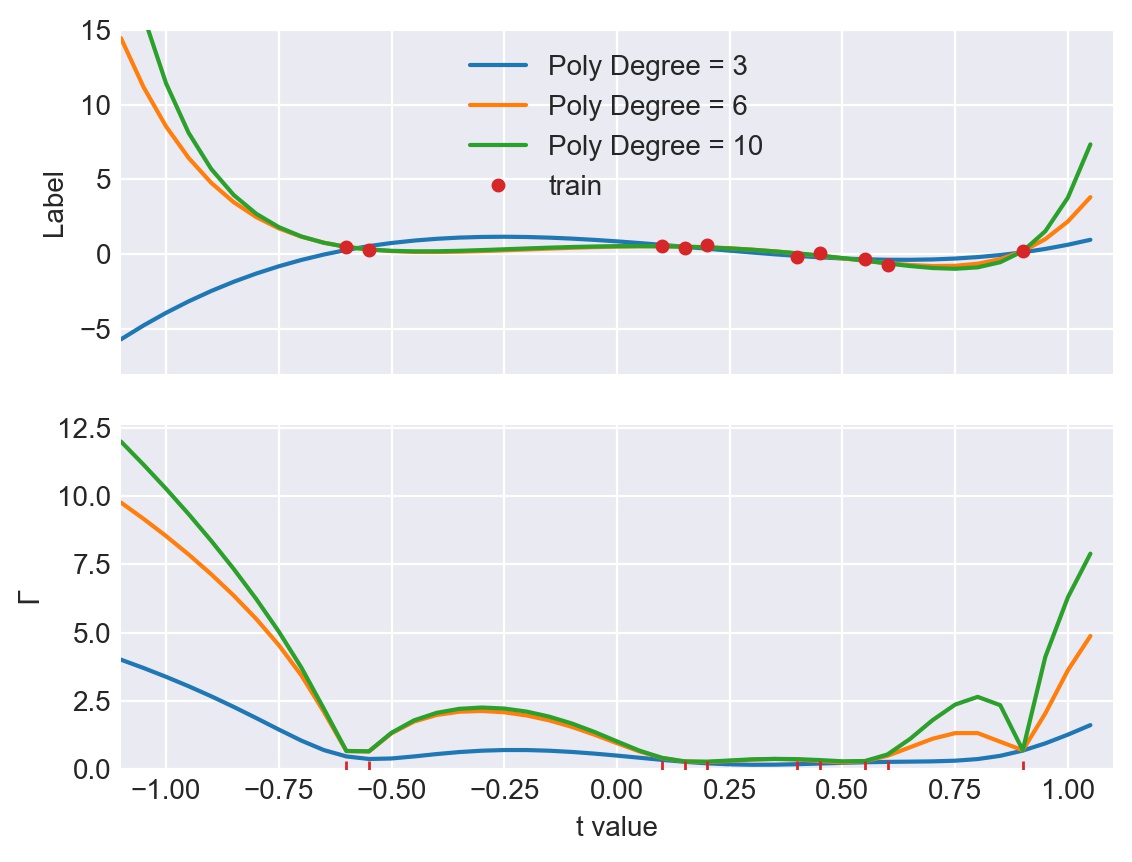}
    \caption{Different polynomial degrees
    \label{fig:linear_regression:under_parameterized:linear_least_squares_with_poly}
    }
    \end{subfigure}
    \caption{The mean pNML least squares estimator and its associated regret}
\end{figure}

Next, we simulate the case of fitting polynomials with different degrees. 
Again, we generated 10 random points in the interval $[-1, 1]$.
The matrix $X$ is now:
\begin{equation}
X = 
\begin{bmatrix}
1 & 1 & 1 & \hdots & 1\\
t_0 & t_1 & t_2 & \hdots & t_{9}\\
\vdots &  \vdots &       & \vdots  \\
t_0^{\textit{Poly Deg}} & t_1^{\textit{Poly Deg}} & t_2^{\textit{Poly Deg}}  & \hdots & t_{9}^{\textit{Poly Deg}}
\end{bmatrix}.
\end{equation}

\Figref{fig:linear_regression:under_parameterized:linear_least_squares_with_poly} (top) shows the predicted label for every $t$ value in $[-1,1]$ for the different polynomial degrees. 
To avoid singularities we used the regularized version with $\lambda=10^{-4}$. 
The training set is shown by red dots in the figure. 
Note that for a polynomial of degree ten, the number of parameters is greater than the training set size. 
Nevertheless, the prediction accuracy near the training samples is similar to that of a degree three polynomial.

\Figref{fig:linear_regression:under_parameterized:linear_least_squares_with_poly} (bottom) shows the regret (or learnability) of the three pNML learners corresponding to model classes of polynomials with the various degrees. 
All the learners have regret values that are small near the training samples and large as $t$ drifts away from these samples.

\section{Concluding remarks}
In this chapter, we provided an explicit analytical solution of the pNML universal learning scheme and its learnability measure for the linear regression hypothesis class. 
Interestingly, the predicted universal pNML assignment is Gaussian with a mean that is equal to that of the ERM, but with a variance that increases by a factor $K$ whose logarithm is the learnability measure $\Gamma$.
Analyzing $\Gamma$ we can observe the ``learnability space'' for this problem. 
Specifically, if a test sample mostly lies in the subspace spanned by the eigenvectors associated with large eigenvalues of the empirical correlation matrix then the learner can generalize well, even in an over-parameterized case where the regression dimension is larger than the number of training samples.
Finally, we provided a simulation of the pNML least squares prediction for polynomial interpolation.

The the next chapters, we'll derive the pNML learner for other hypothesis classes as neural network and over-parameterized linear regressions.
We conjecture that as in linear regression other ``over-parameterized'' model classes are learnable at least locally, that can be inferred from the pNML solution. 

\chapter{Overparameterized Linear Regression}  
\label{chap:overparameterized_linear_regression}
\section{Introduction}\label{sec:linear_regression:overparameterization:introduction}

Classic learning theory argues that complex models tend to overfit their training set, thus generalizing poorly to unseen ones~\citep{bartlett2020benign,hastie2001friedman}.
This assumption is challenged by modern learning models such as DNN which operate well even with a perfect fit to the training set~\citep{DBLP:conf/iclr/ZhangBHRV17}.
Motivated by this phenomenon, we consider when a perfect fit in training is compatible with an accurate prediction, i.e., when a small \textit{generalization error} is achieved.

We examine \textit{over-parameterized} linear regression, where the number of the learnable parameters is larger than the training set size.
We focus on the \textit{minimum norm} (MN) solution. This solution has the following unique property: It is the solution with the minimal norm that attains a perfect fit on the training set.
Recent work show that the MN solution generalizes well in the over-parameterized regime and approximated its generalization error~\citep{belkin2018overfitting,hastie2019surprises,liang2020just,ma2019generalization,shah2018minimum}. However, they assume some probabilistic connection between the training and test which may not be valid in a real-life scenario.

A popular approach to deal with over-parameterized models is to find the optimal regularization term for the ridge regression model class.
\citet{nakkiran2020optimal} showed that models with optimally-tuned regularization achieve monotonic test performance as growing the model size.
\citet{dwivedi2020revisiting} used the minimum description length principle to quantify the model complexity and to find the optimal regularization term.

Several studies suggested that for linear regression the generalization is proportional to the model norm~\citep{kakade2009complexity,ma2019generalization,shamir2015sample}.
\citet{muthukumar2020harmless} showed that the generalization error decays to zero with the number of features.
\citet{tsigler2020benign} provided non-asymptotic generalization bounds for over-parameterized ridge regression.
\citet{nichani2020deeper} analyzed the effect of increasing the depth of linear networks on the test error using the MN solution.
\citet{hastie2019surprises} provided a non-asymptotic approximation of the generalization error for the over-parameterized region.
In addition, several authors argued that the MN solution captures the basic behavior of DNN~\citep{allen2019convergence,gunasekar2018implicit}.

However, all mentioned work assume some probability distribution on the training and testing sets or the learnable parameters. This assumption may not apply in a real-life scenario. Moreover, they do not consider the specific test input thus do not provide a point-wise generalization error.

In this chapter, we derive an upper bound of the pNML regret for over-parameterized linear regression.
We design the hypothesis set to contain hypotheses that have a norm that is not larger than the MN norm.
By utilizing this hypothesis set, the pNML prediction equals the MN solution. Thus the derived upper bound of the regret can be used as the generalization error of the MN solution.
We show that if the test vector resides in a subspace spanned by the eigenvectors associated with the large eigenvalues of the empirical correlation matrix of the training data, linear regression can generalize despite its over-parameterized nature.
In addition, we present a recursive formulation of the norm of the MN solution. 
We demonstrate the case where a small deviation from the prediction of the MN solution increases the model norm significantly, which implies high confidence in the MN prediction.

To summarize, we cover the following derivations.
\begin{itemize}
    \item \textbf{Designing the norm constrained hypothesis set.} Introducing the norm constrained hypothesis set for over-parameterized linear regression. Utilizing this set, we create a pNML learner that has a meaningful regret and a prediction that equals the MN solution.
    \item \textbf{Upper bounding the pNML regret.}  Deriving an analytical upper bound of the pNML regret, which is associated with the generalization error. We demonstrate what are the characteristics of the test data for which the regret is small.
    \item \textbf{Deriving a recursive formulation for the norm of the MN solution.} We present a recursive formula for the norm of the MN solution. We show what are the properties of the test data for which a small deviation from the MN prediction increases significantly the norm of the MN solution. In this situation, the MN prediction is considered reliable, as it has a significantly smaller norm than the other predictors that fit the training data.
\end{itemize}

The presented results hold for nearly all settings since the pNML is the min-max solution of the individual setting in which there is no assumption on a probabilistic connection between the training set and the test sample (distribution-free).
In addition, we demonstrate the use of the pNML regret as a confidence measure in a simulation of fitting trigonometric polynomials to synthetic data. Also, we show that the empirically calculated regret and its upper bound are correlated with the test error double-descent phenomenon on sets from the UCI repository~\citep{Dua:2019}.

\section{Notation and preliminaries}
\label{sec:norm_constrained_pnml:preliminaries}

In the supervised machine learning scenario, a training set consisting of $N$ pairs of examples is given
\begin{equation}
\mathcal{D}_N = \{(x_n,y_n)\}_{n=1}^{N}, \quad x_n \in R^{M \times 1}, \quad y_n \in R,
\end{equation}
where $x_n$ is the $n$-th data instance and $y_n$ is its corresponding label.
The goal of a learner is to predict the unknown label $y$ given a new test data $x$ by assigning a probability distribution $q(\cdot|x)$ to the unknown label. 
The performance is evaluated using the log-loss function
\begin{equation}
\ell(q;x,y) = -\log {q(y|x)}.
\end{equation}

For over-parameterization, the MN solution is the solution that attains a perfect fit to the training set and has the lowest norm
\begin{equation} \label{eq:norm_constrained_pnml:mn_solution}
\theta_N^* = X_N^+ Y_N.
\end{equation}
where $X_N^+$ is the Moore-Penrose inverse of $X_N$ is~\citep{ben2003generalized}
\begin{equation} \label{eq:norm_constrained_pnml:pseudo-inverse}
X_N^+ = 
\begin{cases} 
    (X_N^\top X_N)^{-1} X_N^\top & \textit{Rank}(X_N^\top X_N) = M, \\
    X_N^\top (X_N X_N^\top)^{-1} & \textit{otherwise}.   
\end{cases}
\end{equation}

Denote the ridge regression solution as 
\begin{equation} \label{eq:norm_constrained_pnml:ridge}
\theta_N^\lambda \triangleq \left(X_N^\top X_N  + \lambda I \right)^{-1} X_N^\top Y_N 
\end{equation} 
where $\lambda$ is the regularization term, it can be shown that $\lim_{\lambda \xrightarrow{} 0} \theta_N^\lambda = \theta_N^*$~\citep{zhou2002variants}.

\section{The norm of the minimum norm solution} 
\label{sec:norm_constrained_pnml:mn_norm}

We present the behavior of the norm of the MN solution for an over-parameterized linear regression model. 
We show the properties of the test sample for which the MN solution prediction can be trusted.
In~\secref{sec:norm_constrained_pnml:regret_upper_bound}, we use this result to upper bound the regret.

\begin{theorem}  \label{theorem:mn_norm}
Denote the projection of the test sample onto the orthogonal subspace of the training data empirical correlation matrix as 
\begin{equation}
x_\bot \triangleq \left[I - X_N^+ X_N \right] x,
\end{equation}
the norm of the MN solution based on the training set $\mathcal{D}_N$ and the test sample $(x,y)$ is given by 
\begin{equation}  \label{eq:norm_constrained_pnml:mn_solution_norm}
\norm{\theta_{N+1}^*}^2 = \norm{\theta_N^*}^2 + \frac{1}{\norm{x_\bot}^2}(y-x^\top \theta_N^*)^2.
\end{equation}
\end{theorem}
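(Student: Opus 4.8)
The plan is to characterize $\theta_{N+1}^*$ as the minimum-norm element of the solution set of the augmented system and then reduce the computation to a single-constraint projection inside the null space of $X_N$. Write $X_{N+1} = \begin{bmatrix} X_N \\ x^\top \end{bmatrix}$ and $Y_{N+1} = \begin{bmatrix} Y_N \\ y \end{bmatrix}$, so that by definition $\theta_{N+1}^*$ is the vector of least norm satisfying simultaneously $X_N\theta = Y_N$ and $x^\top\theta = y$. Every $\theta$ meeting the first equation can be written uniquely as $\theta = \theta_N^* + v$ with $v \in \mathcal{N}(X_N)$, the null space of $X_N$, since $X_N(\theta - \theta_N^*) = 0$ and $\theta_N^* = X_N^+ Y_N$ is itself a solution.

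The first key step is orthogonality: $\theta_N^*$ lies in the row space of $X_N$, which is the orthogonal complement of $\mathcal{N}(X_N)$, so $(\theta_N^*)^\top v = 0$ for every admissible $v$. Consequently the cross term disappears and $\norm{\theta}^2 = \norm{\theta_N^*}^2 + \norm{v}^2$, reducing the problem to minimizing $\norm{v}^2$ over $v \in \mathcal{N}(X_N)$. The second key step is to rewrite the remaining constraint. Because $X_N^+ X_N$ is the orthogonal projector onto the row space of $X_N$, the vector $x_\bot = [I - X_N^+ X_N]x$ is exactly the orthogonal projection of $x$ onto $\mathcal{N}(X_N)$; hence for $v \in \mathcal{N}(X_N)$ we have $x^\top v = x_\bot^\top v$, and the constraint $x^\top\theta = y$ becomes $x_\bot^\top v = y - x^\top\theta_N^*$.

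It then remains to solve: minimize $\norm{v}^2$ subject to $v \in \mathcal{N}(X_N)$ and $x_\bot^\top v = y - x^\top\theta_N^*$. Since $x_\bot$ itself lies in $\mathcal{N}(X_N)$, the optimal $v$ is collinear with $x_\bot$, namely $v = \frac{y - x^\top\theta_N^*}{\norm{x_\bot}^2}\, x_\bot$, which is feasible and yields $\norm{v}^2 = (y - x^\top\theta_N^*)^2/\norm{x_\bot}^2$. Substituting into $\norm{\theta}^2 = \norm{\theta_N^*}^2 + \norm{v}^2$ gives the claimed identity. Finally I would note that $\theta_N^* + v$ indeed lies in the row space of $X_{N+1}$, so that the minimizer of the constrained problem really is the MN solution; this follows because $x_\bot = x - X_N^+ X_N x$ is a difference of two vectors lying in that row space.

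The step requiring the most care, and the main conceptual obstacle, is the orthogonal decomposition: correctly identifying $X_N^+ X_N$ as the orthogonal projection onto the row space of $X_N$, which is what makes both the vanishing of the cross term and the reduction $x^\top v = x_\bot^\top v$ hold simultaneously. I would also record the standing assumption that $\norm{x_\bot} \neq 0$, i.e.\ that the test input has a genuine component outside the span of the training data, which is the generic situation in the over-parameterized regime; otherwise the added constraint is either redundant or inconsistent and the recursion degenerates.
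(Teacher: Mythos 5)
Your proof is correct, and it takes a genuinely different route from the paper's. The paper invokes the recursive (Greville-type) formula for the pseudo-inverse of the augmented data matrix, writes $\theta_{N+1}^* = \theta_N^* + c^+\,(y - x^\top\theta_N^*)$ with $c = x^\top(I - X_N^+X_N)$, computes $c^+ = x_\bot/\norm{x_\bot}^2$, and then expands the squared norm, checking separately that the cross term $\theta_N^{*\top}c^+$ vanishes. You instead derive the update from first principles: you parametrize the solution set of the augmented system as $\theta_N^* + \mathcal{N}(X_N)$, use the fact that $\theta_N^*$ lies in the row space (so the cross term vanishes for \emph{every} admissible perturbation, not just the optimal one), reduce the new constraint to $x_\bot^\top v = y - x^\top\theta_N^*$ via the projector identity, and solve the resulting single-constraint minimum-norm problem explicitly. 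The two arguments meet at the same point — your optimal $v$ equals the paper's $c^+(y - x^\top\theta_N^*)$ — but yours buys self-containedness (no appeal to the pseudo-inverse recursion as a black box) and makes the geometry transparent: it is immediately visible why the norm decomposes additively and why the increment is a rank-one correction along $x_\bot$. It also explicitly confirms that the constructed minimizer is indeed the MN solution of the augmented system, a point the paper leaves implicit. Your standing assumption $\norm{x_\bot}\neq 0$ matches the paper's condition $c \neq 0$ for the recursion to apply, so the two proofs cover the same regime.
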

\begin{proof}
Let $c=x^\top \left(I - X_N^+ X_N \right)$.
For $c \neq 0$ the recursive formula to compute the pseudo-inverse of the data matrix is
\begin{equation}
X_{N+1}^+ = 
\begin{bmatrix} 
X_N^+ - c^+ x^\top X_N^+ & c^+ 
\end{bmatrix}.
\end{equation}
Denote the MN solution based on the $N$ training samples by $\theta_N^*$,
given a new sample $(x,y)$ the recursive formulation of the MN solution based on the training set and this new sample is
\begin{equation}
\theta_{N+1}^* 
= X_{N+1}^+ Y_{N+1} = 
\begin{bmatrix} 
X_N^+ - c^+ x^\top X_N^+ & c^+ 
\end{bmatrix}
\begin{bmatrix}
Y_N \\ y \\
\end{bmatrix} 
= 
\theta_N^* +  c^+ (y - x^\top \theta_N^*). 
\end{equation}
The norm of the MN solution based on these $N+1$ samples is
\begin{equation} \label{eq:norm_constrained_pnml:mn_norm_recurisve}
||\theta_{N+1}^*||^2  
= 
||\theta_N^*||^2 + 2 \theta_N^{* \top} c^+ (y - x^\top \theta_N^*) + c^{+ ^\top} c^+ (y - x^\top \theta_N^*)^2.
\end{equation}

Denote $x_\bot = \left(I - X_N^+ X_N \right) x$, the pseudo-inverse of $c$ is
\begin{equation} \label{eq:norm_constrained_pnml:c^+}
c^+ = c^\top (c c^\top)^{-1} 
= 
\left[x^\top \left(I - X_N^+ X_N \right)\right]^\top \frac{1}{x^\top \left[I -X_N^\top (X_N X_N^\top )^{-1} X_N \right] x} 
=
\frac{x_\bot}{||x_{\bot}||^2}.
\end{equation}
The inner product of the MN solution and $c^+$ can be written as
\begin{equation} \label{eq:norm_constrained_pnml:theta_N_dot_c^+}
\theta_N^{* \top} c^+ 
=  
Y_N^\top X_N^{+ \top}
\frac{\left(I - X_N^+ X_N \right)^\top x}{x^\top \left(I - X_N^+ X_N \right)^2 x} 
= 
\frac{ Y_N^\top \left(X_N^+ - X_N^+\right)^\top x}{x^\top \left(I - X_N^+ X_N \right)^2 x} 
= 0.
\end{equation}

Substitute \eqref{eq:norm_constrained_pnml:c^+} and \eqref{eq:norm_constrained_pnml:theta_N_dot_c^+} to \eqref{eq:norm_constrained_pnml:mn_norm_recurisve} gives the final result
\begin{equation} \label{eq:norm_constrained_pnml:mn_norm_behaviour}
\norm{\theta_{N+1}}^2 = \norm{\theta_N^*}^2 + \frac{1}{\norm{x_{\bot}}^2} \left(y - x^\top \theta_N^* \right)^2.
\end{equation}
\end{proof}

If the test sample $x$ lies mostly in the subspace that is spanned by the eigenvectors of the empirical correlation matrix of the training data then $\norm{x_\bot}$ is small: A slight deviation from the MN solution prediction increases significantly the norm of $\theta_{N+1}^*$ (the MN solution that includes the test sample).
On the other hand, if the test sample lies in the orthogonal subspace, $\norm{x_\bot}$ is relatively large and a deviation from the MN solution prediction does not change the norm of $\theta_{N+1}^*$.

If many values of the test label produce MN solution with a low norm, they are all reasonable and therefore none of them can be trusted. On the contrary, if there is just one value of the test label that is associated with MN solution with a low norm, we are confident that it is the right one. 
For confident prediction, we would like that any other prediction will cause a model with high complexity, this is a situation where $\norm{x_\bot}$ is small.
We use this result to upper bound the pNML regret in~\secref{sec:norm_constrained_pnml:regret_upper_bound}.

\section{The pNML for over-parameterized linear regression}
\label{sec:norm_constrained_pnml:pnml_learner}



For linear regression, we assume a linear relationship between the data and labels
\begin{equation}
y_i = x_i^\top \theta + e_i,
\qquad 1 \leq i \leq N, 
\quad x_n \in R^{M \times 1}, 
\quad y_n \in R.
\end{equation}
$e_i$ is a white noise random variable with a variance of $\sigma^2$.
The test label $y$, conditioned on the test data $x$, has a normal distribution that depends on the learnable parameters
\begin{equation} \label{eq:norm_constrained_pnml:p_theta}
p_{\theta}(y|x) 
=\frac{1}{\sqrt[]{2\pi\sigma^2}}\exp\left\{-\frac{1}{2\sigma^2}\big(y- x^\top \theta \big)^2\right\}.
\end{equation}
The unknown vector $\theta$ belongs to a set $\Theta$.
A different perspective that corresponds to the individual setting is to assume that $x$ and $y$ are individual values and the given hypothesis set that the genie can choose from is composed of learners that are defined by \eqref{eq:norm_constrained_pnml:p_theta}.

Recall the pNML solution for linear regression from the previous section:
Denote the data matrix and label vector as
\begin{equation} \label{eq:norm_constrained_pnml:trainset_matrix}
X_N = 
\begin{bmatrix}
x_1 & x_2 & \dots & x_N
\end{bmatrix}^\top \in \mathcal{R}^{N \times M},
\quad
Y_N = \begin{bmatrix} y_1 & y_2 & \dots & y_N \end{bmatrix}^\top \in \mathcal{R}^{N \times 1},
\end{equation}
and let $u_m$ and $h_m$ be the $m$-th eigenvector and eigenvalues of the training set data matrix.
Assuming $X_N^\top X_N$ is invariable ($M \leq N$), the pNML regret and normalization factor are
\begin{equation} \label{eq:norm_constrained_pnml:ols_regret}
\Gamma = \log K_0, \qquad
K_0 = 1 + \frac{1}{N} \sum_{m=1}^{M} \frac{\left(x^\top u_m\right)^2 }{h_m^2}.
\end{equation}

This result deals with under-parameterized linear regression models. It shows that if the test sample $x$ lies in the subspace spanned by the eigenvectors with large eigenvalues, the corresponding regret is low. In this situation the model prediction is similar to the genie's and can be trusted.

Executing the pNML procedure using an over-parameterized hypothesis set would lead to noninformative regret:
Having a large hypothesis set may produce a perfect fit to every test label and therefore the maximal regret.
To reduce the hypothesis set size, we include learners whose $\normltwo$ norm is not larger than the norm of the MN solution
\begin{equation} \label{eq:norm_constrained_pnml:mn_hypotesis_class}
P_\Theta = \left\{p_\theta(y|x) \ | \ \norm{\theta} \leq \norm{\theta_N^*} \ , \ \theta \in R^{M \times 1} \ \right\}.
\end{equation}
Our goal is to find the pNML regret using this hypothesis set as defined in \eqref{eq:norm_constrained_pnml:mn_hypotesis_class}.

\subsection{The pNML regret upper bound} \label{sec:norm_constrained_pnml:regret_upper_bound}

We now show an upper bound of the pNML regret using the hypothesis set that contains only learners that have a norm that is not larger than the MN norm.

The genie that knows the true test label value is the solution of the following minimization objective
\begin{equation}
\thetagenie = \argmin_{\theta \in R^{M \times 1}} \left[ \left(y - x^\top \theta \right)^2 + \sum_{n=1}^N \left(y_n - x_n^\top \theta \right)^2\right] \text{ s.t.} \norm{\theta} = \norm{\theta_N^*}.
\end{equation}
With \eqref{eq:norm_constrained_pnml:ridge}, we write the genie using the recursive least squares formulation~\citep{hayes19969}
\begin{equation} \label{eq:norm_constrained_pnml:theta_genie_rls}
\thetagenie = \theta_N^\lambda + \frac{\left(X_N^\top X_N + \lambda I\right)^{-1} x}{1 + x^\top \left(X_N^\top X_N + \lambda I \right)^{-1} x} \left(y - x^\top \theta_N^\lambda\right).
\end{equation}
Notice that $\lambda$ depends on the test label $y$ such that the norm constraint is fulfilled.

\begin{lemma} \label{lemma:genie_upper_bound}
The upper bound of the genie probability assignment is
\begin{equation}
\probthetagenie
\leq 
\frac{1}{\sqrt{2\pi\sigma^2}}
\exp\left\{
- \frac{\left(y-x^\top \theta_N^\lambda \right)^2}{2\sigma^2 K_0^2 \left(1 + \frac{\norm{x_\bot}^2}{K_0 \lambda} \right)^2} 
\right\},
\end{equation}
\end{lemma}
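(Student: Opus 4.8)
My plan is to evaluate $\probthetagenie$ in closed form and then reduce the entire statement to a single scalar inequality. First I would substitute the RLS form of the genie \eqref{eq:norm_constrained_pnml:theta_genie_rls} into the Gaussian density \eqref{eq:norm_constrained_pnml:p_theta}. Writing $a \triangleq x^\top (X_N^\top X_N + \lambda I)^{-1} x$, the genie prediction is $x^\top \thetagenie = x^\top \theta_N^\lambda + \frac{a}{1+a}(y - x^\top \theta_N^\lambda)$, so the residual telescopes to
\begin{equation}
y - x^\top \thetagenie = \frac{y - x^\top \theta_N^\lambda}{1 + a},
\end{equation}
and hence
\begin{equation}
\probthetagenie = \frac{1}{\sqrt{2\pi\sigma^2}}\exp\left\{-\frac{\left(y - x^\top \theta_N^\lambda\right)^2}{2\sigma^2 (1+a)^2}\right\}.
\end{equation}
Since the squared residual enters the exponent with a negative sign, the density is increasing in $1+a$, so upper bounding it amounts to upper bounding $1+a$. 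Noting that the denominator claimed in the lemma factors as $K_0^2\left(1 + \norm{x_\bot}^2/(K_0\lambda)\right)^2 = \left(K_0 + \norm{x_\bot}^2/\lambda\right)^2$, it suffices to establish the scalar bound $1 + a \le K_0 + \norm{x_\bot}^2/\lambda$.

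The core step is to decompose the quadratic form $a$ along the orthogonal splitting $x = x_\parallel + x_\bot$, where $x_\parallel = X_N^+ X_N x$ lies in the row space of $X_N$ and $x_\bot = (I - X_N^+ X_N) x$ lies in its null space, as in \Theoref{theorem:mn_norm}. Because $(X_N^\top X_N + \lambda I)^{-1}$ is a function of the symmetric matrix $X_N^\top X_N$, the row space and null space are invariant subspaces, the cross term vanishes, and $a$ splits additively. On the null space $X_N^\top X_N$ acts as zero, so the resolvent acts as $\lambda^{-1} I$ there and contributes exactly $\norm{x_\bot}^2/\lambda$; expanding $x_\parallel$ in the eigenbasis $\{u_m\}$ of $X_N^\top X_N$ with eigenvalues $h_m^2$ produces the row-space contribution, yielding
\begin{equation}
a = \sum_{m : h_m > 0} \frac{(x^\top u_m)^2}{h_m^2 + \lambda} + \frac{\norm{x_\bot}^2}{\lambda}.
\end{equation}

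To finish, I would invoke $\lambda \ge 0$ so that $\frac{1}{h_m^2 + \lambda} \le \frac{1}{h_m^2}$ for every nonzero eigenvalue, showing the row-space sum is at most $\sum_{m : h_m > 0} (x^\top u_m)^2/h_m^2 = K_0 - 1$ by the regret formula \eqref{eq:norm_constrained_pnml:ols_regret}. Adding back the null-space term gives $1 + a \le K_0 + \norm{x_\bot}^2/\lambda$, and squaring and substituting into the exponent reproduces the claimed bound. I expect the decomposition of $a$ to be the main obstacle: one must argue the cross term vanishes and, crucially, recognize that the resolvent degenerates to $\lambda^{-1} I$ precisely on the orthogonal component $x_\bot$, which is nonzero only in the over-parameterized regime. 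The subsequent monotonicity estimate needs only the sign condition $\lambda \ge 0$ and is routine.
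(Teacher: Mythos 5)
Your proposal is correct and follows essentially the same route as the paper: both reduce the genie density to $\frac{1}{\sqrt{2\pi\sigma^2}}\exp\{-(y-x^\top\theta_N^\lambda)^2/(2\sigma^2(1+a)^2)\}$ via the RLS telescoping, split $a$ into the row-space eigensum plus the null-space term $\norm{x_\bot}^2/\lambda$, and bound the former by $K_0-1$ using $\frac{1}{h_m^2+\lambda}\le\frac{1}{h_m^2}$. Your write-up is somewhat more explicit about the vanishing cross term and the factorization $K_0^2(1+\norm{x_\bot}^2/(K_0\lambda))^2=(K_0+\norm{x_\bot}^2/\lambda)^2$, but the argument is the same.
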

where $K_0$ is defined in\eqref{eq:norm_constrained_pnml:ols_regret}
and $\lambda$ satisfies the norm constraint
\begin{equation}
||\thetagenie|| = \norm{\theta_N^*}.
\end{equation}
\begin{proof}
The genie probability assignment using the recursive formulation~\eqref{eq:norm_constrained_pnml:theta_genie_rls} is
\begin{equation} \label{eq:norm_constrained_pnml:prob_genie_rls}
\begin{split}
\probthetagenie &=
\frac{1}{\sqrt{2\pi\sigma^2}}
\exp\left\{
-\frac{1}{2\sigma^2} \left(y - x^\top \thetagenie \right)^2
\right\}
\\ &=
\frac{1}{\sqrt{2\pi\sigma^2}}
\exp\left\{
-\frac{\left(y - x^\top \theta_N^\lambda \right)^2}{2\sigma^2\left[1 + x^\top \left(X_N^\top X_N + \lambda I \right)^{-1} x\right]^2} 
\right\}.
\end{split}
\end{equation}
Let $u_m$ and $h_m$ be the $m$-th eigenvector and eigenvalues of the training set data matrix (using SVD decomposition). 
Assuming over-parameterization $N < M$,
\begin{equation} \label{eq:norm_constrained_pnml:1_plus_x_P_x_upper_bound}
1 + x^\top \left(X_N^\top X_N + \lambda I \right)^{-1} x = 
1 +
\sum_{m=1}^N \frac{\left( u_m^\top x \right)^2}{h_m^2 + \lambda} 
+
\sum_{m=N+1}^M \frac{\left( u_m^\top x \right)^2}{\lambda}
\leq
K_0 + \frac{1}{\lambda} \norm{x_\bot}^2.
\end{equation}
where we set $\lambda=0$ for $m\leq N$.
Substitute \eqref{eq:norm_constrained_pnml:1_plus_x_P_x_upper_bound} to \eqref{eq:norm_constrained_pnml:prob_genie_rls} proves the lemma.
\end{proof}
The genie probability distribution is monotonic decreasing with respect to $\lambda$.

\begin{lemma} \label{lemma:lambda_lower_bound}
The lower bound of the regularization term $\lambda$ that satisfies $||\thetagenie||=\norm{\theta_N^*}$ is
\begin{equation}
\lambda 
\geq
\frac{1}{2}
\frac{\frac{1}{\norm{x_\bot}^2}\left(y - x^\top \theta_N^* \right)^2 }{\theta_N^{*\top} X_N^+ X_N^{+ \top} \theta_N^*
+
\frac{\left(y - x^\top \theta_N^* \right)^2}{||x_{\bot}||^2} 
x^\top X_N^+ X_N^{+ \top} x}.
\end{equation}
\end{lemma}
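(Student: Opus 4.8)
The plan is to read the constraint $\norm{\thetagenie}=\norm{\theta_N^*}$ as a statement about ridge regression on the \emph{augmented} data set $(X_{N+1},Y_{N+1})$, whose rows are those of $X_N$ together with the test feature $x^\top$ and whose labels are $Y_N$ together with $y$. With the Lagrange multiplier $\lambda$, the genie $\thetagenielamb$ of \eqref{eq:norm_constrained_pnml:theta_genie_rls} is exactly $(X_{N+1}^\top X_{N+1}+\lambda I)^{-1}X_{N+1}^\top Y_{N+1}$, i.e. the augmented ridge solution, and $\lambda$ is whatever value makes its norm equal $\norm{\theta_N^*}$. First I would record the two endpoints: at $\lambda=0$ the genie is the minimum-norm interpolator $\theta_{N+1}^*$ of the augmented data, so \Theoref{theorem:mn_norm} gives $\norm{\theta_{N+1}^*}^2=\norm{\theta_N^*}^2+\frac{1}{\norm{x_\bot}^2}(y-x^\top\theta_N^*)^2$, whereas at the sought $\lambda$ the squared norm is $\norm{\theta_N^*}^2$ by definition. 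Hence regularization must remove exactly $\Delta\triangleq\frac{1}{\norm{x_\bot}^2}(y-x^\top\theta_N^*)^2$ of squared norm, and since $\Delta\ge 0$ the constraint forces $\lambda>0$.

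Next I would study $g(\lambda)\triangleq\norm{\thetagenielamb}^2$. Writing $\thetagenielamb=\tilde P_\lambda b$ with $\tilde P_\lambda=(X_{N+1}^\top X_{N+1}+\lambda I)^{-1}$ and $b=X_{N+1}^\top Y_{N+1}$, differentiation gives $g'(\lambda)=-2\,\thetagenielamb^\top\tilde P_\lambda\thetagenielamb\le 0$ and $g''(\lambda)=6\,b^\top\tilde P_\lambda^4 b\ge 0$, so $g$ is convex and decreasing. Convexity yields the tangent bound $g(0)-g(\lambda)\le -g'(0)\,\lambda$. Equating the left side to the required reduction $\Delta$ and substituting $-g'(0)=2\,\theta_{N+1}^{*\top}(X_{N+1}^\top X_{N+1})^{+}\theta_{N+1}^{*}$ turns the constraint into $\Delta\le 2\,\theta_{N+1}^{*\top}(X_{N+1}^\top X_{N+1})^{+}\theta_{N+1}^{*}\,\lambda$, i.e. $\lambda\ge\frac{\Delta}{2\,\theta_{N+1}^{*\top}(X_{N+1}^\top X_{N+1})^{+}\theta_{N+1}^{*}}$. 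This already produces the factor $\tfrac12$ and the numerator $\Delta=\frac{1}{\norm{x_\bot}^2}(y-x^\top\theta_N^*)^2$ that appear in the statement.

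The remaining work is to dominate the rate $\theta_{N+1}^{*\top}(X_{N+1}^\top X_{N+1})^{+}\theta_{N+1}^{*}$ by the stated denominator $\theta_N^{*\top}X_N^+X_N^{+\top}\theta_N^*+\Delta\,x^\top X_N^+X_N^{+\top}x$. Here I would use the recursive decomposition from the proof of \Theoref{theorem:mn_norm}, $\theta_{N+1}^*=\theta_N^*+c^+(y-x^\top\theta_N^*)$ with $c^+=x_\bot/\norm{x_\bot}^2$, together with the orthogonality $X_N^{+\top}x_\bot=0$ shown there. Splitting $\theta_{N+1}^*$ into its $\theta_N^*$ component and its $x_\bot$ component, and handling $(X_{N+1}^\top X_{N+1})^{+}$ through the rank-one structure $X_{N+1}^\top X_{N+1}=X_N^\top X_N+xx^\top$, the cross terms should cancel via $X_N^{+\top}x_\bot=0$, the $\theta_N^*$ part contributing $\theta_N^{*\top}X_N^+X_N^{+\top}\theta_N^*$ and the new direction contributing the $\Delta\,x^\top X_N^+X_N^{+\top}x$ term.

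I expect this last step to be the main obstacle. Moore--Penrose inverse monotonicity fails in general, and appending the row $x^\top$ enlarges the row space precisely in the direction $x_\bot$ on which $X_N^+X_N^{+\top}$ vanishes, so one cannot naively bound $(X_{N+1}^\top X_{N+1})^{+}$ by $(X_N^\top X_N)^{+}$; the care lies in isolating this new direction through the Sherman--Morrison form of $X_{N+1}^\top X_{N+1}$ and tracking the cancellations so the new-direction part collapses to a single term. A slightly looser but more mechanical alternative would skip the tangent bound and instead write $\Delta=\int_0^\lambda\bigl(-g'(t)\bigr)\,dt$, then bound the integrand uniformly on $[0,\lambda]$ using $\norm{\thetagenielamb}\le\norm{\theta_{N+1}^*}$ and $\tilde P_t\preceq(X_{N+1}^\top X_{N+1})^{+}$ on the range of $X_{N+1}^\top$; this reduces to the same matrix-bounding task.
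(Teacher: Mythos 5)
Your proposal is correct and follows essentially the same route as the paper: the paper also treats the genie as ridge regression on the augmented data, lower-bounds $\norm{\thetagenie}^2$ by its first-order Taylor (tangent) expansion at $\lambda=0$ — which is exactly your convexity argument — anchors the value at $\lambda=0$ with \Theoref{theorem:mn_norm}, and then solves the norm constraint for $\lambda$. The step you flag as the main obstacle is in fact not an inequality at all: the paper's \appref{sec:norm_constrained_pnml:taylor_series_second_term} shows that $\theta_{N+1}^{*\top} X_{N+1}^+ X_{N+1}^{+\top}\theta_{N+1}^*$ \emph{equals} $\theta_N^{*\top}X_N^+X_N^{+\top}\theta_N^* + \frac{(y-x^\top\theta_N^*)^2}{\norm{x_\bot}^2}\,x^\top X_N^+X_N^{+\top}x$ exactly, via the rank-one pseudo-inverse update and the orthogonality relations $x_\bot^\top\theta_N^*=0$ and $X_N^{+\top}x_\bot=0$ that you already identified, so no appeal to pseudo-inverse monotonicity is needed.
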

\begin{proof}
The proof is given in \appref{sec:norm_constrained_pnml:lambda_lower_bound}.
\end{proof}
When $y$ equals the MN solution prediction $x^\top \theta_N^*$, the regularization term is zero. As $y$ deviates from the MN solution prediction, the value of $\lambda$ required to satisfy the norm constraint increases. This suggests that the pNML probability assignment is maximized at the MN prediction and exhibits a form of symmetry around it, in a qualitative sense..

For each possible value of the test label $y'$, we wish to find the learner that satisfies the norm constraint $||\thetagenietag|| = ||\theta_N^*||$
and use in the pNML regret calculation
\begin{equation} \label{eq:norm_constrained_pnml:overparam_norm_factor}
\Gamma = \log 
\int_{-\infty}^\infty
\frac{1}{\sqrt{2 \pi \sigma^2}} \exp \left\{-
\frac{1}{2\sigma^2} \left(y- x^\top \thetagenietag \right)^2 \right\} dy'.
\end{equation}
\begin{theorem} \label{theorem:regret_upper_bound}
The norm constrained pNML regret upper bound is
\begin{equation} \label{eq:norm_constrained_pnml:regret_upper_bound}
\Gamma 
\leq
\log \left[
\left(1 + x^\top X_N^+ X_N^{+ ^\top} x \right)
\left(1 + 2\norm{x_\bot}^2 \right)
+ 
3\sqrt[3]{
\frac{1}{\pi \sigma^2}
\norm{x_\bot}^2
\theta_N^{*\top} X_N^+ X_N^{+ \top} \theta_N^*
}
\right]
\end{equation}
\end{theorem}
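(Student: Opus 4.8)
The plan is to bound the normalization integral in \eqref{eq:norm_constrained_pnml:overparam_norm_factor} directly and then take its logarithm, since $\Gamma = \log K$ with $K$ that integral. The two preceding lemmas supply exactly the pointwise ingredients I need: \Lemmaref{lemma:genie_upper_bound} replaces each genie probability $\probthetagenietag$ by a Gaussian in $y'$ whose inflated standard deviation is $\sigma\big(K_0 + \norm{x_\bot}^2/\lambda\big)$, while \Lemmaref{lemma:lambda_lower_bound} lets me eliminate the implicit regularizer $\lambda$ in favour of an explicit function of the residual $r = y' - x^\top\theta_N^*$. Substituting the lower bound on $\lambda$ into the variance produces an effective standard deviation that tends to a constant of order $\sigma K_0$ for large $|r|$ but blows up near the minimum-norm prediction as a constant times $1/r^2$ when $r\to 0$. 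This non-integrable divergence of the variance at $r=0$ is precisely what rules out a single clean Gaussian integral and forces a two-region treatment.

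First I would record the identity $K_0 = 1 + x^\top X_N^+ X_N^{+\top} x$, read off from \eqref{eq:norm_constrained_pnml:1_plus_x_P_x_upper_bound} by setting $\lambda=0$ on the informative subspace; this is what eventually yields the factor $1 + x^\top X_N^+ X_N^{+\top} x$ in the statement. Next I would split the integral at a free threshold $|r| = T$. On the inner interval $|r|\le T$ the effective variance is enormous, so the integrand essentially sits at its peak, and I bound it by the flat maximum $1/\sqrt{2\pi\sigma^2}$, contributing at most $2T/\sqrt{2\pi\sigma^2}$. On the outer region $|r| > T$ I use $1/r^2 \le 1/T^2$ to freeze the effective variance at a finite value and bound the Gaussian integral by its effective standard-deviation ratio, which should collect into the $\big(1 + x^\top X_N^+ X_N^{+\top} x\big)\big(1 + 2\norm{x_\bot}^2\big)$ term.

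The final step is the optimization over $T$. The inner contribution grows linearly in $T$ while the residual part of the outer contribution decays like $1/T^2$, so balancing the two amounts to minimizing a function of the form $PT + Q/T^2$. Its minimizer is $T_\star = (2Q/P)^{1/3}$, with minimum value $3\cdot 2^{-2/3}\,P^{2/3}Q^{1/3}$; with $P$ proportional to $1/\sqrt{\sigma^2}$ and $Q$ proportional to $\norm{x_\bot}^2\,\theta_N^{*\top} X_N^+ X_N^{+\top}\theta_N^*$, this collapses to exactly the cube-root term $3\sqrt[3]{\tfrac{1}{\pi\sigma^2}\norm{x_\bot}^2 \theta_N^{*\top} X_N^+ X_N^{+\top}\theta_N^*}$. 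Summing the inner, outer, and optimized pieces and taking the logarithm gives the claimed bound.

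I expect the main obstacle to be the moving center of the Gaussian: because $x^\top\theta_N^\lambda$ is the ridge prediction, it shrinks as $\lambda$ (hence $|r|$) grows, so the outer integrand is not a single fixed Gaussian and the naive ``integral equals effective standard deviation'' estimate is not literally valid. Controlling this drift — showing that $y' - x^\top\theta_N^\lambda$ stays comparable to $r$ so the outer integral is still dominated by the frozen-variance Gaussian — is the delicate part, and it is also where the precise exponent of $\norm{x_\bot}^2$ inside the cube root and the exact constant $2\norm{x_\bot}^2$ in the product term must be pinned down. By contrast, once the two regional bounds are in hand, the threshold optimization is entirely routine.
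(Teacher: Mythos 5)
Your proposal follows essentially the same route as the paper's proof: split the normalization integral at a threshold on the residual $|y'-x^\top\theta_N^*|$, bound the inner part by the Gaussian peak $1/\sqrt{2\pi\sigma^2}$, bound the outer part via \Lemmaref{lemma:genie_upper_bound} with $\lambda$ frozen at its \Lemmaref{lemma:lambda_lower_bound} value at the threshold (the paper then extends that integral to the whole real line, which also disposes of the moving-center issue you flag), and optimize the threshold --- the paper writes the threshold as $\sqrt{\delta\norm{x_\bot}^2\norm{\theta_N^*}^2}$ and minimizes over $\delta$, which is exactly your $PT+Q/T^2$ optimization in disguise. The only cosmetic difference is that the paper justifies the flat inner-region bound through the relaxed constraint $\norm{\thetagenietag}^2\le(1+\delta)\norm{\theta_N^*}^2$ together with \Theoref{theorem:mn_norm} (perfect fit on that interval) rather than by bounding the density by its peak directly.
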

\begin{proof}
Let $\delta \geq 0$, we introduce a relaxation of the constraint on the norm of $||\thetagenietag||^2$ such that
\begin{equation}
||\thetagenietag||^2 \leq (1+\delta) \norm{\theta_N^*}^2.
\end{equation}
Using \Theoref{theorem:mn_norm}, a perfect fit is attained when the following constraint is satisfied 
\begin{equation}
|y' - x^\top \theta_N^*| \leq |\Tilde{y}|, \qquad 
\Tilde{y} \triangleq  x^\top \theta_N^* + \sqrt{\delta \norm{x_\bot}^2 \norm{\theta_N^*}^2}.
\end{equation}
We upper bound the regret with the relaxed constraint: For all $y'$s up to $\Tilde{y}$ we get a perfect fit, and for $y'$ from $\Tilde{y}$ to infinity we use the upper bound from \lemmaref{lemma:genie_upper_bound}
\begin{equation}
\Gamma \leq \log \left[  
2 \int_{x^\top \theta_N^*}^{\Tilde{y}} \frac{1}{\sqrt{2 \pi \sigma^2}} dy' \ +
2 \int_{\Tilde{y}}^{\infty} 
\frac{1}{\sqrt{2\pi\sigma^2}}
\exp\left\{
- \frac{\left(y'-x^\top \theta_N^\lambda \right)^2}{2\sigma^2 K_0^2 \left(1 + \frac{\norm{x_\bot}^2}{K_0 \lambda} \right)^2}
\right\}
dy' \right].
\end{equation}
Next, we fix $\lambda$ at the point $\Tilde{y}$. We use the lower bound from \lemmaref{lemma:lambda_lower_bound} to further upper bound the expression.
After integrating, we get the regret that depends on $\delta$.
To get a tight bound, we find $\delta$ that minimizes the regret and that proves the theorem.
The complete derivation is given in \appref{sec:norm_constrained_pnml:pnml_regret_upper_bound}.
\end{proof}

In supervised machine learning, the training set is given, and we are interested in identifying the conditions under which the test sample is associated with a low generalization error.
We make the following remarks:
\begin{enumerate}
\item 
Looking at the $x^\top X_N^+ X_N^{+ \top} x $ term and let $u_m$ and $h_m$ represent the $m$-th eigenvector and eigenvalue of the training data matrix $X_N$
\begin{equation}
K_0 =  1 + x^\top X_N^+ X_N^{+ ^\top} x = 1 + \frac{1}{N} \sum_{m=1}^{\min \left(M,N\right)} \frac{\left(x^\top u_m\right)^2 }{h_m^2}.
\end{equation}
This term is small when the test sample lies within the subspace spanned by the eigenvectors of the training set empirical correlation matrix that is associated with the large eigenvalues.
Also, $K_0$ decreases when increasing the training set size.
\item 
According to~\eqref{eq:norm_constrained_pnml:regret_upper_bound}, when $\norm{x_\bot}=0$, the regret upper bound equals the under-parameterized pNML regret $\log K_0$ as in~\eqref{eq:norm_constrained_pnml:ols_regret}
\item 
As more energy of the test sample is found in the orthogonal subspace of the training data correlation matrix, $\norm{x_\bot}$ increases and so does the regret.
\item
The pNML regret is proportional to the term 
\begin{equation}
\theta_N^{*\top} X_N^+ X_N^{+ \top} \theta_N^* = \norm{X_N^{+ \top} X_N^{+} Y_N}^2.  
\end{equation}
This term represents the norm of the MN solution. 
This is similar to the works described in introduction that show that linear regression generalization is proportional to the model norm.
\item
Increasing $\sigma^2$ reduces the pNML regret. This may relate to the genie: Increasing the noise reduces the genie's performance, which makes the pNML log-loss closer to the genie's.
\end{enumerate}

\begin{figure}[tb]
\centering
\begin{subfigure}[t]{0.49\linewidth}
    \includegraphics[width=\columnwidth]{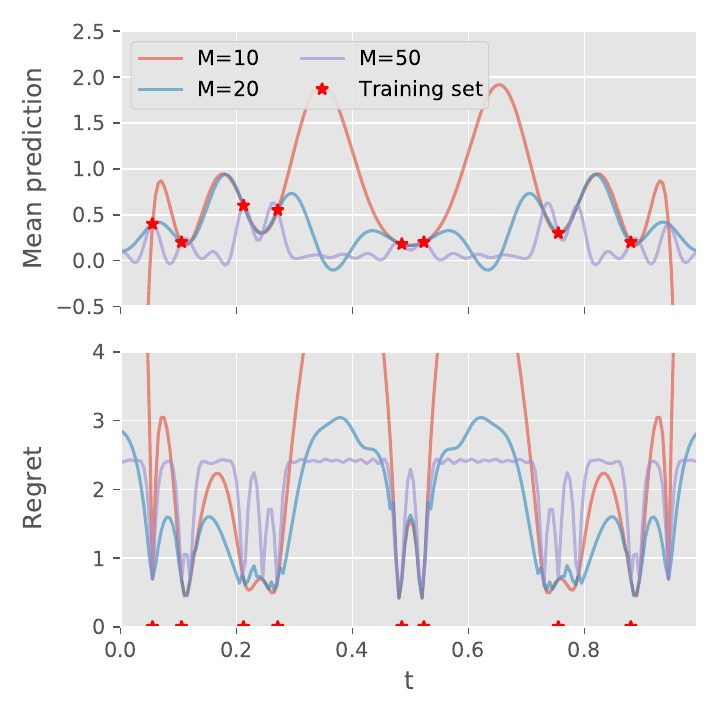}
    \caption{The pNML of different model degrees \label{fig:norm_constrained_pnml:pnml_pred_and_regret}}
\end{subfigure}
\begin{subfigure}[t]{0.49\linewidth}
    \includegraphics[width=\columnwidth]{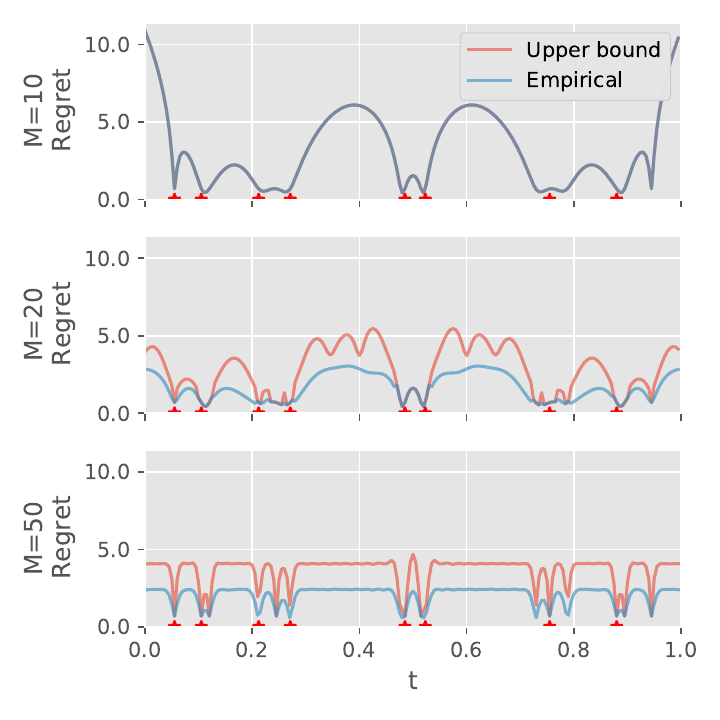}
    
    \caption{The pNML min-max regret \label{fig:norm_constrained_pnml:pnml_syntetic_analytical_vs_empirical}}
\end{subfigure}
\caption{The pNML regret for over-parameterized linear regression
}
\end{figure}

\subsection{Synthetic data experiment} 
\label{sec:norm_constrained_pnml:synthetic_data}

We use a training set that consists of 8 points $\{t_n,y_n\}_{n=0}^7$ in the interval $[0,1]$. These points are shown in \figref{fig:norm_constrained_pnml:pnml_pred_and_regret} (top) as red dots. 
The data matrix was created with the following conversion:
\begin{equation}
X_{N}[n,m] = \cos \left(\pi m t_n +  \frac{\pi}{2} m\right) , \qquad
0 \leq n < N, \quad
0 \leq m <  M, 
\end{equation}
where $M$ and $N$ are the number of learnable parameters and the training set size respectively.
We predict using the pNML learner the labels of all $t$ values in the interval $[0,1]$.
\Figref{fig:norm_constrained_pnml:pnml_pred_and_regret} (top) shows the mean pNML prediction for $M$ values of 10, 20, and 50. 
Since the number of parameters is greater than the training set size, all curves fit perfectly to the training points.

We treat each point in the interval $[0,1]$ as a test point and calculate its pNML min-max regret as shown in \figref{fig:norm_constrained_pnml:pnml_pred_and_regret} (bottom).
The training $t_n$'s are marked in red on the horizontal axis. 
For every $M$, in the training data surroundings the regret is low comparing to areas where there are no training data. 

Surprisingly, the model with $M=10$ has a larger regret than models with a greater number of parameters. 
It may relate to the constraint: In this model, the norm value of the MN solution is 295,552  while for the models with $M=20$ and $M=50$ the MN solution norms are 0.15 and 0.04 respectively.
Having a lower norm constraint means a simpler model and better generalization.
This behaviour is also presented in the regret upper bound \eqref{eq:norm_constrained_pnml:regret_upper_bound} with the term $\theta_N^{*\top} X_N^+ X_N^{+ \top} \theta_N^*$ that is proportional to the norm constraint value.
Furthermore, looking at $t=0.35$ for instance, the model with $M=10$ predicts a label that deviates from 0 much more than the model with $M=50$.

To show that the derived upper bound is informative we plot it along with the empirically calculated pNML min-max regret in~\figref{fig:norm_constrained_pnml:pnml_syntetic_analytical_vs_empirical}.
For $M=10$ the analytical expression and the empirically calculated regret give the same results.
For the other model degrees, the upper bound and the empirical regret have a similar characteristic: In areas where the training data exists, the regret decreases, and as moving away to areas without training points, the regret increases.

\section{Real data: UCI dataset} \label{app:norm_constrained_pnml:real_data_uci}
To evaluate the regret as a generalization measure we use datasets from the UCI repository as proposed by~\citet{hernandez2015probabilistic} with the same test and train splits.

For each dataset, we varied the training set size and fit the pNML and MN learners. We optimize $\sigma^2$ on a validation set for both learners.
We define a regret threshold and check the performance of the pNML taking into account only samples whose regret is lower than this threshold. We also evaluate the logloss of the MN learner of these samples. 

\Figref{fig:norm_constrained_pnml:regret_based_learner} shows the logloss and the Cumulative Distribution Function (CDF) as function of the regret threshold with the 95\% confidence interval that was calculated on different train-test splits.
Both the test logloss of the pNML and MN learners are monotonically increasing functions of the regret threshold.
For 6 out of 10 datasets the pNML test logloss is lower than the MN while for the others the performance is equal.
Using the low regret as an indication for good generalization works the best in the Naval Propulsion dataset: the average test logloss of the 80\% of the samples with the lowest regret is 1.12, while the average logloss over all samples is 1.8.

\begin{figure}[bt]
\centering
\begin{subfigure}[t]{0.315\linewidth}
    \includegraphics[width=1.0\textwidth]{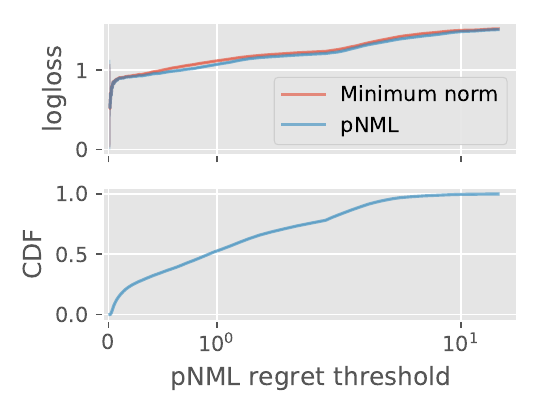}
    \caption{Boston Housing  \label{fig:norm_constrained_pnml:bostonHousing_regret}}
\end{subfigure}
\begin{subfigure}[t]{0.315\linewidth}
    \includegraphics[width=1.0\textwidth]{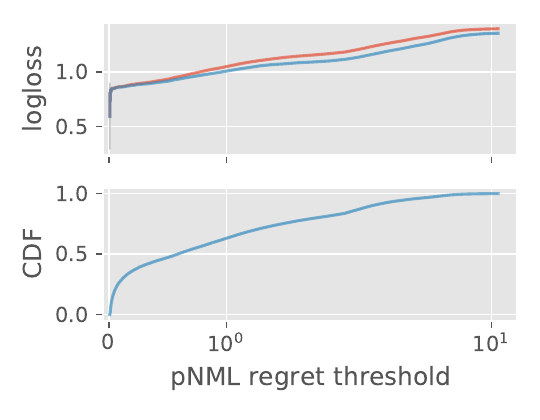}
    \caption{Concrete Strength \label{fig:norm_constrained_pnml:concrete_regret}}
\end{subfigure}
\begin{subfigure}[t]{0.315\linewidth}
    \includegraphics[width=1.0\textwidth]{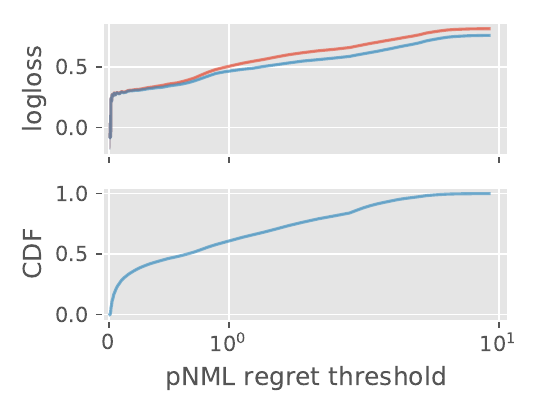}
    \caption{Energy Efficiency \label{fig:norm_constrained_pnml:energy_regret}}
\end{subfigure}
\begin{subfigure}[t]{0.315\linewidth}
    \includegraphics[width=1.0\textwidth]{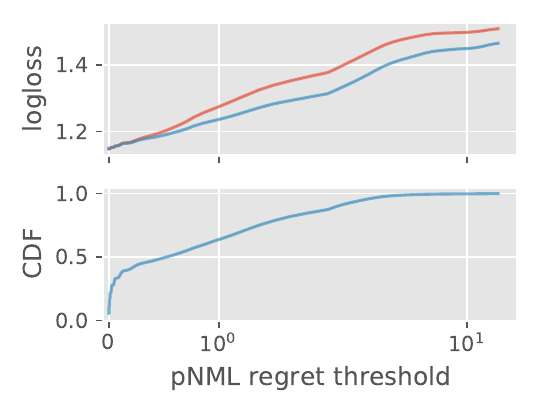}
    \caption{Kin8nm \label{fig:norm_constrained_pnml:kin8nm_regret}}
\end{subfigure}
\begin{subfigure}[t]{0.315\linewidth}
    \includegraphics[width=1.0\textwidth]{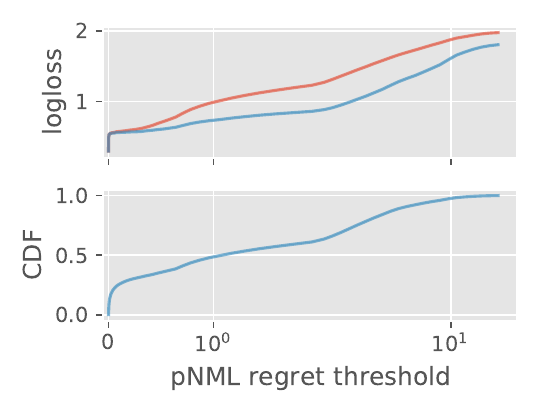}
    \caption{Naval Propulsion \label{fig:norm_constrained_pnml:naval-propulsion-plant_regret}}
\end{subfigure}
\begin{subfigure}[t]{0.315\linewidth}
    \includegraphics[width=1.0\textwidth]{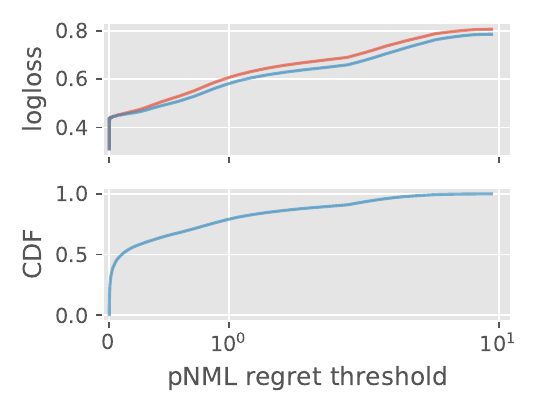}
    \caption{Combined Cycle Plant \label{fig:norm_constrained_pnml:power-plant_regret}}
\end{subfigure}
\begin{subfigure}[t]{0.315\linewidth}
    \includegraphics[width=1.0\textwidth]{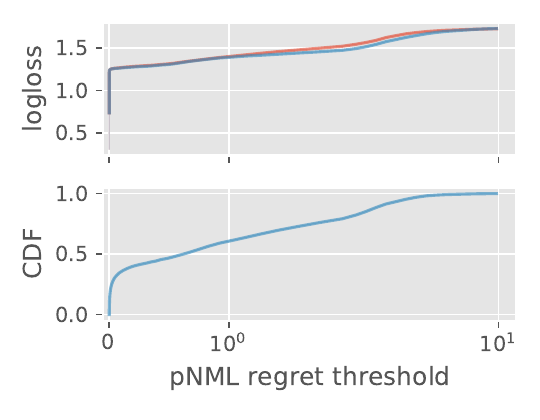}
    \caption{Protein Structure \label{fig:norm_constrained_pnml:protein-structure_regret}}
\end{subfigure}
\begin{subfigure}[t]{0.315\linewidth}
    \includegraphics[width=1.0\textwidth]{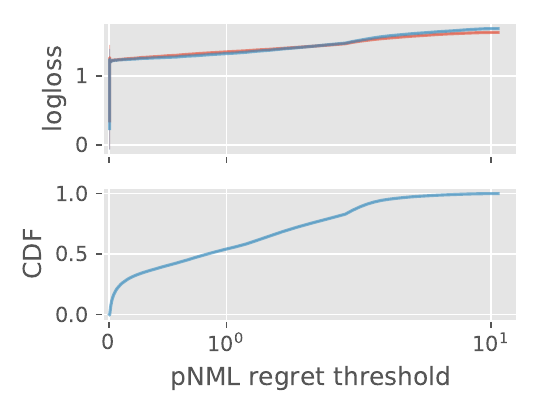}
    \caption{Wine Quality Red\label{fig:norm_constrained_pnml:wine-quality-red_regret}}
\end{subfigure}
\begin{subfigure}[t]{0.315\linewidth}
    \includegraphics[width=1.0\textwidth]{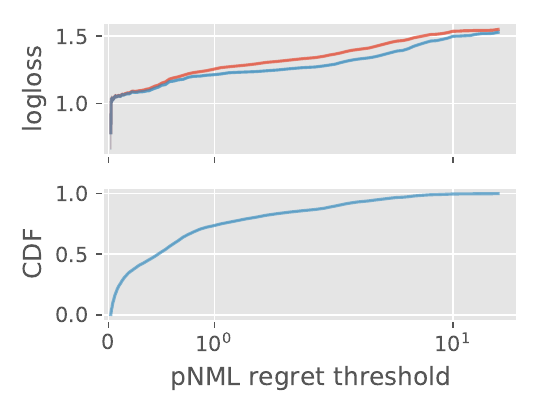}
    \caption{Yacht Hydrodynamics \label{fig:norm_constrained_pnml:yacht_regret}}
\end{subfigure}
\caption{The MN and pNML loss for samples with regrets lower than a threshold}
\label{fig:norm_constrained_pnml:regret_based_learner}
\end{figure}

\subsection{Double-descent with UCI dataset}
\label{sec:norm_constrained_pnml:double_descent}

\begin{table}[tb]
\small
\centering
\caption{UCI set characteristics}
\begin{tabular}{l c c c}
\toprule
Dataset name & $N$ & $M$ & \#Splits\\
\midrule
Boston Housing & 506 & 13 & 20 \\
Concrete Strength & 1,030 & 8 & 20 \\ 
Energy Efficiency & 768 & 8 & 20 \\
Kin8nm & 8,192 & 8 & 20 \\
Naval Propulsion & 11,934 & 16 & 20\\
Cycle Power Plant & 9,568 & 4 & 20\\ 
Protein Structure & 45,730 & 9  & 5 \\
Wine Quality Red & 1599 & 11 & 20\\
Yacht Hydrodynamics & 308 & 6 & 20\\
\bottomrule
\end{tabular}
\label{tab:norm_constrained_pnml:uci_meta_data}
\end{table}

Double descent is referred to as the phenomenon when beyond the interpolation limit,  the test error declines as model complexity increases~\citep{hastie2019surprises}.
We investigate the effect of varying the ratio between the number of parameters to training set size. 
We demonstrate that the pNML regret and its upper bound are correlated with the double-descent behavior of the test log-loss.

We the UCI repository~\citep{Dua:2019}:
We use the sets proposed by \citet{hernandez2015probabilistic} with the same test and train splits.
The training set size, number of features, and the number of train-test splits are shown in~\tableref{tab:norm_constrained_pnml:uci_meta_data}.
For each dataset, we varied the training set size and fit the pNML and MN learners. We optimize $\sigma^2$ on a validation set for both learners.

The test set log-loss as a function of the ratio between the number of parameters to training set size is presented in \figref{fig:norm_constrained_pnml:real_data_double_descent} (top) with the 95\% confidence interval that was calculated on different train-test splits.
Both the pNML and the MN learners behave the same: For a large training set size ($\frac{M}{N}<1$) the test set log-loss increases when removing training samples up to $M=N$. Then the log-loss declines although the training set size decreases.

The empirically calculated pNML regret and its analytical upper bound are shown in \figref{fig:norm_constrained_pnml:real_data_double_descent} (bottom).
Both empirically calculated regret and the derived upper bound present a similar double-descent behavior to the test log-loss. Their peak is for the number of features that equals the training set size and as $\frac{M}{N}$ increases their value decrease.

\begin{figure}[bt]
\centering
\begin{subfigure}[t]{0.32\linewidth}
    \includegraphics[width=\textwidth]{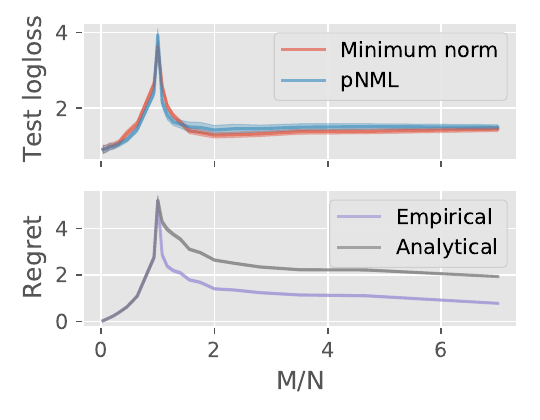} 
    \caption{Boston Housing \label{fig:norm_constrained_pnml:bostonHousing}}
\end{subfigure}
\begin{subfigure}[t]{0.32\linewidth}
    \includegraphics[width=\textwidth]{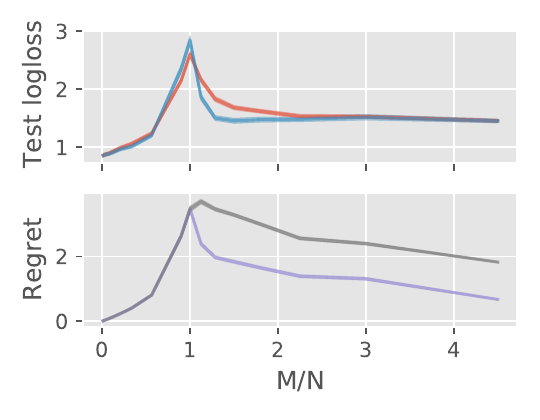}
    \caption{Concrete Strength \label{fig:norm_constrained_pnml:concrete}}
\end{subfigure}
\begin{subfigure}[t]{0.32\linewidth}
    \includegraphics[width=\textwidth]{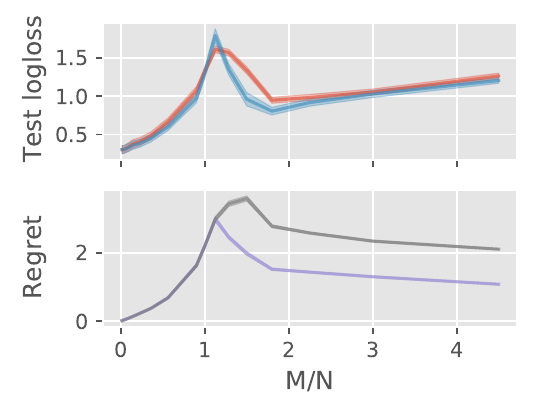}
    \caption{Energy Efficiency \label{fig:norm_constrained_pnml:energy}}
\end{subfigure}
\begin{subfigure}[t]{0.32\linewidth}
    \includegraphics[width=\textwidth]{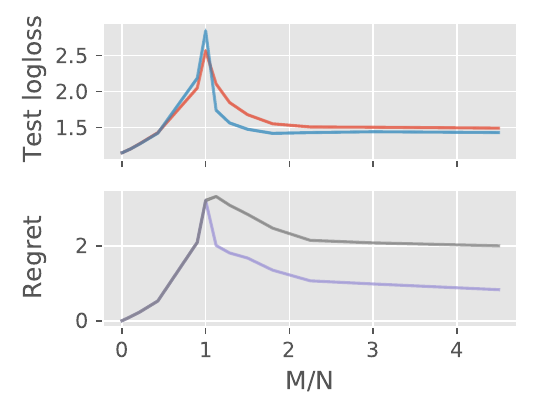}
    \caption{Kin8nm \label{fig:norm_constrained_pnml:kin8nm}}
\end{subfigure}
\begin{subfigure}[t]{0.32\linewidth}
    \includegraphics[width=\textwidth]{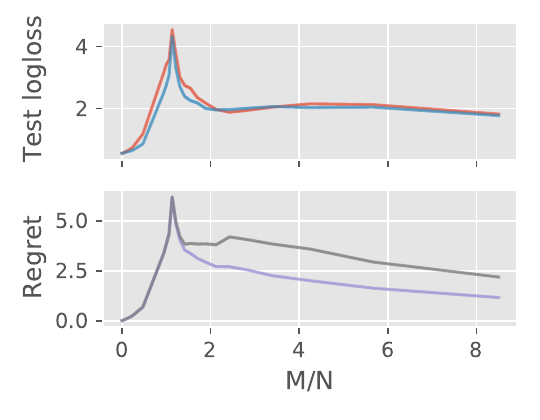}
    \caption{Naval Propulsion \label{fig:norm_constrained_pnml:naval-propulsion-plant}}
\end{subfigure}
\begin{subfigure}[t]{0.32\linewidth}
    \includegraphics[width=\textwidth]{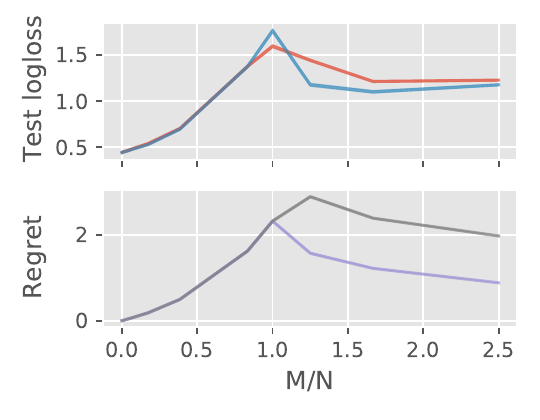}
    \caption{Combined Cycle Power Plant \label{fig:norm_constrained_pnml:power-plant}}
\end{subfigure}
\begin{subfigure}[t]{0.32\linewidth}
    \includegraphics[width=\textwidth]{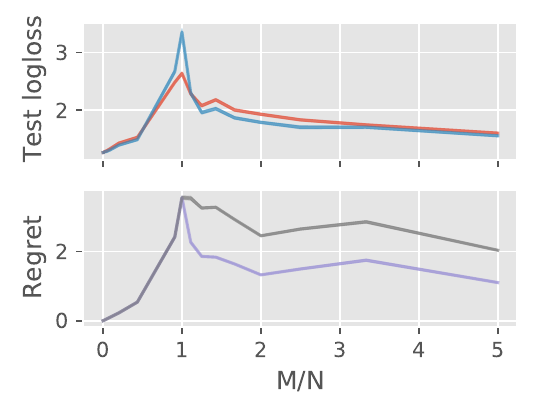}
    \caption{Protein Structure \label{fig:norm_constrained_pnml:protein-structure}}
\end{subfigure}
\begin{subfigure}[t]{0.32\linewidth}
    \includegraphics[width=\textwidth]{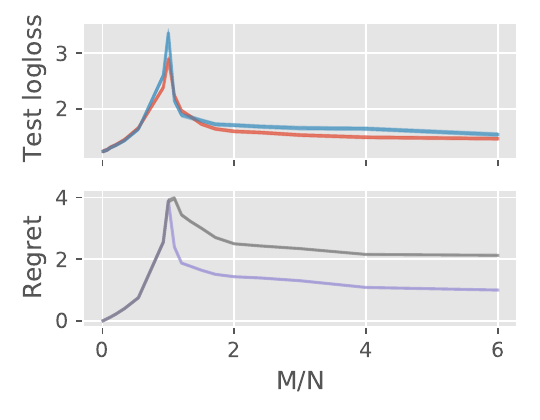}
    \caption{Wine Quality Red \label{fig:norm_constrained_pnml:wine}}
\end{subfigure}
\begin{subfigure}[t]{0.32\linewidth}
    \includegraphics[width=\textwidth]{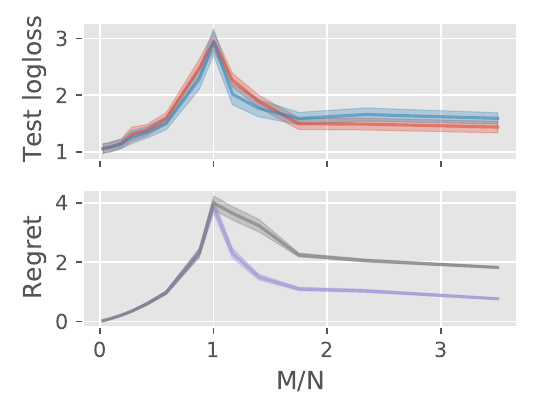}
    \caption{Yacht Hydrodynamics \label{fig:norm_constrained_pnml:yacht}}
\end{subfigure}
\caption{Double-descent of the test log-loss and regret for UCI datasets}
\label{fig:norm_constrained_pnml:real_data_double_descent}
\end{figure}

\section{Concluding remarks} 
\label{sec:norm_constrained_pnml:conclusion}
We derived an analytical upper bound of the pNML regret which is associated with the prediction uncertainty for over-parameterized linear regression.
The pNML prediction equals the MN solution thus the derived regret can be used to quantify the prediction uncertainty of the MN solution.
The derived result holds for a wide range of scenarios as we considered the individual setting where there is no assumption of a probabilistic relationship between the training and test.

Analyzing the pNML regret we can observe that if a test sample lies in the subspace spanned by the eigenvectors associated with large eigenvalues of the training data correlation matrix then over-parameterized linear regression generalizes well.
Finally, we provided simulations of the pNML for real trigonometric polynomial interpolation. We showed that the pNML regret can be used as a confidence measure and can is correlated with the test error double-descent phenomenon for 9 sets from the UCI repository.

\label{sec:norm_constrained_pnml:limitations}
For future work, we would like to derive the explicit expression of the pNML regret rather than an upper bound. In addition, the pNML regret can be used for additional tasks such as active learning, probability calibration, and adversarial attack detection.

\chapter{The Luckiness Perspective}  
\label{chap:luckiness}
\section{Introduction}
Ridge regression is a widely used method for linear regression when the data dimension is large compared to the training set size.
It has been applied in a large variety of domains such as econometrics~\cite{sengupta2020asymptotic}, bioinformatics~\cite{xu2020blood}, and social science~\cite{grimmer2021machine}.
From a Bayesian perspective, it coincides with the mean of the predictive distribution where the parameter prior and noise are Gaussian~\cite{DBLP:conf/iclr/LiuD20}.

The most popular variant is the \textit{Ridge empirical risk minimizer} (Ridge ERM): the model is chosen to minimize the training set loss and the ridge parameter is selected either to minimize a validation set or with the leave-one-out protocol, which leads to the same asymptotic performance as the optimally-tuned ridge estimator~\cite{hastie2019surprises}.

When using Ridge ERM, the underlying assumption is that there is a probabilistic relationship between the data and labels and between the training and test. 
In the \textit{stochastic setting}, see \citet{merhav1998universal}, it is assumed that the probabilistic relation between the test feature $x$ and its label $y$ is given by an (unknown) model from a given hypothesis class $P_\Theta$. 
For \textit{the probably approximately correct} (PAC) setting~\cite{valiant1984theory}, $x$ and $y$ are assumed to be generated by some source $P(x,y)=P(x)P(y|x)$ which is not necessarily a member of the hypothesis class.
These assumptions, however, may not hold in a real-world scenario.

Recall the pNML learner learner minimizes the regret for the worst-case test label 
\begin{equation} \label{eq:linear_regression_with_luckiness:minmax_prob}
\Gamma = \min_q \max_y R(\DN,x,y,q).
\end{equation}
and its predictive distribution is
\begin{equation} \label{eq:linear_regression_with_luckiness:pNML}
q_{\mbox{\tiny{pNML}}}(y|x)=\frac{\probthetagenie}{\int \probthetagenietag dy'} .
\end{equation}
However, the pNML may not be defined for an \textit{over-parameterized} hypothesis class, where the number of parameters exceeds the training set size. The reason is that in the denominator of~\eqref{eq:linear_regression_with_luckiness:pNML}, every possible value of the test label $y'$ can be perfectly fitted such that the integral diverges.

The pNML root lies in the \textit{normalized maximum likelihood} (NML) approach for online prediction~\cite{shtar1987universal}.
Since the NML may also be improper, a leading solution is \textit{NML with luckiness} (LNML)~\cite{roos2004mdl}: A luckiness function $w(\theta)$ is designed such that on sequences with small $w(\theta)$, we are prepared to incur large regret~\cite{grunwald2007minimum}. 
This is the equivalent to a Bayesian model prior.

In this chapter, we apply the luckiness concept to the pNML and call it LpNML.
For linear regression, we design the luckiness function to be proportional to the $\ell_2$ model norm.
This leads to a genie that equals ridge regression.
We derive the corresponding LpNML and show its \textit{prediction}, i.e., the mean of the predictive distribution, differs from the Ridge ERM's:
When the test sample lies within the subspace associated with the small eigenvalues of the empirical correlation matrix of the training data, the prediction is shifted toward 0. This behavior is shown in~\figref{fig:linear_regression_with_luckiness:lpnml_prediction}.

We demonstrate the LpNML attains a better MSE than Ridge ERM for 50 real-world PMLB sets~\cite{Olson2017PMLB}, reducing the error by up to 20\%. Furthermore, we show the LpNML outperforms leading methods for the distribution-shift benchmark~\cite{tripuraneni2020single}, when the test set differs from training.

\begin{figure}[t]
\centering
\begin{subfigure}{0.49\textwidth}
\centering
    \includegraphics[width=\textwidth]{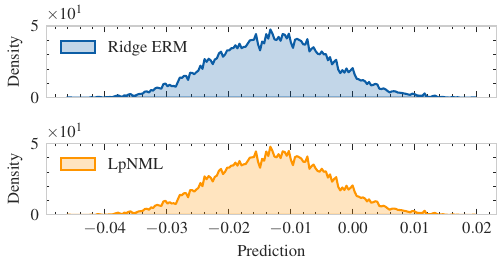}
    \caption{Greater than 0 eigenvalue subspace}
    \label{fig:linear_regression_with_luckiness:lpnml_parallel}
\end{subfigure}
\begin{subfigure}{0.49\textwidth}
\centering
    \includegraphics[width=\textwidth]{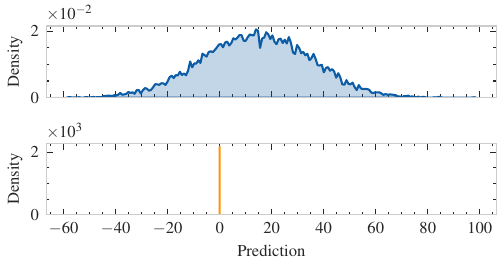}
    \caption{Equal to 0 eigenvalue subspace}
    \label{fig:linear_regression_with_luckiness:lpnml_orthogonal}
\end{subfigure}
\caption{Ridge ERM and LpNML prediction histograms}
\label{fig:linear_regression_with_luckiness:lpnml_prediction}
\end{figure}

\section{Notation and preliminaries} \label{sec:linear_regression_with_luckiness:preliminaries}
\paragraph{Linear regression setting.}
Given a training set
\begin{equation}
\DN = \{(x_n, y_n)\}_{n=1}^{N} , \quad x_n \in R^{M \times 1}, \quad y_n \in R,
\end{equation}
the goal is to predict the test label $y$ based on a new data sample $x$.
A common assumption is a linear relationship between the data and labels with an additive white noise
\begin{equation} \label{eq:linear_regression_with_luckiness:linear}
    y_n = \theta^\top x_n + e_n, \qquad e_n \sim \mathcal{N}(0,\sigma^2).
\end{equation}
Denote the design matrix and the training set label vector
\begin{equation} \label{eq:linear_regression_with_luckiness:trainset_matrix}
X_N = \begin{bmatrix} x_1 & \dots & x_N \end{bmatrix}^\top,  \quad 
Y_N = \begin{bmatrix} y_1 & \dots & y_N  \end{bmatrix}^\top
\end{equation}
where $X_N \in R^{N \times M} $ and $Y_N  \in R^{N \times 1}$,
the ERM solution which minimizes the log-loss of the training set (and is also the maximum likelihood estimator) is
\begin{equation} \label{eq:linear_regression_with_luckiness:linear_regression_erm}
\hat{\theta} = (X_N^\top X_N)^{-1} X_N^\top Y_N.
\end{equation}

\paragraph{Ridge ERM.}
In regularized linear regression, leading to the Ridge ERM learner, the hypothesis class $\Theta$ is a sphere $\norm{\theta}^2_2\leq A$~\cite{ridgeregression}. 
Expressing this constraint with the Lagrangian:
\begin{equation}
\mathcal{L}(\theta, \lambda)= \norm{Y_N - X_N \theta}^2_2 + \lambda \left( \norm{\theta}^2_2 - A \right),
\end{equation}
the Ridge ERM learnable vector is
\begin{equation} \label{eq:linear_regression_with_luckiness:ridge_erm}
\thetalamb = \left(X_N^\top X_N + \lambda I \right)^{-1} X_N^\top Y_N.
\end{equation}

\paragraph{Bayesian linear regression.} %
Defining the Gaussian prior of the learnable parameters 
\begin{equation}
p(\theta) = \mathcal{N}\left(m_0,S_0 \right),
\end{equation}
the posterior distribution is (see \citet{deisenroth2020mathematics})
\begin{equation}
p(\theta|D_N) = \mathcal{N} \left(\theta^\top m_N, S_N\right), 
S_N = \big(S_0^{-1} + \sigma^{-2} X_N^\top X_N \big)^{-1},
m_N = S_N \big(S_0^{-1} m_0 + \sigma^{-2} X_N^\top Y_N\big).
\end{equation}
To compute the predictive distribution with the posterior distribution, the parameters are integrated out based on their posterior probabilities. With $m_0=0$ and $S_0 = \sigma^{-2} \lambda I$, the predictive distribution is given by:
\begin{equation}
q_\textit{\tiny{Bayesian}}(y|x) = \int p(y|x,\theta) p(\theta|\DN) d\theta
=
\mathcal{N}\left(\thetalamb^\top x,\sigma^2\left[1+ x^\top \left(X_N^\top X_N + \lambda I \right)^{-1}  x \right] \right).
\end{equation}
The mean of the Bayesian learner's predictive distribution matches the Ridge ERM solution of~\eqref{eq:linear_regression_with_luckiness:ridge_erm}. The prediction uncertainty is captured in the variance, which decreases as the test sample aligns more strongly with the design matrix, reflecting reduced uncertainty in well-supported regions of the feature space.
\paragraph{The LNML.}
For online prediction with individual sequences, the min-max optimal regret is given by the NML~\cite{shtar1987universal,grunwald2007minimum}.
The prediction is performed for the entire sequence: Denote the sequence $y^N = \left\{y_n \right\}_{n=1}^N$, the NML probability assignment is 
\begin{equation} \label{eq:linear_regression_with_luckiness:nml}
q_\textit{\tiny{NML}}(y^N) = \frac{\max_\theta p_\theta(y^N)}{\int \max_\theta p_\theta (y'^N) dy'^N}.
\end{equation}

Since NML may be improper, several treatments have been proposed. Among these treatments is setting a restriction on the range of data or the range of parameters~\cite{hirai2011efficient}.
The drawback of this method is that samples can fall outside of any valid restrictions.
A different approach is the LNML that is also named ``generalized NML''~\cite{roos2004mdl}:
A luckiness function $w(\theta)$ is set such that the sequence distribution becomes
\begin{equation} \label{eq:linear_regression_with_luckiness:lnml}
q_\textit{\tiny{LNML}}(y^N) = \frac{ \max_\theta  p_\theta(y^N) w(\theta)}{\int \max_\theta p_\theta (y^N) w(\theta) dy^N}.
\end{equation}
The advantage of LNML is that there is large freedom in choosing the luckiness function: We can choose a function that has particularly pleasant properties~\cite{grunwald2007minimum}.
Choosing the luckiness function to equal a constant for example, reduces the LNML back to the NML.

\citet{miyaguchi2017normalized} derived the LNML for multivariate normal distributions with the conjugate prior luckiness function.
\citet{dwivedi2021revisiting} incorporated a luckiness to the LNML that is proportional to the model norm to find the best ridge regularization factor $\lambda$. 
As opposed to our approach, once $\lambda$ was found, the prediction equals the Ridge ERM.

\paragraph{Transductive prediction.}
A transductive inference uses unlabeled data to improve predictions of unlabeled examples. \citet{chapelle2000transductive} chose the test label values to minimize the leave-one-out error of ridge regression with both training and test data. \citet{cortes2007transductive} estimated the label of the unlabeled test data $x$ by using only the labeled neighbors of $x$. They also presented error bounds for the VC-dimension. \citet{alquier2012transductive} established risk bounds for a transductive version of the Lasso learner. \citet{lei2021near} developed a min-max linear estimator in the presence of a covariate shift, where the maximum is for the target domain learnable parameters.

Although all of the above have demonstrated empirical and theoretical benefits, many unlabeled test points need to be simultaneously available.
\citet{tripuraneni2020single} presented a single point transductive procedure for linear regression that improves the prediction root mean square deviation (RMSE), especially under a distribution shift.

\section{pNML with luckiness}
Inspired by the luckiness concept for NML, we define the genie, a learner that knows the true test label, with a luckiness function $w(\theta)$ as follows
\begin{equation}
\hat{\theta}_y = \argmin_{\theta \in \Theta} \bigg[ \sum_{n=1}^N \ell(p_\theta,x_n,y_n) + \ell(p_\theta,x,y) - \log w\left(\theta\right)  \bigg].
\end{equation}
The luckiness function is used as a model prior: The genie is more likely to select $\theta$ that yields a larger $w(\theta)$.
The related regret in this setting is
\begin{equation}
R(q,D_N,x,y,w) =  \frac{\log p_{\hat{\theta}_y} (y|x) w(\hat{\theta}_y)}{\log q(y|x)}.
\end{equation}

\begin{theorem}
The LpNML is the learner that minimizes the worst-case regret objective
\begin{equation}
   q_\textit{\tiny{LpNML}}(y|x) = \argmin_q \max_y R(q,D_N,x,y,w).
\end{equation}
The LpNML predictive distribution is
\begin{equation} \label{eq:linear_regression_with_luckiness:lpnml}
q_\textit{\tiny{LpNML}}(y|x) = \frac{\probthetay w(\thetay)}{\int \probthetaytag w(\thetaytag) dy'}
\end{equation}
and its min-max regret is
\begin{equation}
\Gamma = \log \int \probthetaytag w(\thetaytag) dy'.
\end{equation}
\end{theorem}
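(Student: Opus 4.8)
The plan is to adapt Shtarkov's classical equalizer argument for the NML to this luckiness-weighted, single-test-point setting. The guiding principle is that the worst-case regret is minimized by a learner whose regret is the \emph{same} for every possible test label $y$; any learner deviating from this must under-allocate probability to some outcome, which the adversary (who controls the individual value $y$) then selects to inflate the regret. Throughout I work with the regret in the log-ratio form inherited from \eqref{eq:regret} and decorated by the luckiness weight, namely $R(q,\DN,x,y,w) = \log\!\left[\probthetay w(\thetay)\right] - \log q(y|x)$.

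First I would substitute the candidate $q_\textit{\tiny{LpNML}}$ from \eqref{eq:linear_regression_with_luckiness:lpnml} directly into the regret and check that it is an equalizer rule. The numerator of $q_\textit{\tiny{LpNML}}$ is exactly $\probthetay w(\thetay)$, so this factor cancels and the regret collapses to
\begin{equation}
R(q_\textit{\tiny{LpNML}},\DN,x,y,w) = \log \int \probthetaytag w(\thetaytag)\, dy',
\end{equation}
which is independent of $y$. Hence $\max_y R(q_\textit{\tiny{LpNML}},\DN,x,y,w) = \Gamma$ with $\Gamma$ the claimed log-normalization, and since $q_\textit{\tiny{LpNML}}$ integrates to one by construction it is a genuine probability density. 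This step is a direct computation and simultaneously verifies the stated predictive distribution and the regret value.

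Next I would establish optimality by contradiction. Suppose some competing learner $q$ achieved strictly smaller worst-case regret, $\max_y R(q,\DN,x,y,w) < \Gamma$. Then for every $y$ we would have $\log[\probthetay w(\thetay)/q(y|x)] < \Gamma$, i.e.\ $q(y|x) > \probthetay w(\thetay)\, e^{-\Gamma} = q_\textit{\tiny{LpNML}}(y|x)$ pointwise. Integrating this strict inequality over all $y$ gives $\int q(y|x)\,dy > \int q_\textit{\tiny{LpNML}}(y|x)\,dy = 1$, contradicting the normalization of $q$. Therefore no learner beats $\Gamma$, so $q_\textit{\tiny{LpNML}}$ attains the min-max value and is its minimizer, proving all three assertions at once.

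The delicate point, and the very reason the luckiness function is introduced, is that the above is vacuous unless the normalizing integral $\int \probthetaytag w(\thetaytag)\,dy'$ is \emph{finite}. For the plain pNML the denominator diverges in the over-parameterized regime because every label admits a perfect fit; the Gaussian luckiness $w(\theta) \propto \exp\!\left(-\frac{\lambda}{2\sigma^2}\norm{\theta}^2\right)$ penalizes the genie through the growth of $\norm{\thetaytag}$ (controlled via \Theoref{theorem:mn_norm}) quickly enough to make the integrand decay and the integral converge. I would therefore close the argument by confirming this convergence for the chosen $w$ before invoking the equalizer and contradiction steps, so that $\Gamma < \infty$ and $q_\textit{\tiny{LpNML}}$ is well defined.
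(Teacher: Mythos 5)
Your proof is correct and follows essentially the same route as the paper's: verify that $q_\textit{\tiny{LpNML}}$ is a normalized equalizer rule with constant regret $\Gamma$, then argue that any competitor must assign strictly smaller probability to some outcome (your integration-to-contradiction step is just the rigorous form of the paper's one-line version). Your closing remark on checking finiteness of $\int \probthetaytag w(\thetaytag)\,dy'$ is a worthwhile addition the paper defers to the ridge-regression instantiation, but it does not change the core argument.
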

\begin{proof}
This proof essentially follows that of~\citet{fogel2019universal} with the additional luckiness function:
The LpNML has a valid predictive distribution $\int q_\textit{LpNML}(y|x) dy = 1$.
The min-max regret of the LpNML is equal for all choices of $y$. If we consider a different predictive distribution, it should assign a smaller probability to at least one of the outcomes. If the true label is one of those outcomes, it will result in a greater regret. 
\end{proof}

Intuitively, the LpNML~\eqref{eq:linear_regression_with_luckiness:lpnml} assigns a probability for a potential test label as follows: (i) Add the test sample to the training set with an arbitrary label $y'$, (ii) find the ERM solution with the defined luckiness function of this new set $\hat{\theta}_{y'}$, and (iii) take the probability it gives to the assumed label weighted by the luckiness function $p_{\hat{\theta}_{y'}}(y'|x) w(\thetaytag)$. Follow (i)-(iii)  for every possible test label value and normalize to get a valid predictive distribution. In the next section, we analytically derive the LpNML for ridge regression.

\subsection{LpNML for ridge regression}
We formulate the luckiness function as follows:
\begin{equation} \label{eq:linear_regression_with_luckiness:luckiness_ridge}
w(\theta) = \exp\bigg\{-\frac{\lambda}{2\sigma^2}\norm{\theta}^2 \bigg\}.
\end{equation}
The genie learnable parameters using this luckiness function is the solution of the following minimization objective
\begin{equation}
\thetay  =
\argmin_\theta \bigg[ \sum_{n=1}^N \left(y_n - \theta^\top x_n \right)^2 + \left(y - \theta^\top x \right)^2 + \lambda \norm{\theta}^2\bigg].
\end{equation}
The solution is ridge regression which we express with the recursive least squares formulation~\cite{hayes19969}
\begin{equation} \label{eq:linear_regression_with_luckiness:genie_rls}
\thetay  = \thetalamb + \frac{\Plamb x}{\Klamb}  \left(y - \thetalamb^\top x\right),
\end{equation}
where 
\begin{equation}
\Plamb \triangleq \left(X_N^\top X_N + \lambda I \right)^{-1}, \quad  \Klamb \triangleq 1+ x^\top \Plamb x.
\end{equation}

The goal is to analytically derive the LpNML predictive distribution \eqref{eq:linear_regression_with_luckiness:lpnml} with the luckiness function of \eqref{eq:linear_regression_with_luckiness:luckiness_ridge}. For this, we first have to determine the genie predictive distribution of the true test label.

\begin{lemma}
The genie predictive distribution weighted by the luckiness function of \eqref{eq:linear_regression_with_luckiness:luckiness_ridge} is 
\begin{equation} \label{eq:linear_regression_with_luckiness:lemma1}
\probthetay w(\thetay) = 
\frac{c}{\sqrt{2\pi\sigma^2}}
\exp\left\{- \frac{1}{2 \sigmalpnml} \left(y-\thetalamb^\top x + \mulpnml \right)^2 
\right\}
\end{equation}
where 
\begin{equation} \label{eq:linear_regression_with_luckiness:lpnml_mu_sigma}
\begin{split}
& \mulpnml \triangleq \frac{\lambda \Klamb \thetalamb^\top \Plamb x}{1+\lambda x^\top \Plamb^2 x}
, \quad
\sigmalpnml \triangleq \frac{\sigma^2 \Klamb^2}{1+\lambda x^\top \Plamb^2 x},
\\ & \qquad
c \triangleq 
\exp\bigg\{\frac{1}{2\sigma^2}\bigg[
\frac{\big(\lambda \thetalamb^\top \Plamb x\big)^2}{1+\lambda x^\top \Plamb^2 x}
- 
\lambda \norm{\thetalamb}^2 
\bigg]
\bigg\}.
\end{split}
\end{equation}
\end{lemma}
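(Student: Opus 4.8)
The plan is to substitute the recursive-least-squares expression for the genie, \eqref{eq:linear_regression_with_luckiness:genie_rls}, directly into the product $\probthetay\,w(\thetay)$ and collapse the exponent into a single quadratic in the test label, then complete the square. Since $p_{\thetay}(y|x)=\frac{1}{\sqrt{2\pi\sigma^2}}\exp\{-\frac{1}{2\sigma^2}(y-x^\top\thetay)^2\}$ and $w(\thetay)=\exp\{-\frac{\lambda}{2\sigma^2}\norm{\thetay}^2\}$, the combined exponent equals $-\frac{1}{2\sigma^2}\big[(y-x^\top\thetay)^2+\lambda\norm{\thetay}^2\big]$, so the whole computation reduces to rewriting these two quadratics in terms of the shifted variable $t\triangleq y-\thetalamb^\top x$.

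First I would simplify the residual. Using $x^\top\Plamb x=\Klamb-1$ in \eqref{eq:linear_regression_with_luckiness:genie_rls},
\[
x^\top\thetay=\thetalamb^\top x+\frac{\Klamb-1}{\Klamb}\,t=y-\frac{t}{\Klamb},
\]
so $(y-x^\top\thetay)^2=t^2/\Klamb^2$; this is where the $\Klamb^2$ factor that later appears in $\sigmalpnml$ originates. Next I would expand the norm directly from \eqref{eq:linear_regression_with_luckiness:genie_rls}, using symmetry of $\Plamb$,
\[
\norm{\thetay}^2=\norm{\thetalamb}^2+\frac{2t}{\Klamb}\thetalamb^\top\Plamb x+\frac{t^2}{\Klamb^2}x^\top\Plamb^2 x.
\]
Collecting the two pieces, the combined exponent is $-\frac{1}{2\sigma^2}(At^2+Bt+C)$ with $A=\tfrac{1+\lambda x^\top\Plamb^2 x}{\Klamb^2}$, $B=\tfrac{2\lambda}{\Klamb}\thetalamb^\top\Plamb x$, and $C=\lambda\norm{\thetalamb}^2$.

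Finally I would complete the square, $At^2+Bt+C=A(t+\tfrac{B}{2A})^2+(C-\tfrac{B^2}{4A})$, and identify the three stated quantities by matching. The quadratic coefficient gives $\frac{A}{\sigma^2}=\frac{1}{\sigmalpnml}$, so the Gaussian factor has variance $\sigmalpnml$; the shift gives $\frac{B}{2A}=\frac{\lambda\Klamb\thetalamb^\top\Plamb x}{1+\lambda x^\top\Plamb^2 x}=\mulpnml$, hence $t+\frac{B}{2A}=y-\thetalamb^\top x+\mulpnml$; and the residual constant contributes $-\frac{1}{2\sigma^2}(C-\frac{B^2}{4A})=\log c$. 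The only step needing care is this last matching: one must check $\frac{B^2}{4A}=\frac{(\lambda\thetalamb^\top\Plamb x)^2}{1+\lambda x^\top\Plamb^2 x}$ and track the sign so that the positive contribution in $c$ arises from $-\frac{B^2}{4A}$. No genuine difficulty remains beyond this bookkeeping; the lemma then follows by reading off the normalization constant $c/\sqrt{2\pi\sigma^2}$.
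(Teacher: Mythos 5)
Your proposal is correct and follows essentially the same route as the paper's proof: both use the recursive least squares form of the genie to reduce the residual to $\frac{1}{\Klamb^2}(y-\thetalamb^\top x)^2$, expand $\lambda\norm{\thetay}^2$ in the shifted variable $y-\thetalamb^\top x$, and complete the square to read off $\mulpnml$, $\sigmalpnml$, and $c$. Your $A$, $B$, $C$ bookkeeping is just a cleaner packaging of the identical algebra, and the coefficient checks you flag ($B/2A$ and $B^2/4A$) do come out as stated.
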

\begin{proof}
The genie probability assignment weighted by the luckiness function is
\begin{equation} \label{eq:linear_regression_with_luckiness:genie_prob_derivation}
\begin{split}
&\probthetay w(\thetay) 
=
\frac{1}{\sqrt{2\pi\sigma^2}}
\exp\left\{
-\frac{1}{2\sigma^2}
\left(y - \thetay^\top x \right)^2  
\right\}
\exp\left\{
-\frac{\lambda}{2\sigma^2} \norm{\thetay}^2
\right\}.
\end{split}
\end{equation}
The first exponent argument using the recursive least squares formulation~\eqref{eq:linear_regression_with_luckiness:genie_rls}:
\begin{equation}
\begin{split}
\big(y - \thetay^\top x \big)^2  
&= 
\left[y
- \thetalamb^\top x
-
\frac{1}{\Klamb} x^\top \Plamb \big(y - \thetalamb^\top x\big)x \right]^2 
\\
&=
\left[ \bigg(1 -\frac{x^\top \Plamb x}{\Klamb}\bigg) y 
-
\left(
1
-
\frac{x^\top \Plamb x}{\Klamb}
\right)
\thetalamb^\top x \right]^2
\\ &=
\frac{1}{\Klamb^2}\big(y - \thetalamb^\top x \big)^2.
\end{split}
\end{equation}
Substituting it back to~\eqref{eq:linear_regression_with_luckiness:genie_prob_derivation} and using again the recursive least squares
\begin{equation}
\begin{split}
& \probthetay  w(\thetay) = 
\frac{1}{\sqrt{2\pi\sigma^2}}  \exp\bigg\{ 
-\frac{\big(y - \thetalamb^\top x\big)^2}{2 \sigma^2 \Klamb^2} 
-\frac{\lambda}{2\sigma^2} \bnorm{\thetalamb + \frac{\Plamb x}{\Klamb}  \big(y - \thetalamb^\top x\big)}^2
\bigg\}.
\end{split}
\end{equation}
Deriving the exponential argument:
\begin{equation}
\begin{split}
& \frac{1}{\Klamb^2}
\big(y - \thetalamb^\top x\big)^2
+\lambda \bnorm{ \thetalamb + \frac{\Plamb x}{\Klamb}  \big(y - \thetalamb^\top x\big)}^2
\\ & \qquad = 
\frac{1 + \lambda x^\top \Plamb^2 x}{\Klamb^2} \big(y - \thetalamb^\top x\big)^2
+ 
2\frac{\lambda \thetalamb^\top \Plamb x}{\Klamb} \big(y - \thetalamb^\top x\big)
+ 
\lambda \norm{\thetalamb}^2  
\\ & \qquad =
\frac{1+\lambda x^\top \Plamb^2 x}{\Klamb^2}
\big(y - \thetalamb^\top x +  \frac{\lambda \Klamb \thetalamb^\top \Plamb x}{1+\lambda x^\top \Plamb^2 x} \big)^2 
-
\frac{\big(\lambda \thetalamb^\top \Plamb x\big)^2}{1+\lambda x^\top \Plamb^2 x}
+ \lambda \norm{\thetalamb}^2 .
\end{split}
\end{equation}
Substitute in \eqref{eq:linear_regression_with_luckiness:genie_prob_derivation} proves the lemma.
\end{proof}

$\mulpnml$ is the deviation of the genie prediction from the Ridge ERM prediction. 
If $\lambda=0$, the deviation is 0 and the genie prediction is equal to the ERM solution of~\eqref{eq:linear_regression_with_luckiness:linear_regression_erm}.
The genie predictive distribution is not valid since $\int \probthetay dy>1$.
Next, we derive the LpNML by normalizing the genie predictive distribution weighted by the luckiness function.
\begin{theorem}
With the luckiness function of~\eqref{eq:linear_regression_with_luckiness:luckiness_ridge}, the LpNML distribution is
\begin{equation}
q_{\textit{\tiny{LpNML}}}(y|x) = \mathcal{N}\left(\thetalamb^\top x - \mulpnml, \sigmalpnml \right).
\end{equation}
\end{theorem}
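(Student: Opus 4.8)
The plan is to substitute the closed-form genie expression from the preceding lemma directly into the definition of the LpNML in~\eqref{eq:linear_regression_with_luckiness:lpnml} and carry out a single Gaussian integration for the normalizer. First I would observe that the numerator of~\eqref{eq:linear_regression_with_luckiness:lpnml} is exactly the quantity computed in~\eqref{eq:linear_regression_with_luckiness:lemma1}: a Gaussian-shaped function of $y$ centered at $\thetalamb^\top x - \mulpnml$ with variance $\sigmalpnml$, scaled by the constant $c$. The crucial point—really the only thing that needs to be checked carefully—is that $c$, $\mulpnml$, and $\sigmalpnml$ depend only on the test feature $x$, the training set, the noise variance $\sigma^2$, and the fixed luckiness parameter $\lambda$; none of them depend on the label argument. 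Hence, when I build the denominator by replacing $y$ with the dummy variable $y'$ and integrating $\probthetaytag w(\thetaytag)$, these three quantities pass outside the integral as constants.

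Next I would evaluate the denominator. Writing $\probthetaytag w(\thetaytag)$ via~\eqref{eq:linear_regression_with_luckiness:lemma1} with $y'$ in place of $y$ reduces the normalizer to the standard Gaussian integral $\int_{-\infty}^\infty \exp\{-(y' - \thetalamb^\top x + \mulpnml)^2/(2\sigmalpnml)\}\, dy' = \sqrt{2\pi \sigmalpnml}$, so the full denominator equals $c\sqrt{\sigmalpnml/\sigma^2}$. Dividing the numerator from~\eqref{eq:linear_regression_with_luckiness:lemma1} by this value, the factor $c$ cancels and the $1/\sqrt{2\pi\sigma^2}$ prefactor combines with $\sqrt{\sigma^2/\sigmalpnml}$ to give the correctly normalized constant $1/\sqrt{2\pi\sigmalpnml}$. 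What remains is precisely the density of $\mathcal{N}(\thetalamb^\top x - \mulpnml, \sigmalpnml)$, which is the claimed predictive distribution.

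I do not expect any genuine obstacle once the lemma is in hand: the argument is a one-line normalization of a Gaussian, and the self-consistency—that the numerator evaluated at the true label and the integrand evaluated at $y'$ share the same constant $c$ and the same mean and variance—is exactly what makes $c$ cancel cleanly. The single point worth stating explicitly is the independence of $(c,\mulpnml,\sigmalpnml)$ from the label, which justifies pulling them out of the normalizing integral; were $\lambda$ instead re-tuned per label (as in the norm-constrained construction of the previous chapter), this cancellation would fail, so it matters that here $\lambda$ is a fixed parameter of the luckiness function. As a byproduct, the min-max regret $\Gamma = \log \int \probthetaytag w(\thetaytag)\, dy' = \log\!\big(c\sqrt{\sigmalpnml/\sigma^2}\big)$ follows immediately, consistent with the general regret formula of the theorem preceding the lemma.
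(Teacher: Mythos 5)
Your proposal is correct and follows essentially the same route as the paper: apply Lemma 1 to write the numerator, integrate the Gaussian over $y'$ to get the normalizer $c\sqrt{\sigmalpnml/\sigma^2}$, and divide so that $c$ cancels and the normalized Gaussian $\mathcal{N}(\thetalamb^\top x - \mulpnml, \sigmalpnml)$ remains. Your explicit remark that $(c,\mulpnml,\sigmalpnml)$ do not depend on the label—unlike the per-label $\lambda$ of the norm-constrained construction—is a worthwhile clarification the paper leaves implicit.
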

\begin{proof}
Following \eqref{eq:linear_regression_with_luckiness:lpnml}, to get the normalization factor we integrate the genie predictive distribution over all possible test label values. Utilizing lemma 1:
\begin{equation} \label{eq:linear_regression_with_luckiness:nf_derivation}
\begin{split}
& K_\textit{\tiny{LpNML}} = \int_{- \infty}^{\infty} \probthetaytag w(\thetaytag) dy' 
\\ &=
\int_{-\infty}^{\infty} \frac{c}{\sqrt{2\pi\sigma^2}}
\exp\bigg\{-\frac{\big(y'-\thetalamb^\top x + \mulpnml \big)^2}{2 \sigmalpnml}  
\bigg\} dy'
\\ &=
\frac{c}{\sqrt{2\pi \sigma^2}} \sqrt{2\pi \sigmalpnml} = c \sqrt{\frac{\sigmalpnml}{\sigma^2}}.
\end{split}
\end{equation}
Dividing \eqref{eq:linear_regression_with_luckiness:lemma1} by the normalization factor \eqref{eq:linear_regression_with_luckiness:nf_derivation}, the LpNML solution is obtained.
\end{proof}
The LpNML prediction deviates from the Ridge ERM prediction by $\mulpnml$. Both $\mulpnml$ and the LpNML variance $\sigmalpnml$ depend on the test data $x$. 
When $\lambda=0$, the LpNML reduces to the pNML solution for under-parameterized linear regression that was derived in chapter~\ref{chap:ordinary_least_squares_regression} (and published in \cite{pNML_linear_regression})
\begin{equation}
    q_\textit{\tiny{pNML}} = \mathcal{N}\left(\thetahat^\top x, \sigma^2\left[1 + x\left(X_N^\top X_N \right)^{-1} x\right]^2\right).
\end{equation}
In the next section, we show that if the test data fall within the subspace corresponding to the small eigenvalues of the design matrix $X_N$, the deviation from Ridge ERM $\mulpnml$ is large which shifts the LpNML prediction toward 0.


\subsection{The learnable subspace} \label{sec:linear_regression_with_luckiness:learnable_subpsace}
Focusing on over-parameterized linear regression, we analyze the dissimilarity between the LpNML and the Ridge ERM predictions:
\begin{equation}
\mulpnml 
=
\frac{\lambda \Klamb  \thetalamb^\top \Plamb x}{1+\lambda x^\top \Plamb^2 x}   
=
\frac{\lambda \left(1+x^\top \Plamb x\right) \thetalamb^\top \Plamb x}{1+\lambda x^\top \Plamb^2 x}.
\end{equation}
Let $u_m$ and $h_m$ be the $m$-th eigenvector and eigenvalue of the design matrix such that
\begin{equation}
X_N^\top X_N = \sum_{m=1}^M  h_m^2 u_m u_m^\top,
\end{equation}
for over-parameterized linear regression ($M > N$)
\begin{equation}
\begin{split}
x^\top P_\lambda x &=
\sum_{m=1}^N \frac{\left(u_m^\top x\right)^2}{h_m^2 + \lambda}
+
\sum_{m=N+1}^M \frac{\left(u_m^\top x\right)^2}{\lambda},
\\ 
x^\top P_\lambda^2 x &=
\sum_{m=1}^N \frac{\left(u_m^\top x\right)^2}{\left(h_m^2 + \lambda\right)^2}
+
\sum_{m=N+1}^M \frac{\left(u_m^\top x\right)^2}{\lambda^2}.
\end{split}
\end{equation}
We analyze two cases: A case where $x$ falls in the \textit{largest eigenvalue subspace} and a case where $x$ lies in the \textit{smallest eigenvalue subspace}.

\paragraph{Largest eigenvalue subspace.}
For a test feature $x$ that lies in the subspace that is spanned by the eigenvectors that are associated with the large eigenvalues of the design matrix $x^\top \Plamb x \ll 1$. The deviation from the Ridge ERM is
\begin{equation} \label{eq:linear_regression_with_luckiness:lpnml_mu_parallel}
\begin{split}
& \mulpnml
=
\frac{\lambda \left(1+x^\top \Plamb x\right) \thetalamb^\top \Plamb x}{1+\lambda x^\top \Plamb^2 x} 
\approx
\lambda \thetalamb^\top \Plamb x 
\\ & \qquad \qquad =
\lambda Y_N^\top  X_N \Plamb^2 x 
=
\lambda Y_N^\top X_N \sum_{m=1}^N \frac{u_m u_m^\top x}{\left(h_m^2 + \lambda \right)^2}
\ll
1.
\end{split}
\end{equation}
The LpNML prediction is similar to the Ridge ERM.
The LpNML variance in this case is
\begin{equation} \label{eq:linear_regression_with_luckiness:lpnml_var_parallel}
\sigmalpnml = \frac{\sigma^2 \Klamb^2}{1 + \lambda x^\top \Plamb^2 x}
=
\frac{\sigma^2 \left(1 + x^\top \Plamb x \right)^2}{1 + \lambda x^\top \Plamb^2 x } \approx \sigma^2.
\end{equation}
There is a high confidence in the prediction since this is the smallest possible variance.

\paragraph{Smallest eigenvalue subspace.}
For a test vector that lies in the subspace that is spanned by the eigenvectors corresponding to the smallest eigenvalue of the regularized empirical correlation matrix of the training data
\begin{equation}
x^\top \Plamb x =  \sum_{m=N+1}^M \frac{\left(u_m^\top x\right)^2}{\lambda^2} = \frac{1}{\lambda} ||x||^2.   
\end{equation}
For a small regularization term $\frac{1}{\lambda} ||x||^2 \gg 1$, the deviation from the Ridge ERM is
\begin{equation} \label{eq:linear_regression_with_luckiness:lpnml_mu_orthogonal}
\begin{split}
\mulpnml
&=
\frac{\lambda \left(1+ \frac{||x||^2}{\lambda} \right) \thetalamb^\top \frac{x}{\lambda}}{1+\lambda \frac{||x||^2}{\lambda^2}}  
\approx
\frac{\lambda \frac{||x||^2}{\lambda} \thetalamb^\top \frac{x}{\lambda}}{\lambda \frac{||x||^2}{\lambda^2}}   
= 
\thetalamb^\top  x.
\end{split}
\end{equation}
The correction term equals the Ridge ERM prediction thus the LpNML prediction is shifted to 0. 
The LpNML variance in this situation is
\begin{equation} \label{eq:linear_regression_with_luckiness:lpnml_var_orthogonal}
\sigmalpnml =
\sigma^2
\frac{\left(1 + \frac{||x||^2}{\lambda}\right)^2}{1 + \lambda \frac{||x||^2}{\lambda^2} } 
=
\sigma^2 \left(1 + \frac{||x||^2}{\lambda}\right).
\end{equation}
Compared to \eqref{eq:linear_regression_with_luckiness:lpnml_var_parallel}, the variance is large which reflects high uncertainty in the prediction.
In~\secref{sec:linear_regression_with_luckiness:experiments}, we empirically show the behavior of the LpNML variance on a synthetic set and demonstrate that the LpNML deviation from the Ridge ERM improves its performance for real-world datasets.

\section{Experiments} \label{sec:linear_regression_with_luckiness:experiments}
We demonstrate the LpNML prediction behavior for fitting a polynomial function to synthetic data and for fitting over-parameterized linear regression to a high-dimensional synthetic dataset. In addition, we show that the LpNML outperforms the Ridge ERM for real-world PMLB datasets and attains state-of-the-art performance for the distribution-shift benchmark

\subsection{Polynomial fitting to synthetic data} \label{sec:linear_regression_with_luckiness:exp_polynomial}
We sampled 6 training points $\left\{(t_n,y_n\right\}_{n=1}^6$ uniformly in the interval $[-1, 1]$.
and converted the data to features with a polynomial of degree 10 such that the design matrix is
\begin{equation} \label{eq:linear_regression_with_luckiness:experiment_data_matrix}
X_N = 
\begin{bmatrix}
1 & t_1 & \dots & t_1^{10} \\
1 & t_2 & \dots & t_2^{10} \\
\vdots & \vdots & \ddots & \vdots \\
1 & t_6 & \dots & t_{6}^{10}
\end{bmatrix}.
\end{equation}
Based on this training set, we performed Bayesian and LpNML prediction for all $t$ values in the interval $[-1,1]$. We set the regularization $\lambda$ to $10^{-3}$.

\Figref{fig:linear_regression_with_luckiness:pnml_syntetic_data} shows the Ridge ERM, Bayesian and LpNML predictions.
The Bayesian learner has the same prediction as to the Ridge ERM which is different from the LpNML: The LpNML prediction is closer to 0 in intervals that lack training data, e.g., $t \leq 0.75$ and $0.9 \leq t$.

The confidence intervals are shown in \figref{fig:linear_regression_with_luckiness:pnml_syntetic_data} with lighter colors. Both learners have large confidence intervals at the figure's edges and for $0.1 \leq t \leq 0.8$, where the training points are scarce. The LpNML has much larger confidence intervals at the interval edges than the Bayesian learner.

\begin{figure}[bt]
\centering
\includegraphics[width=0.6\columnwidth]{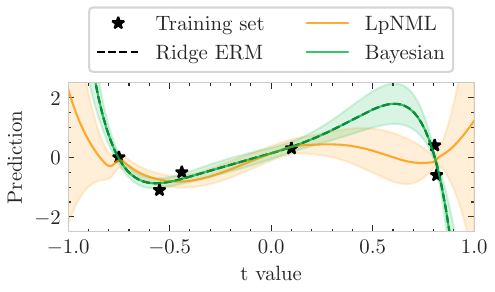}
\caption{Polynomial fitting to synthetic data} 
\label{fig:linear_regression_with_luckiness:pnml_syntetic_data}
\end{figure}

\subsection{Parallel and orthogonal subspace} \label{sec:linear_regression_with_luckiness:exp_lpnml_prediction}
We illustrate the LpNML behavior for test samples that lie in the largest eigenvalue subspace and smallest eigenvalue subspace as analyzed in~\secref{sec:linear_regression_with_luckiness:learnable_subpsace}.

We created a synthetic dataset with $N=40$ training samples and $M=100$ features. We set the regularization term $\lambda$ to $10^{-9}$.
We randomly sampled $10,000$ test samples for each of the following two scenarios: The test sample data reside in the subspace that is spanned by the eigenvectors of the design matrix corresponding to the eigenvalues that are (a) greater than 0, or (b) equal 0.

\Figref{fig:linear_regression_with_luckiness:lpnml_parallel} shows the histogram of Ridge ERM and LpNML predictions for test samples that reside in scenario (a).
The LpNML prediction for these test samples equals the Ridge ERM prediction which verifies equation~\eqref{eq:linear_regression_with_luckiness:lpnml_mu_parallel} result.

\Figref{fig:linear_regression_with_luckiness:lpnml_orthogonal} presents Ridge ERM and LpNML prediction histograms of samples from scenario (b).
The pNML predicts 0 while the Ridge ERM prediction varies between $-60$ and $100$, which is 2 order of magnitude larger than the prediction of scenario (a). The LpNML prediction of 0 is aligned with equation~\eqref{eq:linear_regression_with_luckiness:lpnml_mu_orthogonal}: 
To avoid a large log-loss, the LpNML shifts the Ridge prediction of test samples that differ from the training data to 0.

\begin{figure}[bth]
\centering
\begin{subfigure}{0.49\textwidth}
\centering
    \includegraphics[width=\textwidth]{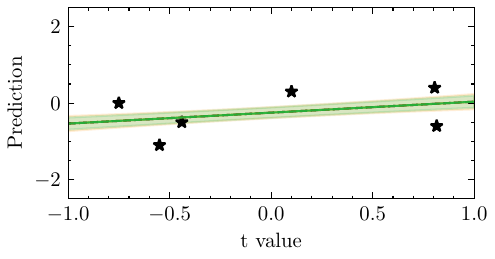}
    \caption{$M=2$}
    \label{fig:linear_regression_with_luckiness:m_2}
\end{subfigure}
\begin{subfigure}{0.49\textwidth}
\centering
    \includegraphics[width=\textwidth]{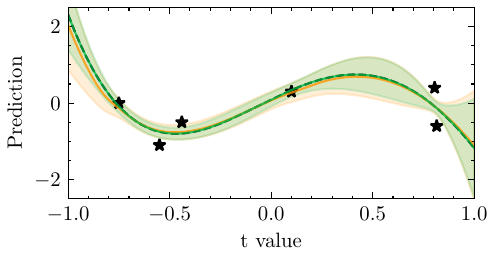}
    \caption{$M=5$}
    \label{fig:linear_regression_with_luckiness:m_5}
\end{subfigure}
\begin{subfigure}{0.49\textwidth}
\centering
    \includegraphics[width=\textwidth]{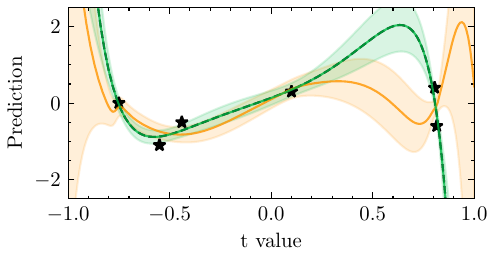}
    \caption{$M=20$}
    \label{fig:linear_regression_with_luckiness:m_20}
\end{subfigure}
\begin{subfigure}{0.49\textwidth}
\centering
    \includegraphics[width=\textwidth]{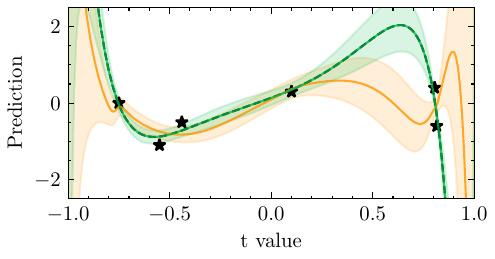}
    \caption{$M=100$}
    \label{fig:linear_regression_with_luckiness:m_100}
\end{subfigure}
\caption{LpNML and Bayesian learners for different model degrees}
\label{fig:linear_regression_with_luckiness:more_m}
\end{figure}

\Figref{fig:linear_regression_with_luckiness:more_m} shows additional model degrees $M$:  \Figref{fig:linear_regression_with_luckiness:m_2} and \figref{fig:linear_regression_with_luckiness:m_5} show the prediction for the under-parameterized case where $M < N$: The LpNML and Bayesian have a similar prediction and both have small variance.
For $M=20$ and $M=100$ as demonstrated in \figref{fig:linear_regression_with_luckiness:m_20} and \figref{fig:linear_regression_with_luckiness:m_100} respectively, in the absence of training data the LpNML prediction is closer to 0 than the Bayesian prediction.

\begin{table}[tb]
\centering
\caption{Distribution shift dataset characteristics}
\label{tab:linear_regression_with_luckiness:dist_shift_metadata}
\small
\begin{tabular}{cccc}
\toprule
Dataset & M & Training set size & Test set size \\
\toprule
Wine        &   8   & 69    & 31            \\
Parkinson   & 10    & 320   & 197          \\
Fire        & 17    & 1877  & 3998         \\
Fertility   & 11    & 4898  & 1599         \\
Triazines   & 60    & 139   & 47          \\
\bottomrule
\end{tabular}
\end{table}

\subsection{Leave-one-out real data performance} \label{sec:linear_regression_with_luckiness:exp_real_sets}
We evaluated the LpNML for 50 real-world datasets from the PMLB repository~\cite{Olson2017PMLB}.

\paragraph{Train-test split.}
The prediction task for the Wine dataset is to predict acidity levels: The training data comprised only red wines with a test set contains only white wines.
In the Parkinsons dataset, the task is to predict a jitter index. This set was split into train and test based on the age feature of patients: Age less than 60 for the train set and greater than 60 for the test set.
For the Fertility dataset, the task is to predict the fertility of a sample. The train contains subjects who are younger than 36 and the test set contains subjects older than 36.
Finally, for the Fires dataset, where the task is to predict the burned area of forest fires that occurred in Portugal during a roughly year-long period, the split was done into train/test based on the month feature of the fire: Those occurring before September for a train set and those after September for the test set.
The Triazines dataset does not include a distribution shift and was randomly split.
In~\Tableref{tab:linear_regression_with_luckiness:dist_shift_metadata}, we include further information on these datasets

\paragraph{$\lambda$ tuning.} To tune the ridge parameter $\lambda$ and the variance $\sigma^2$, we executed the leave-one-out procedure: We constructed with $N$ samples $N$ sets, each set was divided into $N-1$ training samples and a single validation sample for which we optimized $\lambda$ and $\sigma^2$ of Ridge ERM, Bayesian, and LpNML learners.
The average of the $N$ values of $\lambda$ and $\sigma^2$ were used to predict the test labels.
We repeated this procedure for different train-test splits to compute the 95\% confidence intervals.
The MSE reduction is measured in percentage
\begin{equation}
100 \times \left( 1 - \frac{MSE_\textit{LpNML}(x,y)}{MSE_\textit{Ridge-ERM}(x,y)}\right)
\end{equation}
and the log-loss reduction is measured with subtraction
\begin{equation}
\ell(q_\textit{\tiny{Bayesian}},x,y) - \ell(q_\textit{\tiny{LpNML}},x,y).
\end{equation} 

\paragraph{Results.} 
\Tableref{tab:linear_regression_with_luckiness:real_data} shows the test MSE and test log-loss. The LpNML outperforms the Ridge ERM for 48 of 50 sets.
The mean and median MSE reductions are 2.03\%  and  0.96\%, respectively.
The largest MSE reduction is 20.0\% for the 1199\_BNG\_echoMonths set.
For the log-loss, the LpNML outperforms the other learners in 37 out of 50 sets by a mean value of 2.17 and median value of 0.13, while degrading the log-loss of only 3 sets.

Overall, the LpNML has a smaller regularization term: For the 1199\_BNG\_echoMonths dataset, the LpNML has 1.45 lower $\lambda$. This may explain the better performance: For the interpolation region, the small $\lambda$ of the LpNML means a better fit to the test sample.
For the extrapolation region: Although the LpMNL has a smaller $\lambda$, the LpNML prediction is shifted toward 0, and therefore the LpNML test MSE is smaller than the Ridg MSE.

\begin{figure}[bth]
\centering
\begin{subfigure}{1.0\textwidth}
\includegraphics[width=\textwidth]{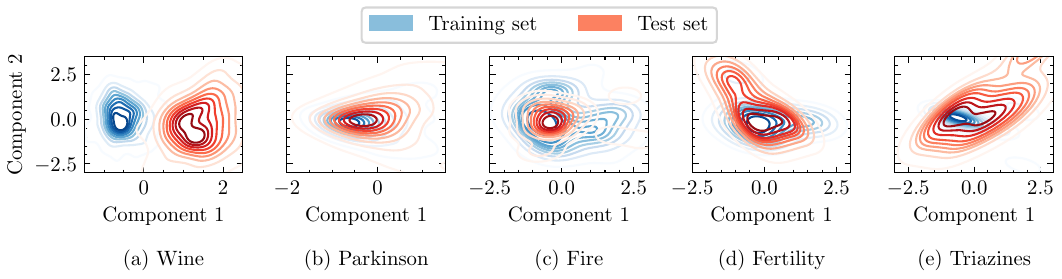}
\caption{PCA components 1 and 2}
\label{fig:linear_regression_with_luckiness:dist_shift}
\end{subfigure}
\begin{subfigure}{1.0\textwidth}
\centering
\includegraphics[width=\textwidth]{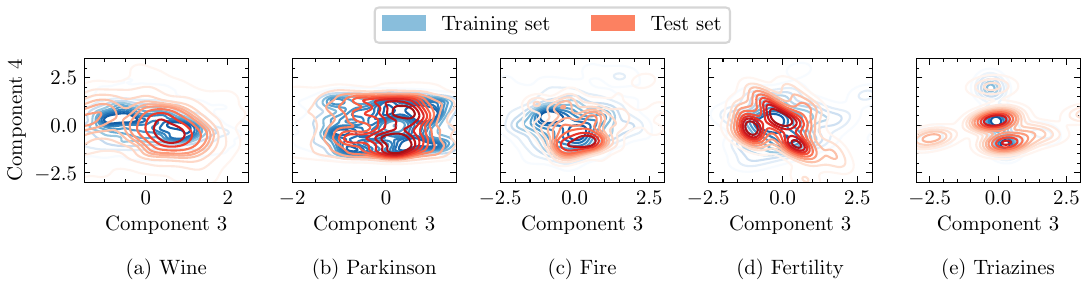}
\caption{PCA components 3 and 4}
\label{fig:linear_regression_with_luckiness:pca_3_4}
\end{subfigure}
\caption{PCA for the distribution-shift benchmark}
\end{figure}

\begin{table}[tb]
\centering
\caption{Distribution-shift benchmark} 
\label{table:dist_shift}
\resizebox{\textwidth}{!}{%
\begin{tabular}{lccccc}
\toprule
TriazinesMethod                    & Wine         & Parkinson                & Fire             & Fertility                & Triazines \\ 
\toprule
OLS                                 & 1.012$\pm$0.016       & 12.792$\pm$0.149                  & 82.715$\pm$35.514         & 0.399$\pm$0.066                   & 0.172$\pm$0.037   \\
Ridge ERM                           & 0.994$\pm$0.015       & 12.527$\pm$0.145                  & 82.346$\pm$35.595         & 0.399$\pm$0.066                   & 0.147$\pm$0.028  \\
\citet{chapelle2000transductive}    & 0.841$\pm$0.001       & 12.253$\pm$0.002                  & 82.066$\pm$ 2.567          & 0.409$\pm$0.013                  & 0.173$\pm$0.001   \\
\citet{cortes2007transductive}      & 0.834$\pm$0.015       & 12.333$\pm$0.145                  & 81.947$\pm$35.834         & \textbf{0.385$\pm$0.076}          & 0.151$\pm$0.024  \\ 
\citet{alquier2012transductive}     & 0.981$\pm$0.015       & 12.253$\pm$0.136                  & 82.066$\pm$36.032         & 0.409$\pm$0.072                   & 0.148$\pm$0.024  \\
\citet{tripuraneni2020single}       & 0.770$\pm$0.014       & 12.089$\pm$0.137                  & 81.979$\pm$35.787         & 0.398$\pm$0.065                   & 0.151$\pm$0.024   \\
\citet{dwivedi2021revisiting}       & 0.929$\pm$0.015       & 12.693$\pm$0.147                  & 82.634$\pm$35.533         & 0.407$\pm$0.071                   & 0.166$\pm$0.021  \\
LpNML (ours)                        & \textbf{0.732$\pm$0.014}       & \textbf{12.027$\pm$0.142}    & \textbf{81.918$\pm$35.923}        & 0.398$\pm$0.067       & \textbf{0.147$\pm$0.024}   \\ 
\bottomrule
\end{tabular}
}
\end{table}

\subsection{Distribution-shift benchmark} \label{sec:linear_regression_with_luckiness:dist_shift}
We followed the benchmark that was proposed by~\citet{tripuraneni2020single}: Four datasets from the UCI repository~\cite{Dua:2019} were chosen and split such that the test data contain a distribution shift from the training. 
The fifth dataset (Triazines) does not include a distribution shift. 
The train-test split was performed randomly. A detailed explanation of the train-test split is provided in the appendix. This benchmark was evaluated using the RMSE metric and the hyperparameters were optimized using the leave-one-out procedure.

The LpNML attains a smaller RMSE for four sets. The largest improvement is for the Wine set for which the LpNML reduces the RMSE of~\citet{tripuraneni2020single} by 4.93\%.
For the Fertility set, the LpNML is the second-best following the method of~\citet{cortes2007transductive}, which has 3.27\% smaller RMSE.
For the Triazines set that does not contain a distribution shift, the LpNML performs the same as the Ridge ERM and outperforms the other methods.

\Figref{fig:linear_regression_with_luckiness:dist_shift} shows the principal component analysis (PCA) with 2 components of the benchmark sets and the third and forth components are shown in  \figref{fig:linear_regression_with_luckiness:pca_3_4}.
For the Wine set in \figref{fig:linear_regression_with_luckiness:dist_shift}a, the difference between the train and test data is the most visually seen. For this set, the LpNML has the largest RMSE reduction over the Ridge ERM: an RMSE reduction of 26.36\%.
For the Parkinson, Fire, Fertility, and Triazines datasets, the LpNML reduces the Ridge ERM RMSE by 3.99\%, 0.52\%, 0.25\%, 0.0\% respectively, which is correlated to the degree to which the train-test splits are visually separated.

\section{Concluding remarks} \label{sec:linear_regression_with_luckiness:conclusion}
In this section, we introduced the LpNML by incorporating a luckiness function to the min-max regret objective.
For ridge regression, where we defined the luckiness function as the Gaussian prior, we have shown that the LpNML prediction is shifted toward 0 for a test vector that resides in the subspace that is associated with the small eigenvalues of the design matrix.
For real-world datasets, the LpNML attains up to 20\% better test MSE than Ridge ERM and for the distribution-shift benchmark, the LpNML reduces the error of recent leading methods by up to 4.93\%.

We believe that our approach can be valuable to fields that use linear regression and require high-precision prediction.
For future work, our LpNML framework can be extended with more luckiness functions such as $\ell_1$ by defining the luckiness function to be the Laplace prior.

\begin{table}[tb]
\centering
\caption{Leave-one-out test performance for PMLB sets}
\resizebox{\textwidth}{!}{%
\begin{tabular}{lc|cc|ccc}
\toprule
               \hspace{0.7cm} Set name &  M & \thead{Ridge ERM \\ MSE} &  \thead{ LpNML \\ MSE} &       \thead{Ridge ERM \\ log-loss} &\thead{Bayesian \\ log-loss} & \thead{LpNML \\ log-loss} \\
\midrule
1199\_BNG\_echoMonths & 9 & 1.76 $\pm$ 0.55 & 1.41 $\pm$ 0.03$\textcolor{green}{\blacktriangledown20.0\%}$ & 173 $\pm$ 276 & 8.86 $\pm$ 9.83 & 2.64 $\pm$ 0.57$\textcolor{green}{\blacktriangledown6.22}$ \\
1089\_USCrime & 13 & 0.93 $\pm$ 0.02 & 0.86 $\pm$ 0.02$\textcolor{green}{\blacktriangledown8.30\%}$ & 1.76 $\pm$ 0.09 & 1.53 $\pm$ 0.04 & 1.40 $\pm$ 0.02$\textcolor{green}{\blacktriangledown0.14}$ \\
294\_satellite\_image & 36 & 0.90 $\pm$ 0.02 & 0.83 $\pm$ 0.02$\textcolor{green}{\blacktriangledown8.25\%}$ & 1.34 $\pm$ 0.01 & 1.34 $\pm$ 0.01 & 1.29 $\pm$ 0.01$\textcolor{green}{\blacktriangledown0.05}$ \\
banana & 2 & 2.84 $\pm$ 0.51 & 2.61 $\pm$ 0.35$\textcolor{green}{\blacktriangledown8.11\%}$ & 418 $\pm$ 101 & 296 $\pm$ 40.5 & 290 $\pm$ 40.5$\textcolor{green}{\blacktriangledown5.51}$ \\
195\_auto\_price & 15 & 1.28 $\pm$ 0.08 & 1.20 $\pm$ 0.08$\textcolor{green}{\blacktriangledown6.42\%}$ & 2.32 $\pm$ 0.28 & 1.71 $\pm$ 0.08 & 1.59 $\pm$ 0.15$\textcolor{green}{\blacktriangledown0.12}$ \\
695\_chatfield\_4 & 12 & 1.20 $\pm$ 0.07 & 1.13 $\pm$ 0.06$\textcolor{green}{\blacktriangledown5.67\%}$ & 1.73 $\pm$ 0.11 & 1.58 $\pm$ 0.08 & 1.44 $\pm$ 0.05$\textcolor{green}{\blacktriangledown0.14}$ \\
503\_wind & 14 & 1.13 $\pm$ 0.03 & 1.07 $\pm$ 0.03$\textcolor{green}{\blacktriangledown4.78\%}$ & 1.55 $\pm$ 0.03 & 1.51 $\pm$ 0.02 & 1.53 $\pm$ 0.12$\textcolor{pink}{\blacktriangle0.02}$ \\
560\_bodyfat & 14 & 0.59 $\pm$ 0.02 & 0.56 $\pm$ 0.02$\textcolor{green}{\blacktriangledown4.07\%}$ & 2.15 $\pm$ 0.35 & 1.32 $\pm$ 0.14 & 1.45 $\pm$ 0.23$\textcolor{pink}{\blacktriangle0.13}$ \\
659\_sleuth\_ex1714 & 7 & 1.69 $\pm$ 0.09 & 1.64 $\pm$ 0.09$\textcolor{green}{\blacktriangledown3.29\%}$ & 6.25 $\pm$ 1.89 & 3.00 $\pm$ 0.34 & 2.44 $\pm$ 0.19$\textcolor{green}{\blacktriangledown0.56}$ \\
344\_mv & 10 & 1.42 $\pm$ 0.09 & 1.38 $\pm$ 0.08$\textcolor{green}{\blacktriangledown2.89\%}$ & 7.71 $\pm$ 1.54 & 5.09 $\pm$ 1.36 & 4.64 $\pm$ 1.42$\textcolor{green}{\blacktriangledown0.45}$ \\
229\_pwLinear & 10 & 1.38 $\pm$ 0.04 & 1.35 $\pm$ 0.04$\textcolor{green}{\blacktriangledown2.25\%}$ & 3.32 $\pm$ 0.38 & 2.36 $\pm$ 0.18 & 2.23 $\pm$ 0.19$\textcolor{green}{\blacktriangledown0.13}$ \\
1027\_ESL & 4 & 1.72 $\pm$ 0.09 & 1.69 $\pm$ 0.08$\textcolor{green}{\blacktriangledown1.92\%}$ & 31.4 $\pm$ 5.82 & 24.7 $\pm$ 4.53 & 23.1 $\pm$ 4.49$\textcolor{green}{\blacktriangledown1.57}$ \\
653\_fri\_c0\_250\_25 & 25 & 1.16 $\pm$ 0.03 & 1.14 $\pm$ 0.03$\textcolor{green}{\blacktriangledown1.81\%}$ & 1.57 $\pm$ 0.05 & 1.53 $\pm$ 0.02 & 1.51 $\pm$ 0.02$\textcolor{green}{\blacktriangledown0.01}$ \\
230\_machine\_cpu & 6 & 13.1 $\pm$ 2.77 & 12.9 $\pm$ 2.72$\textcolor{green}{\blacktriangledown1.76\%}$ & 77.7 $\pm$ 79.8 & 8.31 $\pm$ 1.88 & 7.36 $\pm$ 1.97$\textcolor{green}{\blacktriangledown0.94}$ \\
1203\_BNG\_pwLinear & 10 & 1.41 $\pm$ 0.04 & 1.39 $\pm$ 0.04$\textcolor{green}{\blacktriangledown1.61\%}$ & 3.18 $\pm$ 0.49 & 2.33 $\pm$ 0.22 & 2.21 $\pm$ 0.23$\textcolor{green}{\blacktriangledown0.12}$ \\
561\_cpu & 7 & 13.2 $\pm$ 5.72 & 13.0 $\pm$ 5.69$\textcolor{green}{\blacktriangledown1.39\%}$ & 318 $\pm$ 266 & 12.4 $\pm$ 4.95 & 7.52 $\pm$ 3.06$\textcolor{green}{\blacktriangledown4.89}$ \\
564\_fried & 10 & 1.35 $\pm$ 0.03 & 1.34 $\pm$ 0.03$\textcolor{green}{\blacktriangledown1.33\%}$ & 4.07 $\pm$ 0.61 & 2.60 $\pm$ 0.28 & 2.31 $\pm$ 0.23$\textcolor{green}{\blacktriangledown0.29}$ \\
633\_fri\_c0\_500\_25 & 25 & 1.13 $\pm$ 0.02 & 1.12 $\pm$ 0.02$\textcolor{green}{\blacktriangledown1.24\%}$ & 1.54 $\pm$ 0.02 & 1.51 $\pm$ 0.01 & 1.51 $\pm$ 0.02\hphantom{00000} \\
598\_fri\_c0\_1000\_25 & 25 & 1.14 $\pm$ 0.02 & 1.12 $\pm$ 0.02$\textcolor{green}{\blacktriangledown1.20\%}$ & 1.57 $\pm$ 0.03 & 1.52 $\pm$ 0.02 & 1.51 $\pm$ 0.02$\textcolor{green}{\blacktriangledown0.01}$ \\
706\_sleuth\_case1202 & 6 & 2.50 $\pm$ 0.16 & 2.47 $\pm$ 0.16$\textcolor{green}{\blacktriangledown1.19\%}$ & 26.7 $\pm$ 9.12 & 11.1 $\pm$ 3.64 & 9.52 $\pm$ 3.61$\textcolor{green}{\blacktriangledown1.57}$ \\
651\_fri\_c0\_100\_25 & 25 & 1.10 $\pm$ 0.02 & 1.08 $\pm$ 0.02$\textcolor{green}{\blacktriangledown1.18\%}$ & 1.58 $\pm$ 0.04 & 1.52 $\pm$ 0.02 & 1.50 $\pm$ 0.02$\textcolor{green}{\blacktriangledown0.02}$ \\
635\_fri\_c0\_250\_10 & 10 & 1.41 $\pm$ 0.04 & 1.39 $\pm$ 0.04$\textcolor{green}{\blacktriangledown1.11\%}$ & 3.66 $\pm$ 0.86 & 2.42 $\pm$ 0.28 & 2.17 $\pm$ 0.20$\textcolor{green}{\blacktriangledown0.26}$ \\
656\_fri\_c1\_100\_5 & 5 & 2.59 $\pm$ 0.10 & 2.56 $\pm$ 0.10$\textcolor{green}{\blacktriangledown1.09\%}$ & 31.8 $\pm$ 7.88 & 19.1 $\pm$ 5.60 & 15.2 $\pm$ 4.72$\textcolor{green}{\blacktriangledown3.94}$ \\
1096\_FacultySalaries & 4 & 3.17 $\pm$ 0.35 & 3.14 $\pm$ 0.31$\textcolor{green}{\blacktriangledown1.06\%}$ & 107 $\pm$ 72.8 & 48.6 $\pm$ 20.9 & 13.6 $\pm$ 4.23$\textcolor{green}{\blacktriangledown35.0}$ \\
595\_fri\_c0\_1000\_10 & 10 & 1.34 $\pm$ 0.03 & 1.33 $\pm$ 0.03$\textcolor{green}{\blacktriangledown1.01\%}$ & 3.38 $\pm$ 0.42 & 2.41 $\pm$ 0.21 & 2.13 $\pm$ 0.16$\textcolor{green}{\blacktriangledown0.28}$ \\
1193\_BNG\_lowbwt & 9 & 1.48 $\pm$ 0.04 & 1.46 $\pm$ 0.04$\textcolor{green}{\blacktriangledown0.91\%}$ & 4.12 $\pm$ 0.94 & 3.09 $\pm$ 0.78 & 3.04 $\pm$ 0.80$\textcolor{green}{\blacktriangledown0.05}$ \\
650\_fri\_c0\_500\_50 & 50 & 1.06 $\pm$ 0.02 & 1.05 $\pm$ 0.02$\textcolor{green}{\blacktriangledown0.88\%}$ & 1.46 $\pm$ 0.01 & 1.45 $\pm$ 0.01 & 1.45 $\pm$ 0.01\hphantom{00000} \\
666\_rmftsa\_ladata & 10 & 2.73 $\pm$ 0.24 & 2.71 $\pm$ 0.24$\textcolor{green}{\blacktriangledown0.87\%}$ & 4.63 $\pm$ 2.25 & 2.87 $\pm$ 0.75 & 2.48 $\pm$ 0.32$\textcolor{green}{\blacktriangledown0.39}$ \\
1028\_SWD & 10 & 1.40 $\pm$ 0.03 & 1.38 $\pm$ 0.03$\textcolor{green}{\blacktriangledown0.86\%}$ & 5.56 $\pm$ 2.52 & 3.85 $\pm$ 1.42 & 2.89 $\pm$ 0.66$\textcolor{green}{\blacktriangledown0.96}$ \\
192\_vineyard & 2 & 11.5 $\pm$ 1.56 & 11.4 $\pm$ 1.45$\textcolor{green}{\blacktriangledown0.85\%}$ & 722 $\pm$ 212 & 698 $\pm$ 212 & 669 $\pm$ 211$\textcolor{green}{\blacktriangledown29.2}$ \\
225\_puma8NH & 8 & 1.48 $\pm$ 0.04 & 1.47 $\pm$ 0.03$\textcolor{green}{\blacktriangledown0.82\%}$ & 5.32 $\pm$ 0.77 & 3.17 $\pm$ 0.33 & 2.82 $\pm$ 0.39$\textcolor{green}{\blacktriangledown0.34}$ \\
634\_fri\_c2\_100\_10 & 10 & 1.47 $\pm$ 0.04 & 1.46 $\pm$ 0.04$\textcolor{green}{\blacktriangledown0.78\%}$ & 3.68 $\pm$ 1.54 & 2.39 $\pm$ 0.35 & 1.93 $\pm$ 0.10$\textcolor{green}{\blacktriangledown0.46}$ \\
657\_fri\_c2\_250\_10 & 10 & 1.47 $\pm$ 0.05 & 1.46 $\pm$ 0.05$\textcolor{green}{\blacktriangledown0.67\%}$ & 4.59 $\pm$ 2.42 & 2.25 $\pm$ 0.21 & 2.20 $\pm$ 0.25$\textcolor{green}{\blacktriangledown0.05}$ \\
603\_fri\_c0\_250\_50 & 50 & 1.06 $\pm$ 0.02 & 1.06 $\pm$ 0.02$\textcolor{green}{\blacktriangledown0.49\%}$ & 1.45 $\pm$ 0.01 & 1.45 $\pm$ 0.01 & 1.45 $\pm$ 0.01\hphantom{00000} \\
624\_fri\_c0\_100\_5 & 5 & 2.11 $\pm$ 0.11 & 2.10 $\pm$ 0.10$\textcolor{green}{\blacktriangledown0.39\%}$ & 31.4 $\pm$ 11.8 & 16.7 $\pm$ 10.5 & 8.71 $\pm$ 3.49$\textcolor{green}{\blacktriangledown8.03}$ \\
606\_fri\_c2\_1000\_10 & 10 & 1.52 $\pm$ 0.05 & 1.52 $\pm$ 0.05$\textcolor{green}{\blacktriangledown0.35\%}$ & 4.53 $\pm$ 2.00 & 3.95 $\pm$ 1.95 & 3.29 $\pm$ 1.88$\textcolor{green}{\blacktriangledown0.66}$ \\
579\_fri\_c0\_250\_5 & 5 & 2.00 $\pm$ 0.07 & 2.00 $\pm$ 0.07$\textcolor{green}{\blacktriangledown0.22\%}$ & 22.0 $\pm$ 4.73 & 11.7 $\pm$ 3.53 & 6.85 $\pm$ 1.61$\textcolor{green}{\blacktriangledown4.83}$ \\
648\_fri\_c1\_250\_50 & 50 & 1.08 $\pm$ 0.02 & 1.08 $\pm$ 0.02$\textcolor{green}{\blacktriangledown0.20\%}$ & 1.47 $\pm$ 0.02 & 1.46 $\pm$ 0.01 & 1.46 $\pm$ 0.01\hphantom{00000} \\
1191\_BNG\_pbc & 18 & 1.23 $\pm$ 0.03 & 1.23 $\pm$ 0.03$\textcolor{green}{\blacktriangledown0.19\%}$ & 2.10 $\pm$ 0.53 & 2.02 $\pm$ 0.52 & 1.86 $\pm$ 0.46$\textcolor{green}{\blacktriangledown0.16}$ \\
618\_fri\_c3\_1000\_50 & 50 & 1.10 $\pm$ 0.03 & 1.09 $\pm$ 0.03$\textcolor{green}{\blacktriangledown0.17\%}$ & 1.47 $\pm$ 0.01 & 1.47 $\pm$ 0.01 & 1.47 $\pm$ 0.01\hphantom{00000} \\
631\_fri\_c1\_500\_5 & 5 & 2.38 $\pm$ 0.09 & 2.38 $\pm$ 0.09$\textcolor{green}{\blacktriangledown0.14\%}$ & 38.2 $\pm$ 10.0 & 24.0 $\pm$ 6.32 & 24.7 $\pm$ 8.23$\textcolor{pink}{\blacktriangle0.67}$ \\
583\_fri\_c1\_1000\_50 & 50 & 1.06 $\pm$ 0.02 & 1.06 $\pm$ 0.02$\textcolor{green}{\blacktriangledown0.14\%}$ & 1.45 $\pm$ 0.01 & 1.45 $\pm$ 0.01 & 1.45 $\pm$ 0.01\hphantom{00000} \\
586\_fri\_c3\_1000\_25 & 25 & 1.18 $\pm$ 0.03 & 1.18 $\pm$ 0.03$\textcolor{green}{\blacktriangledown0.13\%}$ & 1.69 $\pm$ 0.23 & 1.64 $\pm$ 0.19 & 1.52 $\pm$ 0.02$\textcolor{green}{\blacktriangledown0.12}$ \\
542\_pollution & 15 & 1.32 $\pm$ 0.05 & 1.32 $\pm$ 0.05$\textcolor{green}{\blacktriangledown0.09\%}$ & 1.85 $\pm$ 0.15 & 1.60 $\pm$ 0.02 & 1.57 $\pm$ 0.02$\textcolor{green}{\blacktriangledown0.04}$ \\
687\_sleuth\_ex1605 & 5 & 2.17 $\pm$ 0.09 & 2.17 $\pm$ 0.09$\textcolor{green}{\blacktriangledown0.08\%}$ & 15.1 $\pm$ 3.06 & 7.66 $\pm$ 1.52 & 6.06 $\pm$ 1.31$\textcolor{green}{\blacktriangledown1.60}$ \\
645\_fri\_c3\_500\_50 & 50 & 1.09 $\pm$ 0.03 & 1.09 $\pm$ 0.03$\textcolor{green}{\blacktriangledown0.05\%}$ & 1.47 $\pm$ 0.02 & 1.46 $\pm$ 0.02 & 1.46 $\pm$ 0.02\hphantom{00000} \\
623\_fri\_c4\_1000\_10 & 10 & 1.55 $\pm$ 0.05 & 1.54 $\pm$ 0.05$\textcolor{green}{\blacktriangledown0.04\%}$ & 2.64 $\pm$ 0.52 & 2.01 $\pm$ 0.12 & 1.99 $\pm$ 0.13$\textcolor{green}{\blacktriangledown0.02}$ \\
622\_fri\_c2\_1000\_50 & 50 & 1.08 $\pm$ 0.02 & 1.08 $\pm$ 0.02$\textcolor{green}{\blacktriangledown0.04\%}$ & 1.46 $\pm$ 0.01 & 1.46 $\pm$ 0.01 & 1.46 $\pm$ 0.01\hphantom{00000} \\
658\_fri\_c3\_250\_25 & 25 & 1.18 $\pm$ 0.03 & 1.18 $\pm$ 0.03\hphantom{0000000} & 1.54 $\pm$ 0.02 & 1.52 $\pm$ 0.02 & 1.52 $\pm$ 0.02\hphantom{00000} \\
201\_pol & 48 & 1.07 $\pm$ 0.02 & 1.07 $\pm$ 0.02\hphantom{0000000} & 1.45 $\pm$ 0.01 & 1.45 $\pm$ 0.01 & 1.45 $\pm$ 0.01\hphantom{00000} \\
\bottomrule
\end{tabular}

}
\label{tab:linear_regression_with_luckiness:real_data}
\end{table}

\chapter{Confidence Estimation for Neural Networks} \label{chap:neural_networks}
\section{Introduction} \label{sec:neural_networks:introduction}

An important concern that limits the adoption of DNN in critical safety systems is how to assess our confidence in their predictions, i.e, quantifying their \textit{generalization capability}~\citep{DBLP:conf/bmvc/KaufmanBBCH19,willers2020safety}.
Take, for instance, a machine learning model for medical diagnosis~\citep{bibas2021learning}. 
It may produce (wrong) diagnoses in the presence of test inputs that are different from the training set rather than flagging them for human intervention~\citep{singh2021uncertainty}. 
Detecting such unexpected inputs had been formulated as the out-of-distribution (OOD) detection task~\citep{hendrycks17baseline}, as flagging test inputs that lie outside the training classes, i.e., are not \textit{in-distribution} (IND).

Previous learning methods that designed to offer such generalization measures, include VC-dimension~\citep{vapnik2015uniform,zhong2017recovery} and norm based bounds~\citep{DBLP:conf/nips/BartlettFT17,DBLP:conf/iclr/NeyshaburBS18}.
As a whole, these methods characterized the generalization ability based on the properties of the parameters. However, they do not consider the test sample that is presented to the model~\citep{DBLP:conf/iclr/JiangNMKB20}, which makes them useless for OOD detection.
Other approaches build heuristics over the ERM learner, by post-processing the model output~\citep{gram} or modifying the training process~\citep{PAPADOPOULOS2021138,vyas2018out}. Regardless of the approach, these methods choose the learner that minimizes the loss over the \textit{training set}. This may lead to a large generalization error because the ERM estimate may be wrong on unexpected inputs; especially with large models such as DNN~\citep{belkin2019reconciling}.

To produce a useful generalization measure, we exploit the individual setting framework~\citep{merhav1998universal} along with the pNML learner: We derive an analytical solution of the pNML learner and its generalization error (the regret) for a single layer Neural Network (NN).
We analyze the derived regret and show it obtains low values when the test input either (i) lies in a subspace spanned by the eigenvectors associated with the large eigenvalues of the training data empirical correlation matrix or (ii) is located far from the decision boundary.
Crucially, although our analysis focuses on a single layer NN, our results are applicable to the last layer of DNNs without changing the network architecture or the training process:
We treat the pretrained DNN as a feature extractor with the last layer as a single layer NN classifier.
We can therefore show the usage of the pNML regret as a confidence score for the OOD detection task.

In addition, we explore an alternative method of computing the pNML learning for the DNN hypothesis class by fully training the last layer of DNN and show how it benefits the \textit{open-set classification} task, where the task is to identify input samples as known or unknown and simultaneously correctly classifying all the known classes is referred to as the open-set recognition task~\cite{scheirer2012toward}. 

To summarize, we make the following contributions.
\begin{enumerate}
    \item We derive an analytical expression of the pNML regret, which is associated with the generalization error, for a single layer NN. 
    \item
     We explore the pNML regret characteristics as a function of the test sample data, training data, and the corresponding ERM prediction. We provide a visualization on low dimensional data and demonstrate the situations in which the pNML regret is low and the prediction can be trusted.
    \item We propose an adaptation of the derived pNML regret to {\em any} pretrained DNN that uses the softmax function with neither additional parameters nor extra data.
\end{enumerate}

Applying the derived pNML regret to a pretrained DNN does not require additional data, it is efficient and can be easily implemented.
The derived regret is theoretically justified for OOD detection since it is the individual setting solution for which we do not require any knowledge on the test input distribution.
Our evaluation includes 74 IND-OOD detection benchmarks using DenseNet-BC-100~\cite{huang2017densely}, ResNet-34~\cite{he2016deep}, and WideResNet-40~\cite{zagoruyko2016wide} trained with CIFAR-100~\cite{krizhevsky2014cifar}, CIFAR-10, SVHN~\cite{netzer2011reading}, and ImageNet-30~\cite{DBLP:conf/nips/HendrycksMKS19}.
Our approach outperforms leading methods in nearly all 74 OOD detection benchmarks up to $+15.2$\%

\section{Analytical solution for a single layer neural-network}
In order to derive the pNML for a single layer NN, we first present a framework of online update of a neural network:
Let $X_N$ and $Y_N$ be the data and label matrices of $N$ training points respectively
\begin{equation} \label{eq:single_layer_nn:training set_matrix}
X_N = 
\begin{bmatrix}
x_1 & x_2 & \hdots & x_N
\end{bmatrix}^\top
\in \mathcal{R}^{N \times M}, \quad
Y_N = 
\begin{bmatrix}
y_1 & y_2 & \dots & y_N
\end{bmatrix}^\top
\in \mathcal{R}^{N \times C},
\end{equation} 
such that the number of input features and model outputs are $M$ and $C$ respectively.
Denote $X_N^+$ as the Moor-Penrose inverse of the data matrix
\begin{equation} \label{eq:single_layer_nn:pseudo-inverse}
X_N^+ = 
\begin{cases} 
(X_N^\top X_N)^{-1} X_N^\top & \textit{Rank}(X_N^\top X_N) = M\\ 
X_N^\top (X_N X_N^\top )^{-1} & \textit{otherwise},
\end{cases}
\end{equation}
$f(\cdot)$ and $f^{-1}(\cdot)$ as the activation and inverse activation functions, and $\theta \in \mathcal{R}^{M \times C}$ as the learnable parameters.

The ERM that minimizes the training set MSE is given by  
\begin{equation} \label{eq:single_layer_nn:nn_solution}
\hat{\theta}_N = \argmin_{\theta} \norm{Y_N - f(X_N \theta)}_F^2 =  X_N^{+} f^{-1}(Y_N)  
\end{equation}  
where \(\norm{\cdot}_F\) denotes the Frobenius norm, as \(Y_N - f(X_N \theta)\) represents an \(N \times C\) matrix.

Recently, \citet{zhuang2020training} proposed a recursive formulation for updating the weights of a DNN in an online manner. Using their scheme, only one training sample is processed at a time, with updates made iteratively.  
Denote the projection of a sample \(x\) onto the orthogonal subspace of the training set correlation matrix as  
\begin{equation}
x_\bot = \left(I - X_N^+ X_N \right) x,
\end{equation}  
the update rule for receiving a new training sample with data \(x\) and label \(y\) is given by  
\begin{equation} \label{eq:single_layer_nn:update_nn}
\thetagenie = \hat{\theta}_N + g \left(f^{-1}(y) - x^\top \hat{\theta}_N \right), \quad
g \triangleq
\begin{cases}
\frac{1}{\norm{x_\bot}^2} x_\bot &  x_\bot  \neq 0 \\
\frac{1}{1 + x^\top X_N^+ X_N^{+ \top} x} X_N^+ X_N^{+ \top} x &  x_\bot = 0
\end{cases}.
\end{equation}  
Here, \(\hat{\theta}_N\) represents the (ERM solution based on \(N\) training samples. It is important to note that this algorithm does not compute the exact solution for the updated dataset with the new training sample. Instead, it provides an iterative step towards the optimal solution.  
\citet{zhuang2020training} applied this formulation to train a DNN in a layer-by-layer fashion.

\subsubsection{The pNML for a single layer neural-network}
\label{sec:single_layer_nn:pnml_regret}

Intuitively, the pNML as stated in~\eqref{eq:pnml} can be described as follows: To assign a probability for a potential outcome, (i) add it to the training set with an arbitrary label, (ii) find the best-suited model, and (iii) take the probability it gives to the assumed label. 
Follow this procedure for every label and normalize to get a valid probability assignment. Use the log normalization factor as the confidence measure.
This method can be extended to any general learning procedure that generates a prediction based on a training set. 
One such method is a single layer NN.

A single layer NN maps an input $x\in \mathcal{R}^{M \times 1}$ using the softmax function to a probability vector which represents the probability assignment to one of $C$ classes
\begin{equation}
p_\theta(y_c|x) 
= f(x^\top \theta)_c
= \frac{e^{\theta_c^\top x}}{\sum_{c'=1}^C e^{\theta_{c'}^\top x}},
\quad c \in \{1,\dots, C\}.
\end{equation}
To align with the recursive formulation of~\eqref{eq:single_layer_nn:update_nn}, the label $y$ is a one-hot row vector with $C$ elements, $y_c$ is the $c$ element of $y$, and the learnable parameters $\left\{\theta_{c'}\right\}_{c'=1}^C$ are the columns of the parameter matrix of~\eqref{eq:single_layer_nn:nn_solution}. In addition, the inverse of the softmax activation is
\begin{equation}
z \triangleq f^{-1}\left(p_\theta \left(i|x\right) \right) = \ln p_\theta \left(i|x\right) + \ln \sum_{j=1}^C e^{\theta_j^\top x}.
\end{equation}

To compute the genie prediction of the test label we add the test sample to the training set. Then we optimize the learnable parameters to minimize the loss of this new dataset.  
\begin{lemma} \label{lemma:genie} 
Let $c$ be the true test label, $p_c$ the probability assignment of the ERM model of the label $c$, $g$ as defined in \eqref{eq:single_layer_nn:update_nn}.
Given test data $x$ with a one-hot row vector $y$, the genie prediction is 
\begin{equation}
p_{\thetagenie} (c|x) 
=
\frac{p_c}
{p_c + p_c^{x^\top g}\left(1 - p_c\right)},
\end{equation}
\end{lemma}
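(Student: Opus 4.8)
The plan is to start from the explicit one-step update rule \eqref{eq:single_layer_nn:update_nn}, which expresses the genie parameters $\thetagenie$ after appending the test pair $(x,y)$---with $y$ the one-hot row vector for the true class $c$---to the training set, and then to read off the resulting class logits. Writing $\alpha \triangleq x^\top g$ and letting $a_{c'} \triangleq x^\top \hat{\theta}_{N,c'}$ denote the ERM logit of class $c'$, the column-wise form of the rank-one update gives, for every class,
\begin{equation}
x^\top \thetagenie_{c'} = a_{c'} + \alpha \left( f^{-1}(y)_{c'} - a_{c'} \right).
\end{equation}
Thus each logit is shifted by $\alpha$ times its logit-space residual, and the whole computation reduces to evaluating that residual through the inverse activation.

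Next I would evaluate the residual using $f^{-1}(p)_i = \ln p_i + \ln \sum_j e^{\theta_j^\top x}$. Set $S \triangleq \sum_j e^{a_j}$ so that the ERM probability is $p_c = e^{a_c}/S$. For the true class, $y_c=1$ yields $f^{-1}(y)_c - a_c = \ln S - (\ln p_c + \ln S) = -\ln p_c$, so the true-class logit becomes $a_c - \alpha \ln p_c$, while the competing classes retain their ERM logits $a_{c'}$. Exponentiating gives $e^{a_c - \alpha \ln p_c} = e^{a_c} p_c^{-\alpha} = S\, p_c^{1-\alpha}$ and $\sum_{c'\neq c} e^{a_{c'}} = S - e^{a_c} = S(1-p_c)$.

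Substituting into the softmax and cancelling the common factor $S$ then yields
\begin{equation}
p_{\thetagenie}(c|x) = \frac{S\, p_c^{1-\alpha}}{S\, p_c^{1-\alpha} + S(1-p_c)} = \frac{p_c^{1-\alpha}}{p_c^{1-\alpha} + (1-p_c)},
\end{equation}
and multiplying numerator and denominator by $p_c^{\alpha}$ recovers the claimed form $p_c / \left[ p_c + p_c^{x^\top g}(1-p_c) \right]$, recalling $\alpha = x^\top g$. The remaining work after the residual is identified is this short exponential manipulation.

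The delicate step---and the one I expect to require the most care---is the treatment of the one-hot label under the inverse activation. A literal substitution of $y_{c'}=0$ into $\ln y_{c'}$ sends the off-class residuals to $-\infty$, collapsing the genie to a degenerate point mass and contradicting the stated result. The correct reading exploits the shift-invariance of the softmax, so that only logit \emph{differences} are meaningful: the genie is represented by raising the true-class logit by the finite amount $-\alpha \ln p_c$ while holding the competing logits at their ERM values. This is consistent with \eqref{eq:single_layer_nn:update_nn} being a single finite recursive step rather than an exact interpolator. I would therefore make this gauge choice explicit before doing the logit bookkeeping; once it is fixed, the derivation above goes through directly.
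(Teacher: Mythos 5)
Your proof is correct and follows essentially the same route as the paper's: apply the rank-one update to the true-class logit, use $z=\ln\sum_{j} e^{\theta_j^\top x}$ for the one-hot label, and simplify the softmax numerator to $S\,p_c^{1-x^\top g}$ to obtain $p_c/\bigl(p_c + p_c^{x^\top g}(1-p_c)\bigr)$. Your explicit remark that the competing columns must be held at their ERM values (rather than sent to $-\infty$ by a literal $\ln 0$ in the inverse activation) addresses a point the paper passes over silently by writing the denominator with the unmodified $\theta_j$ for $j\neq c$; making that convention explicit slightly improves the rigor but does not change the argument.
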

\begin{proof}
With~\eqref{eq:single_layer_nn:update_nn}, the probability assignment of the genie can be written as follows.
\begin{equation} \label{eq:single_layer_nn:lemma_genie}
p_{\thetagenie}(c|x) 
 =
\frac{e^{\thetagenie^\top x}}{\sum_{\substack{j=1 \\ j \neq c}}^C e^{\theta_j^\top x} + e^{\thetagenie^\top x} }
 =
\frac
{e^{x^\top \left[ \theta_{c} + g \left(z - \theta_c^\top x \right)\right]}}
{\sum_{j=1}^C e^{\theta_j^\top x} - e^{\theta_c^\top x} + e^{x^\top \left[ \theta_c + g \left(z - \theta_c^\top x \right)\right]}}.
\end{equation}
The genie knows the true test label $c$ thus the inverse activation function can be written as 
$z = \ln \sum_{j=1}^C e^{\theta_j^\top x}$.
The simplified numerator is
\begin{equation}
e^{\theta_{c}^\top x}e^{x^\top g \left(z - \theta_c^\top x \right)}
=
e^{\theta_{c}^\top x}\left[S e^{-\theta_c^\top x}\right]^{x^\top g }
=
\left( \sum_{j=1}^C e^{\theta_j^\top x} \right) p_c^{-x^\top g} p_c.
\end{equation}
Substituting to \eqref{eq:single_layer_nn:lemma_genie} and dividing the numerator and denominator by $\left( \sum_{j=1}^C e^{\theta_j^\top x} \right)$ provides the result.
\end{proof}

The true test label is not available to a legit learner. Therefore in the pNML process every possible label is taken into account. The pNML regret is the logarithm of the sum of models' prediction, each one trained with a different test label value.

\begin{theorem}
Denote $p_i$ as the ERM prediction of label $i$, the pNML regret of a single layer NN is
\begin{equation} \label{eq:single_layer_nn:pnml}
\Gamma =\log \sum_{i=1}^{C} \frac{p_i} {p_i + p_i^{x^\top g}\left(1 - p_i\right)}.
\end{equation}
\end{theorem}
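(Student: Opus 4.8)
The plan is to observe that the theorem is essentially a direct substitution, because the regret is already available in closed form from the general pNML definition. Recall from \eqref{eq:pnml} that for any hypothesis class the min-max regret equals
\begin{equation}
\Gamma = \log \sum_{y' \in \mathcal{Y}} p_{\hat{\theta}(\mathcal{D}_N;x,y')}(y'|x),
\end{equation}
so the only thing left to do is evaluate, for each candidate label, the probability that the corresponding genie assigns to that same label, and sum the results. All the analytical effort has already been spent in establishing \lemmaref{lemma:genie}.

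First I would specialize the outcome space: for the single layer NN the label $y$ ranges over the $C$ one-hot row vectors, so $\sum_{y'\in\mathcal{Y}}$ becomes $\sum_{i=1}^{C}$, where hypothesizing $y'$ to be the $i$-th one-hot vector corresponds to assuming the true test label is class $i$. Second, for each such $i$ I would invoke \lemmaref{lemma:genie} with the true label taken to be $i$; since the ERM solution $\hat{\theta}_N$ and the gain vector $g$ of \eqref{eq:single_layer_nn:update_nn} do not depend on the hypothesized label (only the ERM probability $p_i$ does), the lemma applies verbatim and yields
\begin{equation}
p_{\thetageniein{i}}(i|x) = \frac{p_i}{p_i + p_i^{x^\top g}\left(1 - p_i\right)}.
\end{equation}
Third, substituting this term-by-term into the log-sum and reading off the result proves the theorem.

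The step I expect to carry the real weight is not in this theorem at all but in the lemma it rests on, namely the softmax-inverse simplification $z = \ln \sum_{j} e^{\theta_j^\top x}$ that collapses the genie update into the stated rational form. Within the present proof, the only genuine subtlety to check is that each summand is a bona fide probability assigned by a valid (normalized) genie model, so that no extra normalization is needed and the regret coincides exactly with the logarithm of the sum; this is guaranteed by the min-max structure of pNML, which makes the regret constant across outcomes and equal to $\log K$ with $K=\sum_{y'} p_{\hat\theta(\mathcal{D}_N;x,y')}(y'|x)$. I would close by remarking that the resulting $\Gamma$ depends on the test point only through the scalars $\{p_i\}_{i=1}^{C}$ and the quantity $x^\top g$, setting up the subsequent analysis of when the regret is small.
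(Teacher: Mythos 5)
Your proposal is correct and follows essentially the same route as the paper: identify the normalization factor as $K=\sum_{i=1}^{C} p_{\thetageniein{e_i}}(i|x)$, note that $\Gamma=\log K$ by the pNML definition, and substitute \lemmaref{lemma:genie} term by term. Your added observations—that the one-hot structure turns $\sum_{y'\in\mathcal{Y}}$ into $\sum_{i=1}^{C}$ and that $g$ is independent of the hypothesized label so the lemma applies verbatim—are accurate and merely make explicit what the paper leaves implicit.
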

\begin{proof}
The normalization factor is the sum of the probabilities assignment of models that were trained with a specific value of the test sample
$K = \sum_{i=1}^{C} p_{\thetageniein{e_i}}(i|x)$.
As shown in~\eqref{eq:pnml}, the log normalization factor is the pNML regret. 
With \lemmaref{lemma:genie}, we get the explicit expression.
\end{proof}

The pNML probability assignment of label $i \in \{1,\dots,C\}$ is the probability assignment of a model that was trained with that label divided by the normalization factor  $q_\textit{pNML}(i|x) = \frac{1}{K} \probthetagenie$.

Let $u_m$ and $h_m$ be the $m$-th eigenvector and eigenvalue of the training set data matrix $X_N$ such that for $x_\bot =0$, the quantity $x^\top g$ is
\begin{equation}
x^\top g = \frac{x^\top X_N^+ X_N^{+ \top} x}{1 + x^\top X_N^+ X_N^{+ \top} x} 
=
\frac{\frac{1}{N} \sum_{m=1}^{M} \frac{1}{h_m^2} \left(x^\top u_m\right)^2 }{1 + \frac{1}{N} \sum_{i=1}^{M} \frac{1}{h_m^2} \left(x^\top u_m\right)^2}.
\end{equation}
We make the following remarks.
\begin{enumerate}
\item 
If the test sample $x$ lies in the subspace spanned by the eigenvectors with large eigenvalues, $x^\top g$ is small and the corresponding regret is low
$\lim_{x^\top g \xrightarrow{} 0}\Gamma = \log \sum_{i=1}^C p_i =  0$.
In this case, the pNML prediction is similar to the genie and can be trusted.
\item
Test input that resides is in the subspace that corresponds to the small eigenvalues produces $x^\top g=1$ and a large regret is obtained 
$\lim_{x^\top g \xrightarrow{} 1} \Gamma =\log \sum_{i=1}^C \frac{1}{2 - p_i^2}$.
The prediction for this test sample cannot be trusted. In~\secref{sec:single_layer_nn:experiments} we show that in this situation the test sample can be classified as an OOD sample.
\item
As the training set size ($N$) increases $x^\top g$ becomes smaller and the regret decreases.
\item If the test sample is far from the decision boundary, the ERM assigns to one of the labels probability 1. In this case, the regret is 0 no matter in which subspace the test vector lies.
\end{enumerate}

\subsubsection{pNML min-max regret simulation}
\label{sec:single_layer_nn:regret_simulation}

We simulate the response of the pNML regret for two classes (C=2) and divide it by $\log C$ to have the regret bounded between 0 and 1.
\Figref{fig:regret_simulation} shows the regret behaviour for different $p_1$ (the ERM probability assignment of class 1) as a function of $x^\top g$.

For an ERM model that is certain on the prediction ($p_1=0.99$ that is represented by the purple curve), a slight variation of $x^\top g$ causes a large response of the regret comparing to $p_1$ that equals 0.55 and 0.85.

\begin{figure}[tb]
    \centering
    \includegraphics[width=0.65\linewidth]{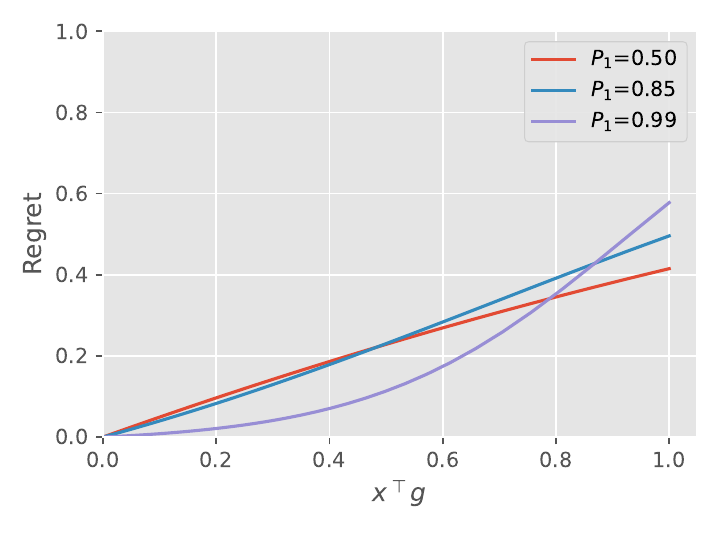}
    \caption{The pNML regret simulation for a two class predictor}
    \label{fig:regret_simulation}
\end{figure}

\begin{figure}[tb]
\centering
\begin{subfigure}[t]{0.49\textwidth}
\includegraphics[width=1.0\textwidth]{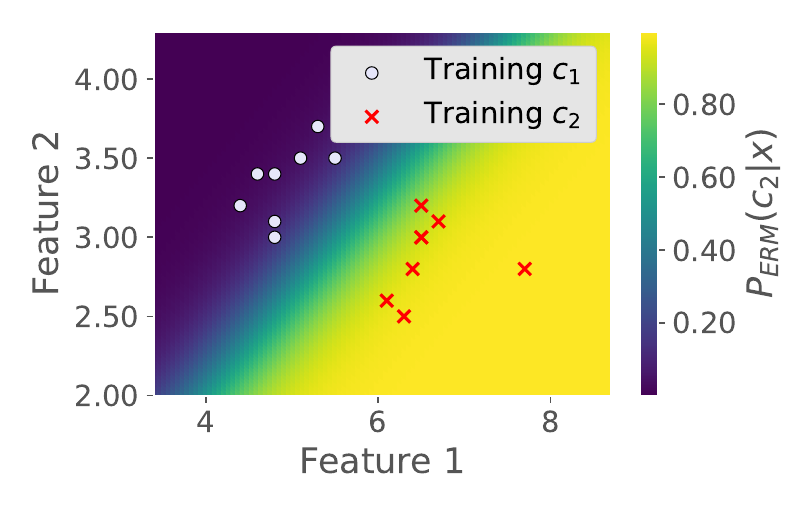}
\caption{ERM for a separable split \label{fig:single_layer_nn:syntetic_erm_prob}}
\end{subfigure}
\begin{subfigure}[t]{0.49\textwidth}
\includegraphics[width=1.0\textwidth]{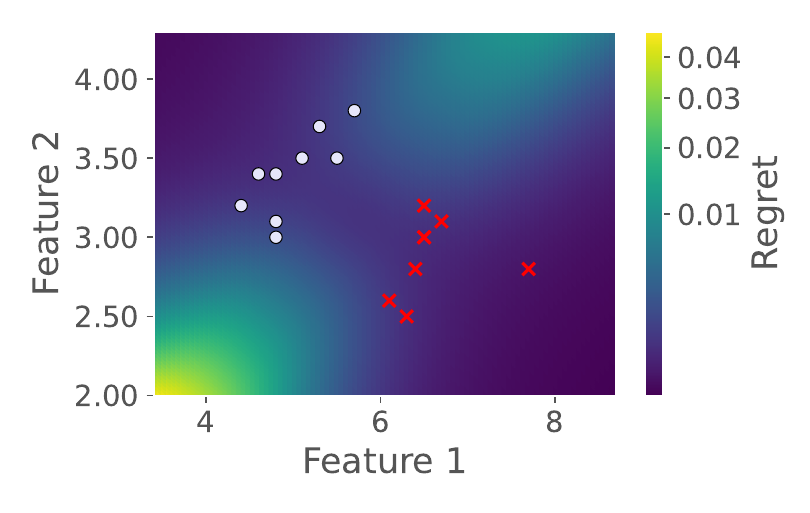}
\caption{pNML regret for a separable split \label{fig:single_layer_nn:syntetic_regret}}
\end{subfigure}
\begin{subfigure}[t]{0.49\textwidth}
\includegraphics[width=1.0\textwidth]{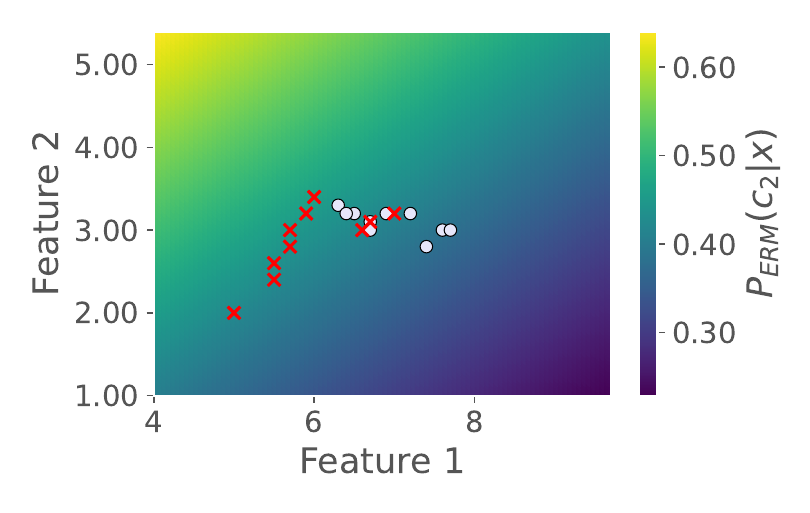}
\caption{ERM for a inseparable split \label{fig:single_layer_nn:syntetic_erm_prob_inseparable}}
\end{subfigure}
\begin{subfigure}[t]{0.49\textwidth}
\includegraphics[width=1.0\textwidth]{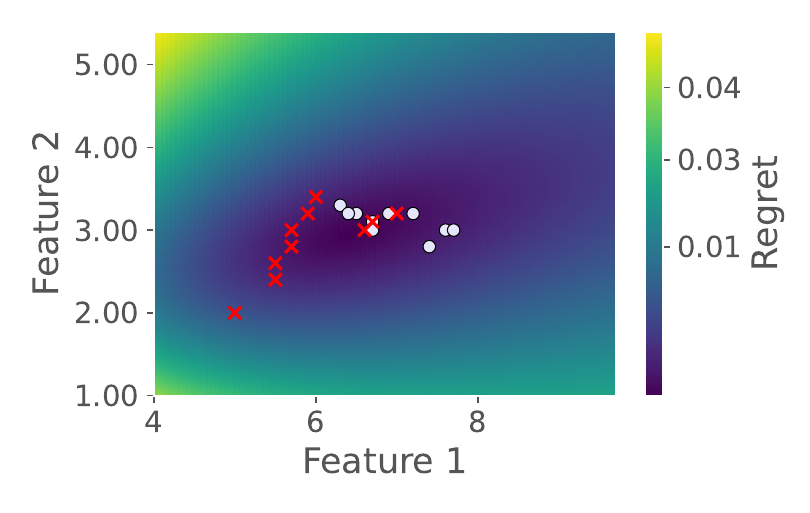}
\caption{pNML regret for a inseparable split \label{fig:single_layer_nn:syntetic_regret_inseparable}}
\end{subfigure}
\caption{The pNML regret for a Iris dataset}
\end{figure}


\subsubsection{The pNML regret characteristics using a low-dimensional dataset} \label{sec:single_layer_nn:low_dim_set}
We demonstrate the characteristics of the derived regret and show in what situations the prediction of the test sample can be trusted.
To visualize the pNML regret on a low-dimensional dataset, we use the Iris flower data set~\cite{fisher1936use}. We utilize two classes and two features and name them $c_1$, $c_2$, and feature 1, feature 2 respectively.

\Figref{fig:single_layer_nn:syntetic_erm_prob} shows the ERM probability assignment of class $c_2$ for a single layer NN that was fitted to the training data, which are marked in red.
At the top left and bottom right, the model predicts with high probability that a sample from these areas belongs to class $c_1$ and $c_2$ respectively.

\Figref{fig:single_layer_nn:syntetic_regret} presents the analytical pNML regret. 
At the upper left and lower right, the regret is low: Although there are no training samples there, these regions are far from the decision boundary, adding one sample would not alter the probability assignment significantly, thus the pNML prediction is close to the genie.
At the top right and bottom left, there are no training points therefore the regret is relatively high and the confidence in the prediction is low.
In \secref{sec:single_layer_nn:experiments}, we show that these test samples, which are associated with high regret, can be classified as OOD samples.

In addition, we visualize the regret for overlapping classes.
In \figref{fig:single_layer_nn:syntetic_erm_prob_inseparable}, the ERM probability assignment for inseparable class split is shown.
The ERM probability is lower than 0.7 for all test feature values. \Figref{fig:single_layer_nn:syntetic_regret_inseparable} presents the corresponding pNML regret. The pNML regret is small in the training data surroundings (including the mixed label area). The regret is large in areas where the training data is absent, as in the figure edges.

\subsubsection{Deep neural network adaptation}
\label{sec:single_layer_nn:dnn_adaptation}
In previous sections, we derived the pNML for a single layer NN. 
We next show that our derivations can, in fact, be applied to {\em any pretrained NN}, without requiring additional parameters or extra data. 

First extract the embeddings of the training set:
Denote $\phi(\cdot)$ as the embedding creation (feature extraction) using a pretrained ERM model, propagate the training samples through the DNN up to the last layer and compute the inverse of the data matrix $\phi(X_N)^+ \phi(X_N)^{+\top}$.
Then, given a specific test example $x$, extract its embedding $\phi(x)$ and its ERM probability assignment $\{p_i\}_{i=1}^C$.
Finally calculate the regret as described in~\eqref{eq:single_layer_nn:pnml} using the training set and test embedding vectors.


We empirically found that norms of OOD embeddings are lower than those of IND samples.
The regret depends on the norm of the test sample: For $0<a<b$, the regret of $a x$ is lower than the regret of $b x$.
Hence, we normalize all embeddings (training, IND, and OOD) to have $\normltwo$ norms equal to 1.0.

Samples with a high regret value are considered samples with a large distance from the genie, the learner that knows the true label, and therefore the prediction cannot be trusted. Our proposed method utilizes this regret value to determine whether a test data item represents a known or an unknown.

\subsubsection{Application to out-of-distribution detection} 
\label{sec:single_layer_nn:experiments}
We rigorously test the effectiveness of the pNML regret for OOD detection.
The motivation for using the individual setting and the pNML as its solution for OOD detection is that in the individual setting there is no assumption on the way the data is generated. The absence of assumption means that the result holds for a wide range of scenarios (PAC, stochastic, and even adversary) and specifically to OOD detection, where the OOD samples are drawn from an unknown distribution.

\paragraph{Experimental setup.} 
\label{sec:single_layer_nn:setup}
We follow the standard experimental setup~\cite{lee2018simple,liu2020energy,gram}.
All the assets we used are open-sourced with either Apache-2.0 License or Attribution-NonCommercial 4.0 International licenses.
We ran all experiments on NVIDIA K80 GPU.

IND sets:
For datasets that represent known classes, we use CIFAR-100, CIFAR-10~\cite{krizhevsky2014cifar} and  SVHN~\cite{netzer2011reading}. These sets contain RGB images with 32x32 pixels.
In addition, to evaluate higher resolution images, we use ImageNet-30 set~\cite{DBLP:conf/nips/HendrycksMKS19}.

\begin{table}[tb]
\centering
\small
\caption{AUROC of OOD detection for DenseNet-BC-100 model}
\label{tab:single_layer_nn:auroc_densenet}
\resizebox{\textwidth}{!}{%
\begin{tabular}{clcccc}
\toprule
IND & OOD &        Baseline/+pNML &            ODIN/+pNML &            Gram/+pNML &            OECC/+pNML \\
\midrule
\multirow{8}{*}{CIFAR-100} & iSUN &  69.7 / \textbf{96.4} &  84.5 / \textbf{96.7} &  99.0 / \textbf{99.5} &  99.2 / \textbf{99.5} \\
     & LSUN (R) &  70.8 / \textbf{96.6} &  86.0 / \textbf{96.9} &  99.3 / \textbf{99.7} &  99.4 / \textbf{99.6} \\
     & LSUN (C) &  80.1 / \textbf{93.1} &  91.5 / \textbf{93.1} &  91.4 / \textbf{94.5} &  93.9 / \textbf{96.1} \\
     & Imagenet (R) &  71.6 / \textbf{97.4} &  85.5 / \textbf{97.6} &  99.0 / \textbf{99.5} &  99.0 / \textbf{99.5} \\
     & Imagenet (C) &  76.2 / \textbf{95.7} &  88.8 / \textbf{96.0} &  97.7 / \textbf{98.7} &  98.2 / \textbf{99.0} \\
     & Uniform &   43.3 / \textbf{100} &   83.7 / \textbf{100} &    100 / \textbf{100} &   99.9 / \textbf{100} \\
     & Gaussian &   30.6 / \textbf{100} &   50.6 / \textbf{100} &    100 / \textbf{100} &    100 / \textbf{100} \\
     & SVHN &  82.6 / \textbf{96.2} &  92.5 / \textbf{96.2} &  97.3 / \textbf{98.4} &  97.0 / \textbf{97.5} \\
\midrule
\multirow{8}{*}{CIFAR-10} & iSUN &  94.8 / \textbf{98.7} &  98.9 / \textbf{98.9} &   99.8 / \textbf{100} &   99.9 / \textbf{100} \\
     & LSUN (R) &  95.5 / \textbf{98.9} &  99.2 / \textbf{99.2} &   99.9 / \textbf{100} &   99.9 / \textbf{100} \\
     & LSUN (C) &  93.0 / \textbf{96.4} &  95.8 / \textbf{96.4} &  97.5 / \textbf{98.7} &  98.9 / \textbf{99.9} \\
     & Imagenet (R) &  94.1 / \textbf{98.8} &  98.5 / \textbf{99.0} &  99.7 / \textbf{99.9} &  99.8 / \textbf{99.9} \\
     & Imagenet (C) &  93.8 / \textbf{97.7} &  97.6 / \textbf{97.9} &  99.3 / \textbf{99.7} &  99.5 / \textbf{99.9} \\
     & Uniform &   96.6 / \textbf{100} &    100 / \textbf{100} &    100 / \textbf{100} &    100 / \textbf{100} \\
     & Gaussian &   97.6 / \textbf{100} &    100 / \textbf{100} &    100 / \textbf{100} &    100 / \textbf{100} \\
     & SVHN &  89.9 / \textbf{98.4} &  94.6 / \textbf{98.7} &  99.1 / \textbf{99.6} &   99.6 / \textbf{100} \\
\midrule
\multirow{9}{*}{SVHN} & iSUN &  94.4 / \textbf{98.7} &  92.8 / \textbf{99.1} &  99.8 / \textbf{99.9} &    100 / \textbf{100} \\
     & LSUN (R) &  94.1 / \textbf{98.4} &  92.5 / \textbf{98.9} &   99.8 / \textbf{100} &    100 / \textbf{100} \\
     & LSUN (C) &  92.9 / \textbf{98.0} &  88.6 / \textbf{98.1} &  98.6 / \textbf{99.4} &   99.8 / \textbf{100} \\
     & Imagenet (R) &  94.8 / \textbf{98.6} &  93.3 / \textbf{99.0} &  99.7 / \textbf{99.9} &    100 / \textbf{100} \\
     & Imagenet (C) &  94.6 / \textbf{98.6} &  92.8 / \textbf{98.8} &  99.4 / \textbf{99.8} &    100 / \textbf{100} \\
     & Uniform &  93.2 / \textbf{99.8} &   91.6 / \textbf{100} &   99.9 / \textbf{100} &    100 / \textbf{100} \\
     & Gaussian &  97.4 / \textbf{99.8} &  98.9 / \textbf{99.9} &    100 / \textbf{100} &    100 / \textbf{100} \\
     & CIFAR-10 &  91.8 / \textbf{96.7} &  88.9 / \textbf{97.8} &  95.4 / \textbf{97.3} &   99.5 / \textbf{100} \\
     & CIFAR-100 &  91.4 / \textbf{96.7} &  88.2 / \textbf{97.8} &  96.4 / \textbf{98.0} &   99.6 / \textbf{100} \\
\bottomrule
\end{tabular}

}
\end{table}

OOD sets:
The OOD sets are represented by TinyImageNet~\cite{liang2017enhancing}, LSUN~\cite{yu15lsun}, iSUN~\cite{xu2015turkergaze}, Uniform noise images, and Gaussian noise images. 
We use two variants of TinyImageNet and LSUN sets: a 32x32 image crop that is represented by ``(C)'' and a resizing of the images to 32x32 pixels that termed by ``(R)''.
We also used CIFAR-100, CIFAR-10, and SVHN as OOD for models that were not trained with them.

Evaluation methodology:
We benchmark our approach by adopting the following metrics~\cite{gram,lee2018simple}: 
(i) AUROC: The area under the receiver operating characteristic curve of a threshold-based detector. A perfect detector corresponds to an AUROC score of 100\%.
(ii) TNR at 95\% TPR: The probability that an OOD sample is correctly identified (classified as negative) when the true positive rate equals 95\%.
(iii) Detection accuracy: Measures the maximum possible classification accuracy over all possible thresholds.

\begin{table}[tb]
\centering
\small
\caption{AUROC comparison of OOD detection for ResNet-34 model}
\label{tab:single_layer_nn:auroc_resnet}
\resizebox{\textwidth}{!}{
\begin{tabular}{clcccc}
\toprule
IND & OOD &        Baseline/+pNML &            ODIN/+pNML &            Gram/+pNML &            OECC/+pNML \\

\midrule
\multirow{8}{*}{CIFAR-100} & iSUN &  75.7 / \textbf{83.0} &  85.6 / \textbf{87.6} &  98.8 / \textbf{99.1} &  99.0 / \textbf{99.3} \\
     & LSUN (R) &  75.6 / \textbf{83.8} &  85.4 / \textbf{88.0} &  99.2 / \textbf{99.4} &  99.3 / \textbf{99.6} \\
     & LSUN (C) &  75.5 / \textbf{83.1} &  82.6 / \textbf{88.1} &  92.2 / \textbf{94.6} &  95.7 / \textbf{97.8} \\
     & Imagenet (R) &  77.1 / \textbf{84.4} &  87.7 / \textbf{88.5} &  98.9 / \textbf{99.2} &  98.7 / \textbf{98.9} \\
     & Imagenet (C) &  79.6 / \textbf{85.8} &  85.6 / \textbf{88.6} &  97.7 / \textbf{98.4} &  97.9 / \textbf{98.1} \\
     & Uniform &  85.2 / \textbf{98.1} &  99.0 / \textbf{99.4} &    100 / \textbf{100} &    100 / \textbf{100} \\
     & Gaussian &  45.0 / \textbf{86.5} &  83.8 / \textbf{95.7} &    100 / \textbf{100} &    100 / \textbf{100} \\
     & SVHN &  79.3 / \textbf{90.9} &  94.0 / \textbf{95.4} &  96.0 / \textbf{97.9} &  97.0 / \textbf{97.6} \\
\midrule
\multirow{8}{*}{CIFAR-10} & iSUN &  91.0 / \textbf{96.4} &  94.0 / \textbf{97.5} &   99.8 / \textbf{100} &  99.9 / \textbf{99.9} \\
     & LSUN (R) &  91.1 / \textbf{96.6} &  94.1 / \textbf{97.7} &   99.9 / \textbf{100} &   \textbf{100} / 99.9 \\
     & LSUN (C) &  91.8 / \textbf{95.4} &  93.6 / \textbf{95.6} &  97.9 / \textbf{99.1} &  99.1 / \textbf{99.5} \\
     & Imagenet (R) &  91.0 / \textbf{95.4} &  93.9 / \textbf{96.6} &  99.7 / \textbf{99.9} &  99.9 / \textbf{99.9} \\
     & Imagenet (C) &  91.4 / \textbf{95.4} &  93.3 / \textbf{96.2} &  99.3 / \textbf{99.7} &  99.7 / \textbf{99.8} \\
     & Uniform &  96.1 / \textbf{99.8} &   99.9 / \textbf{100} &    100 / \textbf{100} &    100 / \textbf{100} \\
     & Gaussian &   97.5 / \textbf{100} &    100 / \textbf{100} &    100 / \textbf{100} &    100 / \textbf{100} \\
     & SVHN &  89.9 / \textbf{95.1} &  95.8 / \textbf{97.9} &  99.5 / \textbf{99.8} &  99.8 / \textbf{99.8} \\
\midrule
\multirow{9}{*}{SVHN} & iSUN &  92.2 / \textbf{97.1} &  91.4 / \textbf{98.0} &  99.8 / \textbf{99.9} &    100 / \textbf{100} \\
     & LSUN (R) &  91.5 / \textbf{96.7} &  90.6 / \textbf{97.7} &   99.8 / \textbf{100} &    100 / \textbf{100} \\
     & LSUN (C) &  92.8 / \textbf{97.0} &  92.3 / \textbf{97.1} &  98.8 / \textbf{99.6} &  99.7 / \textbf{99.9} \\
     & Imagenet (R) &  93.5 / \textbf{97.5} &  92.8 / \textbf{98.3} &  99.8 / \textbf{99.9} &    100 / \textbf{100} \\
     & Imagenet (C) &  94.2 / \textbf{97.5} &  93.7 / \textbf{98.2} &  99.5 / \textbf{99.9} &   99.9 / \textbf{100} \\
     & Uniform &  96.0 / \textbf{98.5} &  95.5 / \textbf{99.5} &    100 / \textbf{100} &    100 / \textbf{100} \\
     & Gaussian &  96.1 / \textbf{98.4} &  96.1 / \textbf{99.6} &    100 / \textbf{100} &    100 / \textbf{100} \\
     & CIFAR-10 &  93.0 / \textbf{97.4} &  92.0 / \textbf{98.0} &  97.4 / \textbf{99.3} &  99.4 / \textbf{99.8} \\
     & CIFAR-100 &  92.5 / \textbf{97.1} &  91.7 / \textbf{97.8} &  97.5 / \textbf{99.2} &  99.4 / \textbf{99.8} \\
\bottomrule
\end{tabular}

}
\end{table}

\paragraph{Results.} 
\label{sec:single_layer_nn:results}
We build upon existing leading methods: Baseline~\cite{hendrycks2016baseline}, ODIN~\cite{liang2017enhancing}, Gram~\cite{gram},  OECC~\cite{PAPADOPOULOS2021138}, and Energy~\cite{liu2020energy}.
We use the following pretrained models: ResNet-34~\cite{he2016deep}, DenseNet-BC-100~\cite{huang2017densely} and WideResNet-40~\cite{zagoruyko2016wide}. Training was performed using CIFAR-100, CIFAR-10 and SVHN, each training set used separately to provide a complete picture of our proposed method's capabilities. 
Notice that ODIN, OECC, and Energy methods use OOD sets during training and the Gram method requires IND validation samples.

\Tableref{tab:single_layer_nn:auroc_densenet} and \Tableref{tab:single_layer_nn:auroc_resnet} show the AUROC of different OOD sets for DenseNet and ResNet models respectively.
Our approach improves all the compared methods in nearly all combinations of IND-OOD sets. 
The largest AUROC gain over the current state-of-the-art is of CIFAR-100 as IND and LSUN (C) as OOD: For the DenseNet model, we improve Gram and OECC method by 3.1\% and 2.2\% respectively. For the ResNet model, we improve this combination by 2.4\% and 2.1\% respectively. 
The additional metrics (TNR at 95\% FPR and detection accuracy) are shown in \secref{sec:single_layer_nn:ood_results}.

The Baseline method uses a pretrained ERM model with no extra data. 
Combining the pNML regret with the standard ERM model as shown in the Baseline+pNML column surpasses Baseline by up to 69.4\% and 41.5\% for DensNet and ResNet, respectively.
Also, Baseline+pNML is comparable to the more sophisticated methods:
Although it lacks tunable parameters and does not use extra data, Baseline+pNML outperforms ODIN in most DenseNet IND-OOD set combinations.

Evidently, our method improves the AUROC of the OOD detection task in 14 out of 16 IND-OOD combinations.
The most significant improvement is in CIFAR-100 as IND and ImageNet (R) and iSUN as the OOD sets. In these sets, we improve the AUROC by 15.6\% and 15.2\% respectively. 
For TNR at TPR 95\%, the pNML regret enhances the CIFAR-100 and Gaussian combination by 90.4\% and achieves a perfect separation of IND-OOD samples. 

For high resolution images, we use Resnet-18 and ResNet-101 models trained on ImageNet. We utilize the ImageNet-30 training set for computing $\phi(X_N)^+ \phi(X_N)^{+ \top}$. All images were resized to $254 \times 254$ pixels. We compare the result to the Baseline method in~\Tableref{tab:single_layer_nn:aruoc_imagnet30}.
The table shows that the pNML outperforms Baseline by up to 9.8\% and 8.23\% for ResNet-18 and ResNet-101 respectively.

\begin{table}[t]
\centering
\small
\caption{AUROC of OOD detection for WideResNet-40 model}
\label{tab:single_layer_nn:auroc_wrn_results}
\begin{tabular}{clccc}
\toprule
          IND & OOD &                 AUROC &        TNR at TPR 95\% &        Detection Acc. \\
 \midrule & & \multicolumn{3}{c}{Energy/+pNML} \\ \cmidrule{3-5} 

\multirow{8}{*}{CIFAR-100} & iSUN &  78.4 / \textbf{93.6} &  30.7 / \textbf{62.5} &  71.1 / \textbf{87.0} \\
          & LSUN (R) &  80.3 / \textbf{94.1} &  31.2 / \textbf{65.5} &  73.1 / \textbf{87.5} \\
          & LSUN (C) &  \textbf{95.9} / 95.5 &  \textbf{80.0} / 79.3 &  \textbf{89.3} / 89.1 \\
          & Imagenet (R) &  71.4 / \textbf{87.0} &  22.1 / \textbf{44.8} &  66.1 / \textbf{79.9} \\
          & Imagenet (C) &  79.7 / \textbf{87.3} &  36.9 / \textbf{49.4} &  72.8 / \textbf{79.7} \\
          & Uniform &  97.9 / \textbf{99.8} &   95.2 / \textbf{100} &  95.8 / \textbf{99.6} \\
          & Gaussian &  92.0 / \textbf{99.8} &    9.6 / \textbf{100} &  92.3 / \textbf{99.8} \\
          & SVHN &  \textbf{96.5} / 96.4 &  79.2 / \textbf{82.8} &  90.5 / \textbf{91.3} \\
\midrule
\multirow{8}{*}{CIFAR-10} & iSUN &  99.3 / \textbf{99.4} &  98.3 / \textbf{98.7} &  96.7 / \textbf{97.0} \\
          & LSUN (R) &  99.3 / \textbf{99.5} &  98.6 / \textbf{99.0} &  97.0 / \textbf{97.3} \\
          & LSUN (C) &  99.4 / \textbf{99.5} &  98.6 / \textbf{98.6} &  97.0 / \textbf{97.1} \\
          & Imagenet (R) &  98.1 / \textbf{98.1} &  92.0 / \textbf{92.4} &  94.0 / \textbf{94.0} \\
          & Imagenet (C) &  98.6 / \textbf{98.6} &  94.4 / \textbf{94.6} &  94.9 / \textbf{94.9} \\
          & Uniform &  99.0 / \textbf{99.9} &    100 / \textbf{100} &  98.7 / \textbf{99.8} \\
          & Gaussian &  99.1 / \textbf{99.9} &    100 / \textbf{100} &  98.7 / \textbf{99.8} \\
          & SVHN &  99.3 / \textbf{99.6} &  98.3 / \textbf{98.9} &  96.9 / \textbf{97.6} \\
\bottomrule
\end{tabular}

\end{table}

\begin{table}[t]
\centering
\small
\caption{AUROC of OOD detection with ImageNet-30 as the IND set}
\label{tab:single_layer_nn:aruoc_imagnet30}
\begin{tabular}{clcc}
\toprule
IND & OOD &        ResNet-18 &   ReseNet-101  \\
\midrule 
 & & Baseline/+pNML & Baseline/+pNML \\ \cmidrule{3-4} 
\multirow{8}{*}{ImageNet-30} 
    & iSUN      &  95.58 / \textbf{99.74}   &  96.26 / \textbf{99.54}   \\
    & LSUN (R)  &  95.51 / \textbf{99.72}   &  95.77 / \textbf{99.43}   \\
    & LSUN (C)  &  96.89 / \textbf{99.77}   &  98.00 / \textbf{99.86}   \\
    & Uniform   &  99.35 / \textbf{99.99}   &   98.70 / \textbf{100}    \\
    & Gaussian  &  98.78 / \textbf{100}     &   98.61 / \textbf{100}    \\
    & SVHN      &  99.18 / \textbf{99.99}   &  98.94 / \textbf{99.98}   \\
    & CIFAR-10  &  89.99 / \textbf{99.79}   &   91.24 / \textbf{99.47}  \\
    & CIFAR-100 &  92.15 / \textbf{92.15}   &  93.39 / \textbf{99.58}   \\
\bottomrule
\end{tabular}

\end{table}

\section{Fully training the last model layer}
In the previous section, we derived an analytical solution for applying the pNML on the last later of a DNN. In the derivation, we assumed the MSE loss with a specific hypothesis class. An alternative to this approach, which we present in thus section, is the explicitly train the last layer of the model.

\subsection{Efficient evaluation of the pNML regret}\label{sec:fully_train_last_layer:heart}
Intuitively, the pNML can be described as follows: To assign a probability for a potential outcome, add it to the trainset, find the best-suited model, and take the probability it gives to that label. 
Follow this procedure for every label and normalize to get a valid probability assignment. Use the log normalization factor as the confidence measure.
This method can be extended to any general learning procedure that generates a prediction based on a trainset. 
One such method can be the stochastic gradient descent used in training DNN.

One of the main problems in using the pNML learner in DNN is the richness of the hypothesis class. 
State of the art convolutional networks for image classification trained with stochastic gradient methods easily fit a random labeling of the training data~\cite{DBLP:conf/iclr/ZhangBHRV17}.
Having a family $P_{\theta}$ that is too large, as in DNN models, produces a perfect fit for every test label option and, therefore, high regret.
We propose an efficient pNML scheme which both reduces the size of the hypothesis class size and accelerates the compute time.

Our proposed approach is as follows: train a DNN model using the standard ERM training procedure. 
Next, extract the features of the trainset: propagate the trainset through the DNN up to the last layer and create the features trainset $\{(\phi(x_i),y_i)\}_{i=1}^N$, (i.e., the trainset embeddings), where $\phi(\cdot)$ represents the feature extraction.
Then, given a specific test example $x$, extract its features $\phi(x)$. 
We define the hypothesis class as one fully connected layer, $w$, and execute the \emph{fine-tuning} phase: 
We add the pair $(\phi(x),y)$ to the trainset embeddings with arbitrary choice of the label $y$. We train the single fully connected layer using the ERM procedure
\begin{equation}
w_{y} = \argmin_{w} \left[\sum_{i=1}^N  - \log p_{w}\left(y_i|\phi(x_i)\right) 
                                        - \log p_{w}\left(y|\phi(x)\right) \right].
\end{equation}
Given the trained layer, we perform a prediction of the class we trained with:
\begin{equation}
    p_{y} = p_{w_{y}}(y|\phi(x)).
\end{equation}
We repeat this process for every potential test label value.
The probability assignment of the pNML learner for a specific label is the predicted distribution for that label after the fine-tuning step, normalized by the summation of all the corresponding probabilities.
Notice that the normalization makes the pNML prediction a valid probability assignment.
The associated regret is then
\begin{equation}
    \Gamma = \log \sum_{y \in \mathcal{Y}} p_{y} 
    = \log \sum_{y \in \mathcal{Y}}  p_{w_{y}}(y|\phi(x)). 
\end{equation}

Samples with a high regret value, $\Gamma$, are considered samples with a large distance from the reference learner -- a learner that knows the true label -- and therefore the prediction cannot be trusted. Our proposed method, therefore, determines whether a test data item represents a known or unknown class based on this regret value.

\begin{figure}[t]
\label{dnn:Efficient_pNML_procedure}
\begin{algorithm}[H]
    \caption{Efficient pNML procedure}
    \label{alg:deep_pNML}
    \begin{algorithmic}
        \State {\bfseries Input:} Trainset $\{(x_i,y_i)\}_{i=1}^N$, test data $x$, features extractor $\phi(\cdot)$.
        \ForEach {$y \in \mathcal{Y}$}
        \State $w_y = \argmax_{w} \left[ p_{w}\left(y|\phi(x)\right) \cdot \prod_{i=1}^N p_{w}\left(y_i|\phi(x_i)\right) \right]$
        \State $p_y = p_{w_y}(y|\phi(x))$ \Comment{Predict the class we trained with}
        \EndFor
        \State Normalization factor: $C=\sum_{y\in \mathcal{Y}}p_y$ 
        \State Regret: $\Gamma=\log{C}$
        \ForEach {$y \in \mathcal{Y}$} \Comment{Normalize to get a valid probability assignment}
        \State $q_{\textit{pNML}}(y)= \frac{1}{C} p_y$
        \EndFor
        \State {\bfseries Return} $q_{\textit{pNML}}$,  $\Gamma$
    \end{algorithmic}
    \end{algorithm}
\end{figure}

\subsection{Experiments} \label{sec:fully_train_last_layer:experiments}
We next show the effectiveness of our method for open-set classification on several standard benchmark datasets.

\paragraph{Evaluation methodology.}
We follow the proposed Open-Set Classification Rate (OSCR) evaluation metric~\cite{dhamija2018reducing,phillips2011evaluation}.
In the evaluation phase, two datasets are used: $\mathcal{D}_c$ which contains test data with known classes and $\mathcal{D}_u$ that has test data with unknowns.
Let $T$ be a score threshold.
For samples from $\Dc$, the Correct Classification Rate (CCR) is defined as samples fraction for which the correct class $c$ has the maximum probability and has a confidence score $S(x)$ greater than $T$
\begin{equation}  \label{eq:fully_train_last_layer:ccr}
\mathrm{CCR}(T) = \frac{1}{|\Dc|}\bigl|\{x \mid  x \in \Dc \  \wedge 
    \argmax_{y \in \mathcal{Y}} p(y|x) = c \ \wedge 
    S(x) > T \}\bigr|.
\end{equation}
The False Positive Rate (FPR) is the fraction of samples from $\Du$ that are classified as any known class with a confidence score greater than $T$
\begin{equation}  \label{eq:fully_train_last_layer:fpr}
    \mathrm{FPR}(T) =  \frac{1}{|\Du|} \left|\{x \mid x \in \Du \wedge S(x) \geq T \}\right|.
\end{equation}

We vary the threshold $T$ and define the CCR as a function of the FPR. 
For the smallest $T$, we get the largest FPR, a value of 1.0. 
The CCR at FPR 1.0 is identical to the closed-set performance, i.e., the classification accuracy on $\Dc$.
We use the Area Under the Curve (AUC) of the CCR versus the FPR in our evaluations along with CCR values at specific FPRs: 0.1, 0.2, 0.6, 0.8 and 1.0.

\paragraph{Datasets.} \label{sec:fully_train_last_layer:datasets}
We test our method rigorously on the following standard benchmarks. 
\begin{itemize}
\item 
CIFAR10 and CIFAR100~\cite{krizhevsky2014cifar}: These datasets include $32 \times 32$ natural color images. In CIFAR10, 10 classes are defined with 50,000 training images and 10,000 test images to a total of  6,000 images per class, while CIFAR100 has 100 classes with a total of 600 images per class.
\item 
CelebA~\cite{liu2015faceattributes}: The Celeb Faces Attributes (CelebA) set consists of 202,000 face images. Each image is labeled with 40 binary attributes along with the subject identity.
We arbitrarily select the ``Arched Eyebrows" attribute with training and test set sizes of 50,000 and 10,000, respectively.
\item 
LSUN~\cite{yu15lsun}: The Large-scale Scene Understanding set (LSUN) consists of 10,000 test images from 10 different scene categories such as ``bridge'' and ``bedroom''.
\item 
SVHN~\cite{netzer2011reading}: The Street View House Numbers (SVHN) dataset contains $32 \times 32$ color images of house numbers. There are 10 classes consisting of the digits $0$--$9$. The training set has 604,388 images, and the test set contains 26,032 images.
\item 
Noise: A synthetic Gaussian noise dataset consisting of 10,000 random 2D Gaussian noise images.
Each RGB value of every pixel is sampled from a normal distribution with zero mean and unit variance.
\end{itemize}

For the open-set classification task, we use CIFAR10, CIFAR100 and CelebA as closed-sets separately.
LSUN, SVHN and Noise are used to simulate OOD images. 
We suppressed overlap classes from LSUN, for instance, the class ``bedroom'' in LSUN is overlapped with ``bed'' in CIFAR100 and therefore omitted.

\paragraph{Compared methods.} \label{sec:fully_train_last_layer:compared_methods}
We compare our proposed confidence score with several recent leading methods.
\begin{itemize}
\item 
Max-prob~\cite{hendrycks2016baseline}:
A naive method that assigns the confidence score based on the maximum probability of a trained DNN.
Input with a low maximum probability is considered to have an unknown class.
\item 
ODIN~\cite{liang2017enhancing}:
To increase the margin between the maximum softmax score of data with known classes and data with unknowns, this method pre-processes the input by perturbing it with the loss gradient.
The score is the DNN's output maximum probability when feeding it with the perturbed input.
\item 
Entropic Open-Set~\cite{dhamija2018reducing}: 
During training, a dataset with unknown classes $\Du'$ is used. 
The loss function is modified such that data from $\Du'$ produce a uniform probability vector and therefore will have a low maximum probability score.
The value of the maximum probability of the prediction is used to distinguish between images with known classes and inputs with unknowns.
Different unknowns sets ($\Du'$ and $\Du$) are used in the training and evaluation phases.
\item 
Objectosphere~\cite{dhamija2018reducing}:
Similarly to the Entropic Open-Set approach, a model is trained to have a low response to unknown classes.
An additional loss term is added that directly penalizes high norm features of OOD samples and encourages closed-set samples to have high features norm.
In the evaluation, the maximum probability value of the prediction is used as the confidence score.
\item 
Cosine Similarity~\cite{techapanurak2019hyperparameterfree,wang2018cosface}: 
This method proposes a change in the softmax layer.
The main idea is to use the cosine of the angle between the weights of the last fully connected layer that are associated with a specific class and the features as the logits that feed the softmax layer. 
Same as other methods, this approach uses the maximum probability for confidence.
\item 
Maximum Discrepancy~\cite{yu2019unsupervised}:
Two classifiers are used along with an additional OOD training set.
During training, a new loss term is adopted that increases the discrepancy between the classifiers on the OOD training set.
The confidence score is the $\normlone$ distance between the classifiers' probability assignments.
\end{itemize}

\paragraph{Open-set performance.}
\begin{figure}[tb]
\begin{center}
    \begin{subfigure}[t]{0.325\textwidth}
    \centering
    \centerline{\includegraphics[width=\columnwidth]{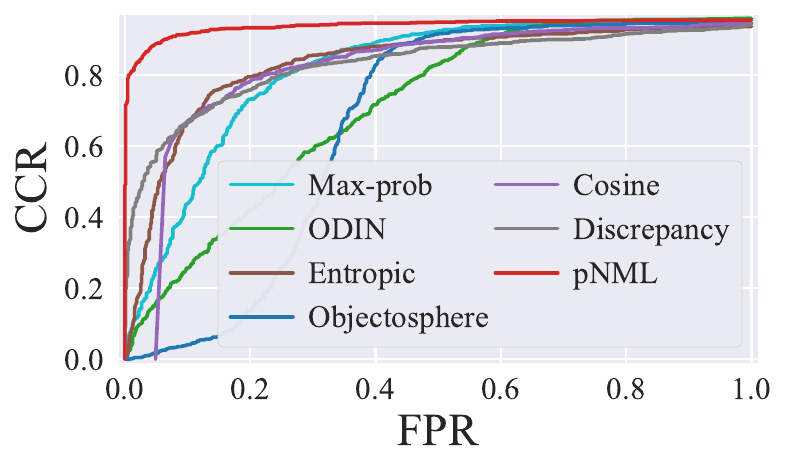}}
    \caption{CIFAR10 \label{fig:fully_train_last_layer:cifar10_lsun}}
    \end{subfigure}
    \begin{subfigure}[t]{0.325\textwidth}
    \centering
    \centerline{\includegraphics[width=\columnwidth]{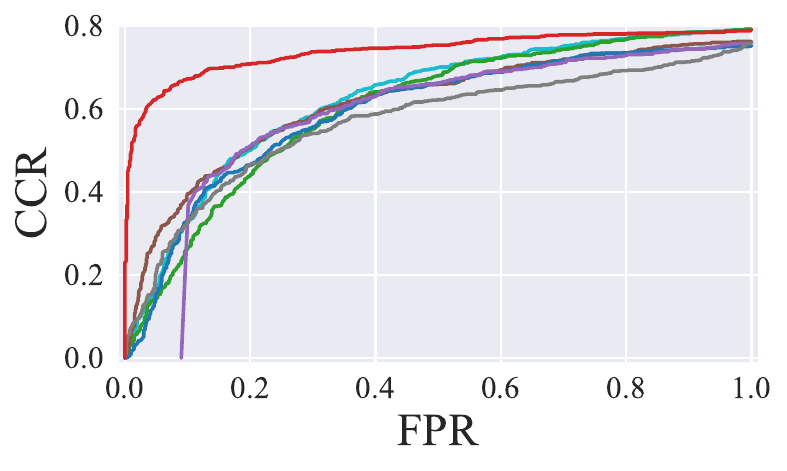}}
    \caption{CIFAR100 \label{fig:fully_train_last_layer:cifar100_lsun}}
    \end{subfigure}
    \begin{subfigure}[t]{0.325\textwidth}
    \centering
    \centerline{\includegraphics[width=\columnwidth]{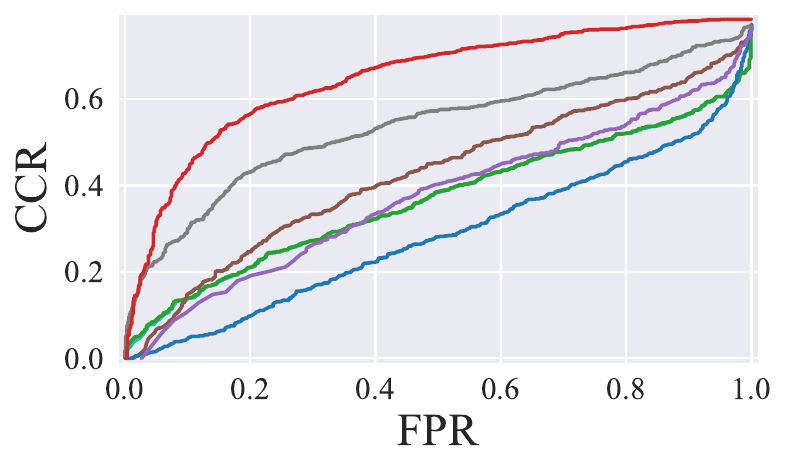}}
    \caption{CelebA \label{fig:fully_train_last_layer:celeba_lsun}}
    \end{subfigure}
\caption{CCR vs FPR curves of the compared methods}
\label{fig:fully_train_last_layer:performance_lsun}
\end{center}
\end{figure}

\begin{table}[tb]
    \centering
    \caption{DenseNet100 with CIFAR10 trainset experimental results \label{table:cifar10}}
    \smallskip
    \small
    \begin{tabular}{cccccccc}
\toprule
\multirow{2}{*}{$\begin{array}{c}\hbox{\bf Unknowns} \\\end{array}$}&
\multirow{2}{*}{\bf Algorithm} & 
\multirow{2}{*}{ $\begin{array}{c}\hbox{\bf AUC}\end{array}$} &
\multicolumn{5}{c}{\bf CCR at FPR of} \\
& & & {$0.1$} & {$0.2$} &  {$0.6$} & {$0.8$} & {$1.0$}
\\ 
\hline
\multirow{6}{*}{LSUN}  &          
    Max-prob           &  0.82 &  0.44 &  0.73 &  0.94 &  0.95 &  0.96 \\
&   ODIN               &  0.71 &  0.26 &  0.42 &  0.91 &  0.95 &  0.96 \\
&   Entropic Open-Set  &  0.83 &  0.67 &  0.80 &  0.91 &  0.93 &  0.94 \\
&   Objectosphere      &  0.66 &  0.04 &  0.14 &  0.93 &  0.94 &  0.95 \\
&   Cosine Similarity  &  0.82 &  0.67 &  0.78 &  0.92 &  0.93 &  0.94 \\
&   Max. Discrepancy        &  0.83 &  0.67 &  0.76 &  0.89 &  0.92 &  0.95 \\
&   Ours               &  \textbf{0.94} &  \textbf{0.92} &  \textbf{0.93} &  \textbf{0.95} &  \textbf{0.95} &  \textbf{0.96} \\
\cline{1-8}
\multirow{6}{*}{SVHN} &         
    Max-prob           &  0.83 &  0.56 &  0.76 &  0.93 &  0.95 &  0.96 \\
&   ODIN               &  0.74 &  0.41 &  0.56 &  0.89 &  0.95 &  0.96 \\
&   Entropic Open-Set  &  0.88 &  0.79 &  0.86 &  0.92 &  0.93 &  0.94 \\
&   Objectosphere      &  0.65 &  0.06 &  0.19 &  0.92 &  0.94 &  0.95 \\
&   Cosine Similarity  &  0.92 &  0.88 &  0.91 &  0.94 &  0.94 &  0.95 \\
&   Max. Discrepancy        &  0.90 &  0.84 &  0.86 &  0.92 &  0.92 &  0.94 \\
&   Ours               &  \textbf{0.94} &  \textbf{0.93} &  \textbf{0.94} &  \textbf{0.95} &  \textbf{0.95} &  \textbf{0.96} \\
\cline{1-8}
\multirow{6}{*}{Noise}& 
    Max-prob           &  0.15 &  0.07 &  0.09 &  0.16 &  0.20 &  0.96 \\
&   ODIN               &  0.48 &  0.32 &  0.36 &  0.51 &  0.60 &  0.96 \\
&   Entropic Open-Set  &  0.93 &  0.92 &  0.92 &  0.93 &  0.94 &  0.94 \\
&   Objectosphere      &  0.27 &  0.02 &  0.03 &  0.16 &  0.64 &  0.95 \\
&   Cosine Similarity  &  0.93 &  0.93 &  0.93 &  0.94 &  0.94 &  0.95 \\
&   Max. Discrepancy        &  0.86 &  0.83 &  0.84 &  0.86 &  0.88 &  0.94 \\
&   Ours               &  \textbf{0.95} &  \textbf{0.94} & \textbf{0.95} &  \textbf{0.95} &  \textbf{0.96} &  \textbf{0.96} \\
\bottomrule
\end{tabular}

\end{table}

We adopt the DenseNet architecture~\cite{huang2017densely}.
We follow the originally proposed setup with a depth of 100, growth rate 12 (Dense-BC) and dropout rate 0.
We use CIFAR10 as the dataset with known classes and CelebA as OOD set to train and optimize the ODIN, Entropic Open-Set, Objectosphere and Maximum Discrepancy methods.
When applying our proposed method, we extract CIFAR10 trainset features $\{\phi(x_i),y_i\}_{i=1}^N$ and execute the fine-tuning phase of our method: We train a fully connected layer from scratch for the ten possible labels of the test image.

The AUC of the CCR as a function of the FPR is stated in~\tableref{table:cifar10}.
When comparing to other approaches our method provides the highest AUC values with a substantial improvement of 11\% over the Entropic Open-Set and the Maximum Discrepancy approaches on the LSUN benchmark.
The performance on the LSUN dataset can be also viewed at \figref{fig:fully_train_last_layer:cifar10_lsun}. Our method provides the highest CCR for all FPR rates.
\tableref{table:cifar10} also shows an improvement of 2\% over the Cosine Similarity method which is the next best one for the SVHN and Noise datasets as unexpected inputs.
The most significant improvement value on SVHN is at FPR of 0.1, in which our method improves the CCR by 5\%.

\begin{figure}[bht]
    \centering
    \includegraphics[width=\columnwidth]{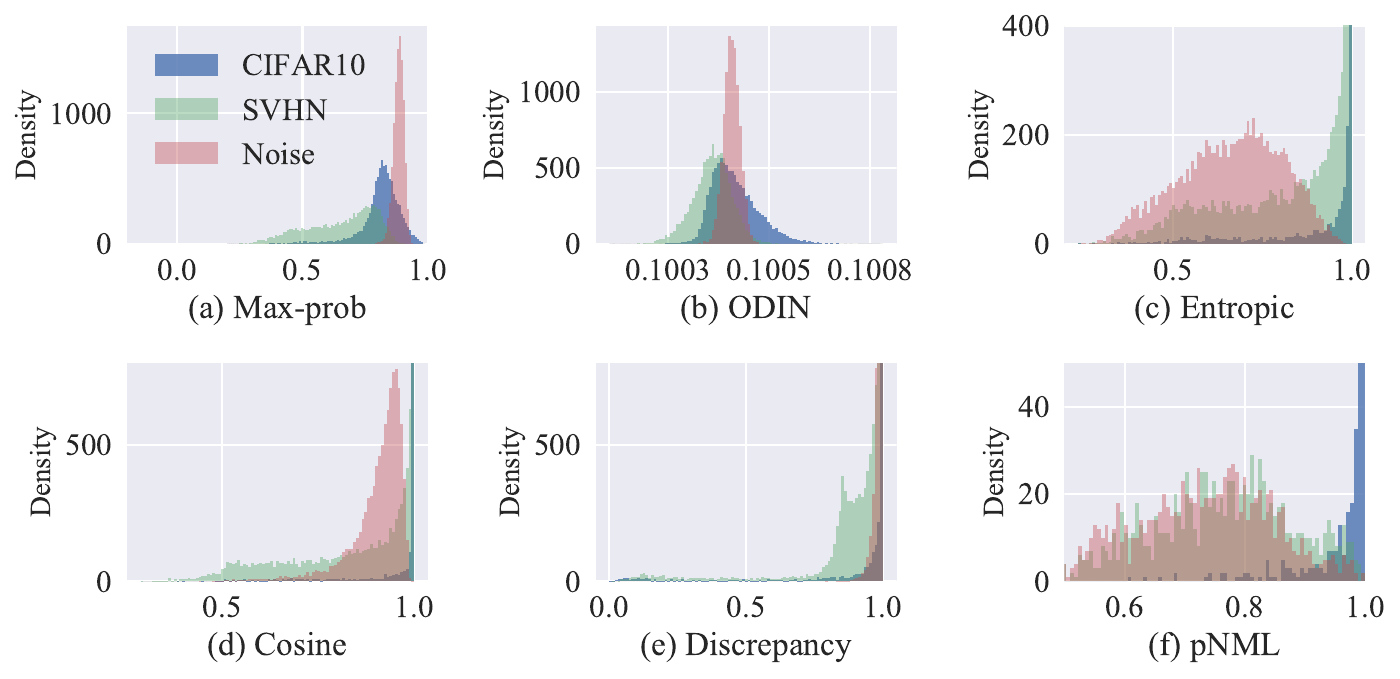}
    \caption{Confidence histograms for DenseNet100 CIFAR10 based models}
    \label{fig:fully_train_last_layer:cifar10_hist}
\end{figure}
We present in \figref{fig:fully_train_last_layer:cifar10_hist} the confidence score histograms of the compared methods for CIFAR10 as the closed-set.
For the Max-prob and ODIN methods, there is almost no separation and even higher confidence for noise images.
\Figref{fig:fully_train_last_layer:cifar10_hist}b also shows that ODIN's maximum probability score is relatively low, it is bounded between 0.1002 and 0.1008.
The reason lies in the adversarial attack that the method performs on the input which causes the maximum probability to drop. 
\Figref{fig:fully_train_last_layer:cifar10_hist}c presents the Entropic Open-Set method which manages to separate between the CIFAR10 dataset and Noise.
Most of Noise confidence scores are concentrated at a maximum probability value of 0.75, while CIFAR10 has a peak at a maximum probability value of 0.98.
For the SVHN dataset, however, we see an almost total overlap with CIFAR10 images confidence score.
Both Cosine Similarity and our methods have a clear distinction between CIFAR10 confidence scores and the Noise dataset.
In addition, our regret based score, as shown in \figref{fig:fully_train_last_layer:cifar10_hist}f, treats the SVHN and Noise dataset about the same and manages to separate both of them from CIFAR10 closed-set images.

\begin{table}[tb]
    \centering
    \caption{WideResNet16 with CIFAR100 trainset experimental results \label{table:cifar100}}
    \small
    \begin{tabular}{ccccccccc}
\toprule
\multirow{2}{*}{$\begin{array}{c}\hbox{\bf Unknowns} \\\end{array}$}&
\multirow{2}{*}{\bf Algorithm} & 
\multirow{2}{*}{ $\begin{array}{c}\hbox{\bf AUC}\end{array}$} &
\multicolumn{5}{c}{\bf CCR at FPR of} \\
& & & {$0.1$} & {$0.2$} &  {$0.6$} & {$0.8$} & {$1.0$}
\\ 
\hline
\multirow{6}{*}{LSUN} &         
    Max-prob           &  0.62 &  0.33 &  0.50 &  0.73 &  0.77 &  0.79 \\
&   ODIN               &  0.60 &  0.27 &  0.44 &  0.72 &  0.77 &  0.79 \\
&   Entropic Open-Set  &  0.61 &  0.40 &  0.51 &  0.69 &  0.74 &  0.76 \\
&   Objectosphere      &  0.59 &  0.33 &  0.46 &  0.69 &  0.73 &  0.75 \\
&   Cosine Similarity  &  0.59 &  0.37 &  0.51 &  0.69 &  0.73 &  0.76 \\
&   Max. Discrepancy        &  0.57 &  0.32 &  0.47 &  0.65 &  0.69 &  0.76 \\
&   Ours               &  \textbf{0.74} &  \textbf{0.67} &  \textbf{0.71} &  \textbf{0.77} &  \textbf{0.78} &  \textbf{0.79} \\
\cline{1-8}
\multirow{6}{*}{SVHN} &         
    Max-prob           &  0.73 &  0.63 &  0.69 &  0.77 &  0.78 &  0.79 \\
&   ODIN               &  0.73 &  0.66 &  0.71 &  0.76 &  0.78 &  0.79 \\
&   Entropic Open-Set  &  0.68 &  0.56 &  0.62 &  0.73 &  0.75 &  0.76 \\
&   Objectosphere      &  0.67 &  0.53 &  0.62 &  0.73 &  0.75 &  0.75 \\
&   Cosine Similarity  &  0.65 &  0.52 &  0.60 &  0.71 &  0.73 &  0.75 \\
&   Max. Discrepancy        &  0.66 &  0.51 &  0.60 &  0.72 &  0.74 &  0.76 \\
&   Ours               &  \textbf{0.75} &  \textbf{0.70} &  \textbf{0.73} &  \textbf{0.77} &  \textbf{0.78} &  \textbf{0.79} \\
\cline{1-8}
\multirow{6}{*}{Noise}& 
    Max-prob           &  0.61 &  0.52 &  0.55 &  0.63 &  0.66 &  0.79 \\
&   ODIN               &  0.66 &  0.58 &  0.62 &  0.68 &  0.72 &  0.79 \\
&   Entropic Open-Set  &  0.66 &  0.61 &  0.63 &  0.66 &  0.69 &  0.76 \\
&   Objectosphere      &  0.51 &  0.33 &  0.41 &  0.56 &  0.65 &  0.75 \\
&   Cosine Similarity  &  0.34 &  0.00 &  0.00 &  0.43 &  0.47 &  0.75 \\
&   Max. Discrepancy        &  0.36 &  0.28 &  0.30 &  0.38 &  0.41 &  0.76 \\
&   Ours               &  \textbf{0.76} &  \textbf{0.73} &  \textbf{0.74} &  \textbf{0.76} &  \textbf{0.78} &  \textbf{0.79} \\
\bottomrule
\end{tabular}
\end{table}

Next, we evaluate WideResNet~\cite{BMVC2016_87} and utilize CIFAR100 as the closed-set. 
We used the same training method as in CIFAR10 experiment.
We executed all labels in parallel, 100 possible labels, on Nvidia Tesla V100 GPU such that a single test runtime is 26 seconds.
In \tableref{table:cifar100} the various methods are compared.
We outperform the others in the AUC metric while maintaining the baseline performance on the closed-set task.
The Entropic Open-Set, Cosine Similarity and Maximum Discrepancy methods have a reduction of 3\% and the Objectosphere approach accuracy decreases in 4\% with respect to the Max-prob method.
\Figref{fig:fully_train_last_layer:cifar100_lsun} shows performance on LSUN, our score outperforms the others for all FPR rates with a substantial improvement of 27\% and 20\% at FPR of 0.1 and 0.2, respectively.

The Objectosphere method, which is considered the state of the art classifier-based method in open-set recognition, performs poorly with LSUN, SVHN and Noise images. 
The reason might be the use of data with unknown classes during the training phase.
The performance of this method depends on the OOD train data and its similarity to the unexpected test samples.
We train the Objectosphere model with CelebA as images with unknown class, which is, in general, a set of images with low variation and is very different from noise images.
This kind of training exposes the weakness of the method as opposed to approaches that do not use OOD images during training.

\begin{table}[tb]
    \centering
    \caption{ResNeXt with CelebA trainset experimental results \label{table:celeba}}
    \small
    \begin{tabular}{ccccccccc}
\toprule
\multirow{2}{*}{$\begin{array}{c}\hbox{\bf Unknowns} \\\end{array}$}&
\multirow{2}{*}{\bf Algorithm} & 
\multirow{2}{*}{ $\begin{array}{c}\hbox{\bf AUC}\end{array}$} &
\multicolumn{5}{c}{\bf CCR at FPR of} \\
& & & {$0.1$} & {$0.2$} &  {$0.6$} & {$0.8$} & {$1.0$}
\\ 
\hline
\multirow{6}{*}{LSUN}  &         
    Max-prob           &  0.37 &  0.14 &  0.21 &  0.43 &  0.52 &  0.77 \\
&   ODIN               &  0.37 &  0.14 &  0.21 &  0.43 &  0.52 &  0.77 \\
&   Entropic Open-Set  &  0.38 &  0.13 &  0.19 &  0.46 &  0.57 &  0.76 \\
&   Objectosphere      &  0.21 &  0.02 &  0.04 &  0.17 &  0.39 &  0.78 \\
&   Cosine Similarity  &  0.37 &  0.11 &  0.19 &  0.45 &  0.54 &  0.76 \\
&   Max. Discrepancy        &  0.56 &  0.35 &  0.46 &  0.63 &  0.67 &  0.78 \\
&   Ours               &  \textbf{0.65} &  \textbf{0.44} &  \textbf{0.57} &  \textbf{0.73} &  \textbf{0.76} &  \textbf{0.78} \\
\cline{1-8}
\multirow{6}{*}{SVHN}  &         
    Max-prob           &  0.47 &  0.27 &  0.38 &  0.54 &  0.58 &  0.77 \\
&   ODIN               &  0.48 &  0.27 &  0.38 &  0.54 &  0.58 &  0.77 \\
&   Entropic Open-Set  &  0.54 &  0.40 &  0.46 &  0.58 &  0.64 &  0.77 \\
&   Objectosphere      &  0.43 &  0.12 &  0.20 &  0.52 &  0.64 &  0.78 \\
&   Cosine Similarity  &  0.44 &  0.10 &  0.21 &  0.54 &  0.63 &  0.77 \\
&   Max. Discrepancy        &  0.64 &  0.49 &  0.57 &  0.69 &  0.72 &  0.78 \\
&   Ours               &  \textbf{0.68} &  \textbf{0.50} &  \textbf{0.61} &  \textbf{0.75} &  \textbf{0.78} &  \textbf{0.78} \\
\cline{1-8}
\multirow{6}{*}{Noise}& 
    Max-prob           &  0.44 &  0.29 &  0.34 &  0.48 &  0.52 &  0.77 \\
&   ODIN               &  0.44 &  0.29 &  0.34 &  0.48 &  0.52 &  0.77 \\
&   Entropic Open-Set  &  0.39 &  0.30 &  0.32 &  0.41 &  0.46 &  0.77 \\
&   Objectosphere      &  \textbf{0.77} &  \textbf{0.77} &  \textbf{0.77} &  \textbf{0.78} &  \textbf{0.78} &  \textbf{0.78} \\
&   Cosine Similarity  &  0.57 &  0.53 &  0.55 &  0.58 &  0.60 &  0.77 \\
&   Max. Discrepancy        &  0.64 &  0.47 &  0.59 &  0.70 &  0.73 &  0.78 \\
&   Ours               &  0.65 &  0.47 &  0.57 &  0.72 &  0.76 &  0.78 \\
\bottomrule
\end{tabular}
\end{table}

In the last experiment, we perform an evaluation using ResNeXt with depth factor 20 and CelebA as the dataset with known classes.
The Entropic Open-Set, Objectosphere and Maximum Discrepancy methods use also MNIST~\cite{lecun-mnisthandwrittendigit-2010} during training to simulate OOD inputs.
The ODIN's adversarial perturbation strength was optimized on MNIST too.
During our method's fine-tuning stage, we train one fully connected layer from a feature space size of 1,024 to two outputs, a total of 2,048 parameters, from random initialization.

The CCR versus FPR of using CelebA for closed-set and images from LSUN, SVHN and Noise as inputs with unknowns is presented in \tableref{table:celeba}. 
Our method provides the highest AUC value with respect to the compared approaches on the LSUN and SVHN sets, with an improvement of 9\% and 4\% respectively over the next best one.
\Figref{fig:fully_train_last_layer:celeba_lsun} shows the CCR vs FPR curve for LSUN dataset. We outperform all the competitors for all FPRs with a solid margin of about 10\% over the Maximum Discrepancy method.

On the Noise benchmark, the Objectosphere has the best AUC.
It may be related to the OOD training set that is used: The OOD trainset is MNIST, which might resemble noise images. In the previous experiments, with more challenging datasets and with higher capacity models this advantage disappeared.

\subsection{Ablation study} \label{sec:fully_train_last_layer:ablation_study}
In this section, we explore the effect of the hypothesis class size on the classification accuracy and the regret based confidence score.
In the previous sections, when executing our procedure, we use a hypothesis class that contains only one fully connected layer.
We now analyze the effect of varying the number of layers that are trained during the fine-tuning phase.

We perform the ablation study with the ResNet18 architecture~\cite{he2016deep} on the CIFAR10 dataset.
Initial training consisted of 200 epochs using stochastic gradient descent optimizer with a batch size of 128 and a learning rate of 0.1, with a decrease to 0.01 and 0.001 after 100 and 150 epochs, respectively. Fine-tuning consisted of 10 epochs with a learning rate of 0.001.

The first hypothesis class we consider is a class in which we do not train any layer during the fine-tuning phase.
The hypothesis class is simply the DNN obtained after the initial training, which is the ERM learner. We name this learner \emph{0 Layers}.
The second class is where the fine-tuning alters only the last two layers, i.e., \emph{2 Layers}
and \emph{7 Layers} is when the fine-tuning phase affects all.

The comparison between the hypothesis classes is summarized in \tableref{table:ablation_study}. 
We can see a 0.03 improvement in the average log-loss with a slightly better accuracy rate of the 2 Layers learner with respect to the basic DNN. 
More notable is the 0.57 improvement in the standard deviation of the log-loss, which suggests that the learner manages to avoid large loss values and it is indeed better in the worst case sense. 
This property of the pNML can also be observed by the log-loss histogram presented in \figref{fig:fully_train_last_layer:resnet18_loss_hist} in which 0 Layers and 2 Layers have a large number of samples with loss value higher than 0.9. 
When increasing the hypothesis class size, the worst-case loss is indeed reduced.
However, as stated in \tableref{table:ablation_study} and \figref{fig:fully_train_last_layer:resnet18_regret_hist}, the regret increases as well.

This coincides with the theoretical interpretation of the regret as ``insurance'' the pNML pays to guarantee that the log-loss will not explode no matter what the unknown test label is.
Increasing the hypothesis class too much, as shown in \tableref{table:ablation_study} as the difference between 0 Layers and 7 Layers, reduces the learner accuracy rate and increases the average log-loss of the test set.
This observation indicates the hypothesis class needs to be designed carefully.

\begin{table}[tb]
\centering
\caption{The performance as function of the hypothesis class size \label{table:ablation_study}}
\small
\begin{tabular}{cccccc}
\toprule
\# Tuned Layers \ \ \ & Acc.  & \ \ Loss avg \ &  \ \ Loss STD \ \ & Regret\\
\hline
0 layers    & 0.920 &  0.203 & 0.87 & 0.0   \\
2 layers    & 0.921 &  0.173 & 0.30 & 0.14  \\
7 layers    & 0.913 &  0.244 & 0.27 & 0.23  \\
\bottomrule
\end{tabular}
\end{table}

\begin{figure}[tb]
\begin{center}
    \begin{subfigure}[t]{0.49\textwidth}
    \centering
    \centerline{\includegraphics[width=\columnwidth]{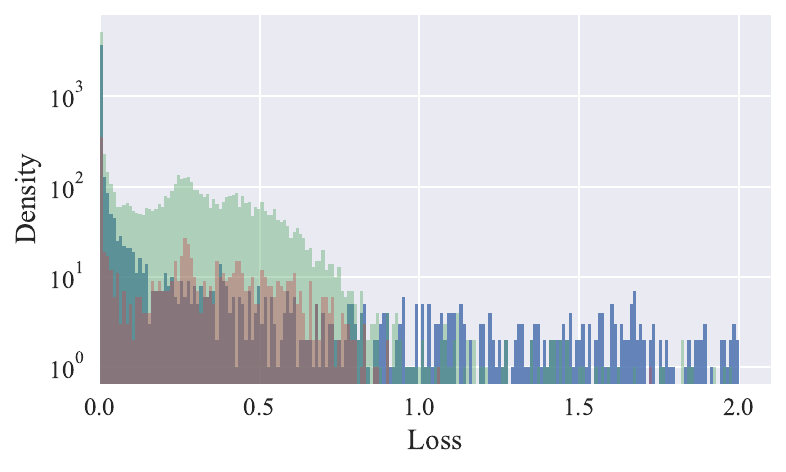}}
    \caption{Log-loss histogram \label{fig:fully_train_last_layer:resnet18_loss_hist}}
    \end{subfigure}
    \begin{subfigure}[t]{0.49\textwidth}
    \centering
    \centerline{\includegraphics[width=\columnwidth]{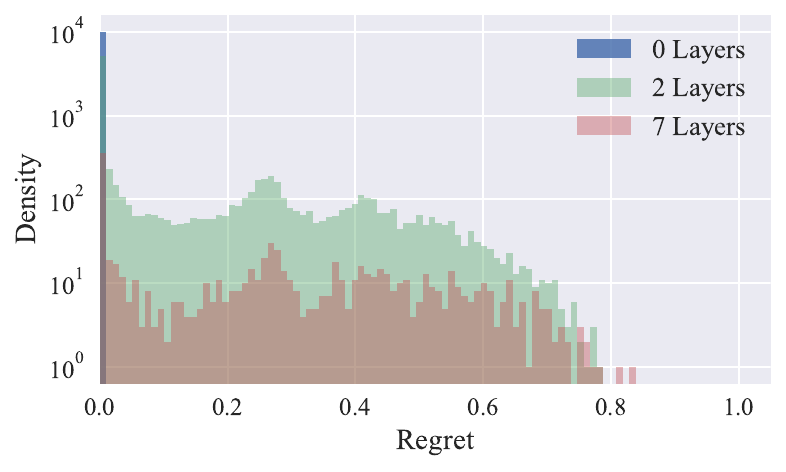}}
    \caption{Regret histogram \label{fig:fully_train_last_layer:resnet18_regret_hist}}
    \end{subfigure}
\caption{The pNML performance for the CIFAR10 set of different amount of trained layers}
\end{center}
\end{figure}

\section{Concluding remarks} 
\label{sec:dnn:conclusion}
We presented a novel method for detecting test instances from unknown classes while maintaining performance on closed-set data. The universal proposed scheme is inspired by the recently suggested predictive normalized maximum likelihood method for the individual setting, with respect to the log-loss function. 
We showed that the suggested pNML regret can be used successfully as a confidence measure for identifying unexpected inputs.

One could assume that when executing the procedure using the DNN hypothesis class, the model would fit exactly to each possible test label and, therefore, using the regret as a generalization measure might be useless.
We show that by effectively reducing the fitted model capacity, this is not the case and that the regret is an informative generalization measure where others fail.

\chapter{Adversarial Defense} \label{chap:adversarial_defense}


\section{Introduction}
DNNs have shown state-of-the-art performance on a wide range of complex problems~\citep{goodfellow2016deep}.
Despite the impressive performance, it has been found that DNNs are susceptible to adversarial attacks \citep{biggio2013evasion,szegedy2013intriguing}. 
These attacks cause the model to under-perform by adding specially crafted noise to the input, such that the original and modified inputs are almost indistinguishable. 

In the case of an image classification task, adversarial attacks can be categorized according to the threat model properties \citep{carlini2019evaluating}: its goal, capabilities and knowledge.
(i) The adversary goal may be to simply cause misclassification, i.e., untargeted attack. 
Alternatively, the goal can be to have the model misclassify certain examples from a source class into a target class of its choice. This is referred to as a targeted attack.
(ii) The adversary capabilities relate to the strength of the perturbation the adversary is allowed to cause the data, e.g., the distance between the original sample and the adversarial sample under a certain distance metric must be smaller than $\epsilon$.
(iii) The adversary knowledge represents what knowledge the adversary has about the model. 
This can range from a full white-box scenario, where the adversary has complete knowledge of the model architecture, its parameters, and the defense mechanism, to a black-box scenario where the adversary does not know the model and has only a limited number of queries on it. 

One of the simplest attacks is Fast Gradient Sign Method (FGSM) \citep{goodfellow2014explaining}. 
Let $w_0$ be the parameters of the trained model, $x$ be the test data, $y$ its corresponding label, $L$ the loss function of the model, $x_{adv}$ the adversary input and $\epsilon$ specifies the maximum $l_{\infty}$ distortion such that $||x-x_{\textit{adv}}||_{\infty}\leq \eps$. 
First, the signs of the loss function gradients are computed with respect to the image pixels,
Then, after multiplying the signs by $\epsilon$, they are added to the original image to create an adversary untargeted attack
\begin{equation} \label{eq:adversarial_defense:fgsm_untargeted}
x_{adv} = x + \epsilon \cdot \textit{sign}\nabla_x L(w_0, x, y_\textit{true}).
\end{equation}
It is also possible to improve classification chance for a certain label $y_\textit{target}$ (targeted attack):
\begin{equation} \label{eq:adversarial_defense:fgsm_targeted}
x_{adv} = x - \epsilon \cdot  \textit{sign}\nabla_x L(w_0, x,y_\textit{target}).
\end{equation}
A multi-step variant of FGSM that was used by \citet{madry2017towards} is called Projected Gradient Descent (PGD) and is considered to be one of the strongest attacks. 
Denote $\alpha$ as the size of the update, for each iteration an FGSM step is executed:
\begin{equation} \label{eq:adversarial_defense:pgd_untargeted}
x_{\textit{adv}}^{t + 1} = x_{\textit{adv}}^{t} + \alpha \cdot \textit{sign}(\nabla_x L(w_0, x,y_\textit{true}), \quad  0 \leq t \leq T.
\end{equation}
The number of iterations $T$ is predetermined. 
For each sample, the PGD attack creates multiple different adversarial samples by randomly choosing multiple different starting points $x_{\textit{adv}}^{0} = x + u$ where $u \sim U[-\epsilon,\epsilon]$.
Then the sample with the highest loss is chosen.

A different approach is taken by Hop-Skip-Jump-Attack (HSJA) \citep{chen2020hopskipjumpattack} black-box attack, where the adversary has only a limited number of queries to the model decision. 
The attack is iterative, each iteration involves three steps:
estimating the gradient direction, step-size search via geometric progression, and boundary search via a binary search.
 
The most prominent defense is adversarial training, which augments the training set to include adversarial examples~\citep{goodfellow2014explaining}. Many improvements in adversarial training were suggested. \citet{madry2017towards} showed that training with PGD adversaries offered robustness against a wide range of attacks. \citet{carmon2019unlabeled} suggested using semi-supervised learning with unlabeled data to further improve robustness. \citet{Wong2020Fast} offered a way to train a robust model with a much lower computational cost by using weak adversaries. 
An alternative to adversarial training is to encourage the model loss surface to become linear, so small changes at the input would not change the output greatly. \citet{qin2019adversarial} demonstrated how using a local linear regularizer during training creates a robust DNN model.
However, current methods are still unable to achieve a robust model for large scale datasets and high dimensional inputs.

The pNML scheme gives the optimal solution for a min-max game where the goal is to be as close as possible to a reference learner, who knows the true label but is restricted to use a learner from a given hypothesis class. 
In the pNML setting and derivation, there is no assumption on the way the data is generated. Both the training set and the test sample have some specific individual values. This matches the adversarial attack scenario, where the input is manipulated specifically to deceit the model.

In this chapter, we propose the \textit{Adversarial pNML} scheme as a new adversarial defense. 
Based on the pNML, we restrict the genie learner, a learner who knows the true test label, to perform minor refinements to the adversarial examples.
Our method uses an existing adversarial trained model.
We compose a hypothesis class: 
Each member in the class assumes a different label for the adversarial test sample and performs a simple targeted adversarial attack based on the assumed label.
Finally, by comparing the resulting hypotheses probabilities, predict the true label.

Contrary to existing methods that attempt to remove the adversary perturbation~\citep{guo2018countering,samangouei2018defensegan,song2018pixeldefend}, our method is unique in the sense it does not remove the perturbation but rather exploits the adversarial subspace properties, as will be elaborated below.

\section{pNML for adversarial defense} \label{sec:adversarial_defense:adversarial_pNML}
In the individual setting of universal learning, the relation between the data and labels is arbitrary and can be determined by an adversary.
The pNML is the min-max solution of universal learning in the individual setting and therefore using it as a defense against adversary attacks seems suitable.

Recall the genie solution
\begin{equation} \label{eq:adversarial_defenseLgenie_pnml_org} 
\hat{\theta}(z^N,x,y)  = \arg\max_\theta \left[ p_\theta(y|x) \cdot\Pi_{i=1}^N p_\theta(y_i|x_i) \right]
\end{equation}
This genie is the best learner one can attain when knowing the test label given the hypothesis class $\Theta$.
The main challenge is how to choose the hypothesis class. 
A good hypothesis class should be large enough such that the genie could achieve good performance on natural and adversarial data. However, it should not be too large to avoid overfitting of the other class members to ensure that the overall regret stays small.

We propose a novel hypothesis class by adding a refinement stage before a pretrained DNN model $w_0$. 
The refinement stage alters the test sample by performing a targeted attack based on the DNN model and a certain label $y$. Denote $\lambda$ as the refinement strength, the refined sample is:
\begin{equation} \label{eq:adversarial_defense:_x_refine}
x_\textit{refine}(x,y) = x - \lambda \cdot \textit{sign}(\nabla_xL(w_0,x,y)).
\end{equation}
The  hypothesis class we consider is therefore 
\begin{equation} \label{eq:adversarial_defense:hypo_class_adv}
\Theta = \left\{ p_{w_0}(\cdot|x_\textit{refine}(x,y)), \quad \forall y \in Y \right\}.
\end{equation}
Each member in the hypothesis class produces a probability assignment by adding a perturbation that strengthen one of the possible value of the test label. 


Our Adversarial pNML scheme consists of the following steps:
At first, we train a DNN model $w_0$ with adversarial training. 
Then, we produce the hypothesis class -  
given a test data $x$, we refine it using the trained DNN $w_0$ and an arbitrary test label $y'$ \eqref{eq:adversarial_defense:_x_refine}.
We save the label probability we refined with by feeding the refined test to the trained DNN
\begin{equation}
p_{w_0}(y'|x_\textit{refine}(x,y')).
\end{equation}
This action is repeated for every possible test label value.
At the end of the process we get a set of predictions, we normalize them and return the Adversarial pNML probability assignment (prediction)
\begin{equation}
q_\textit{pNML}(y'|x) = \frac{1}{K} p_{w_0}(y'|x_\textit{refine}(x,y')), \quad K=\sum_{y' \in Y} p_{w_0}(y'|x_\textit{refine}(x,y')).
\end{equation}
The corresponding regret is the log normalization factor and is $\Gamma = \log K$.

\section{Adversarial subspace interpretation}
\label{subsec:adversarial_defense:adversarial_subspace_interpretation}

Let $x_\textit{adv}$ be a strong adversarial example with respect to the label $y_\textit{target}$, i.e.,
the DNN model has a high probability of mistakenly classifying $x_\textit{adv}$ as $y_\textit{target}$.
For the task of binary classification there is two kinds of members in our suggested hypothesis class:
refinement towards the true label $y_\textit{true}$ and refinement towards the adversary target $y_\textit{target}$. 
There are two mechanisms that cause the refinement towards the true label to be stronger than the refinement towards the adversary target: the convergence to local maxima and refinement overshoot.

\paragraph{Convergence to local maxima.} 
\citet{szegedy2013intriguing} stated that adversarial examples represent low-probability pockets in the manifold which are hard to find by randomly sampling around the given sample.
\citet{madry2017towards} showed that FGSM often fails to find an adversarial example while PGD with a small step size succeeds. This implies that for some dimensions the loss local maxima is in the interval $[-\epsilon,\epsilon]$, which is also confirmed empirically for CIFAR10 by \citet{Wong2020Fast}.
Following these results, we conclude that for some dimensions the loss local maxima can be viewed as a ``hole'' in the probability manifold.
For those dimensions, refinement towards $y_{target}$ would not increase the probability of $y_{target}$ hypothesis since $x_{adv}$ already converged to the local maximum. On the other hand, refinement towards $y_{true}$ could cause the refined sample to escape the local maximum hole, thus increasing the probability of $y_{true}$ hypothesis.

\paragraph{Refinement overshoot.}
\label{refinement_overshoot}
PGD attack is able to converge to strong adversarial points by using multiple iterations with a small step size. This process avoids the main FGSM pitfall: As the perturbation size increase, the gradient direction change \citep{madry2017towards}, causing FGSM to move in the wrong direction and overshoot.
For the same reason, the FGSM refinement towards the $y_{target}$ might fail to create a strong adversarial. 
On the other hand, the refinement towards $y_{true}$ is more probable to succeed since the volume of the non-adversarial subspace is relatively large, thus a crude FGSM refinement is more likely to move in the right direction. To support that claim we note that the adversarial subspace has a low probability and is less stable compared to the true data subspace \citep{tabacof2016exploring}.
In other words, while the true hypothesis escapes the adversarial subspace, the target hypothesis can transform the strong PGD adversarial into weak FGSM adversarial. 

In the case of multi-label classification there is a third kind hypothesis: A refinement towards other label $y \not \in \{y_\textit{true},y_\textit{target}\}$.
This refinement effectively applies a weak targeted attack towards a specific label $y$. 
This hypothesis can be neglected for a strong adversarial input because a weak refinement towards other labels is unlikely to become more probable than refinement towards the target label.

\begin{figure}[bt]
\centering
\begin{subfigure}[b]{0.49\linewidth}
    \includegraphics[width=1.0\linewidth]{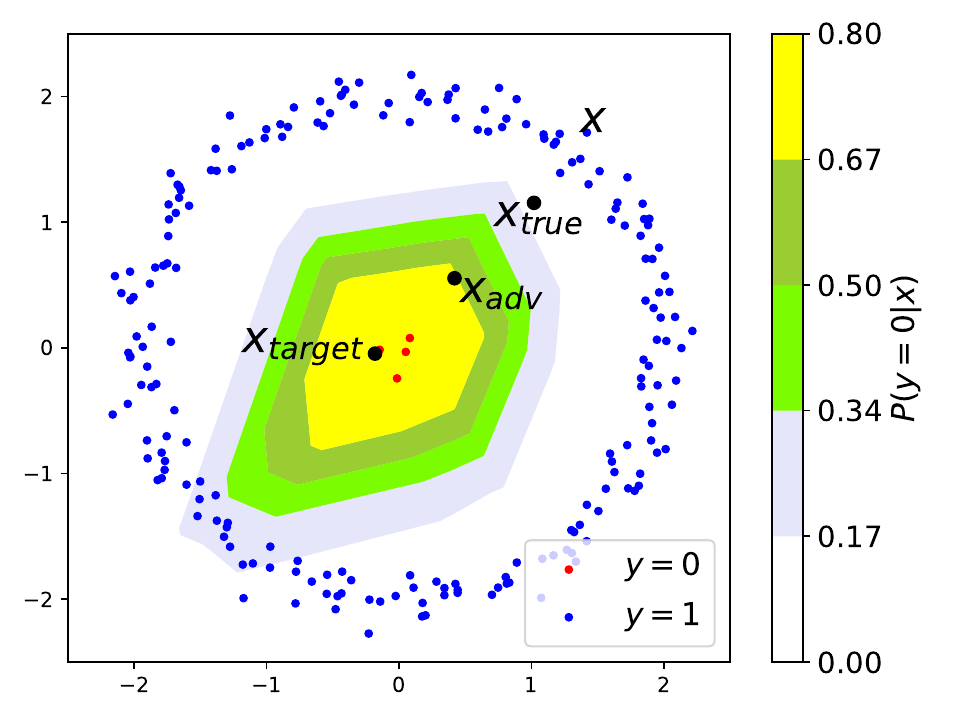}
    \caption{Convergence to local maxima}
    \label{fig:adversarial_defense:toy_ds_local_maxima}
\end{subfigure}
\hfill
\begin{subfigure}[b]{0.49\linewidth}
    \includegraphics[width=1.0\linewidth]{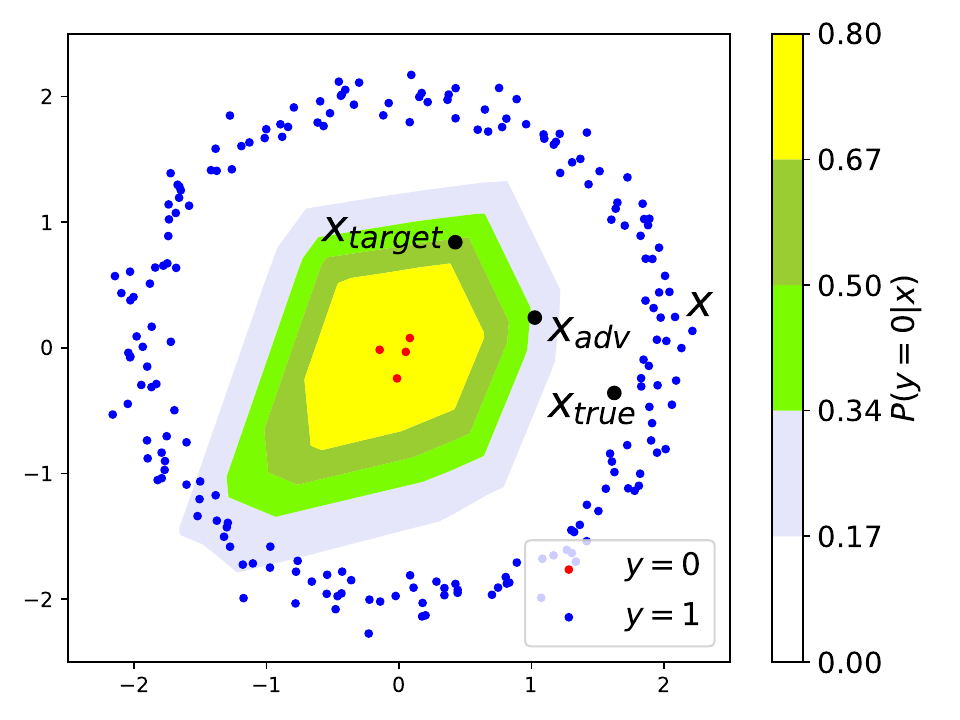}
    \caption{Refinement overshoot}
    \label{fig:adversarial_defense:toy_ds_crude_refine}
\end{subfigure}
 \caption{
 Label "$0$" probability manifold 
 }
 \label{fig:adversarial_defense:toy_ds_demo}
\end{figure}

To demonstrate the above mechanisms we present an experiment with two-dimensional synthetic data.
Let $\rho_0 \sim \mathcal{N}(0,0.01I)$ be the distribution with label $0$ and denote $\rho_1 \sim \mathcal{N}(M,0.01I)$ as the distribution of the data the corresponds to label 1, where $M$ is a random variable uniformly distributed on a circle of radius 2. 
We train a simple 4 fully connected layer classifier using adversarial training set generated by a PGD attack with $4$ iterations of size $0.25$ and $\eps=0.5$. 
For the adversarial testset, we increase $\epsilon$ to $0.95$ and for the refinement we choose $\lambda$ value of 0.6.

\Figref{fig:adversarial_defense:toy_ds_demo} shows the refinement process on top of the trained model label $0$ probability manifold. $x$ is the original sample with label $1$, $x_{adv}$ is the test adversarial sample, $x_{true}$ is the sample generated by refinement towards label $1$, and $x_{target}$ is generated by refinement towards label $0$. Using \eqref{eq:adversarial_defense:_x_refine}:
\begin{equation}
x_\textit{true} = x_{\textit{refined}}(x_\textit{adv},1), \quad 
x_\textit{target} = x_{\textit{refined}}(x_\textit{adv},0). 
\end{equation}

\Figref{fig:adversarial_defense:toy_ds_local_maxima} demonstrates the convergence to local maxima mechanism.
$x_{adv}$ converged to the maximum probability, therefore refinement towards the target label does not increase the probability while refinement towards the true label does. As a result, the true hypothesis probability is greater and the true label is predicted.
\Figref{fig:adversarial_defense:toy_ds_crude_refine} presents the refinement overshoot mechanism. The target hypothesis is refined in the wrong direction while the true hypothesis is refined in the correct direction.

We note that our method could degrade the accuracy for natural, non-adversarial, samples since the refinement is a weak targeted attack. To mitigate this drawback we incorporate adversarial training. 
Also, we show empirically in \secref{sec:adversarial_defense:ablation_study} that replacing FGSM refinement with PGD refinement decreases robustness.

\section{Experiments}  \label{sec:adversarial_defense:experiments}
We evaluate the natural and adversarial performance on MNIST~\citep{lecun-mnisthandwrittendigit-2010}, CIFAR10~\citep{krizhevsky2014cifar} and ImageNet~\citep{deng2009imagenet} datasets. We compare our scheme to other state-of-the-art algorithms and to a standard model trained on natural samples.

\subsection{Adaptive attack and gradient masking} 
\label{subsec:adversarial_defense:adaptive_adversary}

A main part of the defense evaluation is creating and testing against adaptive adversaries that are aware of the defense mechanism~\citep{carlini2019evaluating}. This is specifically important when the defense cause gradient masking~\citep{papernot2017practical}, in which gradients are manipulated, thus prevent a gradient-based attack from succeeding. 
Defense aware adversaries can overcome this problem by using a black-box attack or by approximating the true gradients~\citep{athalye2018obfuscated}. 

We designed a defense aware adversary for our scheme: We create an end-to-end model that calculates all possible hypotheses in the same computational graph.
We note that the end-to-end model causes gradient masking since the refinement $sign(\cdot)$ function sets some of the gradients to zero during the backpropagation phase. 
We, therefore, attack the end-to-end model with the black-box HSJA method and PGD with Backward Pass Differentiable Approximation (BPDA) technique~\citep{athalye2018obfuscated}, denoted as an adaptive attack.
In BPDA we perform the usual forward-pass but on the backward pass, we replace the non-differentiable part with some differentiable approximation. Assuming the refinement is small, one solution is simply to approximate the refinement stage by the identity operator, $x_\textit{refine}\approx x$, which leads to $\frac{\partial x_\textit{refine}}{\partial x} \approx 1$.
Further discussion on the adaptive attack can be found in the appendix.

\begin{table}[bt]
\caption{Accuracy comparison for the different defenses}
\label{table:adversarial_defense:imagenet_results}
\label{table:adversarial_defense:cifar_results}
\label{table:adversarial_defense:mnist_results}
\small
\centering  
\begin{tabular}{c|c|c|cccc|c}
\toprule
Dataset & Method & Natural & FGSM  & PGD & Adaptive & HSJA  & Best attack \\
\toprule
   & Standard & 99.3\%  & \ 0.6\%  & 0.0\% & - & - & 0.0\%\\
MNIST   & \citet{madry2017towards} & 97.8\% & 95.4\%  & 91.2\% & 89.9\% & 93.1\% & 89.9\%\\
    $\epsilon=0.3$ & pNML & 97.8\%  & 95.4\%  & 93.7\% & 90.7\% & 94.6\% & \textbf{90.7\%} \\
\midrule 
      &  Standard & 93.6\%  & 6.1\% &  0.0\% & - & - & 0.0\% \\
    &  WRN & 84.3\%  & 48.2\%  &  37.4\% & - & - & 37.4\% \\
      & WRN + pNML  & 84.3\%  &  48.9\% & 45.7\% & - & - &  45.7\% \\ \
    CIFAR10 &  \citet{madry2017towards} & 87.3\%  & 56.1\%  &  45.8\% & - & - & 45.8\% \\
     $\epsilon=0.031$ &  \citet{qin2019adversarial} & 86.8\%  & - &  54.2\% & - & - & 54.2\% \\
     &  \citet{carmon2019unlabeled} & 89.7\%  & 69.9\%  &  63.1\% & - & 78.8\%  & 63.1\% \\
     & pNML & 88.1\%  & 69.5\%  & 67.3\% & 68.6\% & 84.8\% & 
     \textbf{67.3\%} \\
\midrule
     &  Standard & 83.5\% & 7.0\% & 0.0\% & - & - & 0.0\% \\
ImageNet & \citet{Wong2020Fast} & 69.1\%  & 27.0\%  &  16.0\% & -  & 68.0\% & 16.0\% \\
     $\epsilon=8/255$ & pNML & 69.3\% & 28.0\% & 20.0\% & 19.0\% & 68.0\% & \textbf{19.0\%} \\
 \midrule
 ImageNet & \citet{Wong2020Fast} & 69.1\%  & 44.3\%  &  42.9\% & -  & - & 42.9\% \\
     $\epsilon=4/255$ &  pNML & 69.3\% & 49.0\% & 48.6\% & - & - & \textbf{48.6\%} \\
\bottomrule
\end{tabular}
\end{table}

\subsection{Experimental results} \label{sec:adversarial_defense:experiment_results}

\paragraph{MNIST.} \label{subsubsec:adversarial_defense:mnist_experiments}
We follow the model architecture as described in \citet{madry2017towards}. 
We use a network that consists of two convolutional layers with 32 and 64 filters respectively, each followed by $2\times2$ max-pooling, and a fully connected layer of size 1024. 
We trained the model for 106 epochs with adversarial trainset that was produced by PGD based attack on the natural training set with 40 steps of size 0.01 with a maximal $\eps$ value of 0.3. We used SGD with a learning rate of $0.01$, momentum value 0.9 and weight decay of 0.0001. For the last 6 epochs, we used adversarial training with the adaptive attack instead of PGD. We set the Adversarial pNML refinement strength to $\lambda=0.1$.

For evaluation, we set $\epsilon$ to 0.3 for all attacks. The PGD attack was configured with 50 steps of size 0.01 and 20 random starts. For the adaptive attack, we used 300 steps of size 0.01 and 20 random starts. 
For HSJA, we set the number of model queries to $26K$ per sample, which was shown to be enough queries for convergence \citep{chen2020hopskipjumpattack}.

In \tableref{table:adversarial_defense:mnist_results} we report the accuracy of our scheme in comparison to adversarial trained model (\citet{madry2017towards}). 
We observe that our scheme improves the robustness by 0.8\% with no loss to natural accuracy. The adaptive attack manages to decrease our approach robustness which indicates that this kind of attack is efficient. 
In addition, our scheme improves black-box accuracy by 1.5\%.

\paragraph{CIFAR10.}
For CIFAR10 dataset, we build our scheme upon a pre-trained Wide-ResNet 28-10 architecture \citep{zagoruyko2016wide} that was trained with both labeled and unlabeled data \citep{carmon2019unlabeled}.
We set the Adversarial pNML refinement strength to $\lambda=0.03$. 
For evaluation, we set $\epsilon=0.031$ for all attacks. PGD and adaptive attack were configured with 20 steps of size 0.007 and without random starts. 
For HSJA attack, we set the maximal number of model queries to $26K$ per sample and evaluate the accuracy for the first 2K samples (out of 10K). 

In \tableref{table:adversarial_defense:cifar_results} we report the accuracy of our scheme in comparison to other state-of-the-art algorithms. 
We observe that our scheme achieves state-of-the-art robustness, enhancing the robustness by 4.2\% and only degrading natural accuracy by 1.6\% compared to the base model. In addition, our scheme improves black-box accuracy by 6.0\% which supports the claim that the robustness improvement is not only the result of masked gradients. We note that the adaptive attack is unsuccessful and it might imply that the loss surface of the end-to-end model for CIFAR10 is non-linear and non-smooth degrading the gradient optimization. Against FGSM attack our scheme performs worse than the base model of \citet{carmon2019unlabeled} by 0.4\%. This result, together with the improvement our scheme achieves against PGD attack demonstrates, convergence to local maxima mechanism. 
Our scheme works best when the adversarial sample converged to a loss local maxima, as described in \secref{subsec:adversarial_defense:adversarial_subspace_interpretation}.  

We incorporate our suggested Adversarial pNML method with PGD based adversarial trained model as suggested by \citet{madry2017towards} and named it \emph{WRN + ours}.
In \tableref{table:adversarial_defense:cifar_results}, one can see that our scheme enhances the robustness of the WRN model by 8\% with no loss in natural accuracy. 

In \Figref{fig:adversarial_defense:cifar_acc_vs_attack_strength} we explore the robustness of our scheme against PGD attack. The results show that our scheme is more robust for all $\epsilon$ values greater than 0.01. 
Specifically, the maximal improvement is 8.8\% for $\epsilon$ that equals 0.05.
For $\epsilon=0.01$ our scheme is less robust by 0.6\%. To explain these results, recall that the refinement strength is 0.03. 
When $\eps<<\lambda$, one of the refinement hypothesis could generate adversarial examples stronger than the examples generated by the adversarial attack.

\begin{figure}[b]
\centering
\begin{subfigure}[b]{0.49\linewidth}
    \includegraphics[width=1.0\linewidth]{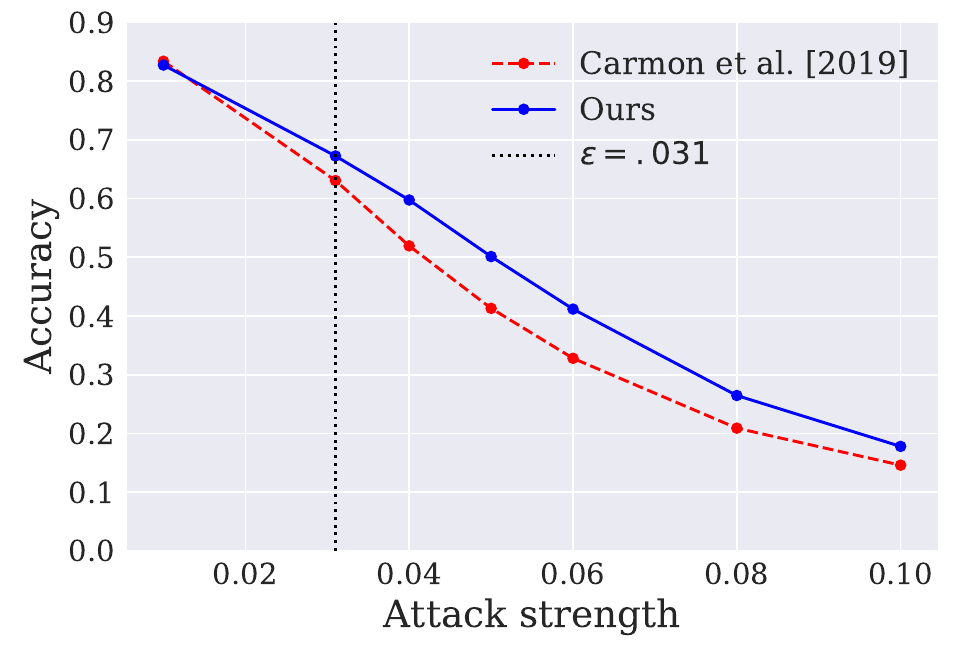}
    \caption{CIFAR10}
    \label{fig:adversarial_defense:cifar_acc_vs_attack_strength}
\end{subfigure}
\hfill
\begin{subfigure}[b]{0.49\linewidth}
    \includegraphics[width=1.0\linewidth]{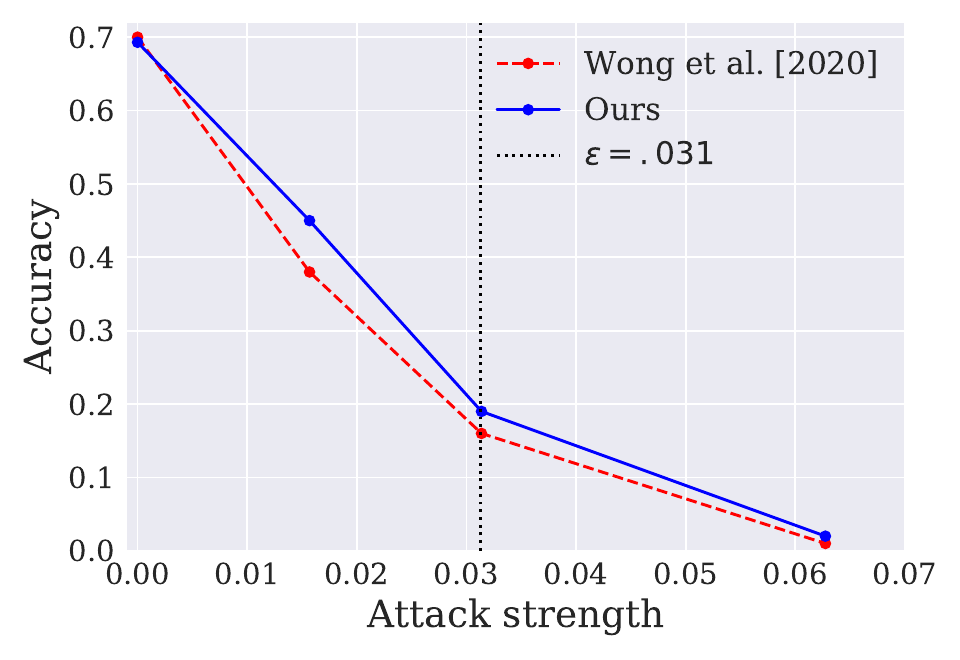}
    \caption{ImageNet}
    \label{fig:adversarial_defense:subset_imagenet_adaptive_acc}
\end{subfigure}
\caption{Robustness for different $\epsilon$ values}
\label{fig:adversarial_defense:acc_vs_attack_strength}
\end{figure}

\paragraph{ImageNet.}
We based our Adversarial pNML approach on a pre-trained ResNet50 architecture that was trained using fast adversarial training \citep{Wong2020Fast} with $\epsilon=4/255$. 
We set the Adversarial pNML refinement strength to $\lambda=3/255$. 
We used a subset of the evaluation set containing only the first 100 labels for a total of 5000 samples and we adjusted the model to only output the first 100 logits. 
PGD and adaptive attack were configured with 50 steps of size $1/255$ and with 10 random restarts. For HSJA, we set the number of model queries to $12K$ per sample. 

In \tableref{table:adversarial_defense:imagenet_results} we report the accuracy of our scheme in comparison to the base model (\citet{Wong2020Fast}) for $\epsilon=8/255$ and $\epsilon=4/255$.
For $\epsilon=8/255$ we evaluated all attacks using 100 samples (1 sample per label) and for $\epsilon=4/255$ we evaluated PGD and FGSM attacks using 5000 samples.
We observe that robustness is improved by 3\% and 5.7\% for $\epsilon=8/255$ and $\epsilon=4/255$ respectively and that natural accuracy is improved by 0.2\%. 
The HSJA attack seems to fail in finding adversarial examples, and that for $\epsilon=8/255$ the adaptive attack is the best attack. 

In \Figref{fig:adversarial_defense:subset_imagenet_adaptive_acc} we explore the robustness of our scheme against the adaptive attack. The results show that our scheme is more robust for all $\epsilon\geq0.016$, specifically the maximal improvement is 7.0\% for $\epsilon=0.016$.

\begin{figure}[tb]
\centering
  \includegraphics[width=0.65\textwidth]{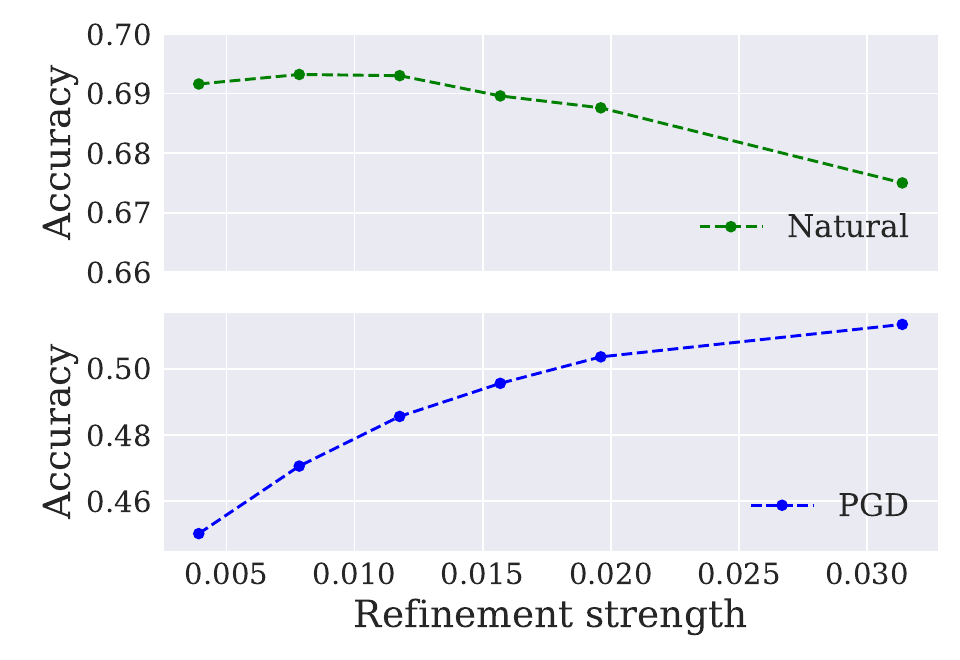}
  \caption{ImageNet accuracy vs the pNML refinement strength}
  \label{fig:adversarial_defense:imagenet_acc_vs_lambda}
\end{figure}

\begin{table}[tb]
\caption{CIFAR10 accuracy for different refinement parameters}
\small
\centering
\begin{tabular}{c|c|c}
\toprule
Accuracy & Iterations & Step size \\
\toprule
67.26\% & 1 & 0.03 \\
66.89\% & 2 & 0.015 \\
66.86\% & 4 & 0.01 \\
\bottomrule
\end{tabular}
\label{table:adversarial_defense:refinement_overshoot}
\end{table}

\subsection{Ablation study} \label{sec:adversarial_defense:ablation_study}
Choosing the refinement strength represents a trade-off between increasing the robustness against adversarial attacks and decreasing the accuracy of natural samples. 
This trade-off is explored in \Figref{fig:adversarial_defense:imagenet_acc_vs_lambda}.
As the refinement strength increase so is the robustness to PGD attack at the price of a small accuracy loss for the natural samples. 
Denote $\epsilon$ as the strength used during the adversarial training,
from multiple experiments conducted on multiple models, we conclude that usually, a refinement strength in the interval $[0.5\epsilon,\epsilon]$ would give a good balance between natural and adversarial accuracy. 

In \tableref{table:adversarial_defense:refinement_overshoot} we explore the overshoot mechanism (see \secref{subsec:adversarial_defense:adversarial_subspace_interpretation}). We adjust the refinement to become more precise by replacing FGSM refinement with a PGD refinement which uses more iterations and smaller step size. We test our method on CIFAR10 against PGD attack with the same settings described in \secref{sec:adversarial_defense:experiment_results}. The results show that as the refinement becomes more precise, CIFAR10 accuracy decreases. This result demonstrates empirically that the overshoot mechanism improves robustness. This provides further support to the claim of the instability of the adversarial subspace, which explains why FGSM refinement towards $y_{true}$ is more likely to succeed compared to refinement towards the $y_{target}$. 



\section{Concluding Remarks}
In this chapter, we presented the Adversarial pNML scheme for defending DNNs from adversarial attacks. We justified the scheme by considering the properties of the adversarial subspace. 
We showed empirically that our method increases robustness against adversarial attacks for MNIST, CIFAR10 and ImageNet datasets. The scheme is conceptually simple, requires only one hyper-parameter and flexible since it allows a trade-off between robustness and natural accuracy. Furthermore, any model can easily combine our scheme to enhance model robustness.

This work suggests several potential directions for future work. 
First, the pNML regret can form an adversarial attack detector.
Second, exploring other hypothesis classes. For instance, the entire model parameter class where the model weights are changed according to the different hypotheses.

\chapter{Active Learning} \label{chap:active_learning}  
\section{Introduction}
In supervised learning, a training set is provided to a learner, which optimizes its parameters by minimizing the error on this set.
The process of creating this training set requires annotation, where an expert labels the data points. This is a time-consuming and costly process, which results in only a small subset of the data being labeled which may not represent the true underlying model~\cite{ren2021survey}. 
Active learning, on the other hand, allows the learner to interact with a labeling expert by sequentially selecting samples for the expert to label based on previously observed labeled data. Therefore, reducing the number of examples needed to achieve a given accuracy level~\cite{wang2016cost}.

Recent strategies ~\cite{gal2017deep,houlsby2011bayesian,sener2017active,shayovitz2021universal}  aim to quantify the uncertainties of samples from the unlabeled pool and utilize them to select a sample for annotation. Their underlying assumptions are that the distribution of the unlabeled pool and the test set are similar and that the data follows some parametric distribution. However, this may not always be true, particularly in privacy-sensitive applications where real user data cannot be annotated~\cite{alabduljabbar2021tldr} and the unlabeled pool may contain irrelevant information. In such cases, choosing samples from the unlabeled pool may not necessarily improve performance on the test set.



As an alternative to making distributional assumptions, we build upon the \textit{individual setting}~\cite{merhav1998universal} and the pNML learner as its min-max regret solution.
In this section, we derive an active learning criterion that takes into account a trained model, the unlabeled pool, and the unlabeled test features. The criterion is designed to select a sample to be labeled in such a way that, when added to the training set with its worst-case label, it attains the minimal pNML regret for the test set. Additionally, we provide an approximate version of this criterion that enables faster inference for deep neural networks (DNNs).

We demonstrate that in the existence of out-of-distribution (OOD) samples, for the same accuracy level our criterion needs 66.2\%, 91.9\%, and 77.2\% of labeled samples compared to recent leading methods for CIFAR10~\cite{krizhevsky2014cifar}, EMNIST~\cite{cohen2017emnist}, and MNIST~\cite{deng2012mnist} datasets respectively. When we consider only in-distribution (IND) samples as labeling candidates, our approach needs 74.3\%, 89.0\%, and 80.9\% labeled samples on the aforementioned datasets.

\subsection{Preliminaries and notations}
In this chapter, there is an iterative acquisition of samples in an online manner. Therefore we use slightly different notations than the previous chapters:
$x^n=(x_1,x_2,...,x_n)$ is a sequence of samples. The variables $x \in \mathbb{X}$ and $y \in \mathbb{Y}$ represent the features and labels respectively with $\mathbb{X}$ and $\mathbb{Y}$ being the sets containing the features and label's alphabet respectively.
In the supervised learning framework, a training set consisting of $n$ pairs of examples is provided to the learner
\begin{equation}
    z^n = \lbrace(x_i, y_i)\rbrace_{i=1}^n
\end{equation}
where $x_i$ is the $i$-th data point and $y_i$ is its corresponding label. 
The goal of a learner is to predict an unknown test label $y$ given its test data, $x$, by assigning a probability distribution $q\left(\cdot|x,z^n\right)$ for each training set $z^n$. 

Recent research has focused on obtaining a diverse set of samples for training deep learning models with reduced sampling bias. \citet{sener2017active} proposed a method for constructing core-sets by solving the k-center problem. However, the method's search procedure is computationally expensive, as it requires constructing a large distance matrix from unlabeled samples.

A widely used criterion for active learning is Bayesian Active Learning by Disagreement (BALD) which was originally proposed by \citet{houlsby2011bayesian}. 
This method finds the unlabeled sample $\hat{x}_i$ that maximizes the mutual information between the model parameters $\theta$ and the labeling candidates $x_i$ 
given the training set $z^{n-1} $ \footnote{We use the notation $I(X;Y|z)$ to denote the mutual information between the random variables X and the random variable Y conditioned on a realization z of a random variable Z.}: $$\hat{x}_i = \argmax_{x_i}{I(\theta;Y_i|x_i,z^{n-1})}$$ 
The idea is to minimize the uncertainty about model parameters using Shannon's entropy. This criterion also appears as an upper bound on information based complexity of stochastic optimization \cite{raginsky2011information} and also for experimental design  \cite{mackay1992information,fedorov2013theory}. 
This approach was empirically investigated by \citet{gal2017deep}, where a Bayesian method for deep learning was proposed and several heuristic active learning acquisition functions were explored within this framework. 

BALD, however, has a fundamental disadvantage if the test distribution differs from the training set distribution since what is maximally informative for model estimation may not be maximally informative for test time prediction. 

\citet{shayovitz2021universal} derived a criterion named Universal Active Learning (UAL) that takes into account the unlabeled test set when optimizing the training set:
\begin{equation}
    \hat{x}_i = \argmin_{x_i}{I(\theta;Y|X,x_i,Y_t,z^{n-1})}
\end{equation}
where $X$ and $Y$ are the test feature and label random variables.
UAL is derived from a Capacity-Redundancy theorem \cite{Shayovitz2019} and implicitly optimizes an exploration-exploitation trade-off in feature selection.
However, UAL and BALD assume that the data is generated according to a distribution which belongs to a given parametric hypothesis class. This assumption cannot be verified on real world data thus limiting its application.

\section{Active learning for individual data}\label{sec:active_learning:IAL}
In active learning, the learner sequentially selects data instances $x_i$ based on some criterion and produces $n$ training examples $z^n$. 
The objective is to select a subset of the unlabelled pool and derive a probabilistic learner $q\left(y|x,z^n\right)$ that attains the minimal prediction error among all training sets of the same size. 

Most selection criteria are based on uncertainty quantification of data instances in order to quantify their informativeness. However, in the individual setting, there is no natural uncertainty measure since there is no distribution $f\left(y|x\right)$ governing the data. 

In the previous chapters, we mentioned that the learning problem in the individual setting is defined as the log-loss difference between a learner \textbf{q} and the reference learner (genie)
\begin{equation}
    R_{n}\left(q,y;x\right) = \log\frac{p\left(y|x,\hat{\theta}\right)}{q\left(y|x,z^n\right)}.
\end{equation}
We propose to use the min-max regret $R_n$ as an active learning criterion which essentially quantifies the prediction performance of the training set $z^n$ for a given unlabeled test feature $x$. A "good" $z^n$ minimizes the min-max regret for any test feature and thus provides good test set performance. Since $R_n$ is a point wise quantity, we suggest to look at the average over all test data. 

We propose the following criterion:
\begin{equation}\label{eq:active_learning:c_a_n_batch}
    C_{n} = \min_{x^n}\max_{y^n}\sum_{x}\log\left(\sum_y p\left(y|x,\hat{\theta}\right)\right)
\end{equation}
where $\hat{\theta} = \hat{\theta}\left(x,y,z^n\right)$.
The idea is to find a set of training points, $x^n$ that minimizes the averaged log normalization factor (across unlabeled test points), for the worst possible labels $y^n$. This criterion looks for the worst case scenario since there is no assumption on the data distribution and we assume individual sequences.

Since (\ref{eq:active_learning:c_a_n_batch}) is difficult to solve for a general hypothesis class, we define a greedy form which we denote as Individual Active Learning (IAL):
\begin{equation}\label{eq:active_learning:c_a_n}
    C_{n|n-1} = \min_{x_i}\max_{y_i}\sum_{x}\log\left(\sum_y p\left(y|x,\hat{\theta}\right)\right)
\end{equation}
Note that when computing (\ref{eq:active_learning:c_a_n}), the previously labeled training set, $z^{n-1}$, is assumed available for the learner. The objective in (\ref{eq:active_learning:c_a_n}) is to find a single point $x_i$ from the unlabelled pool as opposed to the objective in~\eqref{eq:active_learning:c_a_n_batch} that tries to find an optimal batch $x^n$.
 
\subsection{Deep individual active learning} \label{sec:active_learning:ial_gpc}
In this section, we derive an approximate inference of IAL for DNNs. 
DNN hypothesis class poses a challenging problem for information-theoretic active learning since its parameter space is infinite dimensional and the weights posterior distribution is analytically intractable. Moreover, direct application of deep active learning schemes is unfeasible for real world large scale data since it requires training the entire model for each possible training point.

Recall that $x$, $y$ and $p(\theta)$ are test feature, test label and prior on the weights respectively. The MAP estimation for $\theta$ is
\begin{equation}\label{eq:active_learning:map_gpc}
    \hat{\theta} = \arg\max_{\theta} p\left(y^n,y|x^{n},x,\theta\right)p(\theta).
\end{equation}
The prior acts as a regularizer over the latent vector $\theta$

Given a training set, the maximization of the likelihood function $p\left(y^n,y|x^{n},x,\theta\right)$ is performed by training the DNN with all the data and converging to a steady state maxima. 
In order to avoid re-training the entire network for all possible values of $x$, $y$, $x_n$ and $y_n$,  we utilize the independence between soft-max scores in the MAP estimation. Using Bayes, we observe that (\ref{eq:active_learning:map_gpc}) can be written as:
\begin{equation}\label{eq:active_learning:map_gpc1}
    \hat{\theta} = \arg\max_{\theta} p\left(y|x,\theta\right)p\left(y_n|x_n,\theta\right)p\left(\theta|y^{n-1},x^{n-1}\right)
\end{equation}
where $p\left(\theta|y^{n-1},x^{n-1}\right)$ is the posterior of $\theta$ based on the available data $z^{n-1}=(x^{n-1},y^{n-1})$. 

The posterior $p\left(\theta|y^{n-1},x^{n-1}\right)$ is not dependent on the test data $(x,y)$ and the evaluated labeling candidate $(x_n,y_n)$ and thus can be computed once and then used in the IAL selection process. For a DNN, this posterior is multi modal and intractable to compute directly. Therefore, we propose to approximate it by some simpler distribution which will allow easier computation of the maximum likelihood $\hat{\theta}$. 

\subsubsection{MC-Dropout as the posterior function}
There are many approximation approaches which find a representative distribution for the true posterior from some parametric family of simpler distributions ~\cite{daxberger2021laplace,wilson2016stochastic,zhang2018advances}. In this work, we have opted to use the method in \citet{gal2016dropout}, denoted as MC-Dropout, due to its simplicity and favorable performance. 

The MC Dropout algorithm, executes multiple dropout inference iterations and for each iteration, the final prediction probabilities are averaged and form an approximation of the predictive output probability. \citet{gal2016dropout} argued that DNNs with dropout applied before every weight layer are mathematically equivalent to approximate variational inference in the Gaussian process. Therefore, $p\left(\theta|y^{n-1},x^{n-1}\right)$ can be approximated in KL-sense by a distribution which is controlled by a dropout parameter and a given prior. Since this approximate distribution is still very complex, we propose to use a uniform distribution on the weights for each dropout iteration.

Using MC Dropout, the maximization in~\eqref{eq:active_learning:map_gpc1} does not include any training of the DNN, but only sampling weights based on the approximated posterior, performing inference with $x_i$ and $x$ as inputs for each $\theta$ in MC-Dropout ensemble and finding the $\theta$ which maximizes the product of all the softmax's and posterior. This is considerably faster than training the network for every point.

\subsubsection{Algorithm description}
The resulting algorithm denoted Deep Individual Active Learning (DIAL) is shown in Algorithm~\ref{alg:active_learning:IAL_algo} and is as follows. 

\begin{enumerate}
\item Train a model on the labeled training set $z^{n-1}$. 
\item Run MC-Dropout inference for $M$ iterations on all the unlabeled pool and test set.
\item Find the weight that maximizes DNN prediction of the test input and the unlabeled candidate input as in \eqref{eq:active_learning:map_gpc1}. 
\item Accumulate the pNML regret of the test point given these estimations. 
\item Find the unlabeled candidate for which the worst case averaged regret of the test set is minimal as in~\eqref{eq:active_learning:c_a_n}.
\end{enumerate}
For step 2, since the variational posterior associated with MC-Dropout is difficult to evaluate, we assume that it is uniform for all the sampled weights.

\begin{algorithm}[tb]
\caption{DIAL: Deep Individual Active Learning}\label{alg:active_learning:IAL_algo}
\begin{algorithmic}
\State \textbf{Input} Training set $ z^{n-1}$, unlabeled pool and test samples $\lbrace x_i \rbrace^N_{i=1}$ and $\lbrace x_k \rbrace^K_{k=1}$.
\State \textbf{Output} Next data point for labeling $\hat{x_i}$
\State Run MC-Dropout using using $z^{n-1}$ to get $\left\{\theta_m \right\}_{m=1}^M$
\State $\mathbf{S} = zeros(N,|\mathbb{Y}|)$ 
\For {$i \gets 1$ to $N$}
    \For{$y_i \in \mathbb{Y}$} 
        \For{$k \gets 1$ to $K$}
            \State $\Gamma = 0$
            \For{$y_k \in \mathbb{Y}$}
                \State $
                    \hat{\theta} = \argmax_{\theta_m} p\left(y_k|x_k,\theta_m\right)p\left(y_i|x_i,\theta_m\right)$
            \State $\Gamma = \Gamma + p\left(y_k | x_k,\hat{\theta} \right)$  
            \EndFor
            \State $\mathbf{S}\left(i,j\right) = \mathbf{S}\left(i,j\right) + \log\Gamma$
        \EndFor
   \EndFor
\EndFor
\State $\hat{x_i} = \argmin_{x_i}\max_{y_i}\mathbf{S}$
\end{algorithmic}
\end{algorithm}

\begin{figure}[tb]
\centering
\begin{subfigure}{0.32\textwidth}
\centerline{\includegraphics[width=0.99\textwidth]{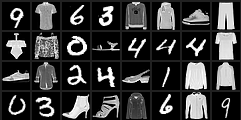}}
\caption{MNIST and OOD images}
\label{fig:active_learning:mnist_ood_images}
\end{subfigure}
\begin{subfigure}{0.32\textwidth}
\centerline{\includegraphics[width=0.99\textwidth]{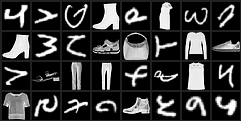}}
\caption{EMNIST and OOD images}
\label{fig:active_learning:emnist_ood_images}
\end{subfigure}
\begin{subfigure}{0.32\textwidth}
\centerline{\includegraphics[width=0.99\textwidth]{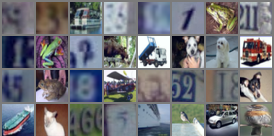}}
\caption{CIFAR10 and OOD images}
\label{fig:active_learning:cifar10_ood_images}
\end{subfigure}
\par\bigskip
\begin{subfigure}{0.32\textwidth}
\centerline{\includegraphics[width=0.99\textwidth]{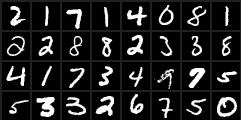}}
\caption{MNIST test images}
\label{fig:active_learning:mnist_images}
\end{subfigure}
\begin{subfigure}{0.32\textwidth}
\centerline{\includegraphics[width=0.99\textwidth]{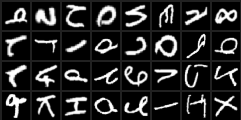}}
\caption{EMNIST test images}
\label{fig:active_learning:emnist_images}
\end{subfigure}
\begin{subfigure}{0.32\textwidth}
\centerline{\includegraphics[width=0.99\textwidth]{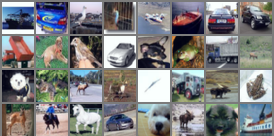}}
\caption{CIFAR10 test images}
\label{fig:active_learning:cifar10_images}
\end{subfigure}
\caption{Datasets that contain a mix of images with OOD samples}
\label{fig:active_learning:images}
\end{figure}

\section{Experiments}
We tested the proposed DIAL strategy in two scenarios: (i) where the initial training, unlabeled pool, and test data come from the same distribution, and (ii) when there are OOD samples present in the unlabeled pool. 

The reason for using the individual setting and DIAL as its associated strategy in the presence of OOD samples is that it does not make any assumptions about the data generation process, making the results applicable to a wide range of scenarios, including PAC~\cite{simon2015almost}, stochastic~\cite{merhav1998universal}, adversarial settings, as well as samples from unknown distributions.

\subsection{Datasets}
We considered the following datasets for training and evaluation of the different active learning methods.

\textbf{The MNIST dataset}~\cite{deng2012mnist} consists of 28x28 grayscale images of handwritten digits, with 60K images for training and 10K images for testing.

\textbf{The EMNIST dataset}~\cite{cohen2017emnist} is a variant of the MNIST dataset that includes a larger variety of images (upper and lower case letters, digits, and symbols). It consists of 240K images with 47 different labels.

\textbf{The CIFAR10 dataset}~\cite{krizhevsky2014cifar} consists of 60K 32x32 color images in 10 classes. The classes include objects such as airplanes, cars, birds, and ships.

\textbf{Fashion MNIST}~\cite{xiao2017fashion} is a dataset of images of clothing and accessories, consisting of 70K images. Each image is 28x28 grayscale pixels. 

\textbf{The SVHN dataset}~\cite{sermanet2012convolutional} contains 600K real-world images with digits and numbers in natural scene images collected from Google Street View.

\subsection{Baselines}
We built upon ~\citet{Huang2021deepal} open-source implementation of the following methods.

\textbf{The Random sampling} algorithm is the most basic approach in learning. It selects samples to label randomly, without considering any other criteria. This method can be useful when the data is relatively homogeneous and easy to classify, but it can be less efficient when the data is more complex or when there is a high degree of uncertainty.

\textbf{The Bayesian Active Learning by Disagreement (BALD)} method~\cite{gal2017deep} utilizes an acquisition function that calculates the mutual information between the model's predictions and the model's parameters. This function measures how closely the predictions for a specific data point are linked to the model's parameters, indicating that determining the true label of samples with high mutual information would also provide insight into the true model parameters.

\textbf{The Core-set} algorithm~\cite{sener2017active}  aims to find a small subset from a large labeled dataset such that a model learned from this subset will perform well on the entire dataset. The associated active learning algorithm chooses a subset that minimizes this bound, which is equivalent to the k-center problem.

\subsection{Experimental setup}
The first setting we consider consists of an initial training set, an unlabeled pool, and an unlabeled test set, all drown from the \textbf{same distribution}.
The experiment includes the following 4 steps:
\begin{enumerate}
    \item A model is trained on the small labeled data-set (initial training set). 
    \item One of the active learning strategies is utilized to select a small number of the most informative examples from the unlabeled pool.
    \item Querying the labels of the selected samples and adding them to the labeled data-set.
\item Retrain the model using the new training set.
\end{enumerate}
Steps 2-4 are repeated multiple times, with the model becoming more accurate with each iteration as it is trained on a larger labeled data-set.

In addition to the standard setting, we evaluate the performance in \textbf{the presence of OOD samples}. In this scenario, the initial training and test sets are drawn from the same distribution, but the unlabeled pool contains a mix of OOD samples. When an OOD unlabeled sample is selected for annotation, it is not used in training of the next iteration of the model. An effective strategy would recognize that OOD samples do not improve performance on the test set and avoid selecting them.

A visual representation of the scenario with OOD samples is illustrated in~\figref{fig:active_learning:images}. The top row depicts the unlabeled pool, which contains a mixture of both valid and OOD samples, while the bottom row shows the test set, which contains only valid samples. 

\begin{figure}[tb]
\centering
\begin{subfigure}{0.49\textwidth}
\centerline{\includegraphics[width=1.0\textwidth]{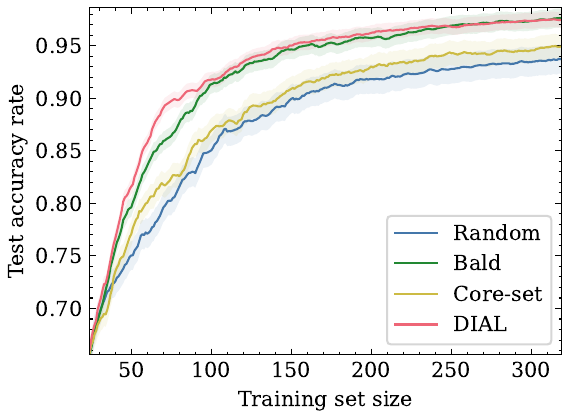}}
\caption{MNIST}
\label{fig:active_learning:mnist}
\end{subfigure}
\begin{subfigure}{0.49\textwidth}
\centerline{\includegraphics[width=1.0\textwidth]{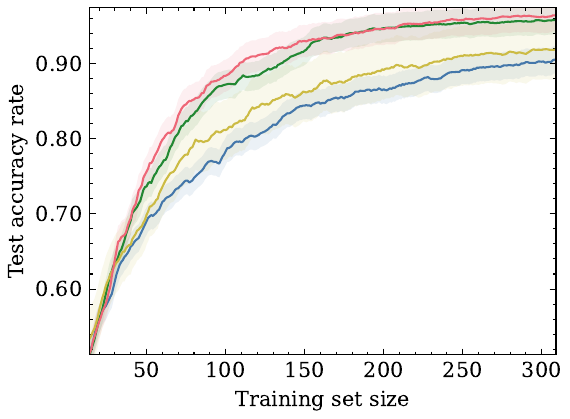}}
\caption{MNIST with OOD }
\label{fig:active_learning:mnist_ood}
\end{subfigure}
\caption{Accuracy rate as function of the training set size}
\end{figure}

\subsection{MNIST experimental results} \label{sec:active_learning:mnist_results}
Following~\citet{gal2017deep}, we considered a model consisting of two blocks of convolution, dropout, max-pooling, and ReLu, with 32 and 64 5x5 convolution filters. These blocks are followed by 2 fully connected layers that include dropout between them. The layers have 128 and 10 hidden units respectively. The dropout probability was set to 0.5 in all three locations. In each active learning round, a single sample was selected. 
We executed all active learning methods 6 times with different random seeds. For BALD and DIAL, we used 100 dropout iterations and employed the criterion on 512 random samples from the unlabeled pool.

\begin{table}[bt]
\centering
\small
\caption{MNIST with OOD training set}
\begin{tabular}{l|c|c|c}
\toprule
Methods & 85\% Acc. & 75\%  Acc. & 65\%  Acc. \\
\toprule
Random & 145 & 73 & 36 \\
Core-set & 117 & 61 & 33  \\
BALD & 83 & 51 & 32 \\
DIAL & \textbf{73 (-12.1\%)} & \textbf{48 (-5.9\%)} & \textbf{30 (-6.2\%)} \\
\bottomrule
\end{tabular}
\label{tab:active_learning:mnist_ood}
\end{table}

MNIST results are shown in~\figref{fig:active_learning:mnist}. DIAL is the top preforming method and have a better test set accuracy than BALD, Core-set, and Random. The largest efficiency is at accuracy rate of 0.95 where DIAL uses 169 samples while BALD attains this accuracy level with 209 samples.

To simulate the presence of OOD samples, we added the Fashion MNIST to the unlabeled pool such that the ratio of Fashion MNIST to MNIST is 1:1.
In this setting, DIAL outperforms all other baselines as shown in~\figref{fig:active_learning:mnist_ood}.
The largest improvement of DIAL over BALD is for accuracy level of 0.96 where DIAL requires 240 samples while BALD uses 307 samples.
The number of samples for additional accuracy rates is shown in~\tabref{tab:active_learning:mnist_ood}.

\begin{figure}[tb]
\centering
\begin{subfigure}{0.49\textwidth}
\centerline{\includegraphics[width=1.0\textwidth]{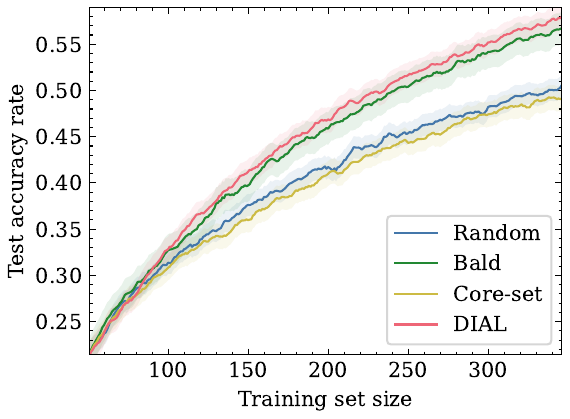}}
\caption{EMNIST}
\label{fig:active_learning:emnist}
\end{subfigure}
\begin{subfigure}{0.49\textwidth}
\centerline{\includegraphics[width=1.0\textwidth]{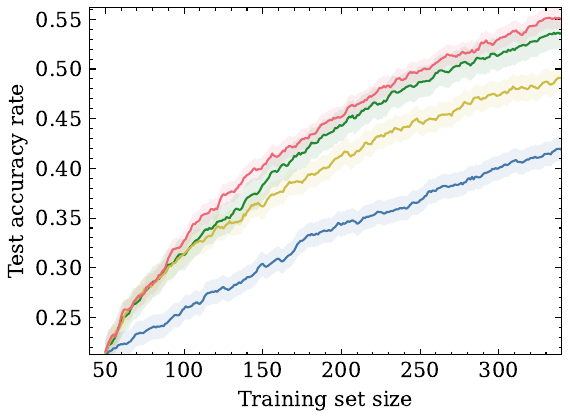}}
\caption{EMNIST with OOD}
\label{fig:active_learning:emnist_ood}
\end{subfigure}
\caption{Active learning performance for EMNIST}
\end{figure}

\subsection{EMNIST experimental results} \label{sec:active_learning:emnist_results}
We followed the same setting as the MNIST experiment with a slightly larger model than MNIST consisting of three blocks of convolution, dropout, max-pooling, and ReLu. 

The experimental results, shown in \figref{fig:active_learning:emnist}, indicate that DIAL is the top-performing method: For an accuracy rate of 0.56, it requires 8.3\% less training samples when compared to the second best method.

In the presence of OOD samples, the DIAL method outperforms all other baselines as shown in~\figref{fig:active_learning:emnist_ood} and Table~\ref{tab:active_learning:emnist_ood}. For 300 samples, DIAL achieves a test set accuracy rate of 0.52, while BALD, Core-set, and Random attain 0.51, 0.42, and 0.40 respectively. For the same accuracy rate of 0.53, DIAL needs 288 training samples, while BALD requires 309. Core-set and Random do not achieve this accuracy level for the test and training set sizes.

\begin{table}[tb]
\centering
\small
\caption{EMNIST with OOD training set}
\begin{tabular}{l|c|c|c}
\toprule
Methods & 40\% Acc. & 30\%  Acc. & 25\%  Acc. \\
\toprule
Random & 281 & 140 & 80 \\
Core-set & 221 & 96 & 62 \\
BALD & 154 & 85 & \textbf{59} \\
DIAL & \textbf{138 (-10.4\%)} & \textbf{84 (-1.2\%)} &  \textbf{59 (0\%)} \\
\bottomrule
\end{tabular}
\label{tab:active_learning:emnist_ood}
\end{table}

\subsection{CIFAR10 experimental results} \label{sec:active_learning:cifar10_results}
For the CIFAR10 data-set, we utilized ResNet-18~\cite{he2016deep} with acquisition size of 16 samples. 
We used 1K initial training set size and measured the performance of the active learning strategies up to a training set size of 3K.

CIFAR10 results are shown in~\figref{fig:active_learning:cifar10}. Overall, DIAL and Random preform the same and have a better test set accuracy than BALD and Core-set for all training set sizes greater than 2,100.

When the presence of OOD samples in the unlabeled pool is considered, as shown in \figref{fig:active_learning:cifar10_ood}, DIAL outperforms all other baselines. This can be explained by \figref{fig:active_learning:cifar10_ood_ratio} of the appendix, which shows the ratio of OOD samples to the training set size. The figure indicates that DIAL selects fewer OOD samples, which explains its good performance. It is worth realizing that for all OOD scenarios, DIAL was able to better select in-distribution samples without any expert telling it what is the distribution and only using the unlabeled test features. This shows that DIAL is a universal approach which can adapt to any distribution shift. 

\begin{figure}[tb]
\centering
\begin{subfigure}{0.49\textwidth}
\centerline{\includegraphics[width=0.89\textwidth]{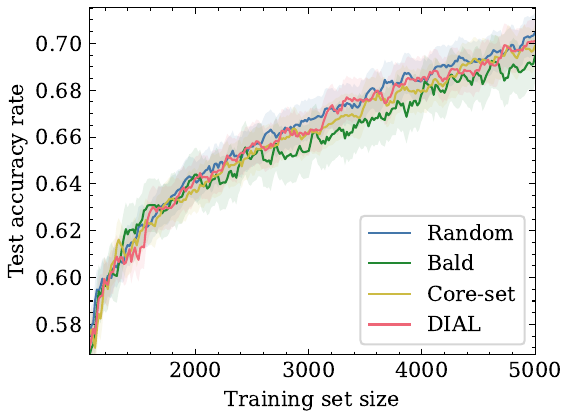}}
\caption{CIFAR10 with IND only}
\label{fig:active_learning:cifar10}
\end{subfigure}
\begin{subfigure}{0.49\textwidth}
\centerline{\includegraphics[width=0.89\textwidth]{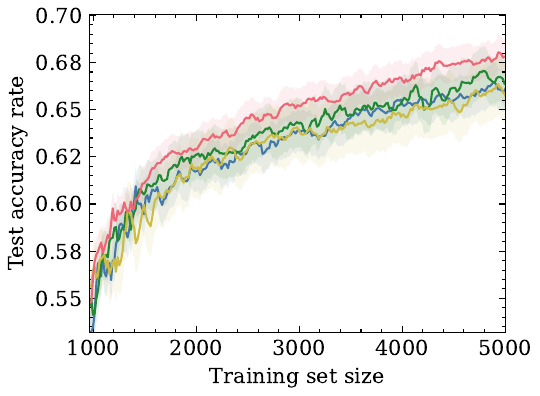}}
\caption{CIFAR10 with OOD}
\label{fig:active_learning:cifar10_ood}
\end{subfigure}
\caption{Active learning for CIFAR10}
\end{figure}

\Tabref{tab:active_learning:cifar10_ood} shows the number of samples required for different accuracy levels. For the same accuracy rate of 0.65, DIAL needs 21.3\%, 34.8\%, and 31.1\% less training samples than BLAD, Core-set, and Random strategies.

\begin{table}[tb]
\centering
\caption{CIFAR10 in the presence of OOD samples}
\small
\begin{tabular}{l|c|c|c}
\toprule
Methods & 65\% Acc. & 62\%  Acc. & 60\%  Acc. \\
\toprule
Random &  3604 & 1956 & 1444\\
Core-set & 3812 & 1844 & 1332\\
BALD & 3156 & 1636 & 1316\\
DIAL & \textbf{2484 (-21.3\%)} & \textbf{1556 (-4.9\%)} & \textbf{1188 (-9.7\%)} \\
\bottomrule
\end{tabular}

\label{tab:active_learning:cifar10_ood}
\end{table}

\section{Limitations} \label{sec:active_learning:limitations}
The proposed DIAL algorithm is a min-max strategy for the individual settings. However, DIAL may not be the most beneficial approach in scenarios where the unlabeled pool is very similar to the test set, where different selection strategies may preform similarly and with smaller complexity.
This limitation of DIAL is supported by the experimental results of section~\ref{sec:active_learning:cifar10_results}, where the DIAL algorithm performed similarly to random selection for the CIFAR10 dataset.

Another limitation of DIAL is that it has a higher overhead computation compared to other active learning methods such as BALD. This is because DIAL involves computing the regret on the test set, which requires additional computations and could become significant when the unlabeled pool or the test set are very large.

\begin{figure}[tb]
\centering
\centerline{\includegraphics[width=0.5\columnwidth]{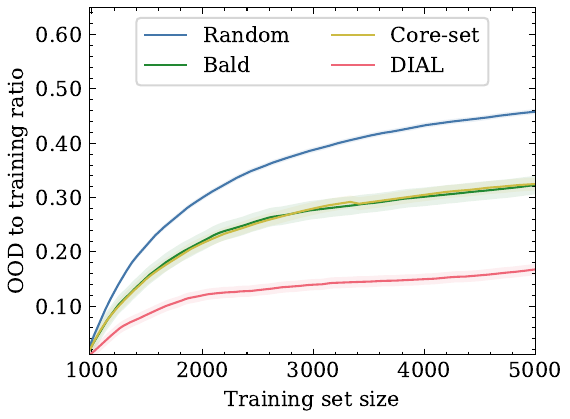}} 
\caption{The amount of chosen OOD samples for CIFAR10}
\label{fig:active_learning:cifar10_ood_ratio}
\end{figure}

\section{Concluding remarks} \label{sec:active_learning:conclusions}
In this chapter, we propose a min-max active learning criterion for the individual setting, which does not rely on any distributional assumptions. We have also developed an efficient method for computing this criterion for DNNs. Our experimental results demonstrate that the proposed strategy, referred to as DIAL, is particularly effective in the presence of OOD samples in the unlabeled pool. Specifically, our results show that DIAL requires 22.8\%, 11.0\%, and 33.8\% fewer samples to achieve a certain level of accuracy on the MNIST, EMNIST, and CIFAR10 datasets, respectively.

As future work, once can investigate batch acquisition criteria that take into account batch selection. This will allow us to consider the relationship between the selected samples and the overall composition of the batch, which may lead to even further improvements in performance.

\chapter{Future Research} \label{chap:future_research}
Throughout this work, we have shown that DNN based learners have achieved remarkable performance in classification. Despite this success, the theoretical understanding of the generalization capabilities of neural networks is still limited, making the DNN hypothesis class an intriguing area of research.

The main problem with the pNML procedure in DNN is that the model might fit exactly to each possible test label and, therefore, using the regret as a generalization measure might be useless.
We introduced in chapter~\ref{chap:overparameterized_linear_regression} the pNML with norm constraint and we can apply a similar approach for the DNN hypothesis family: Find a property that defines the complexity of the DNN model and use it to constrain the pNML learner.
Recently, researchers have proposed many ways to quantify the complexity of DNN models: the number of degrees of freedom~\cite{gao2016degrees}, prequential code~\cite{blier2018description} and intrinsic dimension~\cite{li2018measuring}.
Using these methods as a constraint in the pNML scheme might lead to a meaningful regret and a better understanding of DNN generalization performance.

An additional potential research direction is exploring the relationship between stability and pNML regret.
Stability is a classical approach for proving generalization bounds:
A stable learning algorithm is one for which when the training data is modified slightly the prediction does not change much.
This approach has been used to obtain relatively strong generalization bounds for several convex optimization algorithms~\cite{shalev2010learnability}.
The pNML regret seems to relate to the stability criteria. During the pNML procedure, we add the test sample data to the training set with all possible labels. 
Small pNML regret usually means that there is only one label that the model is certain about and that the test label value does not change the model drastically.
Finding the relation between stability and the pNML will improve our understanding of learning algorithm generalization capabilities.

In all our derivation and suggested algorithm, we assumed that the hypothesis class was predefined. However, when dealing with real world data, it can be difficult to determine what hypothesis class should be used. In fact, the selection of the hypothesis class itself can have a significant impact on the accuracy and effectiveness of the model. In order to address this issue, one approach is known as ``twice universality'' was briefly explored in~\citet{bibas2019deep}. It involves executing the pNML algorithm over a number of hierarchical model families (such as DNN with different number of layers). By doing so, we can explore the space of possible hypothesis classes and select the one that best fits the given dataset. This approach might be particularly useful in cases where the true hypothesis class is unknown or difficult to determine a priori. By allowing for more flexibility in the choice of hypothesis class, the twice universality approach can help to improve the performance of a model given that task and dataset.

Finally, we explored the pNML learner in the context of linear regression and neural networks. The pNML is a versatile method that can be applied to other hypothesis classes as well: A straight forward research direction is to calculate the pNML for additional hypothesis class. Linear regression with L1 regularization, decision trees, and logistic regression are all popular machine learning models that are widely used in many applications, but their generalization performance is not yet well understood. Applying the pNML algorithm to these models can help to better understand their generalization capabilities and identify the optimal model for a given dataset.

\bibliographystyle{plainnat}
\bibliography{references}

\begin{thebibliography}{130}
\providecommand{\natexlab}[1]{#1}
\providecommand{\url}[1]{\texttt{#1}}
\expandafter\ifx\csname urlstyle\endcsname\relax
  \providecommand{\doi}[1]{doi: #1}\else
  \providecommand{\doi}{doi: \begingroup \urlstyle{rm}\Url}\fi

\bibitem[Alabduljabbar et~al.(2021)Alabduljabbar, Abusnaina, Meteriz-Yildiran,
  and Mohaisen]{alabduljabbar2021tldr}
Abdulrahman Alabduljabbar, Ahmed Abusnaina, {\"U}lk{\"u} Meteriz-Yildiran, and
  David Mohaisen.
\newblock Tldr: Deep learning-based automated privacy policy annotation with
  key policy highlights.
\newblock In \emph{Proceedings of the 20th Workshop on Workshop on Privacy in
  the Electronic Society}, pages 103--118, 2021.

\bibitem[Allen-Zhu et~al.(2019)Allen-Zhu, Li, and Song]{allen2019convergence}
Zeyuan Allen-Zhu, Yuanzhi Li, and Zhao Song.
\newblock A convergence theory for deep learning via over-parameterization.
\newblock In \emph{International Conference on Machine Learning}, pages
  242--252. PMLR, 2019.

\bibitem[Alquier and Hebiri(2012)]{alquier2012transductive}
Pierre Alquier and Mohamed Hebiri.
\newblock Transductive versions of the lasso and the dantzig selector.
\newblock \emph{Journal of Statistical Planning and Inference}, 142\penalty0
  (9):\penalty0 2485--2500, 2012.

\bibitem[Athalye et~al.(2018)Athalye, Carlini, and
  Wagner]{athalye2018obfuscated}
Anish Athalye, Nicholas Carlini, and David Wagner.
\newblock Obfuscated gradients give a false sense of security: Circumventing
  defenses to adversarial examples.
\newblock \emph{arXiv preprint arXiv:1802.00420}, 2018.

\bibitem[Bartlett et~al.(2017)Bartlett, Foster, and
  Telgarsky]{DBLP:conf/nips/BartlettFT17}
Peter~L. Bartlett, Dylan~J. Foster, and Matus Telgarsky.
\newblock Spectrally-normalized margin bounds for neural networks.
\newblock In \emph{Neural Inform. Process. Syst.}, pages 6240--6249, 2017.

\bibitem[Bartlett et~al.(2020)Bartlett, Long, Lugosi, and
  Tsigler]{bartlett2020benign}
Peter~L Bartlett, Philip~M Long, G{\'a}bor Lugosi, and Alexander Tsigler.
\newblock Benign overfitting in linear regression.
\newblock \emph{Proceedings of the National Academy of Sciences}, 2020.

\bibitem[Belkin et~al.(2018)Belkin, Hsu, and Mitra]{belkin2018overfitting}
Mikhail Belkin, Daniel~J. Hsu, and Partha Mitra.
\newblock Overfitting or perfect fitting? risk bounds for classification and
  regression rules that interpolate.
\newblock In \emph{Neural Inform. Process. Syst.}, 2018.

\bibitem[Belkin et~al.(2019)Belkin, Hsu, Ma, and Mandal]{belkin2019reconciling}
Mikhail Belkin, Daniel Hsu, Siyuan Ma, and Soumik Mandal.
\newblock Reconciling modern machine-learning practice and the classical
  bias--variance trade-off.
\newblock \emph{Proceedings of the National Academy of Sciences}, 116\penalty0
  (32):\penalty0 15849--15854, 2019.

\bibitem[Ben-Israel and Greville(2003)]{ben2003generalized}
Adi Ben-Israel and Thomas~NE Greville.
\newblock \emph{Generalized inverses: theory and applications}, volume~15.
\newblock Springer Science \& Business Media, 2003.

\bibitem[Bibas et~al.(2019{\natexlab{a}})Bibas, Fogel, and
  Feder]{bibas2019deep}
Koby Bibas, Yaniv Fogel, and Meir Feder.
\newblock Deep pnml: Predictive normalized maximum likelihood for deep neural
  networks.
\newblock \emph{arXiv preprint arXiv:1904.12286}, 2019{\natexlab{a}}.

\bibitem[Bibas et~al.(2019{\natexlab{b}})Bibas, Fogel, and
  Feder]{pNML_linear_regression}
Koby Bibas, Yaniv Fogel, and Meir Feder.
\newblock A new look at an old problem: A universal learning approach to linear
  regression.
\newblock In \emph{2019 IEEE International Symposium on Information Theory
  (ISIT)}, pages 2304--2308. IEEE, IEEE, 2019{\natexlab{b}}.

\bibitem[Bibas et~al.(2021)Bibas, Weiss-Dicker, Cohen, Cahan, and
  Greenspan]{bibas2021learning}
Koby Bibas, Gili Weiss-Dicker, Dana Cohen, Noa Cahan, and Hayit Greenspan.
\newblock Learning rotation invariant features for cryogenic electron
  microscopy image reconstruction.
\newblock In \emph{International Symposium on Biomedical Imaging (ISBI)}, pages
  563--566. IEEE, 2021.

\bibitem[Biggio et~al.(2013)Biggio, Corona, Maiorca, Nelson, {\v{S}}rndi{\'c},
  Laskov, Giacinto, and Roli]{biggio2013evasion}
Battista Biggio, Igino Corona, Davide Maiorca, Blaine Nelson, Nedim
  {\v{S}}rndi{\'c}, Pavel Laskov, Giorgio Giacinto, and Fabio Roli.
\newblock Evasion attacks against machine learning at test time.
\newblock In \emph{Joint European conference on machine learning and knowledge
  discovery in databases}, pages 387--402. Springer, 2013.

\bibitem[Blier and Ollivier(2018)]{blier2018description}
L{\'e}onard Blier and Yann Ollivier.
\newblock The description length of deep learning models.
\newblock In \emph{Advances in Neural Information Processing Systems}, pages
  2216--2226, 2018.

\bibitem[Carlini et~al.(2019)Carlini, Athalye, Papernot, Brendel, Rauber,
  Tsipras, Goodfellow, and Madry]{carlini2019evaluating}
Nicholas Carlini, Anish Athalye, Nicolas Papernot, Wieland Brendel, Jonas
  Rauber, Dimitris Tsipras, Ian Goodfellow, and Aleksander Madry.
\newblock On evaluating adversarial robustness.
\newblock \emph{arXiv preprint arXiv:1902.06705}, 2019.

\bibitem[Carmon et~al.(2019)Carmon, Raghunathan, Schmidt, Duchi, and
  Liang]{carmon2019unlabeled}
Yair Carmon, Aditi Raghunathan, Ludwig Schmidt, John~C Duchi, and Percy~S
  Liang.
\newblock Unlabeled data improves adversarial robustness.
\newblock In \emph{Advances in Neural Information Processing Systems}, pages
  11190--11201, 2019.

\bibitem[Chapelle et~al.(2000)Chapelle, Vapnik, and
  Weston]{chapelle2000transductive}
Olivier Chapelle, Vladimir Vapnik, and Jason Weston.
\newblock Transductive inference for estimating values of functions.
\newblock In \emph{Neural Inform. Process. Syst.}, 2000.

\bibitem[Chen et~al.(2019)Chen, Jordan, and
  Wainwright]{chen2020hopskipjumpattack}
Jianbo Chen, Michael~I Jordan, and Martin~J Wainwright.
\newblock Hopskipjumpattack: A query-efficient decision-based attack.
\newblock In \emph{2020 IEEE Symposium on Security and Privacy (SP)}, pages
  668--685, 2019.

\bibitem[Cohen et~al.(2017)Cohen, Afshar, Tapson, and
  Van~Schaik]{cohen2017emnist}
Gregory Cohen, Saeed Afshar, Jonathan Tapson, and Andre Van~Schaik.
\newblock Emnist: Extending mnist to handwritten letters.
\newblock In \emph{2017 international joint conference on neural networks
  (IJCNN)}, pages 2921--2926. IEEE, 2017.

\bibitem[Cortes and Mohri(2007)]{cortes2007transductive}
Corinna Cortes and Mehryar Mohri.
\newblock On transductive regression.
\newblock In \emph{Neural Inform. Process. Syst.}, pages 305--312, 2007.

\bibitem[Daxberger et~al.(2021)Daxberger, Kristiadi, Immer, Eschenhagen, Bauer,
  and Hennig]{daxberger2021laplace}
Erik Daxberger, Agustinus Kristiadi, Alexander Immer, Runa Eschenhagen,
  Matthias Bauer, and Philipp Hennig.
\newblock Laplace redux-effortless bayesian deep learning.
\newblock \emph{Advances in Neural Information Processing Systems},
  34:\penalty0 20089--20103, 2021.

\bibitem[Deisenroth et~al.(2020)Deisenroth, Faisal, and
  Ong]{deisenroth2020mathematics}
Marc~Peter Deisenroth, A~Aldo Faisal, and Cheng~Soon Ong.
\newblock \emph{Mathematics for machine learning}, chapter 9.3.
\newblock Cambridge University Press, 2020.

\bibitem[Deng et~al.(2009)Deng, Dong, Socher, Li, Li, and
  Fei-Fei]{deng2009imagenet}
Jia Deng, Wei Dong, Richard Socher, Li-Jia Li, Kai Li, and Li~Fei-Fei.
\newblock Imagenet: A large-scale hierarchical image database.
\newblock In \emph{Proc. Conf. Comput. Vision Pattern Recognition}, pages
  248--255. Ieee, 2009.

\bibitem[Deng(2012)]{deng2012mnist}
Li~Deng.
\newblock The mnist database of handwritten digit images for machine learning
  research [best of the web].
\newblock \emph{IEEE signal processing magazine}, 29\penalty0 (6):\penalty0
  141--142, 2012.

\bibitem[Dhamija et~al.(2018)Dhamija, G{\"u}nther, and
  Boult]{dhamija2018reducing}
Akshay~Raj Dhamija, Manuel G{\"u}nther, and Terrance Boult.
\newblock Reducing network agnostophobia.
\newblock In \emph{Neural Inform. Process. Syst.}, pages 9157--9168, 2018.

\bibitem[Dua and Graff(2017)]{Dua:2019}
Dheeru Dua and Casey Graff.
\newblock {UCI} machine learning repository, 2017.
\newblock URL \url{http://archive.ics.uci.edu/ml}.

\bibitem[Dwivedi et~al.(2020)Dwivedi, Singh, , Yu, and
  Wainwright]{dwivedi2020revisiting}
Raaz Dwivedi, Chandan Singh, , Bin Yu, and Martin Wainwright.
\newblock Revisiting complexity and the bias-variance tradeoff.
\newblock \emph{arXiv preprint arXiv:2006.10189}, 2020.

\bibitem[Dwivedi et~al.(2021)Dwivedi, Singh, Yu, and
  Wainwright]{dwivedi2021revisiting}
Raaz Dwivedi, Chandan Singh, Bin Yu, and Martin~J. Wainwright.
\newblock Revisiting minimum description length complexity in overparameterized
  models, 2021.

\bibitem[Fedorov(2013)]{fedorov2013theory}
Valerii~Vadimovich Fedorov.
\newblock \emph{Theory of optimal experiments}.
\newblock Elsevier, 2013.

\bibitem[Fisher(1936)]{fisher1936use}
Ronald~A Fisher.
\newblock The use of multiple measurements in taxonomic problems.
\newblock \emph{Annals of eugenics}, 7\penalty0 (2):\penalty0 179--188, 1936.

\bibitem[Fogel and Feder(2017)]{Fogel2017}
Yaniv Fogel and Meir Feder.
\newblock On the problem of on-line learning with log-loss.
\newblock \emph{IEEE International Symposium on Information Theory -
  Proceedings}, pages 2995--2999, 2017.
\newblock ISSN 21578095.
\newblock \doi{10.1109/ISIT.2017.8007079}.

\bibitem[Fogel and Feder(2018)]{Fogel2018}
Yaniv Fogel and Meir Feder.
\newblock Universal batch learning with log-loss.
\newblock In \emph{2018 IEEE International Symposium on Information Theory
  (ISIT)}, pages 21--25. IEEE, 2018.

\bibitem[Fogel and Feder(2019)]{fogel2019universal}
Yaniv Fogel and Meir Feder.
\newblock Universal learning of individual data.
\newblock In \emph{Int. Symp. on Information Theory}, 2019.

\bibitem[Gal and Ghahramani(2016)]{gal2016dropout}
Yarin Gal and Zoubin Ghahramani.
\newblock Dropout as a bayesian approximation: Representing model uncertainty
  in deep learning.
\newblock In \emph{Int. Conf. Mach. Learning}, pages 1050--1059, 2016.

\bibitem[Gal et~al.(2017)Gal, Islam, and Ghahramani]{gal2017deep}
Yarin Gal, Riashat Islam, and Zoubin Ghahramani.
\newblock Deep bayesian active learning with image data.
\newblock In \emph{International Conference on Machine Learning}, pages
  1183--1192. PMLR, 2017.

\bibitem[Gao and Jojic(2016)]{gao2016degrees}
Tianxiang Gao and Vladimir Jojic.
\newblock Degrees of freedom in deep neural networks.
\newblock In \emph{Proceedings of the Thirty-Second Conference on Uncertainty
  in Artificial Intelligence}, 2016.

\bibitem[Goodfellow et~al.(2016a)Goodfellow, Bengio, and
  Courville]{goodfellow2016deep}
Ian Goodfellow, Yoshua Bengio, and Aaron Courville.
\newblock \emph{Deep learning}.
\newblock MIT press, 2016a.

\bibitem[Goodfellow et~al.(2014)Goodfellow, Shlens, and
  Szegedy]{goodfellow2014explaining}
Ian~J Goodfellow, Jonathon Shlens, and Christian Szegedy.
\newblock Explaining and harnessing adversarial examples.
\newblock \emph{arXiv preprint arXiv:1412.6572}, 2014.

\bibitem[Grimmer et~al.(2021)Grimmer, Roberts, and Stewart]{grimmer2021machine}
Justin Grimmer, Margaret~E Roberts, and Brandon~M Stewart.
\newblock Machine learning for social science: An agnostic approach.
\newblock \emph{Annual Review of Political Science}, 24:\penalty0 395--419,
  2021.

\bibitem[Gr{\"u}nwald(2007)]{grunwald2007minimum}
Peter~D Gr{\"u}nwald.
\newblock \emph{The minimum description length principle}.
\newblock MIT press, 2007.

\bibitem[Gunasekar et~al.(2018)Gunasekar, Lee, Soudry, and
  Srebro]{gunasekar2018implicit}
Suriya Gunasekar, Jason~D. Lee, Daniel Soudry, and Nati Srebro.
\newblock Implicit bias of gradient descent on linear convolutional networks.
\newblock In \emph{Neural Inform. Process. Syst.}, 2018.

\bibitem[Guo et~al.(2018)Guo, Rana, Cisse, and van~der
  Maaten]{guo2018countering}
Chuan Guo, Mayank Rana, Moustapha Cisse, and Laurens van~der Maaten.
\newblock Countering adversarial images using input transformations.
\newblock In \emph{International Conference on Learning Representations}, 2018.
\newblock URL \url{https://openreview.net/forum?id=SyJ7ClWCb}.

\bibitem[Hastie and Tibshirani(2001)]{hastie2001friedman}
Tibshirani Hastie and Robert Tibshirani.
\newblock Friedman.
\newblock \emph{The Elements of Statistical Learning Springer}, page~52, 2001.

\bibitem[Hastie et~al.(2019)Hastie, Montanari, Rosset, and
  Tibshirani]{hastie2019surprises}
Trevor Hastie, Andrea Montanari, Saharon Rosset, and Ryan~J Tibshirani.
\newblock Surprises in high-dimensional ridgeless least squares interpolation.
\newblock \emph{arXiv preprint arXiv:1903.08560}, 2019.

\bibitem[Hayes(1996)]{hayes19969}
Monson~H Hayes.
\newblock 9.4: Recursive least squares.
\newblock \emph{Statistical Digital Signal Processing and Modeling}, page 541,
  1996.

\bibitem[He et~al.(2016)He, Zhang, Ren, and Sun]{he2016deep}
Kaiming He, Xiangyu Zhang, Shaoqing Ren, and Jian Sun.
\newblock Deep residual learning for image recognition.
\newblock In \emph{Proc. Conf. Comput. Vision Pattern Recognition}, pages
  770--778, 2016.

\bibitem[Hendrycks and Gimpel(2017{\natexlab{a}})]{hendrycks17baseline}
Dan Hendrycks and Kevin Gimpel.
\newblock A baseline for detecting misclassified and out-of-distribution
  examples in neural networks.
\newblock \emph{Int. Conf. on Learning Representations}, 2017{\natexlab{a}}.

\bibitem[Hendrycks and Gimpel(2017{\natexlab{b}})]{hendrycks2016baseline}
Dan Hendrycks and Kevin Gimpel.
\newblock A baseline for detecting misclassified and out-of-distribution
  examples in neural networks.
\newblock \emph{arXiv preprint arXiv:1610.02136}, 2017{\natexlab{b}}.

\bibitem[Hendrycks et~al.(2019)Hendrycks, Mazeika, Kadavath, and
  Song]{DBLP:conf/nips/HendrycksMKS19}
Dan Hendrycks, Mantas Mazeika, Saurav Kadavath, and Dawn Song.
\newblock Using self-supervised learning can improve model robustness and
  uncertainty.
\newblock In \emph{Neural Inform. Process. Syst.}, 2019.

\bibitem[Hern{\'a}ndez-Lobato and Adams(2015)]{hernandez2015probabilistic}
Jos{\'e}~Miguel Hern{\'a}ndez-Lobato and Ryan Adams.
\newblock Probabilistic backpropagation for scalable learning of bayesian
  neural networks.
\newblock In \emph{Int. Conf. Mach. Learning}, pages 1861--1869, 2015.

\bibitem[Hirai and Yamanishi(2011)]{hirai2011efficient}
So~Hirai and Kenji Yamanishi.
\newblock Efficient computation of normalized maximum likelihood coding for
  gaussian mixtures with its applications to optimal clustering.
\newblock In \emph{Int. Symp. on Information Theory}, 2011.

\bibitem[Hoerl and Kennard(1970)]{ridgeregression}
A.E. Hoerl and R~Kennard.
\newblock Ridge regression: Biased estimation for nonorthogonal problems.
\newblock \emph{Technometrics}, 12, 1970.

\bibitem[Houlsby et~al.(2011)Houlsby, Husz{\'a}r, Ghahramani, and
  Lengyel]{houlsby2011bayesian}
Neil Houlsby, Ferenc Husz{\'a}r, Zoubin Ghahramani, and M{\'a}t{\'e} Lengyel.
\newblock Bayesian active learning for classification and preference learning.
\newblock \emph{arXiv preprint arXiv:1112.5745}, 2011.

\bibitem[Huang et~al.(2017)Huang, Liu, van~der Maaten, and
  Weinberger]{huang2017densely}
Gao Huang, Zhuang Liu, Laurens van~der Maaten, and Kilian~Q Weinberger.
\newblock Densely connected convolutional networks.
\newblock In \emph{Proc. Conf. Comput. Vision Pattern Recognition}, 2017.

\bibitem[Huang(2021)]{Huang2021deepal}
Kuan-Hao Huang.
\newblock Deepal: Deep active learning in python.
\newblock \emph{arXiv preprint arXiv:2111.15258}, 2021.

\bibitem[James et~al.(2013)James, Witten, Hastie, and
  Tibshirani]{james2013introduction}
Gareth James, Daniela Witten, Trevor Hastie, and Robert Tibshirani.
\newblock \emph{An introduction to statistical learning}, volume 112.
\newblock Springer, 2013.

\bibitem[Jiang et~al.(2020)Jiang, Neyshabur, Mobahi, Krishnan, and
  Bengio]{DBLP:conf/iclr/JiangNMKB20}
Yiding Jiang, Behnam Neyshabur, Hossein Mobahi, Dilip Krishnan, and Samy
  Bengio.
\newblock Fantastic generalization measures and where to find them.
\newblock In \emph{Int. Conf. on Learning Representations}, 2020.

\bibitem[Kakade et~al.(2009)Kakade, Sridharan, and
  Tewari]{kakade2009complexity}
Sham~M Kakade, Karthik Sridharan, and Ambuj Tewari.
\newblock On the complexity of linear prediction: Risk bounds, margin bounds,
  and regularization.
\newblock In \emph{Advances in neural information processing systems}, pages
  793--800, 2009.

\bibitem[Kaufman et~al.(2019)Kaufman, Bibas, Borenstein, Chertok, and
  Hassner]{DBLP:conf/bmvc/KaufmanBBCH19}
Dotan Kaufman, Koby Bibas, Eran Borenstein, Michael Chertok, and Tal Hassner.
\newblock Balancing specialization, generalization, and compression for
  detection and tracking.
\newblock In \emph{Proc. British Mach. Vision Conf.}, 2019.

\bibitem[Krizhevsky et~al.(2014)Krizhevsky, Nair, and
  Hinton]{krizhevsky2014cifar}
Alex Krizhevsky, Vinod Nair, and Geoffrey Hinton.
\newblock The cifar-10 dataset.
\newblock \emph{online: http://www. cs. toronto. edu/kriz/cifar. html}, 55,
  2014.

\bibitem[Lawson and Hanson(1995)]{lawson1995solving}
Charles~L Lawson and Richard~J Hanson.
\newblock \emph{Solving least squares problems}, volume~15.
\newblock Siam, 1995.

\bibitem[LeCun and Cortes(2010)]{lecun-mnisthandwrittendigit-2010}
Yann LeCun and Corinna Cortes.
\newblock {MNIST} handwritten digit database.
\newblock \emph{online: http://yann.lecun.com/exdb/mnist/}, 2:\penalty0 18,
  2010.
\newblock URL \url{http://yann.lecun.com/exdb/mnist/}.

\bibitem[Lee et~al.(2018)Lee, Lee, Lee, and Shin]{lee2018simple}
Kimin Lee, Kibok Lee, Honglak Lee, and Jinwoo Shin.
\newblock A simple unified framework for detecting out-of-distribution samples
  and adversarial attacks.
\newblock In \emph{Neural Inform. Process. Syst.}, 2018.

\bibitem[Lei et~al.(2021)Lei, Hu, and Lee]{lei2021near}
Qi~Lei, Wei Hu, and Jason Lee.
\newblock Near-optimal linear regression under distribution shift.
\newblock In \emph{Int. Conf. Mach. Learning}, 2021.

\bibitem[Li et~al.(2018)Li, Farkhoor, Liu, and Yosinski]{li2018measuring}
Chunyuan Li, Heerad Farkhoor, Rosanne Liu, and Jason Yosinski.
\newblock Measuring the intrinsic dimension of objective landscapes.
\newblock In \emph{Int. Conf. on Learning Representations}, 2018.

\bibitem[Liang et~al.(2018)Liang, Li, and Srikant]{liang2017enhancing}
Shiyu Liang, Yixuan Li, and R.~Srikant.
\newblock Enhancing the reliability of out-of-distribution image detection in
  neural networks.
\newblock In \emph{Int. Conf. on Learning Representations}, 2018.

\bibitem[Liang et~al.(2020)Liang, Rakhlin, et~al.]{liang2020just}
Tengyuan Liang, Alexander Rakhlin, et~al.
\newblock Just interpolate: Kernel ridgeless regression can generalize.
\newblock \emph{Annals of Statistics}, 48\penalty0 (3):\penalty0 1329--1347,
  2020.

\bibitem[Liu and Dobriban(2020)]{DBLP:conf/iclr/LiuD20}
Sifan Liu and Edgar Dobriban.
\newblock Ridge regression: Structure, cross-validation, and sketching.
\newblock In \emph{Int. Conf. on Learning Representations}, 2020.

\bibitem[Liu et~al.(2020)Liu, Wang, Owens, and Li]{liu2020energy}
Weitang Liu, Xiaoyun Wang, John Owens, and Yixuan Li.
\newblock Energy-based out-of-distribution detection.
\newblock \emph{Advances in Neural Information Processing Systems}, 2020.

\bibitem[Liu et~al.(2015)Liu, Luo, Wang, and Tang]{liu2015faceattributes}
Ziwei Liu, Ping Luo, Xiaogang Wang, and Xiaoou Tang.
\newblock Deep learning face attributes in the wild.
\newblock In \emph{Proc. Int. Conf. Comput. Vision}, December 2015.

\bibitem[Ma et~al.(2019)Ma, Wu, et~al.]{ma2019generalization}
Chao Ma, Lei Wu, et~al.
\newblock On the generalization properties of minimum-norm solutions for
  over-parameterized neural network models.
\newblock \emph{arXiv preprint arXiv:1912.06987}, 2019.

\bibitem[MacKay(1992)]{mackay1992information}
David~JC MacKay.
\newblock Information-based objective functions for active data selection.
\newblock \emph{Neural computation}, 4\penalty0 (4):\penalty0 590--604, 1992.

\bibitem[Madry et~al.(2017)Madry, Makelov, Schmidt, Tsipras, and
  Vladu]{madry2017towards}
Aleksander Madry, Aleksandar Makelov, Ludwig Schmidt, Dimitris Tsipras, and
  Adrian Vladu.
\newblock Towards deep learning models resistant to adversarial attacks.
\newblock \emph{arXiv preprint arXiv:1706.06083}, 2017.

\bibitem[Merhav and Feder(1998)]{merhav1998universal}
Neri Merhav and Meir Feder.
\newblock Universal prediction.
\newblock \emph{Trans. on Inform. Theory}, 44\penalty0 (6):\penalty0
  2124--2147, 1998.

\bibitem[Miyaguchi(2017)]{miyaguchi2017normalized}
Kohei Miyaguchi.
\newblock Normalized maximum likelihood with luckiness for multivariate normal
  distributions.
\newblock \emph{arXiv preprint arXiv:1708.01861}, 2017.

\bibitem[Muthukumar et~al.(2020)Muthukumar, Vodrahalli, Subramanian, and
  Sahai]{muthukumar2020harmless}
Vidya Muthukumar, Kailas Vodrahalli, Vignesh Subramanian, and Anant Sahai.
\newblock Harmless interpolation of noisy data in regression.
\newblock \emph{IEEE Journal on Selected Areas in Information Theory}, 2020.

\bibitem[Nakkiran et~al.(2020)Nakkiran, Venkat, Kakade, and
  Ma]{nakkiran2020optimal}
Preetum Nakkiran, Prayaag Venkat, Sham Kakade, and Tengyu Ma.
\newblock Optimal regularization can mitigate double descent.
\newblock \emph{arXiv preprint arXiv:2003.01897}, 2020.

\bibitem[Netzer et~al.(2011)Netzer, Wang, Coates, Bissacco, Wu, and
  Ng]{netzer2011reading}
Yuval Netzer, Tao Wang, Adam Coates, Alessandro Bissacco, Bo~Wu, and Andrew~Y
  Ng.
\newblock Reading digits in natural images with unsupervised feature learning.
\newblock In \emph{Neural Inform. Process. Syst. Workshops}, volume 2011,
  page~5, 2011.

\bibitem[Neyshabur et~al.(2018)Neyshabur, Bhojanapalli, and
  Srebro]{DBLP:conf/iclr/NeyshaburBS18}
Behnam Neyshabur, Srinadh Bhojanapalli, and Nathan Srebro.
\newblock A pac-bayesian approach to spectrally-normalized margin bounds for
  neural networks.
\newblock In \emph{Int. Conf. on Learning Representations}, 2018.

\bibitem[Nichani et~al.(2020)Nichani, Radhakrishnan, and
  Uhler]{nichani2020deeper}
Eshaan Nichani, Adityanarayanan Radhakrishnan, and Caroline Uhler.
\newblock Do deeper convolutional networks perform better?
\newblock \emph{arXiv preprint arXiv:2010.09610}, 2020.

\bibitem[Olson et~al.(2017)Olson, La~Cava, Orzechowski, Urbanowicz, and
  Moore]{Olson2017PMLB}
Randal~S. Olson, William La~Cava, Patryk Orzechowski, Ryan~J. Urbanowicz, and
  Jason~H. Moore.
\newblock Pmlb: a large benchmark suite for machine learning evaluation and
  comparison.
\newblock \emph{BioData Mining}, 10\penalty0 (1):\penalty0 36, Dec 2017.
\newblock ISSN 1756-0381.
\newblock \doi{10.1186/s13040-017-0154-4}.
\newblock URL \url{https://doi.org/10.1186/s13040-017-0154-4}.

\bibitem[Papadopoulos et~al.(2021)Papadopoulos, Rajati, Shaikh, and
  Wang]{PAPADOPOULOS2021138}
Aristotelis-Angelos Papadopoulos, Mohammad~Reza Rajati, Nazim Shaikh, and
  Jiamian Wang.
\newblock Outlier exposure with confidence control for out-of-distribution
  detection.
\newblock \emph{Neurocomputing}, 2021.

\bibitem[Papernot et~al.(2017)Papernot, McDaniel, Goodfellow, Jha, Celik, and
  Swami]{papernot2017practical}
Nicolas Papernot, Patrick McDaniel, Ian Goodfellow, Somesh Jha, Z~Berkay Celik,
  and Ananthram Swami.
\newblock Practical black-box attacks against machine learning.
\newblock In \emph{Proceedings of the 2017 ACM on Asia conference on computer
  and communications security}, pages 506--519. ACM, 2017.

\bibitem[Phillips et~al.(2011)Phillips, Grother, and
  Micheals]{phillips2011evaluation}
P~Jonathon Phillips, Patrick Grother, and Ross Micheals.
\newblock Evaluation methods in face recognition.
\newblock In \emph{Handbook of face recognition}, pages 551--574. Springer,
  2011.

\bibitem[Qin et~al.(2019)Qin, Martens, Gowal, Krishnan, Dvijotham, Fawzi, De,
  Stanforth, and Kohli]{qin2019adversarial}
Chongli Qin, James Martens, Sven Gowal, Dilip Krishnan, Krishnamurthy
  Dvijotham, Alhussein Fawzi, Soham De, Robert Stanforth, and Pushmeet Kohli.
\newblock Adversarial robustness through local linearization.
\newblock In \emph{Advances in Neural Information Processing Systems}, pages
  13824--13833, 2019.

\bibitem[Raginsky and Rakhlin(2011)]{raginsky2011information}
Maxim Raginsky and Alexander Rakhlin.
\newblock Information-based complexity, feedback and dynamics in convex
  programming.
\newblock \emph{IEEE Transactions on Information Theory}, 57\penalty0
  (10):\penalty0 7036--7056, 2011.

\bibitem[Ren et~al.(2021)Ren, Xiao, Chang, Huang, Li, Gupta, Chen, and
  Wang]{ren2021survey}
Pengzhen Ren, Yun Xiao, Xiaojun Chang, Po-Yao Huang, Zhihui Li, Brij~B Gupta,
  Xiaojiang Chen, and Xin Wang.
\newblock A survey of deep active learning.
\newblock \emph{ACM computing surveys (CSUR)}, 54\penalty0 (9):\penalty0 1--40,
  2021.

\bibitem[Roos(2004)]{roos2004mdl}
Teemu Roos.
\newblock Mdl regression and denoising.
\newblock 2004.

\bibitem[Samangouei et~al.(2018)Samangouei, Kabkab, and
  Chellappa]{samangouei2018defensegan}
Pouya Samangouei, Maya Kabkab, and Rama Chellappa.
\newblock Defense-{GAN}: Protecting classifiers against adversarial attacks
  using generative models.
\newblock In \emph{International Conference on Learning Representations}, 2018.
\newblock URL \url{https://openreview.net/forum?id=BkJ3ibb0-}.

\bibitem[Sastry and Oore(2020)]{gram}
Chandramouli~Shama Sastry and Sageev Oore.
\newblock Detecting out-of-distribution examples with {G}ram matrices.
\newblock In \emph{Int. Conf. Mach. Learning}, 2020.

\bibitem[Scheirer et~al.(2012)Scheirer, de~Rezende~Rocha, Sapkota, and
  Boult]{scheirer2012toward}
Walter~J Scheirer, Anderson de~Rezende~Rocha, Archana Sapkota, and Terrance~E
  Boult.
\newblock Toward open set recognition.
\newblock \emph{Trans. Pattern Anal. Mach. Intell.}, 35\penalty0 (7):\penalty0
  1757--1772, 2012.

\bibitem[Sener and Savarese(2017)]{sener2017active}
Ozan Sener and Silvio Savarese.
\newblock Active learning for convolutional neural networks: A core-set
  approach.
\newblock \emph{arXiv preprint arXiv:1708.00489}, 2017.

\bibitem[Sengupta and Sowell(2020)]{sengupta2020asymptotic}
Nandana Sengupta and Fallaw Sowell.
\newblock On the asymptotic distribution of ridge regression estimators using
  training and test samples.
\newblock \emph{Econometrics}, 8\penalty0 (4):\penalty0 39, 2020.

\bibitem[Sermanet et~al.(2012)Sermanet, Chintala, and
  LeCun]{sermanet2012convolutional}
Pierre Sermanet, Soumith Chintala, and Yann LeCun.
\newblock Convolutional neural networks applied to house numbers digit
  classification.
\newblock In \emph{Proceedings of the 21st international conference on pattern
  recognition (ICPR2012)}, pages 3288--3291. IEEE, 2012.

\bibitem[Shah et~al.(2018)Shah, Kyrillidis, and Sanghavi]{shah2018minimum}
Vatsal Shah, Anastasios Kyrillidis, and Sujay Sanghavi.
\newblock Minimum norm solutions do not always generalize well for
  over-parameterized problems.
\newblock \emph{stat}, 1050:\penalty0 16, 2018.

\bibitem[Shalev-Shwartz et~al.(2010)Shalev-Shwartz, Shamir, Srebro, and
  Sridharan]{shalev2010learnability}
Shai Shalev-Shwartz, Ohad Shamir, Nathan Srebro, and Karthik Sridharan.
\newblock Learnability, stability and uniform convergence.
\newblock \emph{Journal of Machine Learning Research}, 11\penalty0
  (Oct):\penalty0 2635--2670, 2010.

\bibitem[Shamir(2015)]{shamir2015sample}
Ohad Shamir.
\newblock The sample complexity of learning linear predictors with the squared
  loss.
\newblock \emph{The Journal of Machine Learning Research}, 16\penalty0
  (1):\penalty0 3475--3486, 2015.

\bibitem[Shayovitz and Feder(2021)]{shayovitz2021universal}
Shachar Shayovitz and Meir Feder.
\newblock Universal active learning via conditional mutual information
  minimization.
\newblock \emph{IEEE Journal on Selected Areas in Information Theory},
  2\penalty0 (2):\penalty0 720--734, 2021.

\bibitem[Shayovitz and Meir(2019)]{Shayovitz2019}
Shachar Shayovitz and Feder Meir.
\newblock Minimax active learning via minimal model capacity.
\newblock \emph{Submitted to Machine Learning for Signal Processing Workshop
  (MLSP)}, 2019.

\bibitem[Shtarkov(1987)]{shtar1987universal}
Yurii~Mikhailovich Shtarkov.
\newblock Universal sequential coding of single messages.
\newblock \emph{Problemy Peredachi Informatsii}, 23\penalty0 (3):\penalty0
  3--17, 1987.

\bibitem[Simon(2015)]{simon2015almost}
Hans~U Simon.
\newblock An almost optimal pac algorithm.
\newblock In \emph{Conference on Learning Theory}, pages 1552--1563. PMLR,
  2015.

\bibitem[Singh et~al.(2021)Singh, Sengupta, Rasheed, Jayakumar, and
  Lakshminarayanan]{singh2021uncertainty}
Amitojdeep Singh, Sourya Sengupta, Mohamed~Abdul Rasheed, Varadharajan
  Jayakumar, and Vasudevan Lakshminarayanan.
\newblock Uncertainty aware and explainable diagnosis of retinal disease.
\newblock In \emph{Medical Imaging 2021: Imaging Informatics for Healthcare,
  Research, and Applications}, 2021.

\bibitem[Song et~al.(2018)Song, Kim, Nowozin, Ermon, and
  Kushman]{song2018pixeldefend}
Yang Song, Taesup Kim, Sebastian Nowozin, Stefano Ermon, and Nate Kushman.
\newblock Pixeldefend: Leveraging generative models to understand and defend
  against adversarial examples.
\newblock In \emph{International Conference on Learning Representations}, 2018.
\newblock URL \url{https://openreview.net/forum?id=rJUYGxbCW}.

\bibitem[Szegedy et~al.(2013)Szegedy, Zaremba, Sutskever, Bruna, Erhan,
  Goodfellow, and Fergus]{szegedy2013intriguing}
Christian Szegedy, Wojciech Zaremba, Ilya Sutskever, Joan Bruna, Dumitru Erhan,
  Ian Goodfellow, and Rob Fergus.
\newblock Intriguing properties of neural networks.
\newblock \emph{arXiv preprint arXiv:1312.6199}, 2013.

\bibitem[Tabacof and Valle(2016)]{tabacof2016exploring}
Pedro Tabacof and Eduardo Valle.
\newblock Exploring the space of adversarial images.
\newblock In \emph{2016 International Joint Conference on Neural Networks
  (IJCNN)}, pages 426--433. IEEE, 2016.

\bibitem[Techapanurak et~al.(2020)Techapanurak, Suganuma, and
  Okatani]{techapanurak2019hyperparameterfree}
Engkarat Techapanurak, Masanori Suganuma, and Takayuki Okatani.
\newblock Hyperparameter-free out-of-distribution detection using cosine
  similarity.
\newblock \emph{arXiv preprint arXiv:1905.10628}, 2020.

\bibitem[Tripuraneni and Mackey(2020)]{tripuraneni2020single}
Nilesh Tripuraneni and Lester Mackey.
\newblock Single point transductive prediction.
\newblock In \emph{Int. Conf. Mach. Learning}, 2020.

\bibitem[Tsigler and Bartlett(2020)]{tsigler2020benign}
Alexander Tsigler and Peter~L Bartlett.
\newblock Benign overfitting in ridge regression.
\newblock \emph{arXiv preprint arXiv:2009.14286}, 2020.

\bibitem[Valiant(1984)]{valiant1984theory}
Leslie~G Valiant.
\newblock A theory of the learnable.
\newblock \emph{Communications of the ACM}, 27\penalty0 (11):\penalty0
  1134--1142, 1984.

\bibitem[Vapnik(1992)]{vapnik1992principles}
Vladimir Vapnik.
\newblock Principles of risk minimization for learning theory.
\newblock In \emph{Neural Inform. Process. Syst.}, pages 831--838, 1992.

\bibitem[Vapnik and Chervonenkis(2015{\natexlab{a}})]{VC}
Vladimir~N Vapnik and A~Ya Chervonenkis.
\newblock On the uniform convergence of relative frequencies of events to their
  probabilities.
\newblock In \emph{Measures of complexity}, pages 11--30. Springer,
  2015{\natexlab{a}}.

\bibitem[Vapnik and Chervonenkis(2015{\natexlab{b}})]{vapnik2015uniform}
Vladimir~N Vapnik and A~Ya Chervonenkis.
\newblock On the uniform convergence of relative frequencies of events to their
  probabilities.
\newblock In \emph{Measures of complexity}, pages 11--30. Springer,
  2015{\natexlab{b}}.

\bibitem[Vyas et~al.(2018)Vyas, Jammalamadaka, Zhu, Das, Kaul, and
  Willke]{vyas2018out}
Apoorv Vyas, Nataraj Jammalamadaka, Xia Zhu, Dipankar Das, Bharat Kaul, and
  Theodore~L. Willke.
\newblock Out-of-distribution detection using an ensemble of self supervised
  leave-out classifiers.
\newblock In \emph{European Conf. Comput. Vision}, pages 560--574, 2018.

\bibitem[Wang et~al.(2018)Wang, Wang, Zhou, Ji, Gong, Zhou, Li, and
  Liu]{wang2018cosface}
Hao Wang, Yitong Wang, Zheng Zhou, Xing Ji, Dihong Gong, Jingchao Zhou, Zhifeng
  Li, and Wei Liu.
\newblock Cosface: Large margin cosine loss for deep face recognition.
\newblock In \emph{Proc. Conf. Comput. Vision Pattern Recognition}, pages
  5265--5274, 2018.

\bibitem[Wang et~al.(2016)Wang, Zhang, Li, Zhang, and Lin]{wang2016cost}
Keze Wang, Dongyu Zhang, Ya~Li, Ruimao Zhang, and Liang Lin.
\newblock Cost-effective active learning for deep image classification.
\newblock \emph{IEEE Transactions on Circuits and Systems for Video
  Technology}, 27\penalty0 (12):\penalty0 2591--2600, 2016.

\bibitem[Willers et~al.(2020)Willers, Sudholt, Raafatnia, and
  Abrecht]{willers2020safety}
Oliver Willers, Sebastian Sudholt, Shervin Raafatnia, and Stephanie Abrecht.
\newblock Safety concerns and mitigation approaches regarding the use of deep
  learning in safety-critical perception tasks.
\newblock In \emph{International Conference on Computer Safety, Reliability,
  and Security}, pages 336--350. Springer, 2020.

\bibitem[Wilson et~al.(2016)Wilson, Hu, Salakhutdinov, and
  Xing]{wilson2016stochastic}
Andrew~G Wilson, Zhiting Hu, Russ~R Salakhutdinov, and Eric~P Xing.
\newblock Stochastic variational deep kernel learning.
\newblock \emph{Advances in Neural Information Processing Systems}, 29, 2016.

\bibitem[Wong et~al.(2020)Wong, Rice, and Kolter]{Wong2020Fast}
Eric Wong, Leslie Rice, and J.~Zico Kolter.
\newblock Fast is better than free: Revisiting adversarial training.
\newblock In \emph{International Conference on Learning Representations}, 2020.

\bibitem[Xiao et~al.(2017)Xiao, Rasul, and Vollgraf]{xiao2017fashion}
Han Xiao, Kashif Rasul, and Roland Vollgraf.
\newblock Fashion-mnist: a novel image dataset for benchmarking machine
  learning algorithms.
\newblock \emph{arXiv preprint arXiv:1708.07747}, 2017.

\bibitem[Xu et~al.(2015)Xu, Ehinger, Zhang, Finkelstein, Kulkarni, and
  Xiao]{xu2015turkergaze}
Pingmei Xu, Krista~A Ehinger, Yinda Zhang, Adam Finkelstein, Sanjeev~R
  Kulkarni, and Jianxiong Xiao.
\newblock Turkergaze: Crowdsourcing saliency with webcam based eye tracking.
\newblock \emph{arXiv preprint arXiv:1504.06755}, 2015.

\bibitem[Xu et~al.(2020)Xu, Liu, Leng, and Li]{xu2020blood}
Wenjian Xu, Xuanshi Liu, Fei Leng, and Wei Li.
\newblock Blood-based multi-tissue gene expression inference with bayesian
  ridge regression.
\newblock \emph{Bioinformatics}, 36\penalty0 (12):\penalty0 3788--3794, 2020.

\bibitem[Yu et~al.(2015)Yu, Zhang, Song, Seff, and Xiao]{yu15lsun}
Fisher Yu, Yinda Zhang, Shuran Song, Ari Seff, and Jianxiong Xiao.
\newblock Lsun: Construction of a large-scale image dataset using deep learning
  with humans in the loop.
\newblock \emph{arXiv preprint arXiv:1506.03365}, 2015.

\bibitem[Yu and Aizawa(2019)]{yu2019unsupervised}
Qing Yu and Kiyoharu Aizawa.
\newblock Unsupervised out-of-distribution detection by maximum classifier
  discrepancy.
\newblock In \emph{Proc. Int. Conf. Comput. Vision}, pages 9518--9526, 2019.

\bibitem[Zagoruyko and Komodakis(2016{\natexlab{a}})]{BMVC2016_87}
Sergey Zagoruyko and Nikos Komodakis.
\newblock Wide residual networks.
\newblock In \emph{Proc. British Mach. Vision Conf.}, pages 87.1--87.12,
  2016{\natexlab{a}}.

\bibitem[Zagoruyko and Komodakis(2016{\natexlab{b}})]{zagoruyko2016wide}
Sergey Zagoruyko and Nikos Komodakis.
\newblock Wide residual networks.
\newblock In \emph{Proc. British Mach. Vision Conf.}, 2016{\natexlab{b}}.

\bibitem[Zhang et~al.(2018)Zhang, B{\"u}tepage, Kjellstr{\"o}m, and
  Mandt]{zhang2018advances}
Cheng Zhang, Judith B{\"u}tepage, Hedvig Kjellstr{\"o}m, and Stephan Mandt.
\newblock Advances in variational inference.
\newblock \emph{IEEE transactions on pattern analysis and machine
  intelligence}, 41\penalty0 (8):\penalty0 2008--2026, 2018.

\bibitem[Zhang et~al.(2017)Zhang, Bengio, Hardt, Recht, and
  Vinyals]{DBLP:conf/iclr/ZhangBHRV17}
Chiyuan Zhang, Samy Bengio, Moritz Hardt, Benjamin Recht, and Oriol Vinyals.
\newblock Understanding deep learning requires rethinking generalization.
\newblock In \emph{Int. Conf. on Learning Representations}, 2017.

\bibitem[Zhong et~al.(2017)Zhong, Song, Jain, Bartlett, and
  Dhillon]{zhong2017recovery}
Kai Zhong, Zhao Song, Prateek Jain, Peter~L Bartlett, and Inderjit~S Dhillon.
\newblock Recovery guarantees for one-hidden-layer neural networks.
\newblock In \emph{Int. Conf. Mach. Learning}, pages 4140--4149, 2017.

\bibitem[Zhou et~al.(2002)Zhou, Zhu, Li, and You]{zhou2002variants}
Jie Zhou, Yunmin Zhu, X~Rong Li, and Zhisheng You.
\newblock Variants of the greville formula with applications to exact recursive
  least squares.
\newblock \emph{SIAM journal on matrix analysis and applications}, 24\penalty0
  (1):\penalty0 150--164, 2002.

\bibitem[Zhuang et~al.(2020)Zhuang, Lin, and Toh]{zhuang2020training}
Huiping Zhuang, Zhiping Lin, and Kar-Ann Toh.
\newblock Training a multilayer network with low-memory kernel-and-range
  projection.
\newblock \emph{Journal of the Franklin Institute}, 357\penalty0 (1):\penalty0
  522--550, 2020.

\end{thebibliography}


\begin{thebibliography}{}

\end{thebibliography}

\appendix
\chapter{Appendix for chapter 2}

\section{Genie prediction upper bound}
\label{sec:norm_constrained_pnml:genie_upper_bound}
The ridge regression ERM solution is given by
\begin{equation}
\theta_N = (X_N^\top X_N + \lambda I)^{-1} X_N^\top Y_N, \quad P_{N} \triangleq \left( X_N^\top X_N + \lambda I \right)^{-1}.
\end{equation}
Using the recursive form, we can update the learnable parameters with the following formula:
\begin{equation} \label{eq:norm_constrained_pnml:rls}
\theta_{N+1} = \theta_N + \frac{P_N}{1 + x^\top P_N x} x \left(y - x^\top \theta_N \right) .
\end{equation}
The SVD decomposition of the training set data matrix
\begin{equation}
X_N^\top = 
\begin{bmatrix}
u_1 & \hdots & u_M
\end{bmatrix}
\begin{bmatrix}
\textit{diag}\left(h_1,\hdots, h_N \right) \\
\mathbf{0}
\end{bmatrix}
\begin{bmatrix}
v_1^\top \\ \vdots \\ v_N^\top
\end{bmatrix}
= UHV^\top.
\end{equation}
Notice that we are dealing with the over-parameterized region such that $M \geq N$:
\begin{equation} \label{eq:norm_constrained_pnml:1_plus_x_P_x_upper_bound_two}
1 + x^\top P_N x = 
1 +
\sum_{i=1}^N \frac{\left( u_i^\top x \right)^2}{h_i^2 + \lambda} 
+
\sum_{i=N+1}^M \frac{\left( u_i^\top x \right)^2}{\lambda}
\leq
K_0 + \frac{1}{\lambda} \norm{x_\bot}^2.
\end{equation}
where $K_0$ is the under-parameterized pNML normalization factor:
\begin{equation} \label{eq:norm_constrained_pnml:underparam_norm_facotr}
K_0 = \int_{-\infty}^{\infty} \probthetagenietag dy'
= 1 + x^\top \left(X_N^\top X_N\right)^{-1} x.
\end{equation}

The genie prediction, using the recursive formulation of \eqref{eq:norm_constrained_pnml:rls}, is
\begin{equation} \label{eq:norm_constrained_pnml:genie_with_rls}
\begin{split}
\probthetagenie
&=
\frac{1}{\sqrt{2\pi\sigma^2}}
\exp\left\{
-\frac{1}{2\sigma^2} \left(y - x^\top \hat{\theta}(z^N;x,y)\right)^2
\right\}
\\ &=
\frac{1}{\sqrt{2\pi\sigma^2}}
\exp\left\{
-\frac{1}{2\sigma^2} \left(y - x^\top \theta_N - \frac{x^\top  P_N x}{1 + x^\top P_N x} \left(y - x^\top \theta_N \right) \right)^2
\right\}   
\\ &=
\frac{1}{\sqrt{2\pi\sigma^2}}
\exp\left\{
-\frac{\left(y - x^\top \theta_N \right)^2}{2\sigma^2\left(1 + x^\top P_N x\right)^2}
\right\}.
\end{split}
\end{equation}
Plug in the upper bound from \eqref{eq:norm_constrained_pnml:1_plus_x_P_x_upper_bound_two} proves the lemma:
\begin{equation}
\probthetagenie
\leq
\frac{1}{\sqrt{2\pi\sigma^2}}
\exp\left\{
- \frac{\left(y-x^\top \theta_N \right)^2}{2\sigma^2 K_0^2 \left(1 + \frac{\norm{x_\bot}^2}{K_0 \lambda} \right)^2}
\right\}.
\end{equation}

\section{The regularization factor lower bound}
\label{sec:norm_constrained_pnml:lambda_lower_bound}
We add the test sample $(x,y)$ to the training set.
The corresponding genie weights are
\begin{equation}
\thetagenie
=
\left(X_{N+1}^\top X_{N+1} + \lambda I \right)^{-1} X_{N+1}^\top
\begin{bmatrix} y_1 & \hdots & y_N & y \end{bmatrix}^\top.
\end{equation}
The MN least squares solution of the $N+1$ samples is $
\theta_{N+1}^*$.
Using Taylor series expansion of with respect to $\lambda$ and the MN recursive formulation
\begin{equation}
\begin{split}
\norm{\thetagenie}^2 
&\geq 
\norm{\theta_N^{* 2}}- 2 \theta_N^{*\top} X_{N+1}^+ X_{N+1}^{+ \top} \theta_N^{*} \lambda
\\ &=
\norm{\theta_N^{* 2}}- 2 \left( \theta_N^{* \top} X_N^+ X_N^{+ \top} \theta_N^*
+
\frac{\left(y - x^\top \theta_N^*\right)^2}{||x_{\bot}||^2} 
x^\top X_N^+ X_N^{+ \top} x  \right).
\end{split}
\end{equation}
The second equality is derived in \appref{sec:norm_constrained_pnml:taylor_series_second_term}. 
Utilizing \Theoref{theorem:mn_norm}
the following inequality is obtained
\begin{equation}
\begin{split}
\norm{\thetagenie}^2 
&\geq
\norm{\theta_N^*}^2 + \frac{1}{\norm{x_\bot}^2}\left(y - x^\top \theta_N^*\right)^2 
\\ & \qquad \qquad
- 2 \left(\theta_N^{* \top} X_N^+ X_N^{+ \top} \theta_N^*
+
\frac{\left(y - x^\top \theta_N^*\right)^2}{||x_{\bot}||^2} 
x^\top X_N^+ X_N^{+ \top} x \right)
\lambda.
\end{split}
\end{equation}
Plug it the norm constrain $\norm{\thetagenie}^2 = \norm{\theta_N^*}^2$:
\begin{equation} \label{eq:norm_constrained_pnml:lambda_lower_bound}
\lambda \geq
\frac{1}{2}
\frac{\frac{1}{\norm{x_\bot}^2}\left(y - x^\top \theta_N^*\right)^2 }{\theta_N^{* \top} X_N^+ X_N^{+ \top} \theta_N^*
+
\frac{1}{||x_{\bot}||^2} 
x^\top X_N^+ X_N^{+ \top} x \left(y - x^\top \theta_N^*\right)^2}.
\end{equation}

\section{The pNML regret upper bound for over-parameterized linear regression}
\label{sec:norm_constrained_pnml:pnml_regret_upper_bound}
Denote $\delta \geq 0$ we relax the constraint
\begin{equation}
\norm{\theta_N^*}^2 = \norm{\theta_{N+1}}^2 \leq (1+\delta) \norm{\theta_N^*}^2.
\end{equation}
We get a perfect fit when to the following constraint is satisfied.
\begin{equation}
(1+\delta) \norm{\theta_N^*}^2
\geq 
\norm{\theta_N^*}^2 + \frac{1}{\norm{x_\bot}^2}\left(y' - x^\top \theta_N^* \right)^2
\end{equation}
\begin{equation}
x^\top \theta_N^* - \sqrt{\delta \norm{x_\bot}^2 \norm{\theta_N^*}^2} \leq y' \leq  x^\top \theta_N^* + \sqrt{\delta \norm{x_\bot}^2 \norm{\theta_N^*}^2}
\end{equation}

We split the integral of the normalization factor into two parts: one with a perfect fit and the other we upper bound with the genie upper bound (\Lemmaref{lemma:genie_upper_bound})
\begin{equation}
\begin{split}
K &\leq  
2 \int_{x^\top \theta_N^*}^{y^*} \frac{1}{\sqrt{2 \pi \sigma^2}} dy' +
2 \int_{y^*}^{\infty} 
\frac{1}{\sqrt{2\pi\sigma^2}}
\exp\left\{
- \frac{\left(y'-x^\top \theta_N \right)^2}{2\sigma^2 K_0^2 \left(1 + \frac{\norm{x_\bot}^2}{K_0 \lambda} \right)^2}
\right\}
dy'.
\\
&\leq  
2 \int_{x^\top \theta_N^*}^{y^*} \frac{1}{\sqrt{2 \pi \sigma^2}} dy' +
\int_{-\infty}^{\infty} 
\frac{1}{\sqrt{2\pi\sigma^2}}
\exp\left\{
- \frac{\left(y'-x^\top \theta_N \right)^2}{2\sigma^2 K_0^2 \left(1 + \frac{\norm{x_\bot}^2}{K_0 \lambda} \right)^2}
\right\}
dy'.
\\
&=
2 \sqrt{\frac{2 \delta}{\pi \sigma^2}  \norm{x_\bot}^2 \norm{\theta_N^*}^2}
+
K_0 \left(1 + \frac{\norm{x_\bot}^2}{K_0 \lambda}\right),
\end{split}
\end{equation}
where we fixed $\lambda$ by its lower bound \eqref{eq:norm_constrained_pnml:lambda_lower_bound} at the point 
$y^*=x^\top \theta_N^* + \sqrt{\delta \norm{x_\bot}^2 \norm{\theta_N^*}^2}$. 
\begin{equation}
\begin{split}
K
&\leq  
2 \sqrt{\frac{2 \delta}{\pi \sigma^2}  \norm{x_\bot}^2 \norm{\theta_N^*}^2}
+
K_0 + 2 \norm{x_\bot}^2 \frac{\theta_N^{* \top} X_N^+ X_N^{+ \top} \theta_N^*
+
\delta \norm{\theta_N^*}^2
x^\top X_N^+ X_N^{+ \top} x }
{\delta \norm{\theta_N^*}^2}
\\ &=
\sqrt{\frac{2 \delta}{\pi \sigma^2}  \norm{x_\bot}^2 \norm{\theta_N^*}^2}
+
K_0 
+ 
2\norm{x_\bot}^2 x^\top X_N^+ X_N^{+ \top} x +
\frac{2 \norm{x_\bot}^2}{\delta \norm{\theta_N^*}^2}
\theta_N^{* \top} X_N^+ X_N^{+ \top} \theta_N^*
\\ &=
K_0 + 2\norm{x_\bot}^2 x^\top X_N^+ X_N^{+ \top} x
\\ & \qquad \qquad
+
\frac{2 \norm{x_\bot}^2}{\norm{\theta_N^*}^2}
\theta_N^{* \top} X_N^+ X_N^{+ \top} \theta_N^*
\left[
\frac{1}{\theta_N^{* \top} X_N^+ X_N^{+ \top} \theta_N^*}
\sqrt{\frac{\norm{\theta_N^*}^6}{2 \pi \sigma^2 \norm{x_\bot}^2}}
\sqrt{\delta}
+
\frac{1}{\delta}
\right].
\end{split}
\end{equation}
To find a tight upper bound, we choose $\delta$ that minimizes the right side
\begin{equation}
\begin{split}
K &\leq 
K_0 + 2\norm{x_\bot}^2 x^\top X_N^+ X_N^{+ \top} x
+
3\sqrt[3]{
\frac{1}{\pi \sigma^2}
\norm{x_\bot}^2
\theta_N^{* \top} X_N^+ X_N^{+ \top} \theta_N^*
}.
\end{split}
\end{equation}
Plugging in $K_0$, the theorem result is obtained.

\section{Taylor series second term} \label{sec:norm_constrained_pnml:taylor_series_second_term}
We prove that
\begin{equation}
\theta_N^{* \top} X_{N+1}^+ X_{N+1}^{+ \top} \theta_N^{*} =
\theta_N^{* \top} X_N^+ X_N^{+ \top} \theta_N^*
+
\frac{\left(y - x^\top \theta_N^*\right)^2}{||x_{\bot}||^2} 
x^\top X_N^+ X_N^{+ \top} x 
\end{equation}

We use the MN recursive formulation
\begin{equation} \label{eq:norm_constrained_pnml:second_term_first_derv}
\begin{split}
& \theta_N^{* \top} X_{N+1}^+ X_{N+1}^{+ \top} \theta_N^{*}
=
\left(\theta_N^* + \frac{x_\bot}{||x_{\bot}||^2}  (y -x^\top \theta_N^* ) \right)^\top X_{N+1}^+ X_{N+1}^{+ \top} \left(\theta_N^* + \frac{x_\bot}{||x_{\bot}||^2}  (y -x^\top \theta_N^*) \right)
\\ & \quad = 
\theta_N^{* \top} X_{N+1}^+ X_{N+1}^{+ \top} \theta_N^* 
+
\frac{\left(y -x^\top \theta_N^*\right)^2}{||x_{\bot}||^2} 
x_\bot^\top X_{N+1}^+ X_{N+1}^{+ \top} x_\bot
+
\frac{2(y -x^\top \theta_N^*)}{||x_{\bot}||^2}  
x_\bot^\top X_{N+1}^+ X_{N+1}^{+ \top} \theta_N^*
\end{split}
\end{equation}
and
\begin{equation}
X_{N+1}^+ X_{N+1}^{+ \top}  
=
X_N^+ X_N^{+ \top} 
+
\frac{x^\top X_N^+
X_N^{+ \top}  x}{||x_{\bot}||^4} 
x_\bot x_\bot^\top
-
\frac{1}{||x_{\bot}||^2} 
X_N^+ X_N^{+ \top} x x_\bot^\top
-
\frac{1}{||x_{\bot}||^2} x_\bot x^\top X_N^+ X_N^{+ \top}.
\end{equation}

Substitute $X_{N+1}^+ X_{N+1}^{+ \top}$ to \eqref{eq:norm_constrained_pnml:second_term_first_derv}:
\begin{equation}
\begin{split}
\theta_N^{* \top} X_{N+1}^+ X_{N+1}^{+ \top} \theta_N^{*}
&=
\theta_N^{* \top} X_N^+ X_N^{+ \top} \theta_N^* 
+
\frac{x^\top X_N^+ X_N^{+ \top}  x}{||x_{\bot}||^4}  \left(\theta_N^{* \top} x_\bot\right)^2
\\ &
-\frac{2}{||x_{\bot}||^2} x^\top X_N^+ X_N^{+ \top} \theta_N^* \left(\theta_N^{* \top} x_\bot \right)
\\ &
+
\frac{\left(y -x^\top \theta_N^*\right)^2}{||x_{\bot}||^2} 
\left[
x_\bot^\top
X_N^+ X_N^{+ \top} 
x_\bot
+
x^\top X_N^+ X_N^{+ \top}  x
-
2 x^\top X_N^+ X_N^{+ \top} x_\bot
\right]
\\ & 
+
\frac{2(y^* -x^\top \theta_N^*)}{||x_{\bot}||^2}
\bigg[
x_\bot^\top X_N^+ X_N^{+ \top} \theta_N^*
+
\frac{1}{||x_{\bot}||^2}
x^\top X_N^+ X_N^{+ \top}  x \left(\theta_N^{* \top} x_\bot \right)
\\ & \qquad \qquad \qquad 
-
\frac{1}{||x_{\bot}||^2} 
x_\bot^\top 
X_N^+ X_N^{+ \top} x \left(\theta_N^{* \top} x_\bot \right)
-
x_\bot^\top X_N^+ X_N^{+ \top} \theta_N^* 
\bigg].
\end{split}
\end{equation}
Most terms are zero since
\begin{equation}
x_\bot^\top \theta_N^* =  x^\top \left(I - X_N^\top X_N^{+ \top} \right) X_N^+ Y_N 
= Y_N^\top  \left(X_N^+ - X_N^+ \right)x = 0
\end{equation}
\begin{equation}
x_\bot^\top X_N^+ X_N^{+ \top} 
= x^\top \left(I - X_N^\top X_N^{+ \top} \right) X_N^+ X_N^{+ \top}
=  x^\top \left(X_N^+ X_N^{+ \top} - X_N^+ X_N^{+ \top} \right) 
= x^\top \mathbf{0}.
\end{equation}
And that proves the result.

\chapter{Appendix for chapter 4}
\section{The spectrum of real dataset} \label{sec:single_layer_nn:spectrum}
We provide a visualization of the training data spectrum when propagated to the last layer of a DNN.

We feed the training data through the model up to the last layer to create the training embeddings. 
Next, we compute the correlation matrix of the training embeddings and perform an SVD decomposition.
We plot the eigenvalues for different training sets in~\figref{fig:svd}. 

\Figref{fig:DenseNet_svd} shows the eigenvalues of DenseNet-BC-100 model when ordered from the largest to smallest.
For the SVHN training set, most of the energy is located in the first 50 eigenvalues and then there is a significant decrease of approximately $10^3$.
The same phenomenon is also seen in~\figref{fig:DenseNet_svd} that shows the eigenvalues of ResNet-40 model.
In our derived regret, if the test sample is located in the subspace that is associated with small eigenvalues (for example indices 50 or above for DenseNet trained with SVHN) then $x^\top g$ is large and so is the pNML regret.

For both DensNet and ResNet models, the values of the eigenvalues of CIFAR-100 seem to be spread more evenly compared to CIFAR-10, and the CIFAR-10 are more uniform than the SVHN. 
How much the eigenvalues are spread can indicate the variability of the set: SVHN is a set of digits that is much more constrained than CIFAR-100 which has 100 different classes.

\begin{figure}[tb]
\centering
\begin{subfigure}[t]{0.49\linewidth}
\includegraphics[width=1.0\textwidth]{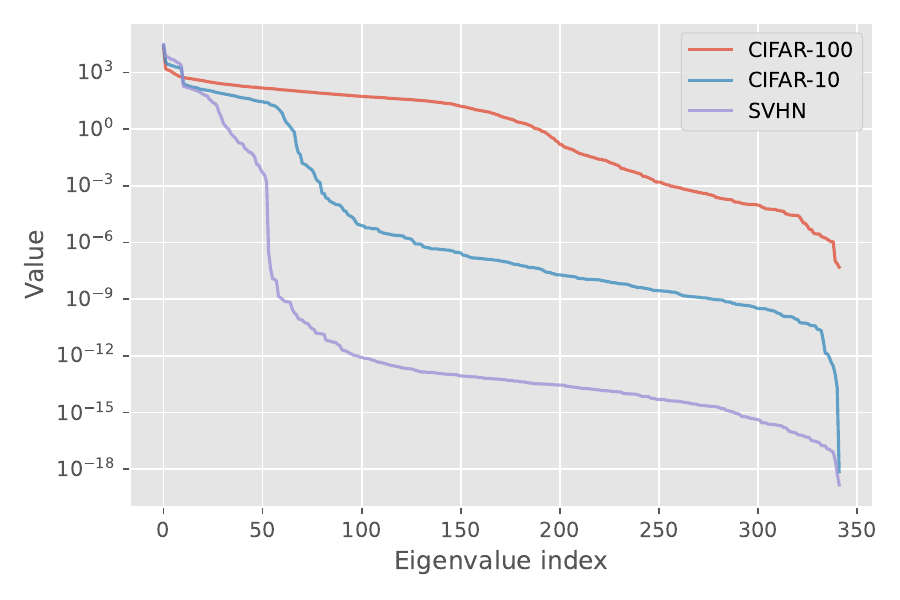}
\caption{DenseNet-BC-100  \label{fig:DenseNet_svd}}   
\end{subfigure}
\begin{subfigure}[t]{0.49\linewidth}
\includegraphics[width=1.0\textwidth]{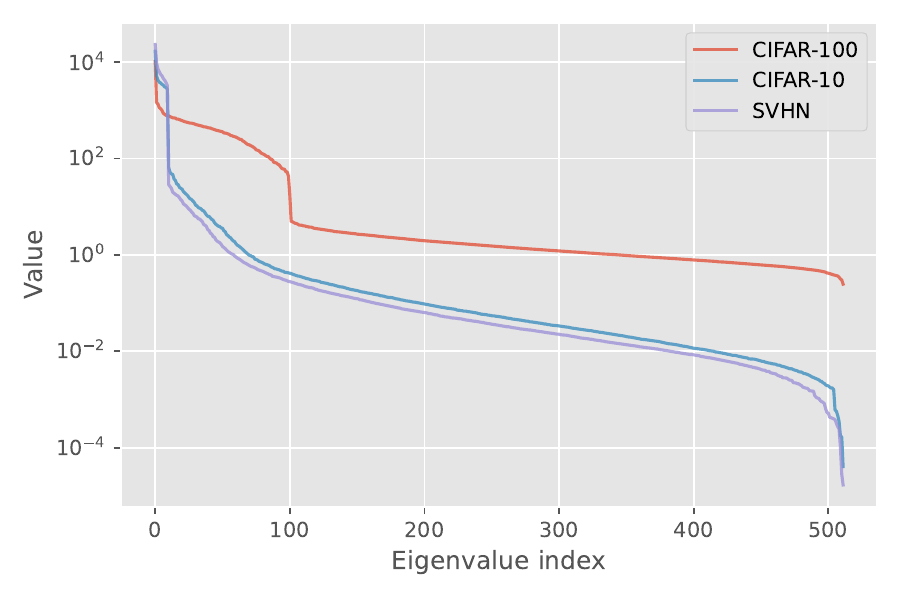}
\caption{ResNet-40 \label{fig:ResNet_svd}}   
\end{subfigure}
\caption{The spectrum of the training embeddings.}
\label{fig:svd}
\end{figure}

\section{Gram vs. Gram+pNML}
\label{sec:single_layer_nn:gram_vs_gram_plus_pnml}
We further explore the benefit of the pNML regret in detecting OOD samples over the Gram approach.
We focus on the DenseNet model with CIFAR-100 as the training set and LSUN (C) as the OOD set.

\Figref{fig:ind_hist} shows the 2D histogram of the IND set based on the pNML regret values and Gram scores. 
In addition, we plotted the best threshold for separating the IND and OOD of these sets.
pNML regret values less than 0.0024 and Gram scores below 0.0017 qualify as IND samples by both the pNML and Gram scores. 
Gram and Gram+pNML do not succeed to classify 1205 and 891 out of a total 10,000 IND samples respectively.

\Figref{fig:ood_hist} presents the 2D histogram of the LSUN (C) as OOD set. 
For regret values greater than 0.0024 and Gram score lower than 0.0017, the pNML succeeds to classify as IND but the Gram fails: 
There are 473 samples that the pNML classifies as OOD but the Gram fails, in contrast to 76 samples classified as such by the Gram and not by the pNML regret.
Most of the pNML improvement is in assigning a high score to OOD samples while there is not much change in the rank of the IND ones.

\begin{figure}[tb]
    \centering
\begin{subfigure}[t]{0.49\linewidth}
    \includegraphics[width=1.0\textwidth]{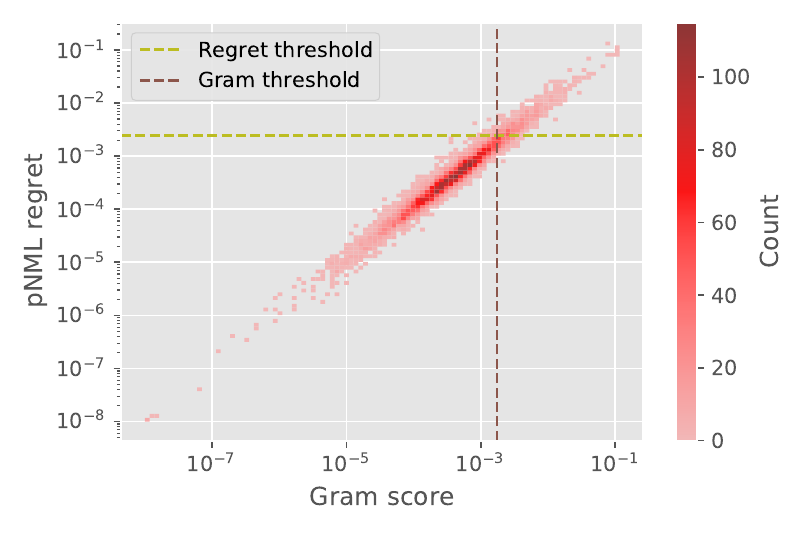}
    \caption{IND 2D histogram  \label{fig:ind_hist}}   
\end{subfigure}
\begin{subfigure}[t]{0.49\linewidth}
    \includegraphics[width=1.0\textwidth]{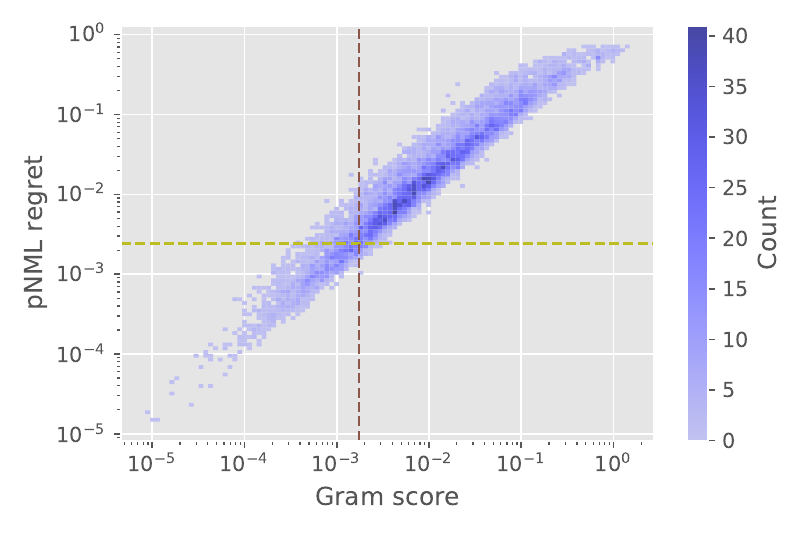}
    \caption{OOD 2D histogram \label{fig:ood_hist}}   
\end{subfigure}
\caption{2D histogram of the pNML regret and the Gram score}
\label{fig:histograms}
\end{figure}

\begin{table}[tb]
\centering
\caption{DenseNet-BC-100 model TNR at TPR95\% comparison}
\label{tab:single_layer_nn:TNRatTPR95_densenet}
\resizebox{\textwidth}{!}{%
\begin{tabular}{clcccc}
\toprule
IND & OOD &        Baseline/+pNML &            ODIN/+pNML &            Gram/+pNML &            OECC/+pNML \\

\midrule
\multirow{8}{*}{CIFAR-100} & iSUN &  14.8 / \textbf{81.2} &  37.4 / \textbf{82.8} &  95.8 / \textbf{97.9} &  97.5 / \textbf{99.2} \\
     & LSUN (R) &  16.4 / \textbf{82.7} &  41.6 / \textbf{84.5} &  97.1 / \textbf{98.7} &  98.4 / \textbf{99.6} \\
     & LSUN (C) &  28.3 / \textbf{65.7} &  58.2 / \textbf{65.4} &  65.3 / \textbf{76.3} &  74.6 / \textbf{83.4} \\
     & Imagenet (R) &  17.3 / \textbf{86.4} &  43.0 / \textbf{87.9} &  95.6 / \textbf{98.0} &  96.5 / \textbf{99.0} \\
     & Imagenet (C) &  24.3 / \textbf{77.2} &  52.5 / \textbf{78.6} &  88.8 / \textbf{93.8} &  92.6 / \textbf{96.9} \\
     & Uniform &    0.0 / \textbf{100} &    0.0 / \textbf{100} &    100 / \textbf{100} &    100 / \textbf{100} \\
     & Gaussian &    0.0 / \textbf{100} &    0.0 / \textbf{100} &    100 / \textbf{100} &    100 / \textbf{100} \\
     & SVHN &  26.2 / \textbf{79.2} &  56.8 / \textbf{79.0} &  89.3 / \textbf{93.7} &  89.0 / \textbf{90.7} \\
\midrule
\multirow{8}{*}{CIFAR-10} & iSUN &  63.3 / \textbf{93.2} &  94.0 / \textbf{94.3} &  99.1 / \textbf{99.8} &   99.7 / \textbf{100} \\
     & LSUN (R) &  66.9 / \textbf{94.2} &  \textbf{96.2} / 95.8 &  99.5 / \textbf{99.9} &   99.8 / \textbf{100} \\
     & LSUN (C) &  52.0 / \textbf{79.9} &  74.6 / \textbf{80.2} &  88.7 / \textbf{94.4} &  95.7 / \textbf{99.6} \\
     & Imagenet (R) &  59.4 / \textbf{93.4} &  92.5 / \textbf{94.6} &  98.8 / \textbf{99.6} &  99.3 / \textbf{99.9} \\
     & Imagenet (C) &  57.0 / \textbf{87.1} &  86.9 / \textbf{88.3} &  96.8 / \textbf{98.7} &  98.6 / \textbf{99.8} \\
     & Uniform &   76.4 / \textbf{100} &    100 / \textbf{100} &    100 / \textbf{100} &    100 / \textbf{100} \\
     & Gaussian &   88.1 / \textbf{100} &    100 / \textbf{100} &    100 / \textbf{100} &    100 / \textbf{100} \\
     & SVHN &  40.4 / \textbf{92.2} &  77.0 / \textbf{95.0} &  96.0 / \textbf{98.2} &  98.5 / \textbf{99.9} \\
\midrule
\multirow{9}{*}{SVHN} & iSUN &  78.3 / \textbf{93.6} &  78.5 / \textbf{96.3} &  99.6 / \textbf{99.9} &    100 / \textbf{100} \\
     & LSUN (R) &  77.1 / \textbf{91.7} &  77.0 / \textbf{95.2} &   99.7 / \textbf{100} &    100 / \textbf{100} \\
     & LSUN (C) &  73.5 / \textbf{89.7} &  68.5 / \textbf{90.0} &  93.4 / \textbf{97.2} &   99.5 / \textbf{100} \\
     & Imagenet (R) &  79.7 / \textbf{93.6} &  79.0 / \textbf{95.8} &  99.2 / \textbf{99.8} &    100 / \textbf{100} \\
     & Imagenet (C) &  78.9 / \textbf{92.8} &  77.6 / \textbf{94.5} &  98.0 / \textbf{99.3} &   99.9 / \textbf{100} \\
     & Uniform &   66.1 / \textbf{100} &   71.7 / \textbf{100} &    100 / \textbf{100} &    100 / \textbf{100} \\
     & Gaussian &  88.7 / \textbf{99.7} &   95.6 / \textbf{100} &    100 / \textbf{100} &    100 / \textbf{100} \\
     & CIFAR-10 &  69.1 / \textbf{81.0} &  66.6 / \textbf{88.5} &  75.1 / \textbf{86.8} &   98.9 / \textbf{100} \\
     & CIFAR-100 &  68.7 / \textbf{81.4} &  65.7 / \textbf{88.5} &  80.3 / \textbf{90.1} &   99.1 / \textbf{100} \\
\bottomrule
\end{tabular}

}
\end{table}

\begin{table}[tbh]
\centering
\caption{DenseNet-BC-100 model Detection Acc. comparison}
\label{tab:single_layer_nn:DetectionAcc._densenet}
\resizebox{\textwidth}{!}{%
\begin{tabular}{clcccc}
\toprule
IND & OOD &        Baseline/+pNML &            ODIN/+pNML &            Gram/+pNML &            OECC/+pNML \\

\midrule
\multirow{8}{*}{CIFAR-100} & iSUN &  64.0 / \textbf{89.9} &  76.5 / \textbf{90.3} &  95.6 / \textbf{97.0} &  96.5 / \textbf{98.0} \\
     & LSUN (R) &  65.0 / \textbf{90.5} &  77.7 / \textbf{91.0} &  96.3 / \textbf{97.4} &  97.2 / \textbf{98.5} \\
     & LSUN (C) &  72.6 / \textbf{85.3} &  83.4 / \textbf{85.2} &  83.7 / \textbf{87.5} &  87.0 / \textbf{90.2} \\
     & Imagenet (R) &  65.7 / \textbf{91.6} &  77.3 / \textbf{92.1} &  95.5 / \textbf{97.0} &  96.0 / \textbf{97.8} \\
     & Imagenet (C) &  69.0 / \textbf{89.0} &  80.8 / \textbf{89.3} &  92.4 / \textbf{94.5} &  94.0 / \textbf{96.1} \\
     & Uniform &   64.2 / \textbf{100} &   85.0 / \textbf{100} &    100 / \textbf{100} &   99.9 / \textbf{100} \\
     & Gaussian &   58.8 / \textbf{100} &   66.9 / \textbf{100} &    100 / \textbf{100} &    100 / \textbf{100} \\
     & SVHN &  75.5 / \textbf{90.3} &  86.0 / \textbf{90.3} &  92.3 / \textbf{94.4} &  92.1 / \textbf{93.0} \\
\midrule
\multirow{8}{*}{CIFAR-10} & iSUN &  89.2 / \textbf{94.2} &  94.6 / \textbf{94.8} &  98.0 / \textbf{99.0} &  98.7 / \textbf{99.6} \\
     & LSUN (R) &  90.2 / \textbf{94.7} &  \textbf{95.6} / 95.5 &  98.6 / \textbf{99.3} &  98.9 / \textbf{99.7} \\
     & LSUN (C) &  86.9 / \textbf{89.5} &  \textbf{89.7} / 89.4 &  92.1 / \textbf{94.8} &  95.5 / \textbf{98.8} \\
     & Imagenet (R) &  88.5 / \textbf{94.3} &  94.0 / \textbf{94.9} &  97.9 / \textbf{98.8} &  98.3 / \textbf{99.2} \\
     & Imagenet (C) &  88.0 / \textbf{91.9} &  \textbf{92.3} / 92.2 &  96.2 / \textbf{97.7} &  97.4 / \textbf{99.0} \\
     & Uniform &   94.8 / \textbf{100} &   99.7 / \textbf{100} &    100 / \textbf{100} &    100 / \textbf{100} \\
     & Gaussian &   95.3 / \textbf{100} &   99.8 / \textbf{100} &    100 / \textbf{100} &    100 / \textbf{100} \\
     & SVHN &  83.2 / \textbf{94.0} &  88.1 / \textbf{95.1} &  95.8 / \textbf{97.3} &  97.4 / \textbf{99.3} \\
\midrule
\multirow{9}{*}{SVHN} & iSUN &  89.7 / \textbf{94.6} &  87.7 / \textbf{95.7} &  98.3 / \textbf{99.1} &   99.8 / \textbf{100} \\
     & LSUN (R) &  89.2 / \textbf{93.8} &  87.2 / \textbf{95.1} &  98.6 / \textbf{99.2} &   99.9 / \textbf{100} \\
     & LSUN (C) &  88.0 / \textbf{92.8} &  83.6 / \textbf{92.8} &  94.3 / \textbf{96.4} &  98.5 / \textbf{99.8} \\
     & Imagenet (R) &  90.2 / \textbf{94.4} &  88.2 / \textbf{95.5} &  97.9 / \textbf{98.9} &   99.7 / \textbf{100} \\
     & Imagenet (C) &  89.8 / \textbf{94.2} &  87.6 / \textbf{94.8} &  96.7 / \textbf{98.1} &   99.5 / \textbf{100} \\
     & Uniform &  87.9 / \textbf{98.8} &  85.2 / \textbf{99.4} &   99.9 / \textbf{100} &    100 / \textbf{100} \\
     & Gaussian &  93.6 / \textbf{98.4} &  95.4 / \textbf{99.1} &    100 / \textbf{100} &    100 / \textbf{100} \\
     & CIFAR-10 &  86.5 / \textbf{91.0} &  83.5 / \textbf{92.7} &  89.0 / \textbf{92.0} &  97.4 / \textbf{99.8} \\
     & CIFAR-100 &  86.5 / \textbf{91.0} &  83.1 / \textbf{92.8} &  90.4 / \textbf{93.2} &  97.7 / \textbf{99.8} \\
\bottomrule
\end{tabular}

}
\end{table}

\begin{table}[tbh]
\centering
\caption{ResNet-34 model TNR at TPR95\% comparison}
\label{tab:single_layer_nn:TNRatTPR95_resnet}
\resizebox{\textwidth}{!}{%
\begin{tabular}{clcccc}
\toprule
IND & OOD &        Baseline/+pNML &            ODIN/+pNML &            Gram/+pNML &            OECC/+pNML \\

\midrule
\multirow{8}{*}{CIFAR-100} & iSUN &  16.6 / \textbf{26.1} &  \textbf{45.4} / 44.1 &  94.7 / \textbf{95.7} &  97.2 / \textbf{98.0} \\
     & LSUN (R) &  18.4 / \textbf{28.4} &  \textbf{45.5} / 44.6 &  96.6 / \textbf{97.1} &  98.3 / \textbf{99.0} \\
     & LSUN (C) &  18.2 / \textbf{30.1} &  44.0 / \textbf{51.2} &  64.6 / \textbf{72.9} &  80.3 / \textbf{89.8} \\
     & Imagenet (R) &  20.2 / \textbf{31.8} &  \textbf{48.7} / 47.6 &  94.8 / \textbf{96.2} &  95.5 / \textbf{95.8} \\
     & Imagenet (C) &  23.9 / \textbf{33.6} &  44.4 / \textbf{48.1} &  88.3 / \textbf{91.6} &  90.6 / \textbf{91.6} \\
     & Uniform &  10.1 / \textbf{89.1} &  98.4 / \textbf{98.5} &    100 / \textbf{100} &    100 / \textbf{100} \\
     & Gaussian &   0.0 / \textbf{13.7} &   4.5 / \textbf{66.8} &    100 / \textbf{100} &    100 / \textbf{100} \\
     & SVHN &  19.9 / \textbf{52.0} &  63.8 / \textbf{75.0} &  80.3 / \textbf{89.0} &  86.8 / \textbf{89.2} \\
\midrule
\multirow{8}{*}{CIFAR-10} & iSUN &  44.5 / \textbf{78.5} &  73.0 / \textbf{86.3} &  99.4 / \textbf{99.9} &   99.8 / \textbf{100} \\
     & LSUN (R) &  45.1 / \textbf{79.8} &  73.5 / \textbf{87.5} &  99.6 / \textbf{99.9} &   99.9 / \textbf{100} \\
     & LSUN (C) &  48.0 / \textbf{72.6} &  63.1 / \textbf{76.1} &  90.2 / \textbf{95.9} &  96.3 / \textbf{98.9} \\
     & Imagenet (R) &  44.0 / \textbf{72.8} &  71.8 / \textbf{81.9} &  98.9 / \textbf{99.6} &  99.6 / \textbf{99.8} \\
     & Imagenet (C) &  45.9 / \textbf{71.4} &  66.5 / \textbf{78.0} &  97.0 / \textbf{98.8} &  98.9 / \textbf{99.7} \\
     & Uniform &   71.4 / \textbf{100} &    100 / \textbf{100} &    100 / \textbf{100} &    100 / \textbf{100} \\
     & Gaussian &   90.2 / \textbf{100} &    100 / \textbf{100} &    100 / \textbf{100} &    100 / \textbf{100} \\
     & SVHN &  32.2 / \textbf{69.1} &  81.9 / \textbf{90.8} &  97.6 / \textbf{99.2} &  99.3 / \textbf{99.7} \\
\midrule
\multirow{9}{*}{SVHN} & iSUN &  77.0 / \textbf{85.6} &  79.1 / \textbf{90.6} &  99.5 / \textbf{99.9} &    100 / \textbf{100} \\
     & LSUN (R) &  74.4 / \textbf{82.9} &  76.6 / \textbf{88.3} &  99.6 / \textbf{99.9} &    100 / \textbf{100} \\
     & LSUN (C) &  76.1 / \textbf{86.3} &  78.5 / \textbf{86.4} &  94.5 / \textbf{98.4} &  99.3 / \textbf{99.9} \\
     & Imagenet (R) &  79.0 / \textbf{88.0} &  80.8 / \textbf{92.5} &  99.4 / \textbf{99.8} &    100 / \textbf{100} \\
     & Imagenet (C) &  80.4 / \textbf{88.4} &  82.4 / \textbf{91.5} &  98.6 / \textbf{99.7} &   99.9 / \textbf{100} \\
     & Uniform &  85.2 / \textbf{95.6} &  86.1 / \textbf{99.3} &    100 / \textbf{100} &    100 / \textbf{100} \\
     & Gaussian &  84.8 / \textbf{94.9} &  90.9 / \textbf{99.4} &    100 / \textbf{100} &    100 / \textbf{100} \\
     & CIFAR-10 &  78.3 / \textbf{87.2} &  79.9 / \textbf{90.4} &  86.1 / \textbf{97.2} &  98.4 / \textbf{99.8} \\
     & CIFAR-100 &  76.9 / \textbf{85.8} &  78.5 / \textbf{89.1} &  87.6 / \textbf{96.9} &  98.4 / \textbf{99.8} \\
\bottomrule
\end{tabular}

}
\end{table}

\begin{table}[tbh]
\centering
\caption{ResNet-34 model Detection Acc. comparison}
\label{tab:single_layer_nn:DetectionAcc._resnet}
\resizebox{\textwidth}{!}{%
\begin{tabular}{clcccc}
\toprule
IND & OOD &        Baseline/+pNML &            ODIN/+pNML &            Gram/+pNML &            OECC/+pNML \\

\midrule
\multirow{8}{*}{CIFAR-100} & iSUN &  70.1 / \textbf{76.0} &  78.6 / \textbf{79.3} &  95.0 / \textbf{95.4} &  96.2 / \textbf{96.9} \\
     & LSUN (R) &  69.8 / \textbf{76.5} &  78.1 / \textbf{79.8} &  96.0 / \textbf{96.2} &  96.9 / \textbf{97.6} \\
     & LSUN (C) &  69.4 / \textbf{76.0} &  75.7 / \textbf{79.9} &  84.3 / \textbf{87.4} &  89.3 / \textbf{92.8} \\
     & Imagenet (R) &  70.8 / \textbf{76.6} &  80.2 / \textbf{80.2} &  95.0 / \textbf{95.7} &  95.4 / \textbf{95.5} \\
     & Imagenet (C) &  72.5 / \textbf{78.2} &  78.7 / \textbf{80.2} &  92.1 / \textbf{93.6} &  93.2 / \textbf{93.6} \\
     & Uniform &  81.7 / \textbf{93.5} &  96.7 / \textbf{96.8} &    100 / \textbf{100} &    100 / \textbf{100} \\
     & Gaussian &  60.5 / \textbf{83.7} &  81.7 / \textbf{92.2} &    100 / \textbf{100} &    100 / \textbf{100} \\
     & SVHN &  73.2 / \textbf{82.9} &  88.1 / \textbf{89.0} &  89.5 / \textbf{92.6} &  91.8 / \textbf{92.7} \\
\midrule
\multirow{8}{*}{CIFAR-10} & iSUN &  85.0 / \textbf{90.4} &  86.9 / \textbf{92.0} &  98.2 / \textbf{99.1} &  98.8 / \textbf{99.0} \\
     & LSUN (R) &  85.3 / \textbf{90.8} &  87.1 / \textbf{92.4} &  98.7 / \textbf{99.3} &  99.1 / \textbf{99.2} \\
     & LSUN (C) &  86.2 / \textbf{90.0} &  87.2 / \textbf{88.7} &  92.8 / \textbf{95.6} &  95.7 / \textbf{97.2} \\
     & Imagenet (R) &  84.9 / \textbf{89.0} &  86.3 / \textbf{90.4} &  97.9 / \textbf{98.8} &  98.5 / \textbf{98.7} \\
     & Imagenet (C) &  85.3 / \textbf{89.4} &  86.3 / \textbf{89.9} &  96.3 / \textbf{97.7} &  97.5 / \textbf{98.3} \\
     & Uniform &  93.5 / \textbf{98.8} &  99.3 / \textbf{99.9} &    100 / \textbf{100} &    100 / \textbf{100} \\
     & Gaussian &  95.5 / \textbf{99.7} &   99.8 / \textbf{100} &    100 / \textbf{100} &    100 / \textbf{100} \\
     & SVHN &  85.1 / \textbf{90.3} &  89.1 / \textbf{93.0} &  96.8 / \textbf{98.1} &  98.1 / \textbf{98.4} \\
\midrule
\multirow{9}{*}{SVHN} & iSUN &  89.7 / \textbf{92.8} &  89.2 / \textbf{93.5} &  98.2 / \textbf{99.1} &  99.7 / \textbf{99.9} \\
     & LSUN (R) &  88.9 / \textbf{92.1} &  88.2 / \textbf{92.7} &  98.6 / \textbf{99.2} &  99.8 / \textbf{99.9} \\
     & LSUN (C) &  89.7 / \textbf{92.2} &  89.2 / \textbf{92.2} &  94.8 / \textbf{97.3} &  98.0 / \textbf{98.9} \\
     & Imagenet (R) &  90.4 / \textbf{93.4} &  90.0 / \textbf{94.2} &  98.0 / \textbf{99.1} &  99.5 / \textbf{99.8} \\
     & Imagenet (C) &  91.0 / \textbf{93.3} &  90.6 / \textbf{93.8} &  97.1 / \textbf{98.7} &  99.2 / \textbf{99.6} \\
     & Uniform &  92.9 / \textbf{95.7} &  92.3 / \textbf{97.4} &   99.9 / \textbf{100} &    100 / \textbf{100} \\
     & Gaussian &  92.9 / \textbf{95.4} &  93.0 / \textbf{97.5} &    100 / \textbf{100} &    100 / \textbf{100} \\
     & CIFAR-10 &  90.0 / \textbf{93.1} &  89.4 / \textbf{93.4} &  92.2 / \textbf{96.2} &  96.9 / \textbf{98.5} \\
     & CIFAR-100 &  89.6 / \textbf{92.5} &  89.0 / \textbf{93.1} &  92.4 / \textbf{96.1} &  97.0 / \textbf{98.5} \\
\bottomrule
\end{tabular}

}
\end{table}

\section{Additional out of distribution metrics}
\label{sec:single_layer_nn:ood_results}

The additional OOD metrics, TNR at 95\% FPR and Detection Accuracy, for the DensNet model are shown in \tableref{tab:single_layer_nn:TNRatTPR95_densenet} and \tableref{tab:single_layer_nn:DetectionAcc._densenet} respectively and for the ResNet are presented in \tableref{tab:single_layer_nn:TNRatTPR95_resnet} and \tableref{tab:single_layer_nn:DetectionAcc._resnet}.
We improve the compared methods for all IND-OOD sets except for 6 experiments of ODIN method with the TNR at 95\% metric.



\chapter{Appendix for chapter 7}
\section{Training parameters}
We now detail the training parameters used to train the different models.
\paragraph{MNIST.} For the standard model we used a network that consists of two convolutional layers with 32 and 64 filters respectively, each followed by $2\times2$ max-pooling, and a fully connected layer of size 1024. 
We trained the standard model for 100 epochs with natural training set. We used SGD with a learning rate of $0.01$, a momentum value of 0.9, a weight decay of 0.0001, and a batch size of 50.

\paragraph{CIFAR10.} We used wide-ResNet 28-10 architecture \citep{zagoruyko2016wide} for the standard and WRN models.
We trained the standard and WRN models over 204 epochs using SGD optimizer with a batch size of 128 and a learning rate of 0.001,
reducing it to 0.0001 and 0.00001 after 100 and 150 epochs respectively. We also used a momentum
value of 0.9 and a weight decay of 0.0002. WRN was trained using adversarial trainset generated by PGD attack on the natural trainset with 7 steps of $2/255$ with a total $\epsilon=8/255$.

\paragraph{ImageNet.} For the standard model we used a pre-trained ResNet50 \citep{he2016deep}. Similarly to the other models, we adjust the standard model to only output the first 100 logits.

\section{Alternative approximations for the adaptive attack}

The end-to-end flow of the adaptive defense is shown in \figref{fig:adversarial_defense:adv_pnml_scheme}: We denote $(x, y_1)$ as an input that belong to label $y_1$, $w_0$ is the model parameters, and $L(w_0,x,y_i)$ is the model loss w.r.t a specific label $y_i$ where $i \in [1,N]$. $x_{refine}^i$ is the refinement result for the $i$-th hypothesis and $p_i/C$ is the probability of the corresponding hypothesis. $L_{refine}$ is the loss for the first hypothesis. The adaptive adversary manipulate the input by taking steps in the direction of the gradients $\frac{\partial L_{refine}}{\partial x}$.

Recall that the adaptive attack approximates the refinement stage with a unity operator on the backward pass (see \secref{subsec:adversarial_defense:adaptive_adversary}). This approach, in effect, disregard anything that comes before the $sign(\cdot)$ operator during backpropagation. 
An alternative approach is to use some kind of a differentiable function to approximate the $sign(\cdot)$ operator and backpropagate through the entire computational graph. The first obstacle is to find a differentiable function that approximates the $sign(\cdot)$ operator well. The first option that comes to mind is to use a $sigmoid(\cdot)$ function, but since the input values are distributed across a wide range, the $sigmoid(\cdot)$ causes a vanishing gradients effect, which misses the goal of this approximation.

\begin{figure}[tb]
\centering
\includegraphics[width=1.0\linewidth]{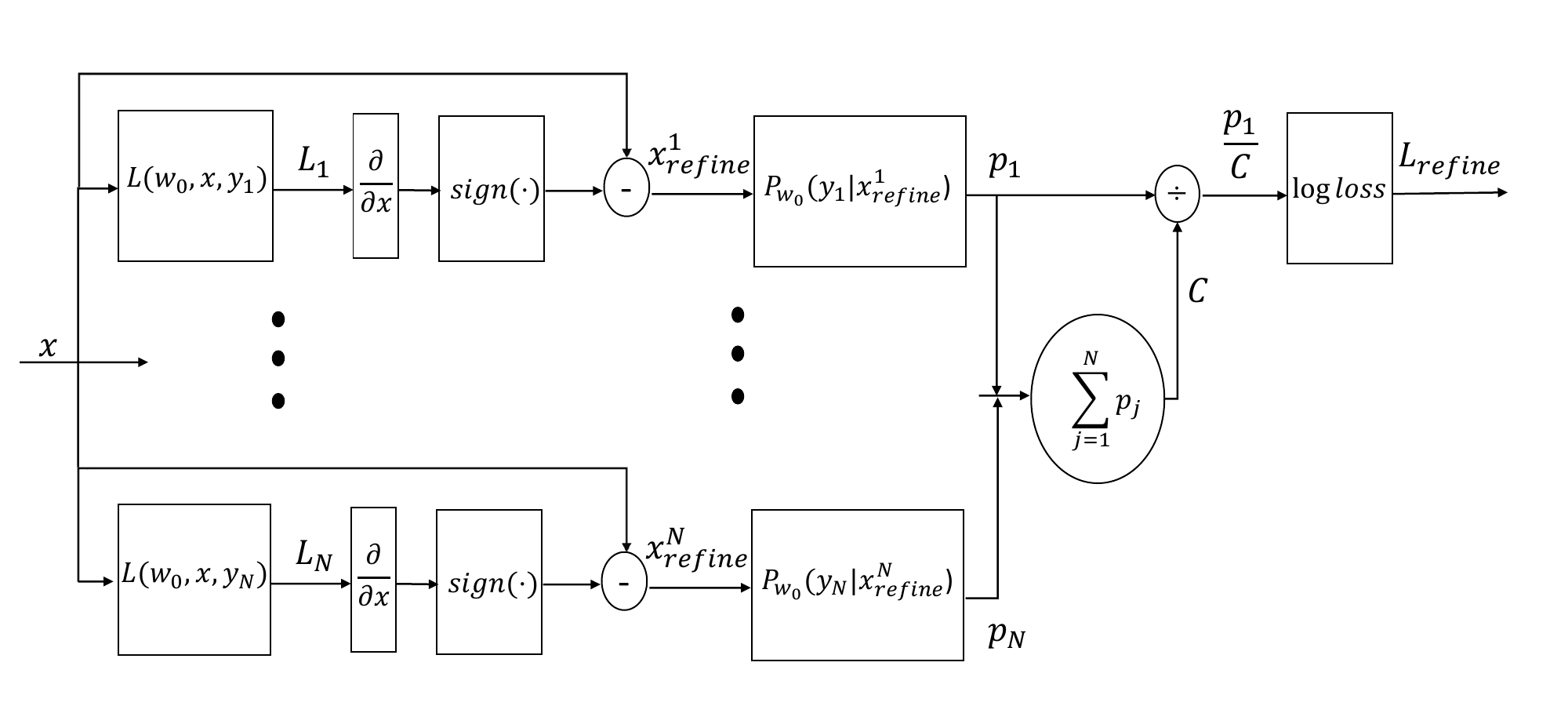}
\caption{End-to-end model illustration presenting label 1 hypothesis log-loss.}
\label{fig:adversarial_defense:adv_pnml_scheme}
\end{figure}

Another approach is to disregard the $sign(\cdot)$ operator on the backward pass, i.e., backpropagate the gradients without changing them. We examine this case:
\begin{equation}
    \frac{\partial L_{refine}}{\partial x} = \sum_{i=1}^{N} \frac{\partial log(\frac{p_1}{\sum_{j=1}^{N} p_j})}{\partial p_i} \frac{\partial p_i}{\partial x} = 
    \frac{1}{p_1}\frac{\partial p_1}{\partial x} - \frac{1}{\sum_{j=1}^{N} p_j}\sum_{i=1}^{N}\frac{\partial p_i}{\partial x},
\end{equation}
\begin{equation}
\frac{\partial p_i}{\partial x} = 
\frac{\partial p_{w_o}(y_i|x^i_{refine})}{\partial x} = 
\frac{\partial p_{w_o}(y_i|x^i_{refine})}{\partial x^i_{refine}}\frac{\partial x^i_{refine}}{\partial x},
\end{equation}

\begin{equation} \label{eq:adversarial_defense:hessian}
\frac{\partial x^i_{refine}}{\partial x} = 
\frac{\partial (x - \frac{\partial L_i}{\partial x})}{\partial x} =
1 - \frac{1}{\partial x}\left(\frac{\partial L_i}{\partial x}\right).
\end{equation}

Note that \eqref{eq:adversarial_defense:hessian} is dependent on the Hessian matrix of the loss $L_i$ w.r.t $x$. Computing this value is computationally hard for DNN's and it is usually outside the scope of adversarial robustness tests - only first-order adversaries are considered \citep{madry2017towards}. This emphasizes that the only viable, gradient-based, adaptive attack is the one used in our paper.

\begin{figure}[tb]
\centering
\includegraphics[width=0.55\linewidth]{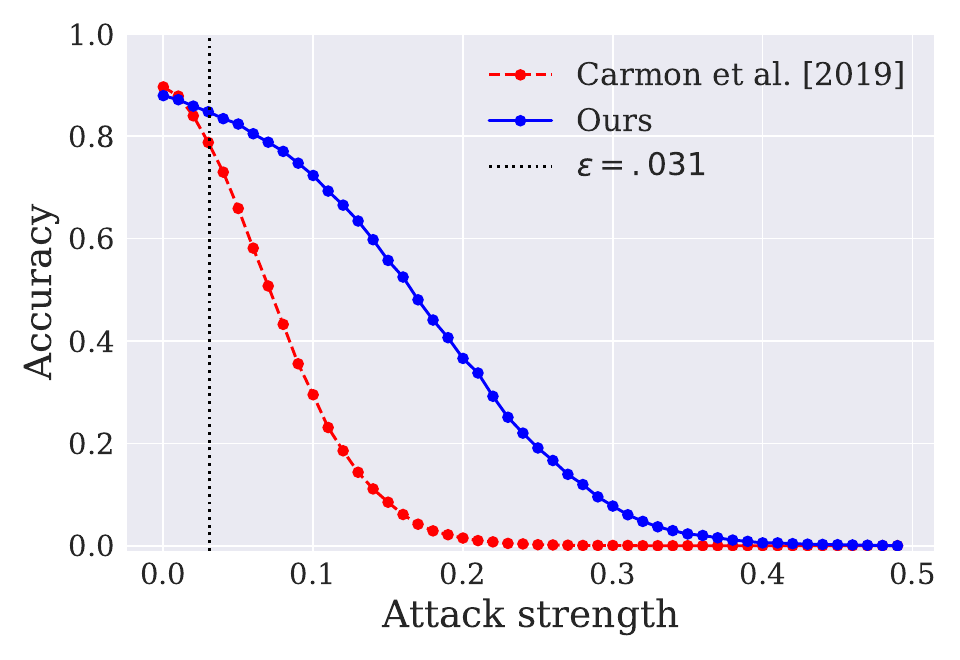}
\caption{CIFAR10 accuracy against HSJA attack}
\label{fig:adversarial_defense:cifar_hsja}
\end{figure}

\begin{table}[tb]
\small
\caption{Accuracy for the different number of iterations and step sizes}
\label{table:adversarial_defense:attack_stability}
\begin{subtable}[t]{0.4\textwidth}
\caption{Adaptive attack}
\label{table:adaptive_stability}
\flushright
\begin{tabular}{c|*{3}{c}}
\toprule
Iterations \textbackslash Step size & 0.005 & 0.01 & 0.02\\
\midrule
20 & 68.2 & 67.8 & 67.8\\
40 & 67.7 & \textbf{67.3} & 68.1\\
60 & 67.7 & 67.5 & 68.0\\
\bottomrule
\end{tabular}
\end{subtable}
\hspace{\fill}
\begin{subtable}[t]{0.48\textwidth}
\caption{PGD attack}
\label{table:PGD_stability}
\flushright
\begin{tabular}{c|*{3}{c}}
\toprule
Iterations \textbackslash Step size & 0.005 & 0.01 & 0.02 \\
\midrule
20 & 67.2 & 67.5 & 67.0\\
40 & 67.5 & 67.0 & 67.2\\
60 & 67.3 & \textbf{66.8} & 67\\
\bottomrule
\end{tabular}
\end{subtable}
\end{table}

\section{Additional CIFAR10 results}
\paragraph{HSJA for different $\epsilon$ values.} In figure \ref{fig:adversarial_defense:cifar_hsja} we demonstrate that our scheme is more robust against black-box HSJA for various $\epsilon$ values. Specifically, the maximal improvement is 49.1\% for $\epsilon=0.13$. We note that in comparison to the white-box attack, HSJA is much less efficient against our scheme for large $\epsilon$ values (see \figref{fig:adversarial_defense:cifar_acc_vs_attack_strength} for comparison). Nevertheless, the improvement of our scheme against black-box attack supports the claim that its robustness enhancement is not only the result of masked gradients.

\paragraph{Fine-tuned attacks.} In \tableref{table:adversarial_defense:attack_stability} we evaluate the accuracy for PGD and adaptive attacks with different hyperparameters.
 We evaluate the accuracy using 600 samples from the testset. For each attack we used $\epsilon=0.031$, 5 random starting points, 3 different values of number of iterations  $[20,40,60]$ and 3 different values of the step size  $[0.005, 0.01, 0.02]$, for a total of 9 hyperparameter combinations.

The best adaptive attack is obtained when the hyperparameters are the center values: the step is $0.01$ and number of iterations is $40$. 
For PGD attack we observe that increasing the number of iterations to 60 slightly improves the attack. Overall, the accuracy of both attacks remains roughly the same for different hyperparameters selection. This result indicates that the attack used to evaluate CIFAR10 in \secref{sec:adversarial_defense:experiment_results} is indeed strong since the robustness is stable across different combinations of hyperparameters.

\cleardoublepage
\newpage
\thispagestyle{empty}
\mbox{}

\includepdf[pages=-]{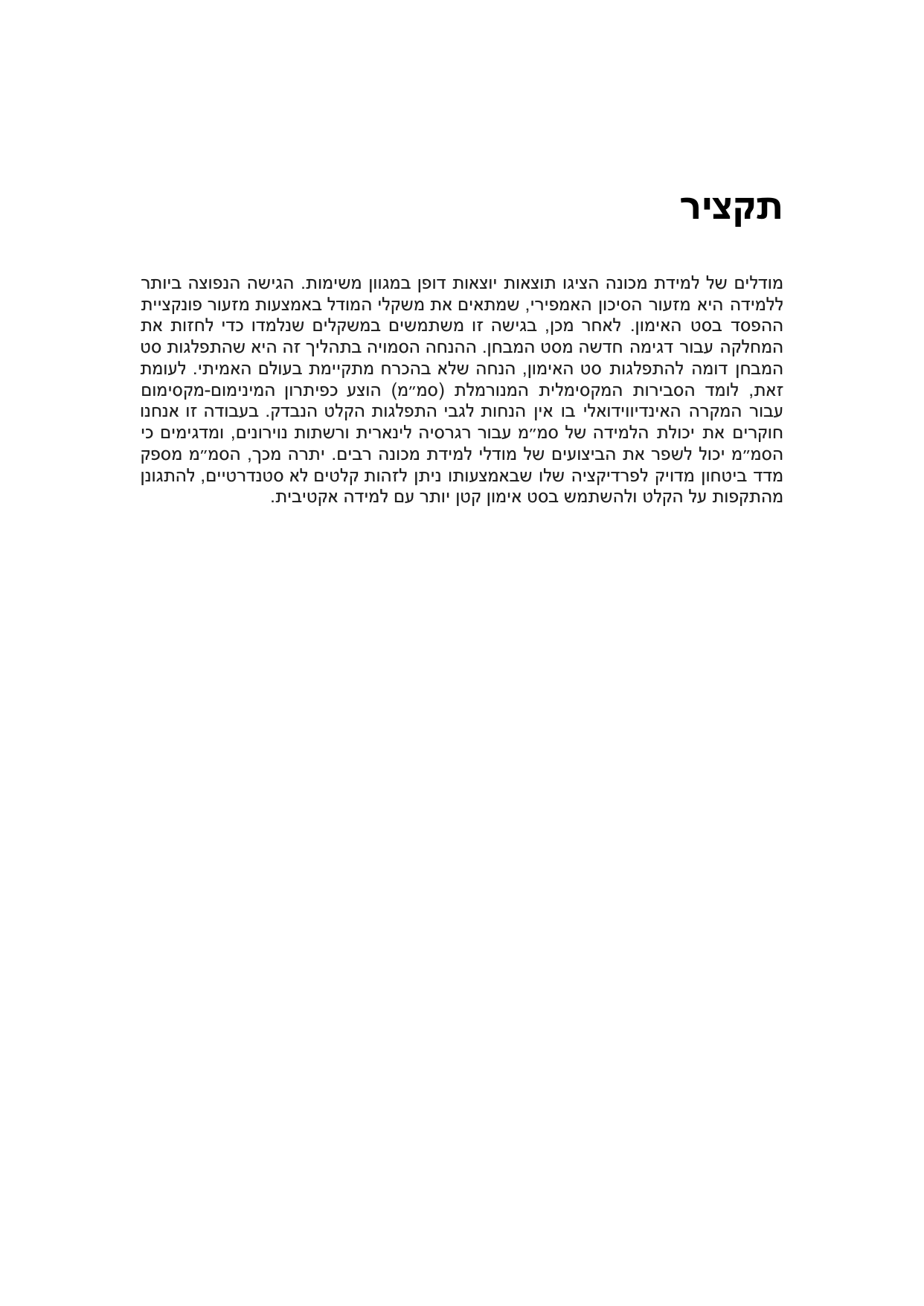}
\end{document}